\documentclass{article}

\PassOptionsToPackage{numbers, compress}{natbib}




     \usepackage[final]{neurips_2022}
     
\usepackage[utf8]{inputenc} 
\usepackage[T1]{fontenc}    
\usepackage{color, colortbl}
\usepackage{xcolor}
\usepackage{hyperref}       
\hypersetup{
	colorlinks   = true, 
	urlcolor     = blue, 
	linkcolor    = blue, 
	citecolor   = blue 
}
\usepackage{url}            
\usepackage{booktabs}       
\usepackage{amsfonts}       
\usepackage{nicefrac}       
\usepackage{microtype}      

\usepackage{microtype}
\usepackage{graphicx}
\usepackage{algorithm}
\usepackage{algpseudocode}
\usepackage{booktabs} 
\usepackage{etex}
\usepackage{amsmath}
\usepackage{amsthm}
\usepackage{amssymb} 
\usepackage{hyperref}
\usepackage[capitalise]{cleveref}
\usepackage{footnote}

\usepackage{enumitem}
\usepackage{bm}
\usepackage{bbm}
\usepackage{graphicx}
\usepackage{thmtools}
\usepackage{thm-restate}
\usepackage{mathtools}
\usepackage[skip=0pt]{caption}
\usepackage[skip=0pt]{subcaption}
\usepackage{booktabs} 
\usepackage{fontawesome}
\usepackage{enumitem}
\usepackage{accents}

\usepackage{threeparttable, array, float} 


\newtheorem{theorem}{\bf Theorem}
\newtheorem{lemma}{\bf Lemma}

\newtheorem{condition}{\bf Condition}

\newtheorem{definition}{\bf Definition}


\definecolor{red}{HTML}{E51400} 
\definecolor{blue}{HTML}{0050EF} 
\definecolor{green}{HTML}{008A00} 
\definecolor{purple}{HTML}{AA00FF} 
\definecolor{orange}{HTML}{FF7F00}
\definecolor{gray}{HTML}{848482}
\definecolor{Gray}{gray}{0.85}
\definecolor{LightGray}{gray}{0.96}


\DeclareMathOperator*{\argmin}{argmin}
\DeclareMathOperator*{\argmax}{argmax}

\newcommand{\norm}[1]{\left\lVert#1\right\rVert}

\newcommand{\cS}{\mathcal{S}}
\newcommand{\abs}[1]{\left| #1 \right|}

\DeclarePairedDelimiter{\ceil}{\lceil}{\rceil}

\newcommand{\TPVMm}{{TPVM$_<$}}

\newcommand{\R}{\mathbb{R}}

\newcommand{\E}{\mathbb{E}}
\newcommand{\G}{\mathbb{G}}

\newcommand{\I}{\mathbb{I}}
\newcommand{\bT}{\mathbb{T}}

\renewcommand{\bT}{\boldsymbol{T}}

\newcommand{\bX}{\boldsymbol{X}}

\newcommand{\bx}{\boldsymbol{x}}
\newcommand{\by}{\boldsymbol{y}}

\newcommand{\bmu}{\boldsymbol{\mu}}

\newcommand{\boldeta}{\boldsymbol{\eta}}
\newcommand{\boldzeta}{\boldsymbol{\zeta}}

\newcommand{\bzero}{\boldsymbol{0}}

\newcommand{\cA}{\mathcal{A}}

\newcommand{\cD}{\mathcal{D}}
\newcommand{\cE}{\mathcal{E}}

\newcommand{\cN}{\mathcal{N}}

\newcommand{\ts}[1]{}

\newcommand{\compilefullversion}{full}
\ifthenelse{\equal{\compilefullversion}{false}}{%
	\newcommand{\OnlyInFull}[1]{}
	\newcommand{\OnlyInShort}[1]{#1}
}{%
	\newcommand{\OnlyInFull}[1]{#1}%
	\newcommand{\OnlyInShort}[1]{}%
}%

\newcommand{\compilehidecomments}{false}
\ifthenelse{ \equal{\compilehidecomments}{true} }{%
	\newcommand{\wei}[1]{}
	\newcommand{\xutong}[1]{}
	\newcommand{\jinhang}[1]{}
	\newcommand{\siwei}[1]{}
\newcommand{\carlee}[1]{}
}{
\newcommand{\wei}[1]{{\color{blue}{[Wei: #1]}}}
\newcommand{\xutong}[1]{{\color{green} [Xutong: #1]}}
\newcommand{\jinhang}[1]{{\color{orange} [\text{Jinhang:} #1]}}
\newcommand{\siwei}[1]{{\color{red} [\text{Siwei:} #1]}}
\newcommand{\carlee}[1]{{\color{cyan} [\text{Carlee:} #1]}}
}

\setlength{\textfloatsep}{0.3\textfloatsep}
\setlength{\dbltextfloatsep}{0.3\dbltextfloatsep}
\setlength{\floatsep}{0.3\floatsep}
\setlength{\dblfloatsep}{0.3\dblfloatsep}
\setlength{\belowdisplayskip}{0.3\baselineskip}

\usepackage{titlesec}
\titlespacing{\section}{0pt}{5pt plus -1pt minus 1pt}{0pt plus 1pt minus 1pt}
\titlespacing{\subsection}{0pt}{3pt plus -1pt minus 1pt}{-2pt plus 1pt minus 2pt}
\titlespacing{\subsubsection}{0pt}{\parskip}{-\parskip}
\makeatletter
\def\thm@space@setup{
\thm@preskip=0.8\topsep
\thm@postskip=\thm@preskip 
}
\makeatother

\allowdisplaybreaks

\newenvironment{talign*}
 {\csname align*\endcsname}
 {\endalign}

\makeatletter

\makeatother

\title{Batch-Size Independent Regret Bounds for Combinatorial
Semi-Bandits with Probabilistically Triggered Arms or Independent Arms}  

%

\author{%
  Xutong Liu \\
  The Chinese University of Hong Kong\\
  Hong Kong SAR, China \\
  \texttt{liuxt@cse.cuhk.edu.hk} \\
   \And
   Jinhang Zuo \\
   Carnegie Mellon University \\
  Pittsburgh, PA, USA \\
  \texttt{jzuo@andrew.cmu.edu} \\
   \And
   \hspace{15pt}Siwei Wang \\
    \hspace{15pt}Microsoft Research \\
    \hspace{15pt}Beijing, China \\
   \hspace{15pt}\texttt{siweiwang@microsoft.com} \\
   \And
   Carlee Joe-Wong \\
   Carnegie Mellon University \\
   Pittsburgh, PA, USA \\
  \texttt{cjoewong@andrew.cmu.edu} \\
   \And
   John C.S. Lui \\
   The Chinese University of Hong Kong \\
   Hong Kong SAR, China \\
  \texttt{cslui@cse.cuhk.edu.hk} \\
    \And
   Wei Chen \\
   Microsoft Research \\
    Beijing, China \\
  \texttt{weic@microsoft.com} \\
}

\begin{document}
\maketitle
\begin{abstract}


In this paper, we study the combinatorial semi-bandits (CMAB) and focus on reducing the dependency of the batch-size $K$ in the regret bound, where $K$ is the total number of arms that can be pulled or triggered in each round. First, for the setting of CMAB with probabilistically triggered arms (CMAB-T), we discover a novel (directional) triggering probability and variance modulated (TPVM) condition that can replace the previously-used smoothness condition for various applications, such as cascading bandits, online network exploration and online influence maximization. Under this new condition, we propose a BCUCB-T algorithm with variance-aware confidence intervals and conduct regret analysis which reduces the $O(K)$ factor to $O(\log K)$ or $O(\log^2 K)$ in the regret bound, significantly improving the regret bounds for the above applications. Second, for the setting of non-triggering CMAB with independent arms, we propose a SESCB algorithm which leverages on the non-triggering version of the TPVM condition and completely removes the dependency on $K$ in the leading regret. As a valuable by-product, the regret analysis used in this paper can improve several existing results by a factor of $O(\log K)$. Finally, experimental evaluations show our superior performance compared with benchmark algorithms in different applications.

\vspace{-3pt}

\end{abstract}

\section{Introduction}

Stochastic multi-armed bandit (MAB) \cite{robbins1952some,auer2002finite,bubeck2012regret} is a classical model that has been extensively studied in online decision making. As an extension of MAB, combinatorial multi-armed bandits (CMAB) have drawn much attention recently, owing to its wide applications in marketing, network optimization and online advertising \cite{gai2012combinatorial, kveton2015combinatorial, chen2013combinatorial, chen2016combinatorial, wang2017improving, merlis2019batch}.
In CMAB, the learning agent chooses a combinatorial action in each round, and this action would trigger a set of arms (or a super arm) to be pulled simultaneously, and the outcomes of these pulled arms are observed as feedback. Typically, such feedback is known as the semi-bandit feedback. The agent's goal is to minimize the expected \textit{regret}, which is the difference in expectation for the overall rewards between always playing the best action (i.e., the action with highest expected reward) and playing according to the agent's own policy. For CMAB, an agent not only need to deal with the exploration-exploitation tradeoff: whether the agent should explore arms in search for a better action, or should the agent stick to the best action observed so far to gain rewards; but also need to handle the exponential explosion of all possible actions.

To model a wider range of application scenarios where action may trigger arms probabilistically, \citet{chen2016combinatorial} first generalize CMAB to CMAB with probabilistically triggered arms (or CMAB-T for short), which successfully covers cascading bandit~\cite{combes2015combinatorial} (CB) and online influence maximization (OIM) bandit~\cite{wen2017online} problems. 
Later on, \citet{wang2017improving} improve the regret bound of \cite{chen2016combinatorial} by introducing a smoothness condition, 
	called the triggering probability modulated (TPM) condition, which removes a factor of $1/p^*$ compared to \cite{chen2016combinatorial}, where $p^*$ is the minimum positive probability that any arm can be triggered.
However, in both studies, the regret bounds still depend on a factor of \textit{batch-size} $K$, where $K$ is the maximum number of arms that can be triggered, and this factor could be quite large, e.g., for OIM $K$ can be as large as the number of edges in a large social network. 

\textbf{Our Contributions.} In this paper, we reduce or remove the dependency on $K$ in the regret bounds. 
For CMAB-T, we first discover a new {\em triggering probability and variance modulated (TPVM)} bounded smoothness condition, which is stronger than the TPM condition, yet still holds for several applications (such as CB and OIM) where only the TPM condition is known previously. We observe that for these applications, the previous TPM condition bounds the global speed of reward change regarding the parameter change, which will cause a large $K$ coefficient due to the rapid change at the boundary regions (i.e., when an arm's mean $\mu_i$ is close to $0$ or $1$). 
Our TPVM condition utilizes this observation by raising up the regret contribution of those boundary regions, leading to a significant reduction on the dependency of $K$.
Second, we propose a ``variance-aware" BCUCB-T algorithm that adaptively changes the width of the confidence interval according to the (empirical) variance, cancelling out the large regret contribution raised by the TPVM condition at the boundary regions (where the variances are also very small).
Combining these two techniques, we 
successfully reduce the batch-size dependence from $O(K)$ to $O(\log K)$ or $O(\log^2K)$ for all CMAB-T problems satisfying the TPVM condition, leading to significant improvements of the regret bounds for applications like CB or OIM. As a by-product, we also give refined proofs that shall improve the regret for several existing works by a factor of $O(\log K)$, e.g., \cite{degenne2016combinatorial, merlis2019batch}, which may be of independent interests.

In addition to the general CMAB-T setting, we show how a non-triggering version of the TPVM condition (i.e., VM condition) can help to completely remove the batch-size $K$, under the additional independent arm assumption for non-triggering CMAB problems. In particular, we propose a novel Sub-Exponential Efficient Sampling for Combintorial Bandits Policy (SESCB) that produces tighter sub-exponential concentrated confidence intervals. In our analysis, we show that the total regret only depends on the arm that is observed least instead of all $K$ arms, so that we can achieve a completely batch-size independent regret bound. 
Our empirical results demonstrate that our proposed algorithms can achieve around $20\%$ lower regrets than previous ones for several applications.
Due to the space limit, we will move the complete proofs and empirical results into the appendix.

\begin{table*}[t]
\centering
	\caption{Summary of the algorithms and results for CMAB with probabilistically triggered arms.}	\label{tab:triggering}
		\resizebox{0.97\columnwidth}{!}{
			\centering
	\begin{threeparttable}
	\begin{tabular}{|ccccc|}
		\hline
		\textbf{Algorithm}&\textbf{Smoothness}& \textbf{Independent Arms?} & \textbf{Computation}& \textbf{Regret}\\
		\hline
		CUCB~\cite{wang2017improving} & 1-norm TPM, $B_1$ & Not required & Efficient & $O(K  \sum_{i \in [m]}\frac{B_1^2\log T}{\Delta_i^{\min}})$ \\
		BOIM-CUCB \citep[Section 4]{perrault2020budgeted}$^*$ & 1-norm TPM, $B_1$ & Required & Hard & $O((\log K)^2  \sum_{i \in [m]}\frac{B_1^2\log T}{ \Delta_i^{\min}})$\\
		\hline
		\rowcolor{Gray}
		BCUCB-T (\cref{alg:BCUCB-T}) & \TPVMm, $B_v,^\dagger \lambda>1$ & Not required & Efficient & $O(\log K  \sum_{i \in [m]}\frac{B_v^2 \log T}{\Delta_i^{\min}})$ \\
        \rowcolor{Gray}
		BCUCB-T (\cref{alg:BCUCB-T}) & \TPVMm, $B_v,^\dagger
		\lambda=1$  & Not required & Efficient & $O((\log K  \log \frac{B_v K}{\Delta_{\min}}\sum_{i \in [m]}\frac{B_v^2 \log T}{\Delta_i^{\min}})$ \\
		\rowcolor{Gray}
		BOIM-CUCB (\OnlyInFull{\cref{apdx_sec:main_regret_analysis}}\OnlyInShort{Appendix C})$^\ddagger$ & 1-norm TPM, $B_1$ & Required & Hard & $O(\log K  \sum_{i \in [m]}\frac{B_1^2\log T}{ \Delta_i^{\min}})$\\
		\hline
	\end{tabular}
    \begin{tablenotes}[para, online,flushleft]
	\footnotesize
	\item[]\hspace*{-\fontdimen2\font}$^*$ This work is for a specific application, but we treat it as a general framework; 
	\item[]\hspace*{-\fontdimen2\font}$^\dagger$ Generally, $B_v=O(B_1\sqrt{K})$, and the existing regret bound is improved when $B_v=o(B_1\sqrt{K})$;
	\item[]\hspace*{-\fontdimen2\font}$^\ddagger$ Using our new analysis.
	\end{tablenotes}
	\end{threeparttable}
	}
\end{table*}

\begin{table*}[t]
	\caption{Summary of the algorithms and results for non-triggering CMAB problems.}	\label{tab:non-triggering}	
	\centering
	\resizebox{0.97\columnwidth}{!}{
	\centering
	\begin{threeparttable}
	\begin{tabular}{|ccccc|}
		\hline
		\textbf{Algorithm}&\textbf{Smoothness}& \textbf{Independent Arms?} & \textbf{Computation}& \textbf{Regret}\\
	\hline
		CUCB~\cite{wang2017improving} & 1-norm, $B_1$ & Not required & Efficient & $O(K  \sum_{i \in [m]}\frac{B_1^2\log T}{\Delta_i^{\min}})$ \\
		CTS \cite{wang2018thompson}$^*$ & 1-norm, $B_1$ & Required & Efficient & $O(K\sum_{i \in [m]}\frac{B_1^2\log T}{\Delta_i^{\min}})$\\
		ESCB \cite{combes2015combinatorial} & 1-norm, $B_1$$^{**}$ & Required & Hard & $O((\log K)^2  \sum_{i \in [m]}\frac{B_1^2\log T}{ \Delta_i^{\min}})$\\
			AESCB \cite{cuvelier2021statistically} & Linear & Required & Efficient & $O((\log K)^2  \sum_{i \in [m]}\frac{\log T}{ \Delta_i^{\min}})$\\

		BC-UCB~\cite{merlis2019batch}$^{\dagger}$ & VM, $B_v$ $^{\ddagger}$ & Not required & Efficient & $O((\log K)^2  \sum_{i \in [m]}\frac{B_v^2 \log T}{\Delta_i^{\min}})$.\\
		CTS \cite{wang2018thompson}$^*$ & Linear & Required & Efficient & $O(\log K  \sum_{i \in [m]}\frac{\log T}{ \Delta_i^{\min}})$\\
		\hline
        \rowcolor{Gray}
		SESCB (\cref{alg:SECUCB}) & VM, $B_v$ $^{\ddagger}$ & Required & {Efficient$^{***}$} & $O(\sum_{i \in [m]}\frac{B_v^2\log T}{\Delta_i^{\min}})$ \\
		\rowcolor{Gray}
		BC-UCB (\OnlyInFull{\cref{apdx_sec:main_regret_analysis}}\OnlyInShort{Appendix C})$^{\S}$  & VM, $B_v$ $^{\ddagger}$ & Not required & Efficient & $O(\log K  \sum_{i \in [m]}\frac{B_v^2 \log T}{\Delta_i^{\min}})$ \\
		\hline
	\end{tabular}
	  \begin{tablenotes}[para, online,flushleft]
	\footnotesize
	\item[]\hspace*{-\fontdimen2\font}$^*$ Requires exact offline oracle instead of $(\alpha, \beta)$-approximate oracle;
		\item[]\hspace*{-\fontdimen2\font}$^{\dagger}$ This work gives sufficient smoothness condition with factor $\gamma_g$ and translates to $B_v=3\sqrt{2}\gamma_g$ in our setting;
		\item[]\hspace*{-\fontdimen2\font}$^{\S}$ Using our new analysis.
	\item[]\hspace*{-\fontdimen2\font}$^{**}$ This work is for the linear case, but can easily generalize to 1-norm $B_1$ case; 
	\item[]\hspace*{-\fontdimen2\font}$^\ddagger$ Generally, $B_v=O(B_1\sqrt{K})$ and the existing regret bound is improved when $B_v=o(B_1\sqrt{K})$;
		\item[]\hspace*{-\fontdimen2\font}$^{***}$ Efficient when the reward function is submodular, otherwise the computation is hard;
	\end{tablenotes}
			\end{threeparttable}
	}
\end{table*}

\textbf{Related Work.} 
The stochastic CMAB has received much attention recently. From the modelling point of view, these CMAB works can be divided into two categories: 
CMAB with or without probabilistically triggered arms (i.e. CMAB-T setting or non-triggering CMAB). 
For CMAB-T, our work improves (a) the general framework in \cite{chen2016combinatorial, wang2017improving}, (b) the combinatorial cascading bandit \cite{kveton2015combinatorial}, (c) the online multi-layered network exploration~\cite{liu2021multi} problem, (d) the online influence maximization bandits \cite{wang2017improving, perrault2020budgeted}, by reducing or removing the batch-size dependent factor $K$ in the regret bounds with our new TPVM condition and/or our refined analysis. We defer the detailed technical comparison to \cref{sec:TPVM} and \cref{sec:app}. 
For the algorithm, most CMAB-T studies use Combinatorial Upper Confidence Bound (CUCB) based on Chernoff concentration bounds \cite{wang2017improving}, our BCUCB-T algorithm is different and uses the Bernstein concentration bound \cite{audibert2009exploration,merlis2019batch}
that considers variance of the arms. 

For non-triggering CMAB, \cite{gai2012combinatorial} is the first study on stochastic CMAB, and its regret has been improved by \citet{kveton2015tight}, \citet{combes2015combinatorial}, \citet{chen2016combinatorial}, but they still have $O(K)$ factor in their regrets. When arms are mutually independent, \citet{combes2015combinatorial} build a tighter ellipsoidal confidence region for exploration, and devise the Efficient Sampling for Combinatorial Bandit policy (ESCB), which reduces the dependence on $O(K)$ to $O(\log^2 K)$ at the cost of high computational complexity (since combinatorial optimization over the ellipsoidal region is NP-hard in general~\cite{atamturk2017maximizing}). Later on, the computational complexity is improved by AESCB~\cite{cuvelier2021statistically} in the linear CMAB problem. 
Recently, \citet{merlis2019batch} focus on the Probabilistic Maximum Coverage (PMC) bandit problem and propose the BC-UCB algorithm with the Gini-smoothness condition to achieve a similar improvement as ESCB/AESCB, but without the independent arm assumption. 
Our work is largely inspired by their work, however, our study generalizes theirs to the CMAB-T setting which can handle much broader application scenarios beyond the non-triggering CMAB
	(more detailed comparison is given in Section~\ref{sec:gini_cond}). 
In addition,  we provide a refined analysis that can save a $O(\log K)$ factor for BC-UCB (or ESCB/AESCB) algorithm. 
Compared with other ESCB-type algorithms for independent arms, as far as we know, our SESCB algorithm are the first to completely remove the dependence of $K$ in the leading regrets, owing to our non-triggering version of the TPVM condition. 
The detailed comparisons are summarized in \cref{tab:triggering} and \cref{tab:non-triggering}.

{ The usage of variance-aware algorithms to give improved regret bounds can be dated back to \cite{audibert2009exploration}. Recently, there is a surge of interest to apply the variance-aware principle in bandit~\cite{merlis2019batch,vial2022minimax} and reinforcement learning (RL) settings~\cite{zhou2021nearly,zhang2021improved}. It is notable that Vial et al.~\cite{vial2022minimax} share a similar variance-aware principle as ours but focus on the distribution-independent regret bounds for the cascading bandits~\cite{vial2022minimax}. Our work is more general and achieves the matching regret bound when translating to the distribution-independent regret bound. Compared with RL works, our paper studies a different setting as we do not consider the state transitions.}

From the application's point of view, this paper covers the applications of PMC
bandit~\cite{merlis2019batch}, combinatorial cascading bandits~\cite{kveton2015combinatorial,li2016contextual}, network exploration~\cite{liu2021multi}, and online influence maximization~\cite{wen2017online,wang2017improving, li2020online}. 
Our proposed algorithms can significantly reduce the regret bounds of them, e.g., from $O(K)$ to $O(\log^2 K)$ for OIM
where $K$ can be hundreds of thousands in large social networks.


\section{Problem Settings}\label{sec: problem setting}

\ts{The definition of the CMAB-T problem instance.}
We study the combinatorial multi-armed bandit problem with probabilistic triggering arms, which is denoted as CMAB-T for short.
Following the setting from~\cite{wang2017improving}, a CMAB-T {\em problem instance} can be described by a tuple $([m], \cS, \cD, D_{\text{trig}},R)$,
	where $[m]=\{1,2,...,m\}$ is the set of base arms; 
	$\cS$ is the set of eligible actions and $S \in \cS$ is an action;\footnote{In some cases $\cS$ is a collection of subsets of $[m]$, in which case we often 
	refer to $S\in \cS$ as a super arm. In this paper we treat $\cS$ as a general action space, same as in \cite{wang2017improving}.}
	$\cD$ is the set of possible distributions over the outcomes of base arms with bounded support $[0,1]^m$;
$ D_{\text{trig}}$ is the probabilistic triggering function and $R$ is the reward function, the definitions of which will be introduced shortly.

\ts{The environment and interaction protocol.}
In CMAB-T, the learning agent interacts with the unknown environment in a sequential manner as follows.
First, the environment chooses a distribution $D \in \cD$ unknown to the agent.
Then, at round $t=1,2,...,T$, the agent selects an action $S_t \in \cS$ and the environment draws from the unknown distribution $D$ a random outcome $\bX_t=(X_{t,1},...X_{t,m})\in [0,1]^m$. 
Note that the outcome $\bX_t$ is assumed to be independent from outcomes generated in previous rounds, but outcomes $X_{t,i}$ and $X_{t,j}$ in the same round could be correlated.
Let $D_{\text{trig}}(S,\bX)$ be a distribution over all possible subsets of $[m]$, i.e. its support is $2^{[m]}$.
When the action $S_t$ is played on the outcome $\bX_t$, base arms in a random set $\tau_t \sim D_{\text{trig}}(S_t, \bX_t)$ are triggered, 
	meaning that the outcomes of arms in $\tau_t$, i.e. $(X_t)_{t\in \tau_t}$ are revealed as the feedback to the agent, and are involved in determining the reward of action $S_t$.
Function $D_{\text{trig}}$ is referred as the \textit{probabilistic triggering function}.
At the end of the round $t$, the agent will receive a non-negative reward $R(S_t, \bX_t, \tau_t)$, determined by $S_t,
\bX_t$ and $\tau_t$.
CMAB-T significantly enhances the modeling power of CMAB~\cite{chen2013combinatorial, kveton2015tight} and can model many applications such as cascading bandits and online influence maximization~\cite{wang2017improving}, which we will discuss in later sections.

\ts{The goal and the definition of the approximate regret.}
The goal of CMAB-T is to accumulate as much reward as possible over $T$ rounds, by learning distribution $D$ or its parameters.
Let $\bmu=(\mu_1,...,\mu_m)$ denote the mean vector of base arms' outcomes.
Following~\cite{wang2017improving}, we assume that the expected reward $\E[R(S,\bX,\tau)]$ is a function of 
the unknown mean vector $\bmu$, where the expectation is taken over the randomness of $\bX\sim D$ and $\tau \sim D_{\text{trig}}(S,\bX)$.
In this context, we denote $r(S;\bmu)\triangleq \E[R(S,\bX,\tau)]$ and it suffices to learn the unknown mean vector instead of the joint distribution $D$, based on the past observation.
To allow the algorithm to estimate the mean $\mu_i$ directly from samples, we assume the outcome does not depend on whether the arm $i$ is triggered, i.e., $\E_{\bX \sim D, \tau \sim D_{\text{trig}}(S,\bX)}[X_i | i\in \tau]=\E_{\bX\sim D}[X_i]$.

The performance of an online learning algorithm $A$ is measured by its {\em regret}, defined as the difference of the expected cumulative reward between always playing the best action $S^* \triangleq \argmax_{S \in \cS}r(S;\bmu)$ and playing actions chosen by algorithm $A$.
For many reward functions, it is NP-hard to compute the exact $S^*$ even when $\bmu$ is known, so similar to~\cite{wang2017improving}, we assume that the algorithm $A$ has access to an offline $(\alpha, \beta)$-approximation oracle, which for mean vector $\bmu$ outputs an action $S$ such that $\Pr\left[r(S;\bmu)\ge \alpha \cdot r(S^*;\bmu)\right] \ge \beta$. 
Formally, the $T$-round $(\alpha, \beta)$-approximate regret is defined as
\resizebox{1.0\columnwidth}{!}{
\begin{minipage}{\columnwidth}
\begin{equation}
    Reg(T;\alpha, \beta, \bmu)= T \cdot \alpha\beta \cdot r(S^*;\bmu)-\E\left[\sum_{t=1}^Tr(S_t;\bmu)\right],
\end{equation}
\end{minipage}}
where the expectation is taken over the randomness of outcomes $\bX_1, ..., \bX_T$, the triggered sets $\tau_1, ..., \tau_T$, as well as the randomness of algorithm $A$ itself.

In the CMAB-T model, there are several quantities that are crucial to the subsequent study.
We define {\em triggering probability} $p_i^{D,D_{\text{trig}},S}$ as the probability that base arm $i$ is 
	triggered when the action is $S$, the outcome distribution is $D$, and the probabilistic triggering function is $D_{\text{trig}}$.
Since $D_{\text{trig}}$ is always fixed in a given application context, we ignore it in the notation for simplicity, and use $p_i^{D,S}$ henceforth.
Triggering probabilities  $p_i^{D,S}$'s are crucial for the triggering probability modulated bounded smoothness conditions to be defined below.
We define {\em batch size} $K$ as the maximum number of arms that can be triggered, i.e., $K=\max_{S \in \cS}|\{i \in [m]: p_i^{D,S} > 0\}|$.
Our main contribution of this paper is to remove or reduce the regret dependency on batch size $K$, where $K$ could be quite large, e.g., $K$ can be hundreds of thousands in a large social network.

%

\ts{The motivation of smoothness conditions}
Owing to the nonlinearity and the combinatorial structure of the reward, it is essential to give some conditions for the reward function in order to achieve any meaningful regret bounds~\cite{chen2013combinatorial, chen2016combinatorial, wang2017improving, degenne2016combinatorial,merlis2019batch}. 
The following are two standard conditions originally proposed by \citet{wang2017improving}.

\ts{Two standard conditions for the reward function.}

\begin{condition}[Monotonicity]\label{cond:mono}
We say that a CMAB-T problem instance satisfies monotonicity condition, if for any action $S \in \cS$, any two distributions $D,D' \in \cD$ with mean vectors $\bmu,\bmu' \in  [0,1]^m$ such that $\mu_i \le \mu'_i $ for all $i \in [m]$, we have $ r(S;\bmu) \le r(S;\bmu') $. 
\end{condition}

\begin{condition}[1-norm TPM Bounded Smoothness]\label{cond:TPM}
We say that a CMAB-T problem instance satisfies the triggering probability modulated (TPM) $B_1$-bounded smoothness condition, if for any action $S \in \cS$, any distribution $D,D'\in \cD$ with mean vectors $\bmu, \bmu' \in [0,1]^m$, we have $|r(S;\bmu')-r(S;\bmu)|\le B_1\sum_{i \in [m]}p_{i}^{D,S}|\mu_i-\mu'_i|$.
\end{condition}
The first monotonicity condition indicates the reward is larger if the parameter vector $\bmu$ is larger.
The second condition bounds the reward difference caused by the parameter change (from $\bmu$ to $\bmu'$).
One key feature is that the parameter change in each base arm $i \in [m]$ is modulated by the triggering probability $p_i^{D,S}$.
Intuitively, for base arm $i$ that is unlikely to be triggered/observed (small $p_i^{D,S}$), Condition~\ref{cond:TPM} ensures that a large change in $\mu_i$ 
only causes a small change (multiplied by $p_i^{D,S}$) in the reward, and thus one does not need to pay extra cost to observe such arms.
%
Many applications satisfy Condition~\ref{cond:mono} and Condition~\ref{cond:TPM}, including linear combinatorial bandits~\cite{kveton2015tight}, combinatorial cascading bandits~\cite{kveton2015combinatorial}, online influence maximization~\cite{wang2017improving}, etc.
With the above two conditions, \citet{wang2017improving} show that a CUCB algorithm achieves
	the distribution-dependent regret bound of $O(\sum_{i \in [m]}\frac{B_1^2K\log T}{\Delta_i^{\min}})$,
	where $\Delta_i^{\min}$ is the distribution-dependent reward gap, to be formally defined in Definition~\ref{def:gap}.
In the following sections, we will show how to remove or reduce the dependency on $K$ in the above bounds under our new conditions. 

\section{Algorithm and Regret Analysis for CMAB-T}\label{sec:gini_cond}
\ts{The definition of the TPM gini-smoothness}
In this section, for the CMAB-T framework with probabilistic triggering, we improve the regret dependency on the batch size from $O(K)$ in \cite{wang2017improving} to $O(\log K)$ or $O(\log^2 K)$.
Our main tool is a new condition called {\em triggering probability and variance modulated (TPVM) bounded smoothness condition}, replacing the TPM condition (Condition~\ref{cond:TPM}).
We will define the TPVM condition, comparing it with the TPM condition and the gini-smoothness condition of~\cite{merlis2019batch}, show our algorithm and regret analysis that utilize this condition. Later in \cref{sec:app}, we will
demonstrate how this condition is applied to applications such as cascading bandits and online influence maximization.

\subsection{Triggering Probability and Variance Modulated (TPVM) Bounded Smoothness Condition}
\label{sec:TPVM}

In this paper, we discover a new smoothness condition for many important applications as follows. 

\begin{condition}[Directional TPVM Bounded Smoothness]\label{cond:TPVMm}
We say that a CMAB-T problem instance satisfies the directional TPVM $(B_v, B_1,\lambda)$-bounded smoothness condition ($B_v,B_1\ge 0, \lambda\ge 1$), 
	if for any action $S \in \cS$, any distribution $D,D' \in \cD$ with mean vector $\bmu, \bmu' \in (0,1)^m$, for any non-negative $\boldzeta, \boldeta \in [0,1]^m$ s.t. $\bmu'=\bmu+\boldzeta+\boldeta$, we have
	\resizebox{1.0\columnwidth}{!}{
\begin{minipage}{\columnwidth}
\begin{align}\label{eq:gini}
   &|r(S;\bmu')-r(S;\bmu)|\le B_v\sqrt{\sum_{i\in [m]}(p_i^{D,S})^{\lambda}\frac{\zeta_i^2 }{(1-\mu_i)\mu_i}} + B_1 \sum_{i\in[m]}p_i^{D,S}\eta_i.
\end{align}
\end{minipage}}
\end{condition}

\textbf{Remark 1 (Intuition for Condition \ref{cond:TPVMm}).} 
Looking at \cref{eq:gini}, if we ignore the $(1-\mu_i)\mu_i$ term in the denominator and set $\lambda=2$, the RHS of \cref{eq:gini} becomes $B_v\sqrt{\sum_{i\in [m]}(p_i^{D,S})^{2}\zeta_i^2} + B_1 \sum_{i\in[m]}p_i^{D,S}\eta_i$, which holds with $B_v=B_1\sqrt{K}$ by applying the Cauchy-Schwarz inequality to Condition~\ref{cond:TPM}.
However, the regret upper bound following this modified \cref{eq:gini} would not directly lead to the improvement in the regret due to the $\sqrt{K}$ factor in $B_v$.
To deal with this issue, an important observation here is that for many applications, the reason $B_v$ is large is because that the reward changes abruptly when parameters $\mu_i$ approaches $0$ or $1$. This motivates us to plug in the $1/(1-\mu_i)\mu_i$ term in \cref{eq:gini} to enlarge the square root term when $\mu_i$ is close to $0$ or $1$, so that $B_v$ can be as small as possible. 
On the other hand, notice that when $\mu_i$ approaches $0$ or $1$, the variance $V_i\le(1-\mu_i)\mu_i$  is also very small,
\footnote{For bounded random variable $X\in [0,1]$ with mean $\mu_i$, variance $V_i=\E[X^2]-\E[X]^2\le \E[X]-(\E[X])^2\le (1-\mu_i)\mu_i$, where the equality is achieved when $X$ is a Bernoulli random variable. }
so the estimation of $\mu_i$ should be quite accurate. 
Therefore, the gap $\zeta_i$ between our estimation and true value produces a variance-related term which cancels the $(1-\mu_i)\mu_i$ in the denominator. Since $\zeta_i$ in \cref{eq:gini} is modulated by both triggering probability $p_i^{D,S}$ and inverse upper bound of the variance $1/(1-\mu_i)\mu_i$, we call Condition~\ref{cond:TPVMm} the directional triggering probability and variance modulated (TPVM) condition for short, where the term ``directional'' is explained in the next remark.
The exponent $\lambda \ge 1$ on the triggering probability gives flexibility to trade-off between the strength of the condition and the quantity of the regret bound:
With a larger $\lambda$, we can obtain a smaller regret bound, while
	with a smaller $\lambda$, the condition is easier to satisfy and allows us to include more applications.

\textbf{Remark 2 (On directional TPVM vs. undirectional TPVM).} 
In the above definition, ``directional'' means that we have $\boldzeta,\boldeta \ge \bzero$ such that $\bmu' \ge \bmu$ in every dimension.
This is weaker than the version of the undirectional TPVM condition, where $\boldzeta, \boldeta \in [-1,1]^m$, and the $\eta_i$ in the right hand side of Eq.\eqref{eq:gini} is replaced with $|\eta_i|$.
The reason we use the weaker version is that some of our applications considered in this paper only satisfy the weaker version.
To differentiate, we use {\TPVMm} when we refer to the directional TPVM condition.

\textbf{Remark 3 (Relation between Conditions \ref{cond:TPM} and \ref{cond:TPVMm}).}
First, when setting $\boldzeta$ to $\bzero$, the directional TPVM condition degenerates to the directional TPM condition.
However, Condition \ref{cond:TPM} is the undirectional TPM condition, which is typically stronger than its directional counterpart.
Thus, in general Condition \ref{cond:TPVMm} does not imply Condition \ref{cond:TPM}.
Nevertheless, with some additional assumptions Condition \ref{cond:TPVMm} does imply Condition \ref{cond:TPM} with the same
	coefficient $B_1$ (See Appendix~\OnlyInFull{\ref{app:TPVMvsTPM}}\OnlyInShort{A} for an example of such assumptions).
Conversely, by applying the Cauchy-Schwartz inequality, one can verify that if a reward function is TPM $B_1$-bounded smooth, then it is 
	(directional) TPVM $(B_1\sqrt{K}/2, B_1, \lambda)$-bounded smooth for any $\lambda\le 2$.
For applications considered in this paper, we are able to reduce their $B_v$ coefficient from $B_1\sqrt{K}/2$ to a coefficient independent of $K$, leading to significant savings in the regret bound. 

\textbf{Remark 4 (Comparing with \cite{merlis2019batch}).}
\citet{merlis2019batch} introduce a Gini-smoothness condition to reduce the batch-size dependency for CMAB problems, which largely inspires our {\TPVMm} condition.
Their condition is specified in a differential form of the reward function, with parameters $\gamma_\infty$ and $\gamma_g$ (See Appendix~\OnlyInFull{\ref{app:gini}}\OnlyInShort{B} for the exact definition).
We emphasize that their original condition cannot handle the probabilistic triggering setting in CMAB-T.
One natural extension is to incorporate triggering probability modulation into their differential form of Gini-smoothness.
However, we found that the resulting TPM Gini-smoothness condition is not strong enough to guarantee desirable regret bounds (See Appendix~\OnlyInFull{\ref{apdx_sec:direct_tpvm}}\OnlyInShort{B.1}).
This motivates us to provide a new condition directly on the difference form $\abs{r(S;\bmu')-r(S;\bmu)}$, similar to the TPM condition in~\cite{wang2017improving}.
Our {\TPVMm} condition (Condition~\ref{cond:TPVMm}) can be viewed as extending Lemma 6 of \cite{merlis2019batch} to incorporate triggering probabilities and
	bound the difference form $\abs{r(S;\bmu')-r(S;\bmu)}$.
Intuitively, $B_1$ and $B_v$ correspond to $\gamma_\infty$ and $\gamma_g$, respectively, but since they are for different forms of definitions, their numerical
	values may not exactly match one another.



\begin{algorithm}[t]
	\caption{BCUCB-T: Bernstein Combinatorial Upper Confidence Bound Algorithm for CMAB-T}\label{alg:BCUCB-T}
			\resizebox{.94\columnwidth}{!}{
\begin{minipage}{\columnwidth}
	\begin{algorithmic}[1]
	    \State {\textbf{Input:}} Base arms $[m]$, computation oracle ORACLE.
	   \State \textbf{Initialize:} For each arm $i$, $T_{0,i} \leftarrow 0, \hat{\mu}_{0,i}=0, \hat{V}_{0,i}=0$. 
	   \For{$t=1, ...,T$ }
	   \State For arm $i$, compute $\rho_{t,i}$ according to \cref{eq:confidence_interval} and set UCB value $\bar{\mu}_{t,i}=\min \{\hat{\mu}_{t-1, i}+\rho_{t, i},1\}$.
	   \State $S_t=\text{ORACLE}(\bar{\mu}_{t,1}, ..., \bar{\mu}_{t,m})$.
	   \State Play $S_t$, which triggers arms $\tau_t \subseteq [m]$ with outcome $X_{t,i}$'s, for $i \in \tau_t$.\label{line:trigger_obs}
	    \State For every $i \in \tau_t$, update $T_{t,i}= T_{t-1, i}+1$, $\hat{\mu}_{t,i}= \hat{\mu}_{t-1,i}+(X_{t,i}-\hat{\mu}_{t-1,i})/T_{t,i}$,
	   $\hat{V}_{t, i}= \frac{T_{t-1, i}}{T_{t, i}}\left(\hat{V}_{t-1,i} + \frac{1}{T_{t, i}} \left(\hat{\mu}_{t-1,i} - X_{t,i}\right)^2\right) $.
	    \label{line:trigger_upd}
	    	   \EndFor
		\end{algorithmic}
		\end{minipage}}
\end{algorithm}

\subsection{BCUCB-T Algorithm and Regret Analysis}\label{sec:BCUCB_reg}
\ts{Describe the algorithm}
Our proposed algorithm BCUCB-T is a generalization of the BC-UCB algorithm~\citep[Algorithm 1]{merlis2019batch} which originally solves the non-triggering CMAB problem. 
\Cref{alg:BCUCB-T} maintains the empirical estimate $\hat{\mu}_{t,i}$ and $\hat{V}_{t,i}$ for the true mean and the true variance of the base arm outcomes.
To select the action $S_t$, it feeds the upper confidence bound $\bar{\mu}_i$ into the offline oracle, where $\bar{\mu}_i$ optimistically estimates the $\mu_i$ by a confidence interval $\rho_{t,i}$.
Compared with the CUCB algorithm~\citep[Algorithm 1]{wang2017improving} which uses confidence interval $\rho_{t,i}=\sqrt{\frac{3\log t}{2T_{t-1,i}}}$ for the CMAB-T problem, the novel part is the usage of empirical variance $\hat{V}_{t-1, i}$ to construct the following ``variance-aware" confidence interval:
\noindent
\resizebox{1.0\columnwidth}{!}{
\begin{minipage}{\columnwidth}
\begin{equation}\label{eq:confidence_interval}
    \rho_{t,i}=\sqrt{\frac{6\hat{V}_{t-1, i} \log t}{T_{t-1, i}}} + \frac{9\log t}{T_{t-1, i}}
\end{equation}
\end{minipage}} 

This confidence interval leverages on the empirical Bernstein inequality instead of the Chernoff-Hoeffding inequality.
As we will show in Appendix \OnlyInFull{\ref{apdx_sec:tpvm_useful}}\OnlyInShort{C.1}, for the first term in \cref{eq:confidence_interval}, $\hat{V}_{t-1, i}$ is approximately equal to the true variance $V_{i}\le(1-\mu_i)\mu_i$ and 
	this indicates the estimation of $\mu_i$ is more accurate when $\mu_i$ is close to $0$ or $1$, which will cancel out the $(1-\mu_i)\mu_i$ coefficient of the $B_v$ term in  Condition \ref{cond:TPVMm} as we discussed before.
The second term of \cref{eq:confidence_interval} is to compensate the usage of the empirical variance $\hat{V}_{t-1, i}$, rather than the true variance $V_{i}$ which is unknown to the learner.


To state the regret bound, we first give some definitions followed by our main result. 

\ts{Definition of technical terms for the technical theorem}
\begin{definition}[(Approximation) Gap]\label{def:gap}
Fix a distribution $D \in \cD$ and its mean vector $\bmu$, for each action $S \in \cS$, we define the (approximation) gap as $\Delta_S=\max\{0, \alpha r(S^*;\bmu)-r(S;\bmu)\}$. 
For each arm $i$, we define $\Delta_i^{\min}=\inf_{S \in \cS: p_i^{D,S} > 0,\text{ } \Delta_S > 0}\Delta_S$, $\Delta_i^{\max}=\sup_{S \in \cS: p_i^{D,S} > 0, \Delta_S > 0}\Delta_S$.
As a convention, if there is no action $S \in \cS$ such that $p_i^{D,S}>0$ and $\Delta_S>0$, 
	then $\Delta_i^{\min}=+\infty, \Delta_i^{\max}=0$. We define $\Delta_{\min}=\min_{i \in [m]} \Delta_i^{\min}$ and $\Delta_{\max}=\max_{i \in [m]} \Delta_i^{\min}$.
\end{definition}

\ts{Regret bound.}

\begin{restatable}{theorem}{lambda1}\label{thm:reg_lambda1}
For a CMAB-T problem instance $([m], \cS, \cD, D_{\text{trig}},R)$ that satisfies monotonicity (Condition \ref{cond:mono}), 
and {\TPVMm} bounded smoothness (Condition \ref{cond:TPVMm}) with coefficient $(B_v, B_1, \lambda)$, 

(1) if $\lambda > 1$, { BCUCB-T (\cref{alg:BCUCB-T}) with an $(\alpha,\beta)$-approximation oracle achieves an $(\alpha,\beta)$-approximate regret} bounded by
\begin{equation}\label{eq:reg_lambda1}
    O\left(\sum_{i \in [m]}\frac{B_v^2 \log K \log T}{\Delta_i^{\min}} + \sum_{i \in [m]}B_1\log^2\left(\frac{B_1K}{\Delta_i^{\min}}\right)\log T\right);
\end{equation}

(2) if $\lambda=1$,  { BCUCB-T (\cref{alg:BCUCB-T}) with an $(\alpha,\beta)$-approximation oracle achieves an $(\alpha,\beta)$-approximate regret} bounded by
\begin{equation}\label{eq:reg_lambda2}
    O\left(\sum_{i \in [m]} \log\left(\frac{B_vK}{\Delta_i^{\min}}\right) \frac{B_v^2\log K \log T}{\Delta_i^{\min}} + \sum_{i \in [m]}B_1\log^2\left(\frac{B_1K}{\Delta_i^{\min}}\right)\log T\right).
\end{equation}
\end{restatable}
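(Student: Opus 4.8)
The plan is to follow the optimism-under-uncertainty template for CMAB, but to combine the variance-aware (empirical Bernstein) confidence intervals with the triggering-probability-modulated (TPM) accounting of \cite{wang2017improving} and the batch-size reduction idea of \cite{merlis2019batch}. First I would fix a clean (``nice'') event $\mathcal{N}$ on which, simultaneously for all rounds $t$ and all arms $i$, the empirical mean concentrates, $|\hat\mu_{t-1,i}-\mu_i|\le \rho_{t,i}$, and the empirical variance is faithful, e.g. $\hat V_{t-1,i}\lesssim V_i + O(\log t / T_{t-1,i})$. Both follow from the empirical Bernstein inequality \cite{audibert2009exploration}, with the constants $6$ and $9$ in \cref{eq:confidence_interval} chosen precisely so that a union bound over $t\le T$ and $i\in[m]$ leaves only an $O(m)$ contribution to the regret from the complement of $\mathcal{N}$. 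On $\mathcal{N}$, monotonicity (Condition~\ref{cond:mono}) together with $\bar\mu_{t,i}\ge\mu_i$ and the oracle guarantee give the usual optimism chain $r(S_t;\bar\mu_t)\ge \alpha\, r(S^*;\bmu)$, so that the per-round approximate gap $\Delta_{S_t}$ is controlled by $r(S_t;\bar\mu_t)-r(S_t;\bmu)$.

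Next I would bound this per-round quantity using the {\TPVMm} condition (Condition~\ref{cond:TPVMm}) with $\bmu'=\bar\mu_t$, splitting the confidence width $\bar\mu_{t,i}-\mu_i\le 2\rho_{t,i}$ into the Bernstein variance part (absorbed into $\zeta_i$) and the additive part (absorbed into $\eta_i$). The crucial cancellation is that $\zeta_i^2/((1-\mu_i)\mu_i)\lesssim \hat V_{t-1,i}\log t/(T_{t-1,i}(1-\mu_i)\mu_i)\lesssim \log t/T_{t-1,i}$, because on $\mathcal{N}$ the empirical variance obeys $\hat V_{t-1,i}\lesssim V_i\le(1-\mu_i)\mu_i$ (the footnoted bound). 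This removes the $1/((1-\mu_i)\mu_i)$ blow-up and yields, on $\mathcal{N}$ and when the oracle succeeds,
\begin{equation}
\Delta_{S_t}\lesssim B_v\sqrt{\sum_{i\in[m]}(p_i^{D,S_t})^{\lambda}\,\frac{\log T}{T_{t-1,i}}}\;+\;B_1\sum_{i\in[m]}p_i^{D,S_t}\,\frac{\log T}{T_{t-1,i}}.
\end{equation}

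The heart of the argument is summing the right-hand side over $t=1,\dots,T$. I would first pass from the random counter $T_{t-1,i}$ to the accumulated triggering mass $\sum_{s<t}p_i^{D,S_s}$: a Bernstein/martingale concentration (the triggering-probability-group device of \cite{wang2017improving}) shows that with high probability $T_{t-1,i}\gtrsim \tfrac12\sum_{s<t}p_i^{D,S_s}$ once this mass is $\Omega(\log T)$, reducing the analysis to an essentially deterministic-triggering accounting at the cost of $O(m)$ extra regret. The additive $B_1$-term is then the standard TPM counting, producing the $\sum_i B_1\log^2(B_1K/\Delta_i^{\min})\log T$ contribution. The genuine obstacle is the square-root term: I would square the estimate $\Delta_{S_t}\lesssim B_v\sqrt{\sum_i (p_i^{D,S_t})^\lambda \log T/T_{t-1,i}}$ to obtain $1\lesssim (B_v^2\log T/\Delta_{S_t}^2)\sum_i (p_i^{D,S_t})^\lambda/T_{t-1,i}$, multiply by $\Delta_{S_t}$, and allocate the weight $\tfrac{1}{\Delta_{S_t}}(p_i^{D,S_t})^\lambda/T_{t-1,i}$ to individual arms. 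Bucketing the triggering probabilities geometrically into $O(\log K)$ groups $p_i^{D,S_t}\in(2^{-j-1},2^{-j}]$ and, within each group, summing $\sum_t (p_i^{D,S_t})^{\lambda}/T_{t-1,i}$ against $\sum_{s<t}p_i^{D,S_s}$ produces the leading $\sum_i B_v^2\log K\log T/\Delta_i^{\min}$; for $\lambda>1$ the per-arm geometric series over bucket indices converges and nothing else survives, whereas for $\lambda=1$ that series is harmonic and contributes the extra $\log(B_vK/\Delta_i^{\min})$ factor of \cref{eq:reg_lambda2}. Carrying out this bucketed counter sum so that the batch size enters only through the number of buckets (hence $\log K$, not $K$), and doing so with a single rather than a doubly nested layering so as to shave the $\log K$ relative to \cite{merlis2019batch}, is the step I expect to be most delicate. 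Finally I would collect the three pieces (the $O(m)$ failure/initialization terms, the $B_1$ additive term, and the $B_v$ square-root term) and take expectations to absorb the oracle's $\beta$-success probability and the martingale failure events, obtaining the stated bounds in both regimes.
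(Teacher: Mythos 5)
Your first half — the nice sampling event from the empirical Bernstein inequality, the optimism chain via monotonicity and the oracle, and the application of \cref{cond:TPVMm} with $\zeta_i^2/((1-\mu_i)\mu_i)\lesssim \log t/T_{t-1,i}$ so that the per-round gap is controlled by $e_t(S_t)\approx B_v\sqrt{\sum_{i}(p_i^{D,S_t})^\lambda \log T/T_{t-1,i}}+B_1\sum_i p_i^{D,S_t}\log T/T_{t-1,i}$ — matches the paper's proof essentially step for step, and your martingale link between $T_{t-1,i}$ and the cumulative triggering mass is a legitimate substitute for the triggering-probability-group event of Wang and Chen.

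The gap is in the counting step that you yourself flag as delicate, and it is exactly where the paper's key new idea lives. You attribute the $\log K$ to ``the batch size entering only through the number of buckets,'' but that cannot be the mechanism: the bucket index tracks the magnitude of $p_i^{D,S_t}$, which has nothing to do with $K$, so the number of buckets is either infinite-but-convergent (for $\lambda>1$, via $\sum_j 2^{-j(\lambda-1)}$, contributing no $K$-dependence at all) or truncated at $j_i^{\max}=O(\log(B_vK/\Delta_i^{\min}))$ (which is what yields the extra factor for $\lambda=1$, as you correctly state). Hence for $\lambda>1$ your sketch generates no $\log K$ from bucketing, and the batch-size factor must come from the within-bucket sum. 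But your within-bucket sum $\sum_t (p_i^{D,S_t})^\lambda/T_{t-1,i}$, bounded against the cumulative mass, is a harmonic sum over the group counter whose only a priori truncation is the horizon: it gives $O(\log T)$, so your leading term is $\sum_i B_v^2\log^2 T/\Delta_i^{\min}$ rather than $\sum_i B_v^2\log K\log T/\Delta_i^{\min}$. Crucially, this cannot be repaired by truncating individual counters using the event $\{\Delta_{S_t}\le e_t(S_t)\}$, because that event only constrains the \emph{sum} over arms; any single counter can be arbitrarily large while the event still holds. The paper's fix is the reverse amortization trick: write $\Delta_{S_t}\le 2e_{t,1}(S_t)-\Delta_{S_t}$ and spread the subtracted $-\Delta_{S_t}$ as $-\Delta_{S_t}/K$ over the at most $K$ triggered arms, so that any arm whose group-$j$ counter exceeds $L_2\approx K\cdot L_1$ (with $L_1\approx B_v^2 2^{(-j+1)(\lambda-1)}\log T/(\Delta_i^{\min})^2$) contributes a nonpositive amount and is discarded; the surviving per-arm harmonic sum runs only from $L_1$ to $K L_1$ and equals $O(\log K)$. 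Without this device (or a fallback to the layered events of Merlis and Mannor, which costs $\log^2 K$), the stated bounds do not follow. The same remark applies to your $B_1$ term: it is not ``standard TPM counting'' but the same amortized allocation with thresholds of ratio $O(K\Delta_i^{\max}/\Delta_i^{\min})$, which is what produces the $\log^2(B_1K/\Delta_i^{\min})$ factor.
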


\textbf{Remark 5 (Discussion for Regret Bounds).} 
Looking at the above regret bounds, for $\lambda > 1$ and $\lambda=1$, the leading terms are $O( \sum_{i=1}^m\frac{ B_v^2\log K \log T}{\Delta_{i}^{\min}} )$ and $O( \sum_{i=1}^m(\log \frac{B_vK}{\Delta_i^{\min}})\frac{  B_v^2\log K \log T}{\Delta_{i}^{\min}})$.
When $B_v \ge B_1$ (which typically holds, see \cref{sec:app}) and gaps are small (i.e., $\Delta^i_{\min} \le 1/\log^2K$),  
	the dependencies over $K$ are $O(\log K)$ and $O(\log^2 K)$, respectively. 
For the setting of CMAB-T, \cite{wang2017improving} is the closest work to our paper, where the reward function satisfies Condition \ref{cond:mono} and Condition \ref{cond:TPM} with coefficient $B_1$. As mentioned in Remark 3 in \cref{sec:TPVM}, their reward function trivially satisfies our Condition~\ref{cond:TPVMm} with coefficient $(B_1\sqrt{K}/2, B_1, 2)$ so our work reproduces a bound of $O(\sum_{i \in [m]}\frac{B_1^2K\log K\log T}{\Delta_i^{\min}})$, matching \cite{wang2017improving} up to a factor of $O(\log K)$. As will be shown in \cref{sec:app}, for applications that satisfy TPVM (or {\TPVMm}) condition with non-trivial $B_v$, i.e., $B_v=o(B_1 \sqrt{K})$, our work improves their regret bounds up to a factor of $O(K/\log K)$. As for the lower bound, according to the lower bound results by~\citet{merlis2020tight}, our regret bound is tight up to a factor of $O(\log^2 K)$ on the (degenerate) non-triggering CMAB case. We defer the details about the lower bound results and the distribution-independent regret bounds in the Appendix \OnlyInFull{\ref{apdx_sec:summary_reg}}\OnlyInShort{C.5}.

\ts{Novel ideas in the proof.}
\begin{proof}[Proof ideas.]
Our proof uses a few events to filter the total regret and then bound these event-filtered regrets separately.
As will be shown in the supplementary material, the event that contributes to the leading regret is $ E_t=\{ \Delta_{S_t} \le e_t(S_t)\}$, where the error term $e_t(S_t)= O(B_v\sqrt{\sum_{i\in \tilde{S}_t}(\frac{\log t}{T_{t-1,i}})(p_{i}^{D,S_t})^\lambda} + B_1\sum_{i \in \tilde{S}_t}(\frac{\log t}{T_{t-1,i}})(p_{i}^{D,S_t}))$.
To handle the probabilistic triggering, our key ingredient is to use the triggering probability group technique proposed by~\citet{wang2017improving} in the definition of above events.
For the $\lambda=1$ case, one new issue arises since the triggering probability group divides sub-optimal actions $S$ into \textit{infinite} geometrically separated bins $(1/2,1], (1/4,1/2], ..., (2^{-j}, 2^{-j+1}),...,$ over $p_i^{D,S}$, and the regret should be proportional to the number of bins (which are infinitely large).
To handle this, we show that it suffices to consider the first $j \le j_i^{\max}=O(\log \frac{B_vK}{\Delta_i^{\min}})$ bins (which is why \cref{eq:reg_lambda2} has this additional factor in the leading term) and the regret of other bins (with very small $p_i^{D,S}$) can be safely neglected.
To bound the leading regret filtered by $E_t$ as mentioned earlier, we use the reverse amortization trick from~\citet{wang2017improving,wang2018thompson} and adaptively allocates each arm's regret contribution (according to thresholds on the number of times arm $i$ is triggered).  
Note that these thresholds are carefully chosen for the error term $e_t(S_t)$, since trivially following the thresholds in~\citet{wang2017improving} would either yield no meaningful bound or 
	suffer from additional $O(\log T)$ or $O(\log K)$ factors in the regret. 
	As a by-product, one can also use our analysis to replace that of~\citet{merlis2019batch} and~\citet{perrault2020budgeted} (where similar error term $e_t(S_t)$ appears) to improve their bound by a factor of $O(\log K)$. For the detailed proofs, we defer them in the Appendix~\OnlyInFull{\ref{apdx_sec:main_regret_analysis}}\OnlyInShort{C}.
\end{proof}

\section{Algorithm and Analysis For CMAB with Independent Arms}\label{sec:independent}
\ts{Two additional conditions for the independent case.}
In this section, we aim to show that for the non-triggering CMAB, the assumption that all arms are independent, compounded with 
	a non-triggering version of the above TPVM condition (named as VM condition below), together allow us to completely remove the $O(\log^2 K)$ or $O(\log K)$ dependence in the existing regret bounds.
In particular, we focus on the a non-triggering CMAB problem instance $([m],\cS, \cD, R)$. Its setting is similar to CMAB-T, but here 
	we assume that $\cS$ are collections of subsets of $[m]$ and only arms pulled by action $S_t\in\cS$ are revealed as feedback (i.e., $\tau_t=S_t$). 


\begin{condition}[VM Bounded Smoothness]\label{cond:VM}
We say that a non-triggering CMAB problem instance $([m],\cS, \cD, R)$ satisfies the Variance Modulated (VM) $(B_v, B_1\ge 0 )$-bounded smoothness condition, if for any action $S \in \cS$, 
	any distribution $D,D' \in \cD$ with mean vector $\bmu, \bmu' \in (0,1)^m$, for any $\boldzeta, \boldeta\in [-1,1]^m$ 
	s.t. $\bmu'=\bmu+\boldzeta+\boldeta$, we have $|r(S;\bmu')-r(S;\bmu)|\le B_v\sqrt{\sum_{i\in S}\frac{\zeta_i^2 }{(1-\mu_i)\mu_i}} + B_1 \sum_{i\in[m]}|\eta_i|$.
\end{condition}

\begin{condition}[Independent base arms]\label{cond:independent}
We say that the base arms are independent, if for any $D \in \cD$, the outcome vectors $\bX\sim D$ are independent (across base arms), i.e., $D=\otimes_{i \in [m]} D_i$. 
\end{condition}

\begin{condition}[$C_1 \mu_i(1-\mu_i)$ sub-Gaussian]\label{cond:subg} The outcome distribution $D_i$ with mean $\mu_i$ is $C_1 \mu_i(1-\mu_i)$ sub-Gaussian, where $C_1$ is a known coefficient.
\end{condition}

\textbf{Remark 6 (Comparison with TPVM Condition and \cite{merlis2019batch}).} Condition~\ref{cond:VM} is the non-triggering version of TPVM, by setting $p_i^{D,S}=1$ if $i \in S$ and $0$ otherwise. 
As shown in Appendix~\OnlyInFull{\ref{apdx_sec:derive_vm}}\OnlyInShort{B.2}, Condition~\ref{cond:VM} can be implied by the original Gini-smoothness condition~\cite{merlis2019batch} with $(B_v,B_1)=(3\sqrt{2}\gamma_g,\gamma_{\infty})$, so PMC application satisfies the VM condition (the fifth row in~\cref{tab:app}). But different from~\citep[Lemma 6]{merlis2019batch} and {\TPVMm}, the VM condition is the undirectional version (i.e., we allow $\boldzeta, \boldeta$ to be negative). This is important for using empirical means in the algorithm (as we did in our SESCB policy), since they are not necessarily larger than the true means. 
%

\textbf{Remark 7 (Motivation and Feasibility for Condition~\ref{cond:subg}).}
Condition~\ref{cond:subg} helps to cancel out the $(1-\mu_i)\mu_i$ effect in the VM condition 
without explicitly using the empirical variance that will bring in additional batch-size dependent errors.  For Bernoulli arms with mean $\mu_i$, we can compute the explicit value of $C_1$, i.e., $C_1=\max_{i \in [m]}\frac{1-2\mu_i}{2\ln(\frac{1-\mu_i}{\mu_i})(1-\mu_i)(\mu_i)}$ by \cite{marchal2017sub}. Notice that $C_1$ could be large when $\mu_i$ is approaching $0$ or $1$, but it is safe to consider $\mu_i$ over bounded supports that are not too close to $0$ or $1$, e.g., when $\mu_i \in [0.01, 0.99]$, $C_1\approx 10.78$. 



\textbf{SESCB Algorithm.} Our proposed algorithm is shown in \cref{alg:SECUCB}. Instead of maintaining one upper confidence bound for each base arm $i$, we maintain an upper confidence bound for each super arm $S$, based on the estimated reward of the empirical means and a confidence interval.
In line~\ref{line:SE_interval}, we compute the confidence interval $\rho_{t}(S)$ by taking the max of two tentative segments within the square root, which corresponds to two different segments of the concentration bound for the sub-exponential random variable~\cite{vershynin2018high}. Such a sub-exponential concentrated confidence interval comes from the VM condition by treating $(\zeta_i)_{i \in S}$ as $|S|$ \textit{independent} sub-Gaussian random variables, whose summation produces a more concentrated sub-exponential random variable compared with considering them as $|S|$ possibly dependent variables. 
It is notable that for the second tentative interval, SESCB uses the min-counter $T^{\min}_{t-1,S}$ instead of all counters $T_{t-1,i}$ in $S$, which is the key ingredient that removes the $O(\log K)$ factor 
	as to be shown in the analysis.
After getting $\rho_t(S)$, {the optimistic reward is defined in~\cref{line:SE_opt_reward} and the learner selects $S_t$ via the $(\alpha,\beta)$-approximation oracle $\bar{O}$ and updates the corresponding statistics}. 

\begin{algorithm}[t]
	\caption{SESCB: Sub-Exponential Sampling for Combinatorial Bandits with Independent Arms}\label{alg:SECUCB}
		\resizebox{.93\columnwidth}{!}{
\begin{minipage}{\columnwidth}
	\begin{algorithmic}[1]
	    \State \textbf{Input:} Base arms $[m]$, sub-Gaussian parameter $C_1$, VM smoothness coefficient $B_v$, {$(\alpha,\beta)$-approximation ORACLE $\bar{O}$}.
	   \State \textbf{Initialize:} For each arm $i$, $T_{0,i} \leftarrow 0, \hat{\mu}_{0,i}=0$.
	   \For{$t=1, ...,T$ }
	  \State  {For all $S \in \cS$, define min-count $T^{\min}_{t-1, S}=\min_{i \in S}T_{t-1, i}$, let interval $\rho_{t}(S)=B_v\sqrt{\sum_{i \in S} \frac{C_1}{T_{t-1,i}}+ \max\left\{8C_1\sqrt{\sum_{i \in S}\frac{\log(2|\cS|T)}{T_{t-1,i}^2}}, \frac{8C_1\log(2|\cS|T)}{T^{\min}_{t-1, S}}\right\}}$.}\label{line:SE_interval}
	   \State {For all $S \in \cS$, define optimistic reward $\bar{r}_t(S)=r(S; \hat{\bmu}_{t-1})+\rho_{t}(S)$.}\label{line:SE_opt_reward}
	   \State Play {$S_t=\bar{O}(\hat{\bmu}_t, \bT_t)$ s.t. $\Pr\left[\bar{r}_t(S)\ge \alpha \cdot \bar{r}_t(\bar{S}^*_t)\right] \ge \beta$,where $\bar{S}^*_t=\arg\max_{S\in \mathcal{S}} \bar{r}_t(S)$}, and observe outcome $X_{t,i}$'s, for $i \in S_t$.\label{line:SE_oracle}
	    \State For every $i \in S_t$, update $T_{t,i}= T_{t-1, i}+1$, $\hat{\mu}_{t,i}= \hat{\mu}_{t-1,i}+(X_{t,i}-\hat{\mu}_{t-1,i})/T_{t,i}$.
	    	   \EndFor
		\end{algorithmic}   
				\end{minipage}}
\end{algorithm}
\textbf{Regret Bound and Analysis.} The following theorem summarizes the regret bound for \cref{alg:SECUCB}.
\begin{restatable}{theorem}{IndependentGini}\label{thm:independent}
For a non-triggering CMAB problem instance $([m],\cS, \cD, R)$ that satisfies VM bounded smoothness (Condition \ref{cond:VM}) with coefficient $(B_v, B_1)$, Condition~\ref{cond:independent} and Condition~\ref{cond:subg} with coefficient $C_1$, {SESCB (\cref{alg:SECUCB}) with an $(\alpha,\beta)$-approximation oralce achieves $(\alpha,\beta)$-approximate regret} that is bounded by $O\left(\sum_{i \in [m]}\frac{B_v^2\log T}{\Delta_i^{\min}} + \frac{B_v^2mK}{\Delta_{\min}}+m\Delta_{\max}\right)$.
\end{restatable}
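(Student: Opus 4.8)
The plan is to run the standard optimism analysis, with two distinctive ingredients: an \emph{independent} sub-exponential concentration that reproduces the two-branch radius $\rho_t(S)$ exactly, and a proportional charging of each round's gap that keeps the leading regret free of any batch-size factor. The conceptual reason SESCB improves on ESCB/BC-UCB is that treating the coordinates of $\bmu$ as independent (Conditions~\ref{cond:independent} and~\ref{cond:subg}) makes the confidence radius a genuinely sub-exponential quantity, so the $\log(2|\cS|T)$ factor lives only in its \emph{deviation} part, while the leading ``variance'' part $B_v^2\sum_{i\in S}C_1/T_{t-1,i}$ carries no logarithm; the min-counter in the deviation part then confines that logarithm to a single arm per round.

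First I would establish the good event $\mathcal{N}$ that $\abs{r(S;\hat\bmu_{t-1})-r(S;\bmu)}\le\rho_t(S)$ for every $S\in\cS$ and every $t$. Applying Condition~\ref{cond:VM} with $\boldeta=\bzero$ and $\zeta_i=\hat\mu_{t-1,i}-\mu_i$ reduces the left side to $B_v\sqrt{\sum_{i\in S}\zeta_i^2/((1-\mu_i)\mu_i)}$. Writing $Y_i=(\hat\mu_{t-1,i}-\mu_i)/\sqrt{(1-\mu_i)\mu_i}$, Condition~\ref{cond:subg} makes each $Y_i$ sub-Gaussian with proxy $C_1/T_{t-1,i}$ and Condition~\ref{cond:independent} makes them independent, so $W=\sum_{i\in S}Y_i^2$ satisfies $\E[W]\le\sum_{i\in S}C_1/T_{t-1,i}$ and is a sum of independent sub-exponentials. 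A Bernstein bound then controls $W-\E[W]$ by the maximum of a sub-Gaussian regime $\sim\sqrt{\log(1/\delta)\sum_{i\in S}C_1^2/T_{t-1,i}^2}$ and a linear regime $\sim\log(1/\delta)\cdot\max_{i\in S}C_1/T_{t-1,i}=\log(1/\delta)\cdot C_1/T^{\min}_{t-1,S}$; these are exactly the two arguments of the $\max\{\cdot,\cdot\}$ in $\rho_t(S)^2/B_v^2$. Taking $\delta=1/(2|\cS|T)$ and a union bound over $\cS$ and $t$ (the origin of the $\log(2|\cS|T)$ factor) gives $\Pr[\mathcal{N}]\ge 1-O(1/T)$. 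On $\mathcal{N}$ the oracle guarantee yields $\Delta_{S_t}\le 2\rho_t(S_t)$: optimism gives $\bar r_t(\bar S^*_t)\ge\bar r_t(S^*)\ge r(S^*;\bmu)$, the $(\alpha,\beta)$-oracle gives $\bar r_t(S_t)\ge\alpha\bar r_t(\bar S^*_t)$, and validity gives $\bar r_t(S_t)\le r(S_t;\bmu)+2\rho_t(S_t)$; no monotonicity is needed because Condition~\ref{cond:VM} is two-sided.

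Using $\sqrt{a+b}\le\sqrt a+\sqrt b$ I would split $\Delta_{S_t}\lesssim B_v\sqrt{\sum_{i\in S_t}C_1/T_{t-1,i}}+B_v\sqrt{\max\{(B),(C)\}}$ and partition the suboptimal rounds according to which summand is at least $\tfrac12\Delta_{S_t}$. On the ``leading'' rounds, $\Delta_{S_t}^2\lesssim B_v^2 C_1\sum_{i\in S_t}1/T_{t-1,i}$, and I charge the gap proportionally via $\Delta_{S_t}=\sum_{i\in S_t}\Delta_{S_t}\,\tfrac{1/T_{t-1,i}}{\sum_{j\in S_t}1/T_{t-1,j}}$; bounding the denominator below by $\Delta_{S_t}^2/(cB_v^2C_1)$ and using $\Delta_{S_t}\ge\Delta_i^{\min}$ turns arm $i$'s share into $\lesssim \frac{B_v^2 C_1}{\Delta_i^{\min}}\sum_{t:i\in S_t}1/T_{t-1,i}\le\frac{B_v^2 C_1}{\Delta_i^{\min}}(1+\log T)$, since $T_{t-1,i}$ runs through the distinct values $1,2,\dots$ on those rounds. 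Summing over $i$ gives the leading term $O(\sum_{i\in[m]}B_v^2\log T/\Delta_i^{\min})$ with \emph{no} batch-size factor; the proportional charging is what replaces the lossy ``blame the worst of $K$ arms'' step that would otherwise cost a factor $K$.

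The main obstacle is the ``deviation'' rounds, where the min-counter is decisive. Here I would first absorb branch $(B)$ into $(A)$ and $(C)$ using $\sqrt{\sum_i 1/T_{t-1,i}^2}\le\sqrt{(1/T^{\min}_{t-1,S})\sum_i 1/T_{t-1,i}}$ and AM--GM, leaving only the linear, min-counter term $B_v\sqrt{C_1\log(2|\cS|T)/T^{\min}_{t-1,S_t}}$. Since this depends on $S_t$ only through its least-observed arm, I would charge each such round to $\argmin_{i\in S_t}T_{t-1,i}$ and, for each arm $i$, split by a threshold $\ell_i\propto B_v^2\log(2|\cS|T)/(\Delta_i^{\min})^2$ on its counter: above $\ell_i$ the deviation term is at most $\tfrac12\Delta_{S_t}$ and is already accounted for by the leading analysis, while below $\ell_i$ the counter takes $\le\ell_i$ distinct values, so $\sum_{n\le\ell_i}1/\sqrt n\lesssim\sqrt{\ell_i}$ yields a per-arm contribution $\lesssim B_v^2\log(2|\cS|T)/\Delta_i^{\min}$. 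The $\log T$ portion of $\log(2|\cS|T)$ merges with the leading term, and the $\log|\cS|=O(K)$ portion, summed over the $m$ arms against $1/\Delta_{\min}$, produces the stated lower-order term $O(B_v^2 mK/\Delta_{\min})$. The delicate points I expect are (i) verifying that confining $\log(2|\cS|T)$ to a single arm (the payoff of the min-counter and of independence) really keeps it out of the $\log T$ leading term, and (ii) the boundary technicality that Condition~\ref{cond:VM} demands $\hat\bmu_{t-1}\in(0,1)^m$, which I would handle by assigning the finitely many rounds with an empirical mean at $0$ or $1$, the $m$ initialization rounds (where some $T_{t-1,i}=0$), and the $O(1/T)$ failure of $\mathcal{N}$ all to the remaining lower-order term $m\Delta_{\max}$.
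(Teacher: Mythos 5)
Your proposal is correct, and its first half coincides with the paper's: both normalize $\hat{\mu}_{t-1,i}-\mu_i$ by $\sqrt{(1-\mu_i)\mu_i}$, use Conditions~\ref{cond:subg} and~\ref{cond:independent} to make $Y_{t,S}=\sum_{i\in S}u_{t,i}^2$ sub-exponential with parameters $\bigl(32C_1^2\sum_{i\in S}1/T_{t-1,i}^2,\,4C_1/T^{\min}_{t-1,S}\bigr)$, invoke the two-regime sub-exponential tail with $\delta=1/(2|\cS|T)$ to recover exactly the two branches of $\rho_t(S)$, and deduce $\Delta_{S_t}\le 2\rho_t(S_t)$ from optimism plus the $(\alpha,\beta)$-oracle. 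Where you genuinely diverge is the regret accounting. The paper splits rounds by whether $\sum_{i\in S_t}1/T_{t-1,i}$ is below or above a threshold proportional to $\Delta_{S_t}^2/(C_1B_v^2)$: below it, a contradiction argument forces the radius into the linear min-counter branch, so $\Delta_{S_t}\lesssim B_v\sqrt{C_1\log(2|\cS|T)/T^{\min}_{t-1,S_t}}$, and the entire gap is charged to the min-arm via the reverse-amortization trick with counter thresholds (this is the paper's source of the $\log T$ term); above it, the pigeonhole $K/T^{\min}_{t-1,S_t}\ge\sum_{i\in S_t}1/T_{t-1,i}$ gives $\Delta_{S_t}\lesssim B_v\sqrt{KC_1/T^{\min}_{t-1,S_t}}$ and hence the $T$-independent $O(B_v^2mK/\Delta_{\min})$ term. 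You instead split by which branch of $\rho_t$ dominates: on expectation-dominant rounds you charge the gap proportionally with weights $(1/T_{t-1,i})/\sum_{j\in S_t}(1/T_{t-1,j})$ across all arms of $S_t$, so harmonic sums give $\sum_{i}B_v^2\log T/\Delta_i^{\min}$ carrying no union-bound logarithm at all; on deviation-dominant rounds you charge the min-arm with thresholds, and your $O(B_v^2mK/\Delta_{\min})$ term emerges from the $\log|\cS|$ piece of the union bound rather than from a separate case. Both accountings are valid and reach the stated bound; yours makes it especially transparent why the leading $\log T$ term is free of both $K$ and $\log|\cS|$, while the paper's keeps every round's charge on a single arm (no proportional weights needed) and obtains the $mK$ term from a clean, logarithm-free pigeonhole. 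Two loosenesses in your write-up mirror the paper rather than exceed it: treating $\log|\cS|$ as $O(K)$ (strictly it is $O(K\log m)$ for size-$K$ actions, an absorption the paper also makes), and relegating the first $m$ rounds, the concentration failure, and the boundary requirement $\hat{\bmu}_{t-1}\in(0,1)^m$ of Condition~\ref{cond:VM} (which the paper silently ignores) to the $m\Delta_{\max}$ term.
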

Looking at the above regret bound, the leading term totally removes the $O(\log K)$ dependency compared with \cref{thm:reg_lambda1}. Compared with~\cite{merlis2019batch}, our regret bounds improves theirs by $O(\log ^2K)$.
\begin{proof}[Proof Ideas.]
Similar to the proof of \cref{thm:reg_lambda1}, we first identify an error term $e_t(S_t)=2\rho_t(S_t)$ as~\cref{line:SE_interval} and consider the regret filtered by the event $\{\Delta_{S_t} \le e_t(S_t)\}$. The key ingredient is by following Condition \ref{cond:VM} and Condition \ref{cond:subg}, and bound $|r(S;\hat{\bmu})-r(S;\bmu)|\le B_v \sqrt{\sum_{i \in S}u_{t,i}^2}$, where $u_{t,i}$ is a ($\frac{C_1}{T_{t-1,i}}$)-sub-Gaussian random variable. Let $Y_{t,S}=\sum_{i \in S}u^2_{t,i}$. One can show $Y_{t,S}$ is a $(32C_1^2\sum_{i \in S}\frac{1}{T_{t-1,i}^2}, 4C_1\frac{1}{T^{\min}_{t-1,S}})$-sub-Exponential random variable, so applying the concentration bounds on $Y_{t,S}$ \cite{vershynin2018high} and one can obtain the above $e_t(S_t)$.
    Then we consider two cases based on the value of $\sum_{i \in S_t}{\frac{1}{T_{t-1,i}}}$. For both cases, we use the reverse amortization trick from~\cite{wang2017improving} but different from~\cref{sec:BCUCB_reg}, $e_t(S_t)$ ensures that we only need to consider regret contributions from the min-arm (which is least played in $S_t$) according to certain batch-size independent thresholds. This in turn gives batch-size independent regret bounds that totally removes $O(\log K)$ in the leading term. See Appendix \OnlyInFull{\ref{apdx_sec:independent}}\OnlyInShort{D} for more details.
\end{proof}

\textbf{Computational Efficiency.}
Notice that like other ESCB-type algorithms \cite{combes2015combinatorial}, for the general reward function $r(S;\bmu)$, { there may not exist efficient $\bar{O}$, so one needs to enumerate over all possible actions $S \in \cS$ each round, where the time complexity could be
as high as $O(|\cS|T)$. 
However, when $r(S;\bmu)$ is a monotone submodular function (e.g, the reward function of the PMC problem~\cite{chen2016combinatorial}), we can modify $\rho_t(S)$ so that the optimistic reward $\bar{r}_t(S)$ is also monotone submodular, which can be efficiently optimized with a greedy $(1-1/e, 1)$-approximation oracle. Observe that the current $\rho_t(S)$ is not submodular since the maximum of two submodular functions are not necessarily submodular, but we know the summation of two submodular functions are submodular.
Based on this observation, we change $\rho_t(S)$ to $\rho'_t(S)=B_v\sqrt{\sum_{i \in S} \frac{C_1}{T_{t-1,i}}+ 8C_1\sqrt{\sum_{i \in S}\frac{\log(2|\mathcal{S}|T)}{T_{t-1,i}^2}}+ \frac{8C_1\log(2|\mathcal{S}|T)}{T^{\min}_{t-1, S}} }$, where $\max$ is replaced with a sum ($+$), and we prove in Appendix \OnlyInFull{\ref{apdx_sec:eff_escb}}\OnlyInShort{D.3} that $\rho'_t(S)$ is a monotone submodular function. Now we can use the greedy oracle to maximize a new optimistic reward ${r}'_t(S)=r(S;\hat{\boldsymbol{\mu}}_{t-1})+\rho'_t(S)$ in our SESCB algorithm. As for the final regret, using $\rho'_t(S)$ instead of $\rho_t(S)$ only worsens the final regret by a constant factor of two. 

Compared with \cite{merlis2019batch} for the PMC problem, our SESCB achieves the same $(1-1/e, 1)$-approximate regret bound but completely removes the $O(\log K)$ dependency. Moreover, our greedy oracle is efficient with computational complexity $O(TK^2LV)$, where $T$ is the total number of rounds, $K$ is the number of source nodes to be selected in each round, and $L, V$ are the total number of source and target nodes, which is much faster than the enumeration method with exponential time complexity $O(T\binom{L}{K}KV)$. For the regret analysis when using $r'_t(S)$, see Appendix \OnlyInFull{\ref{apdx_sec:eff_escb}}\OnlyInShort{D.3} for more details.}

\section{Applications}\label{sec:app}
In this section, we show how various applications satisfy our new TPVM, {\TPVMm} or VM smoothness condition and their corresponding $(B_v, B_1, \lambda)$ coefficients with non-trivial $B_v$, i.e., $B_v=o(B_1 \sqrt{K})$, which in turn improves the regret bounds over the batch-size dependence of $K$.

\begin{theorem}\label{lem:app_tab}
The combinatorial cascading bandits~\cite{kveton2015combinatorial}, the multi-layered network exploration~\cite{liu2021multi}, the influence maximization problems~\cite{wang2017improving} and the probabilistic maximum coverage problem~\cite{merlis2019batch} satisfy the TPVM (TPVM$_<$ or VM) conditions with coefficients $(B_v, B_1, \lambda)$, resulting regret bounds and improvements shown in \cref{tab:app}.
\end{theorem}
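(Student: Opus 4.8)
The plan is to prove \cref{lem:app_tab} as four separate verifications, one per application, since the statement is exactly the assertion that each reward function obeys the relevant smoothness inequality with the tabulated $(B_v,B_1,\lambda)$. For the three probabilistically triggered applications (combinatorial cascading bandits, multi-layered network exploration, and influence maximization) I would establish the \emph{difference} form of \cref{cond:TPVMm} directly, rather than passing through a differential Gini-smoothness form; as noted in Remark~4, the triggering-modulated differential form is too weak, so the bound on $\abs{r(S;\bmu')-r(S;\bmu)}$ must be produced by hand. For the non-triggering probabilistic maximum coverage problem I would instead reduce to the result of \cite{merlis2019batch}, invoking the implication (shown in the appendix) that Gini-smoothness with parameters $(\gamma_g,\gamma_\infty)$ yields \cref{cond:VM} with $(B_v,B_1)=(3\sqrt{2}\gamma_g,\gamma_\infty)$, so that no new smoothness computation is needed there.

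The core technique, which I would first isolate as a reusable lemma, is a telescoping-plus-Cauchy–Schwarz argument; write $p_i$ for $p_i^{D,S}$. Setting $\boldeta=\bzero$ so that $\bmu'=\bmu+\boldzeta$, I would expand $r(S;\bmu')-r(S;\bmu)$ by changing one coordinate of the mean vector at a time, obtaining an expression $\sum_i c_i(\bmu,\bmu')\,\zeta_i$ in which the coefficient factors through the triggering probability, i.e.\ $\abs{c_i}\le p_i\, g_i$ for some conditional ``reach/marginal-gain'' quantity $g_i\in[0,1]$. Writing $p_i g_i\,\zeta_i = \big(p_i^{\lambda/2}\zeta_i/\sqrt{(1-\mu_i)\mu_i}\big)\cdot\big(p_i^{1-\lambda/2}g_i\sqrt{(1-\mu_i)\mu_i}\big)$ and applying Cauchy–Schwarz splits the bound into the desired factor $B_v\sqrt{\sum_i p_i^{\lambda}\zeta_i^2/((1-\mu_i)\mu_i)}$ times a residual $\sqrt{\sum_i p_i^{2-\lambda}g_i^2(1-\mu_i)\mu_i}$. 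The whole content of each application then reduces to showing that this residual is $O(1)$ (or at worst logarithmic in $K$), independent of the batch size, which is precisely what pins down the value of $B_v$ and the admissible $\lambda$.

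For cascading bandits with $r(S;\bmu)=1-\prod_{i\in S}(1-\mu_i)$ the computation is immediate: the telescoping coefficient is exactly $p_i\prod_{j>i}(1-\mu_j')$, so with $g_i=\prod_{j>i}(1-\mu_j')\le 1$ and $\lambda=1$ the residual is $\sqrt{\sum_i p_i\mu_i(1-\mu_i)(\prod_{j>i}(1-\mu_j'))^2}\le\sqrt{\sum_i p_i\mu_i}\le 1$, because $p_i\mu_i$ is the probability that the first success occurs at position $i$ and these probabilities sum to at most one; hence $B_v=O(1)$. I would carry out the analogous bound for the multi-layered network exploration reward, tracking the per-layer weights through the residual. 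The step I expect to be the main obstacle is influence maximization: there the reward is the expected size of the influenced set under a diffusion, so the coefficient $c_i$ of an edge parameter is a difference of reachability probabilities rather than a clean product, and controlling the residual $\sum_i p_i^{2-\lambda}g_i^2(1-\mu_i)\mu_i$ demands a careful combinatorial argument over root-to-edge paths in the live-edge graph to show that the marginal-gain factors $g_i$ aggregate to a batch-size-independent constant. This is also where the choice of $\lambda$, and whether part of the change must be offloaded into the plain-TPM term via a nonzero $\boldeta$, becomes delicate; I would handle it by conditioning on the live-edge realization, verifying the coefficient factorization per realization, and then summing the realization-wise contributions.
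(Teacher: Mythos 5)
Your plan follows the same route as the paper's proof: per-application verification, with the triggered applications handled by telescoping the difference $r(S;\bmu')-r(S;\bmu)$ coordinate-by-coordinate and then applying Cauchy--Schwarz after inserting the variance factor $\sqrt{(1-\mu_i)\mu_i}$, and PMC handled by reduction to the Gini-smoothness implication $(B_v,B_1)=(3\sqrt{2}\gamma_g,\gamma_\infty)$ (the paper computes $\gamma_\infty=1$, $\gamma_g=\sqrt{|V|/4}$ for PMC). However, your disjunctive cascading computation proves the wrong exponent: you put only one power of $p_i$ into the first Cauchy--Schwarz factor and declare $\lambda=1$, whereas \cref{tab:app} claims $(B_v,B_1,\lambda)=(1,1,2)$, and the advertised regret $O(\sum_{i}\log K\log T/\Delta_i^{\min})$ and improvement $O(K/\log K)$ require $\lambda>1$ (case (1) of \cref{thm:reg_lambda1}); with $\lambda=1$ you only obtain the conjunctive-type bound carrying the extra $\log(B_vK/\Delta_i^{\min})$ factor. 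The fix is to keep both powers of $p_i$ in the first factor, writing $p_i g_i\zeta_i = \bigl(p_i\zeta_i/\sqrt{(1-\mu_i)\mu_i}\bigr)\cdot\bigl(g_i\sqrt{(1-\mu_i)\mu_i}\bigr)$ with $g_i=\prod_{j>i}(1-\bar{\mu}_j)\le\prod_{j>i}(1-\mu_j)$; the residual is then $\sum_i \mu_i(1-\mu_i)\prod_{j>i}(1-\mu_j)^2\le\sum_i\mu_i\prod_{j>i}(1-\mu_j)=1-\prod_{j}(1-\mu_j)\le 1$, i.e.\ you decompose by the \emph{last} success rather than the first, and $B_v=1$ with $\lambda=2$ follows.

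The more serious gap is influence maximization, which you correctly flag as the main obstacle but leave without the idea that actually closes it. The paper's argument is not merely conditioning on the live-edge realization and summing: it first topologically sorts the DAG and partitions the edges into layers $E_1,\dots,E_L$ according to the longest-path label of the source node, so that no directed path can contain two edges of the same layer. Telescoping layer by layer (and edge by edge within a layer), the coupling argument identifies each coefficient with the probability of the event $\{S\rightarrow u(e),\ e \text{ live},\ v(e)\rightarrow t,\ \text{no bypass of } e\}$; the crucial point is that \emph{within a layer} these events are pairwise disjoint, so their probabilities sum to at most $1$. Cauchy--Schwarz across the $L$ layers and summation over the $|V|$ targets then yields $B_v=\sqrt{L}\,|V|$, $B_1=|V|$, $\lambda=1$, and only in the directional {\TPVMm} form, since one needs $\boldzeta,\boldeta\ge\bzero$ so that lowering the other parameters can only increase the no-bypass probability. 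Without this antichain/disjointness structure, the realization-wise contributions you propose to sum scale with the number of edges on paths to the target, and no batch-size-independent residual bound follows; so as written, the influence-maximization row of \cref{tab:app} remains unproved by your proposal.
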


Note that the first four applications in \cref{tab:app} applies Theorem~\ref{thm:reg_lambda1}, while the last application applies Theorem~\ref{thm:independent}.
More specifically, the first two applications we consider are disjunctive and conjunctive cascading bandits~\cite{kveton2015combinatorial}, where $m$ base arms represent web pages and routing edges in online advertising and network routing, respectively. Batch-size $K$ is the maximum size of the ordered sequence $S \in \cS$ to be selected in each round, which will trigger web pages/routing edge one by one until certain stopping condition is satisfied, i.e., a click or a routing edge being broken. The reward is $1$ if \textit{any} web page is clicked (or if \textit{all} routing edges are live) and $0$ otherwise. Compared with~\cite{wang2017improving}, we achieve an improvement $O(K/(\log K\log \frac{K}{\Delta_{\min}}))$ for the conjunctive case and an improvement $O(K/\log K)$ for the disjunctive case, due to the same $B_v,B_1=1$ but different orders $\lambda$. 

The third application is the mutli-layered network exploration (MuLaNE) problem~\cite{liu2021multi}, 
and the MuLaNE task is to allocate $B$ budgets into $n$ layers to explore target nodes $V$. In MuLaNE, the base arms form a set $\cA=\{(i,u,b): i \in [n], u \in [V], b \in [B]\}$, the batch-size $K=(n+1)|V|$ and the reward is defined as the total reward give by the first visit of any target nodes. MuLaNE fits into our study, and compared with~\cite{liu2021multi}, the regret bound is improved by a factor of $O(n/\log K)$.

Our fourth application is the online influence maximization (OIM) problems direct acyclic graphs (DAG). 
For this application, the goal is to select at most $k$ seed nodes to influence as many target nodes $V$ as possible, where the influence process follows the independent cascade (IC) model~\cite{wang2017improving} (see Appendix \OnlyInFull{\ref{apdx_sec:proof_app}}\OnlyInShort{E} for more details). The base arms are the edges with unknown edge probabilities and the batch-size $K$ is the total number of edges that could be triggered by any set of $k$ seed nodes, denoted as $|E|$. 
The improvements here are significant, improving the existing results~\cite{merlis2019batch} by a factor of  $O(|E|/(L\log |E|\log \frac{|E|}{\Delta_{\min}}))$. 


For the PMC problem~\cite{merlis2019batch}, we consider a complete bipartite graph with $L$ source nodes on the left and $|V|$ target nodes $V$ on the right. The goal is to select $k$ seed nodes from $L$ nodes trying to influence as many as target nodes, so the edges $E$ are independent base arms and the batch-size is $k|V|$. 
By using the computational efficient version of \cref{alg:SECUCB} and applying Theorem~\ref{thm:independent}, we achieve $O(\log^2 k)$ improvement compared with~\cite{merlis2019batch} while maintaining good computational efficiency.

        \begin{table*}[t]
	\caption{Summary of the coefficients, regret  bounds and improvements for various applications.}	\label{tab:app}	
		\centering
	\resizebox{0.98\columnwidth}{!}{
	\begin{threeparttable}
	\begin{tabular}{|ccccc|}
		\hline
		\textbf{Application}&\textbf{Condition}& \textbf{$(B_v, B_1, \lambda)$} & \textbf{Regret}& \textbf{Improvement}\\
	\hline

		Disjunctive Cascading Bandits~\citep{kveton2015combinatorial} & {\TPVMm} & $(1,1,2)$ & $O(\sum_{i \in [m]}\frac{\log K\log T}{\Delta_i^{\min}})$ & $O(K/\log K)$ \\
			\hline
		Conjunctive Cascading Bandits~\citep{kveton2015combinatorial} & TPVM & $(1,1,1)$ & $O( \sum_{i \in [m]}\log \frac{K}{\Delta_{\min}}\frac{ \log K\log T}{\Delta_{i,\min}})$ & $O(K/(\log K\log \frac{K}{\Delta_{\min}}))$\\
			\hline
		Multi-layered Network Exploration~\citep{liu2021multi} & TPVM & $(\sqrt{1.25|V|},1,2)$ $^\dagger$ & $O(\sum_{i \in \cA}\frac{|V|\log (n|V|)\log T}{\Delta_i^{\min}})$ & $O(n/\log (n|V|))$\\
			\hline
	    Influence Maximization on DAG \cite{wang2017improving} & {\TPVMm} &$(\sqrt{L}|V|,|V|,1)$ $^{\dagger}$  & $O( \sum_{i \in [m]}\log \frac{|E|}{\Delta_{\min}}\frac{L|V|^2\log |E| \log T}{\Delta_i^{\min}})$ & $O(|E|/(L\log |E|\log \frac{|E|}{\Delta_{\min}}))$\\
	    \hline
	    Probabilistic Maximum Coverage \cite{merlis2019batch}$^*$ & VM & $(3\sqrt{2|V|},1,-)$ & $O(\sum_{i \in [m]}\frac{|V|\log T}{\Delta_i^{\min}})$ & $O(\log^2 k)$.\\
	    \hline
	\end{tabular}
	  \begin{tablenotes}[para, online,flushleft]
	\footnotesize
	\item[]\hspace*{-\fontdimen2\font}$^*$ This row is for the application in \cref{sec:independent} and the rest of rows are for \cref{sec:TPVM}; 
	\item[]\hspace*{-\fontdimen2\font}$^{\dagger}$ $|V|, |E|, n, k, L$ denotes the number of target nodes, the number of edges that can be triggered by the set of seed nodes, the number of layers, the number of seed nodes and the length of the longest directed path, respectively.
	\end{tablenotes}
			\end{threeparttable}
	}
\end{table*}

\begin{proof}[Proof Ideas.] 
For all above applications (except for the OIM on DAG), our proof involves the use of telescoping series to decompose the reward difference, together with a smart use of the Cauchy–Schwarz inequality aided by the variance terms. For disjunctive cascading bandits, for example, the reward difference $\abs{r(S;\bar{\bmu})-r(S;\bmu)}=\prod_{i=1}^K(1-\mu_i)-\prod_{i=1}^K(1-\bar{\mu}_i)$ can be telescoped as $\sum_{i \in [K]} (\bar{\mu}_i-\mu_i)\left(\prod_{j=1}^{i-1}(1-\mu_j) \cdot \prod_{j=i+1}^{K}(1-\bar{\mu}_j)\right)$. After this decomposition, we replace certain terms with $p_i^{D,S}$ and bound above by $\sum_{i \in [K]} \zeta_i p_i^{D,S} \sqrt{\prod_{j=i+1}^{K}(1-\mu_j) } + \sum_{i \in [K]} \eta_ip_i^{D,S}$. Then we simultaneously multiply and divide the variance term $\sqrt{(1-\mu_i)\mu_i}$ on the first term and apply the Cauchy–Schwarz inequality to move the summation over $K$ into the square root, concluding the satisfaction of Condition~\ref{cond:TPVMm} with $B_v=\sqrt{\sum_{i\in [K]}(1-\mu_i)\mu_i\prod_{j=i+1}^{K}(1-\mu_j) }\le 1$. As for the OIM on DAG, since reward function have no closed-form solutions~\cite{chen2009efficient}, the analysis is more involved with the need of advanced techniques such as the coupling technique~\cite{li2020online}, see Appendix \OnlyInFull{\ref{apdx_sec:proof_app}}\OnlyInShort{E} for details.
\end{proof}

\vspace{-7pt}
\section{Conclusion and Future Direction}\label{sec:conclusion}
This paper studies the CMAB problem with probabilistically triggered arms or independent arms. We discover new TPVM and VM conditions, and propose BCUCB-T and SESCB algorithms to reduce and remove the batch-size $K$ in the regret bounds, respectively.
We also show that several important applications all satisfy our conditions to achieve improved regrets, both theoretically and empirically.
There are many compelling directions for future study.
For example, it would be interesting to study the setting of CMAB-T together with independent arms. One could also explore how to extend our application and consider general graphs in online influence maximization bandits. 
\begin{ack}
The work of John C.S. Lui was supported in part by the HK RGC SRF2122-4202. The work of Siwei Wang was supported in part by the National Natural Science Foundation of China Grant 62106122.
\end{ack}

\newpage
\bibliographystyle{plainnat}
\bibliography{main}

\clearpage
\section*{Checklist}


\begin{enumerate}

\item For all authors...
\begin{enumerate}
  \item Do the main claims made in the abstract and introduction accurately reflect the paper's contributions and scope?
    \answerYes{}
  \item Did you describe the limitations of your work?
    \answerYes{See \cref{sec:TPVM} and \cref{sec:conclusion}.}
  \item Did you discuss any potential negative societal impacts of your work?
    \answerNA{Since this work is mainly about online learning models, algorithms and analytical techniques, there is no foreseeable societal impact.}
  \item Have you read the ethics review guidelines and ensured that your paper conforms to them?
    \answerYes{}
\end{enumerate}

\item If you are including theoretical results...
\begin{enumerate}
  \item Did you state the full set of assumptions of all theoretical results?
    \answerYes{See \cref{sec:TPVM} and \cref{sec:independent}.}
        \item Did you include complete proofs of all theoretical results?
    \answerYes{See the Appendix.}
\end{enumerate}

\item If you ran experiments...
\begin{enumerate}
  \item Did you include the code, data, and instructions needed to reproduce the main experimental results (either in the supplemental material or as a URL)?
    \answerYes{}
  \item Did you specify all the training details (e.g., data splits, hyperparameters, how they were chosen)?
    \answerYes{}
        \item Did you report error bars (e.g., with respect to the random seed after running experiments multiple times)?
    \answerYes{}
        \item Did you include the total amount of compute and the type of resources used (e.g., type of GPUs, internal cluster, or cloud provider)?
    \answerYes{}
\end{enumerate}

\item If you are using existing assets (e.g., code, data, models) or curating/releasing new assets...
\begin{enumerate}
  \item If your work uses existing assets, did you cite the creators?
    \answerYes{}
  \item Did you mention the license of the assets?
    \answerNA{We only use open-sourced datasets and codes.}
  \item Did you include any new assets either in the supplemental material or as a URL?
    \answerNo{}
  \item Did you discuss whether and how consent was obtained from people whose data you're using/curating?
    \answerNA{We only use open-sourced datasets and codes.}
  \item Did you discuss whether the data you are using/curating contains personally identifiable information or offensive content?
    \answerNA{}
\end{enumerate}

\item If you used crowdsourcing or conducted research with human subjects...
\begin{enumerate}
  \item Did you include the full text of instructions given to participants and screenshots, if applicable?
    \answerNA{}
  \item Did you describe any potential participant risks, with links to Institutional Review Board (IRB) approvals, if applicable?
    \answerNA{}
  \item Did you include the estimated hourly wage paid to participants and the total amount spent on participant compensation?
    \answerNA{}
\end{enumerate}

\end{enumerate}

\appendix
\clearpage
\onecolumn
\section*{Appendix}

The Appendix is organized as follows. 
We first compare the {\TPVMm} Condition (Condition~\ref{cond:TPVMm}) with TPM Condition (Condition~\ref{cond:TPM}) in~\cref{app:TPVMvsTPM}. 
The comparisons between Gini-smoothness Condition with the TPVM (Condition~\ref{cond:TPVMm}) and VM (Condition~\ref{cond:VM}) Condition are introduced in \cref{app:gini}.
The detailed proofs for \cref{thm:reg_lambda1} together with some discussions are in~\cref{apdx_sec:main_regret_analysis}.
The detailed proofs for \cref{thm:independent} are in \cref{apdx_sec:independent}.
The application details and the proof details related to \cref{lem:app_tab} are in~\cref{apdx_sec:proof_app}.
The experiments for different applications are included in~\cref{apdx_sec:exp}.

\section{Comparing the {\TPVMm} Condition (Condition~\ref{cond:TPVMm}) with the TPM Condition (Condition~\ref{cond:TPM})}
 \label{app:TPVMvsTPM}
 
As discussed in Section~\ref{sec:TPVM}, the {\TPVMm} condition (Condition~\ref{cond:TPVMm}) does not imply the TPM condition (Condition~\ref{cond:TPM}) in general.
In this section, we show that under some additional conditions, Condition~\ref{cond:TPVMm} does imply Condition~\ref{cond:TPM}.
\begin{lemma}
Assume that the followings are true:
(a) Condition~\ref{cond:mono} holds; 
(b) there exist $D_{\vee}, D_{\wedge} \in \cD$ with mean vectors $\bmu_{\vee} = \bmu \vee \bmu'$ and $\bmu_{\wedge} = \bmu \wedge \bmu'$ respectively, where operation $\vee$ and $\wedge$ means taking 
coordinate wise $\max$ and $\min$; and
(c) $p_i^{D,S}$ increases as the mean vector of $D$ increases.
Then Condition~\ref{cond:TPVMm} implies Condition~\ref{cond:TPM} with the same $B_1$ coefficient.
\end{lemma}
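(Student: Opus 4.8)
The plan is to derive the undirectional TPM inequality (Condition~\ref{cond:TPM}) from two applications of the directional condition (Condition~\ref{cond:TPVMm}), using the meet of $\bmu$ and $\bmu'$ as an intermediate point. First I would fix an action $S \in \cS$ and distributions $D, D' \in \cD$ with means $\bmu, \bmu' \in (0,1)^m$. By hypothesis (b) there is $D_{\wedge} \in \cD$ with mean $\bmu_{\wedge} = \bmu \wedge \bmu'$, and since $\bmu_{\wedge} \le \bmu$ and $\bmu_{\wedge} \le \bmu'$ coordinatewise, hypothesis (a) gives $r(S;\bmu_{\wedge}) \le r(S;\bmu)$ and $r(S;\bmu_{\wedge}) \le r(S;\bmu')$. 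Writing $r(S;\bmu) - r(S;\bmu') = [r(S;\bmu) - r(S;\bmu_{\wedge})] - [r(S;\bmu') - r(S;\bmu_{\wedge})]$ and applying the triangle inequality, I obtain
\[
|r(S;\bmu') - r(S;\bmu)| \le |r(S;\bmu) - r(S;\bmu_{\wedge})| + |r(S;\bmu') - r(S;\bmu_{\wedge})|,
\]
where both pairs $(\bmu_{\wedge}, \bmu)$ and $(\bmu_{\wedge}, \bmu')$ are now ordered increments starting from the common, smaller base $\bmu_{\wedge}$.

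Next I would apply Condition~\ref{cond:TPVMm} to each term with the variance-modulated direction switched off, i.e.\ with $\boldzeta = \bzero$, so that (as noted in Remark~3) the $B_v$ square-root term vanishes and only the $B_1$ term survives. Taking $D_{\wedge}$ as the base distribution, with $\boldeta = \bmu - \bmu_{\wedge}$ for the first term and $\boldeta = \bmu' - \bmu_{\wedge}$ for the second, this yields
\[
|r(S;\bmu) - r(S;\bmu_{\wedge})| \le B_1 \sum_{i \in [m]} p_i^{D_{\wedge},S}\,(\mu_i - (\bmu_{\wedge})_i),
\]
\[
|r(S;\bmu') - r(S;\bmu_{\wedge})| \le B_1 \sum_{i \in [m]} p_i^{D_{\wedge},S}\,(\mu_i' - (\bmu_{\wedge})_i).
\]
Using $(\bmu_{\wedge})_i = \min\{\mu_i, \mu_i'\}$, the two increments $\mu_i - (\bmu_{\wedge})_i$ and $\mu_i' - (\bmu_{\wedge})_i$ are exactly the positive and negative parts of $\mu_i - \mu_i'$, whose sum is $|\mu_i - \mu_i'|$; adding the two displays collapses the right-hand side to $B_1 \sum_{i \in [m]} p_i^{D_{\wedge},S}\,|\mu_i - \mu_i'|$.

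Finally, I would remove the dependence on the auxiliary base $D_{\wedge}$ using hypothesis (c): since $\bmu_{\wedge} \le \bmu$, monotonicity of the triggering probability gives $p_i^{D_{\wedge},S} \le p_i^{D,S}$ for every $i$, whence $|r(S;\bmu') - r(S;\bmu)| \le B_1 \sum_{i \in [m]} p_i^{D,S}\,|\mu_i - \mu_i'|$, which is precisely Condition~\ref{cond:TPM} with the same coefficient $B_1$. I expect the main obstacle to be bookkeeping rather than conceptual: one must track carefully that each application of the directional condition produces the triggering probability of the \emph{smaller} (base) distribution $D_{\wedge}$, and hypothesis (c) is exactly what upgrades $p_i^{D_{\wedge},S}$ to the required $p_i^{D,S}$; without (c) the argument stalls at the meet distribution. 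Using the meet $\bmu_{\wedge}$ rather than the join $\bmu_{\vee}$ is what makes this last step a uniform coordinatewise bound, avoiding any per-coordinate case analysis.
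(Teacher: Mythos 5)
Your proof is correct, but it takes a genuinely different route from the paper. The paper fixes the sign of $r(S;\bmu')-r(S;\bmu)$ WLOG, uses monotonicity (assumption (a)) to sandwich the difference between the join and the meet, $r(S;\bmu')-r(S;\bmu)\le r(S;\bmu_{\vee})-r(S;\bmu_{\wedge})$, and then applies the directional ($\boldzeta=\bzero$) condition exactly once, from base $\bmu_{\wedge}$ to target $\bmu_{\vee}$, noting $(\bmu_{\vee}-\bmu_{\wedge})_i=|\mu_i-\mu_i'|$; assumption (c) then upgrades $p_i^{D_{\wedge},S}$ to $p_i^{D,S}$. You instead route everything through the meet alone: a triangle inequality $|r(S;\bmu')-r(S;\bmu)|\le |r(S;\bmu)-r(S;\bmu_{\wedge})|+|r(S;\bmu')-r(S;\bmu_{\wedge})|$ followed by two applications of the directional condition, with the positive and negative parts of $\mu_i-\mu_i'$ recombining into $|\mu_i-\mu_i'|$. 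A notable consequence you did not flag: your argument never actually uses monotonicity --- the triangle inequality holds regardless of the signs of the two bracketed differences, so your invocation of hypothesis (a) is decorative --- and it never needs the join distribution $D_{\vee}$ to exist. Thus your proof establishes the lemma under strictly weaker hypotheses (only the meet half of (b), plus (c)), at the cost of two applications of Condition~\ref{cond:TPVMm} instead of one. The paper's version, by contrast, makes the role of monotonicity explicit and keeps the whole estimate as a single directional increment from $\bmu_{\wedge}$ to $\bmu_{\vee}$. Both proofs coincide in the final step, where assumption (c) converts $p_i^{D_{\wedge},S}$ into the required $p_i^{D,S}$.
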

\begin{proof}
First, when setting $\boldzeta = \bzero$ in Condition~\ref{cond:TPVMm}, we obtain the directional TPM condition.
Then we prove with the following derivation that with the three assumptions stated in the lemma, directional TPM condition implies the undirectional TPM condition (Condition~\ref{cond:TPM}).
For any $D,D' \in \cD$ with mean vectors $\bmu, \bmu'$, without loss of generality, we assume that $r(S;\bmu') \ge r(S;\bmu)$.
we have
\begin{align*}
& |r(S;\bmu')-r(S;\bmu)|  \\
& = r(S;\bmu')-r(S;\bmu) \\
& \le r(S;\bmu_{\vee})-r(S;\bmu_{\wedge}) 						& \mbox{by Assumptions (a) and (b)} \\
& \le B_1 \sum_{i\in[m]}p_i^{D_{\wedge},S}|\mu_i - \mu_i'| 		& \mbox{by Assumption (b) and the directional TPM condition}\\
& \le B_1 \sum_{i\in[m]}p_i^{D,S}|\mu_i - \mu_i'|.				& \mbox{By Assumption (c)}
\end{align*}
Therefore, the undirectional TPM condition (Condition~\ref{cond:TPM}) holds.
\end{proof}

It is not difficult to verify that for the online influence maximization application discussed in Section~\ref{sec:app}, all three assumptions in the lemma holds.

\section{Comparing the Gini-smoothness Condition \cite{merlis2019batch} with the {\TPVMm} Condition (Condition~\ref{cond:TPVMm}) and VM Condition (Condition~\ref{cond:VM})}
\label{app:gini}

\citet{merlis2019batch} define the following Gini-smoothness condition. In this section, we provide comparisons between this condition and our {\TPVMm} condition (Condition~\ref{cond:TPVMm}) and VM condition (Condition~\ref{cond:VM}).
\begin{condition}[Gini-smoothness Condition, Restated, \cite{merlis2019batch}]\label{cond:gini-smooth}
	Let $f(S;\bx):\cS \times [0,1]^m \rightarrow \R$ be a differentiable function in $x\in (0,1)^m$ and continuous in $x\in[0,1]^m$, for any $S \in \cS$. The function $f(S;x)$ is said to be monotonic Gini-smooth, with smoothness parameters ($\gamma_g$, $\gamma_\infty$) if:
	
	1. For any $S \in \cS$, the function $f(S;\bx)$ is monotonically increasing with bounded gradient, i.e., for any $i \in S$ and $x \in (0,1)^m$, $0 \le \frac{\partial f(S;\bx)}{\partial x_i } \le \gamma_\infty $.
	If $i \notin S$, then $\frac{\partial f(S;x)}{\partial x_i }=0$ for all $x\in(0,1)^L$.
	
	2. For any $S \in \cS$ and $x \in (0,1)^m$, it holds that
	\begin{equation}
		\sqrt{\sum_{i \in S}x_i(1-x_i)(\frac{\partial f(S;x)}{\partial x_i})} \le \gamma_g.
	\end{equation}
\end{condition}

\subsection{Triggering Probability Modulated Gini-smoothness Condition Does Not Imply {\TPVMm} Condition}\label{apdx_sec:direct_tpvm}

The original Gini-smoothness condition (Condition~\ref{cond:gini-smooth}) does not work directly with probabilistically triggered arms.
Thus, our first attempt is to add triggering probability modulation to the Gini-smoothness condition as given below, in hope that it would extend the result in \cite{merlis2019batch} to the CMAB-T framework.


\begin{condition}[TPM Gini-smoothness]\label{def:gini-smoothness-TPM}
    For a CMAB-T problem instance $([m], \cS, \cD, D_{\text{trig}}, R)$,
	assume the reward function $r(S;\bmu):\cS \times [0,1]^m \rightarrow \R$ is a differentiable function in $\bmu \in (0,1)^m$ and continuous in $\bmu\in[0,1]^m$, for any $S \in \cS$. 
	The reward function $r(S;\bmu)$ is said to be monotonic Triggering Probability Modulated (TPM) Gini-smooth, with smoothness parameters $(\gamma_g$, $\gamma_\infty, \lambda\ge 1)$ if:
	
	1. For any distribution $D\in \cD$ with mean vector $\bmu \in (0,1)^{m}$ and any action $S \in \cS$, the function $r(S;\bmu)$ is monotonically increasing with bounded gradient: For any $i \in [m]$, if $p_{i}^{D,S} > 0$, then $0 \le \frac{\partial r(S;\bmu)}{\partial \mu_i } \frac{1}{p_{i}^{D,S}}\le \gamma_\infty $;
	If $p_{i}^{D,S}=0$, then $\frac{\partial r(S;\bmu)}{\partial \mu_i }=0$ for all $\bmu\in(0,1)^m$.
	
	2. For any distribution $D \in \cD$ with mean vector $\bmu\in (0,1)^m$ and any action $S \in \cS$, it holds that
	\begin{equation}
		\sqrt{\sum_{i \in \tilde{S}}\mu_i(1-\mu_i)(\frac{\partial r(S;\bmu)}{\partial \mu_i})^2 \frac{1}{(p_{i}^{D,S})^\lambda}} \le \gamma_g.
	\end{equation}
\end{condition}

However, using the above TPM Gini-smoothness condition, we cannot derive a desirable regret bound.
In particular, the above condition only guarantees the following lemma (following the analysis of Lemma 6 in \cite{merlis2019batch}), 
	which is weaker than our {\TPVMm} condition (Condition \ref{cond:TPVMm}), leading to a weaker regret with an additional
	factor $\max_{S\in \cS, i \in [m]}p_i^{\max,S}/p_i^{D,S}$.
Such a factor could be exponentially large and undesirable in applications, similar to the factor being avoided in \cite{chen2016combinatorial} by introducing the TPM condition.

%


\begin{restatable}{lemma}{apdx_lemma6_trigger}\label{apdx_lem:lemma6_trigger}
	Let $r(S;\bx)$ be a monotonic $(\gamma_g, \gamma_{\infty},  \lambda)$ TPM gini-smooth function. 
	For any $\bmu,\bmu', \bmu'' \in [0,1]^m$, with $\boldzeta=\bmu'-\bmu, \boldeta=\bmu''-\bmu'$, let $\cD_{\bmu, \bmu', \bmu''}=\{D \in \cD \text{ with mean vector }\bx: \forall i\in[m], \min\{\mu_i, \mu'_i, \mu''_i\}\le x_i \le \max\{\mu_i, \mu'_i,\mu''_i\}\}$ and $p_i^{\max, S}=\max_{D \in \cD_{\bmu, \bmu', \bmu''}}p_i^{D,S}$, it holds that 
	\begin{equation}
		|r(S; \bmu'')- r(S;\bmu)| \le 3\sqrt{2}\gamma_g\sqrt{\sum_{i \in S}\left(\frac{|\zeta_i|}{\sqrt{(1-\mu_i)\mu_i}}\right)^2 p^{\max, S}_i} + \sum_{i \in \tilde{S}}p_i^{\max,S}\abs{\eta_i},
	\end{equation}
\end{restatable}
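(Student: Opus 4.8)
The plan is to split the displacement at the intermediate point $\bmu'$ and bound the two segments by the two different parts of the Gini-smoothness condition. Writing $\abs{r(S;\bmu'')-r(S;\bmu)} \le \abs{r(S;\bmu')-r(S;\bmu)} + \abs{r(S;\bmu'')-r(S;\bmu')}$, the segment $\bmu'\to\bmu''$ along $\boldeta$ is the easy part: integrating the gradient along the straight line $\bmu'+s\boldeta$, whose coordinates move monotonically between $\mu_i'$ and $\mu_i''$ and hence stay inside the box defining $\cD_{\bmu,\bmu',\bmu''}$, and invoking part~1 of the condition ($0 \le \partial_i r \le \gamma_\infty\, p_i^{D,S}$ with $p_i^{D,S}\le p_i^{\max,S}$), immediately yields the linear term $\gamma_\infty\sum_{i\in\tilde{S}} p_i^{\max,S}\abs{\eta_i}$.

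The crux is the segment $\bmu\to\bmu'$ along $\boldzeta$. The naive idea of integrating $\partial_i r\,\zeta_i$ along the straight line and pairing $\sqrt{x_i(1-x_i)}\,\partial_i r$ with $\zeta_i/\sqrt{x_i(1-x_i)}$ in Cauchy--Schwarz produces $\gamma_g\int_0^1\sqrt{\sum_i \zeta_i^2 (p_i^{D,S})^\lambda/(x_i(t)(1-x_i(t)))}\,dt$, but this fails, because along a straight line the factor $1/(x_i(t)(1-x_i(t)))$ can blow up near the boundary and no constant relates its $t$-average to $1/(\mu_i(1-\mu_i))$. The fix I would use is to run the path along a geodesic of the variance-stabilizing coordinate $\phi(x)=\arcsin(2x-1)$, i.e.\ set $x_i(t)=\phi^{-1}\big((1-t)\phi(\mu_i)+t\phi(\mu_i')\big)$. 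The point of this choice is that $\dot{x}_i(t)/\sqrt{x_i(t)(1-x_i(t))} = \phi(\mu_i')-\phi(\mu_i)$ is \emph{constant in $t$}, which decouples the coordinates inside the integral. Running the same Cauchy--Schwarz step with part~2 of the condition evaluated at the mean $\bx(t)$ then gives the integrand $\gamma_g\sqrt{\sum_i(\phi(\mu_i')-\phi(\mu_i))^2\, p_i(t)^\lambda}$, where $p_i(t)$ is the triggering probability at $\bx(t)$; since $x_i(t)$ is monotone between $\mu_i$ and $\mu_i'$ it stays in the box, so $p_i(t)^\lambda\le (p_i^{\max,S})^\lambda\le p_i^{\max,S}$ (here $\lambda\ge1$ is used), and the $t$-integral is trivial.

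It then remains to prove the per-coordinate inequality $\abs{\phi(\mu_i')-\phi(\mu_i)}\le 3\sqrt{2}\,\abs{\zeta_i}/\sqrt{\mu_i(1-\mu_i)}$, and this is where the main difficulty (and the constant $3\sqrt2$) lives. I would establish it by the substitution $\mu_i=\tfrac{1+\sin\alpha}{2}$, $\mu_i'=\tfrac{1+\sin\beta}{2}$ with $\alpha,\beta\in[-\pi/2,\pi/2]$, under which the claim becomes $\abs{\beta-\alpha}\cos\alpha \le 3\sqrt{2}\,\abs{\sin\beta-\sin\alpha}$; using $\sin\beta-\sin\alpha=2\cos\tfrac{\alpha+\beta}{2}\sin\tfrac{\beta-\alpha}{2}$, the elementary bound $\abs{\sin u}\ge\tfrac{2}{\pi}\abs{u}$, and the key estimate $\cos\alpha/\cos\tfrac{\alpha+\beta}{2}\le 2$ (obtained by writing $\alpha=\tfrac{\alpha+\beta}{2}-\tfrac{\beta-\alpha}{2}$ and optimizing over the admissible range), the left side is bounded by $\pi\le 3\sqrt2$. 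Squaring this and summing under the square root turns $\sum_i(\phi(\mu_i')-\phi(\mu_i))^2 p_i^{\max,S}$ into $18\sum_i \zeta_i^2 p_i^{\max,S}/(\mu_i(1-\mu_i))$, producing the advertised $3\sqrt2\,\gamma_g$ coefficient. Arms with $p_i^{D,S}=0$ are harmless since $\partial_i r=0$ there (so they drop out of the Cauchy--Schwarz sum), and the boundary cases $\mu_i\in\{0,1\}$ follow by continuity, the bound being vacuous as $\mu_i(1-\mu_i)\to 0$.
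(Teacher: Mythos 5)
Your proof is correct, and its skeleton is the same as the paper's: the same triangle-inequality split at $\bmu'$, the same straight-line treatment of the $\boldeta$ segment via part~1 of the condition, and the same reparameterized path for the $\boldzeta$ segment --- your $\phi(x)=\arcsin(2x-1)$ is exactly the paper's $g(z)=\int_0^z dy/\sqrt{y(1-y)}$ up to the additive constant $\pi/2$, so your geodesic $x_i(t)=\phi^{-1}\left((1-t)\phi(\mu_i)+t\phi(\mu_i')\right)$ coincides with the paper's $z_i(t)=g^{-1}([g(\mu_i')-g(\mu_i)]t+g(\mu_i))$, and the Cauchy--Schwarz step against part~2 of the condition, followed by $p_i(t)^\lambda\le (p_i^{\max,S})^\lambda\le p_i^{\max,S}$, is identical. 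Where you genuinely diverge is the per-coordinate estimate $|\phi(\mu_i')-\phi(\mu_i)|\le C\,|\zeta_i|/\sqrt{\mu_i(1-\mu_i)}$: the paper obtains $C=3\sqrt{2}$ by comparing $g$ with the auxiliary function $h(z)=\int_0^z dy/(\sqrt{y}\wedge\sqrt{1-y})$ (so that $h\le g\le\sqrt{2}h$) and running a six-case analysis according to whether $\mu_i,\mu_i'$ lie below or above $1/2$, whereas your trigonometric substitution $\mu_i=\frac{1+\sin\alpha}{2}$, combined with $\sin\beta-\sin\alpha=2\cos\frac{\alpha+\beta}{2}\sin\frac{\beta-\alpha}{2}$, the bound $|\sin u|\ge\frac{2}{\pi}|u|$, and the estimate $\cos\alpha\le 2\cos\frac{\alpha+\beta}{2}$ (which is correct and in fact tight as $\alpha\to\pi/2$), yields $C=\pi$ --- sharper than $3\sqrt{2}$ and with no case analysis. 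Both routes deliver the stated bound; yours buys a cleaner and slightly stronger per-coordinate lemma, while the paper's has the advantage of reusing verbatim the machinery of its non-triggering counterpart (\cref{lem:lemma6_no_trigger}). One cosmetic note: the lemma as stated omits the $\gamma_\infty$ factor on the $\eta$ term; your derivation, like the paper's own proof, correctly produces $\gamma_\infty\sum_i p_i^{\max,S}|\eta_i|$, so that discrepancy is a typo in the statement, not a gap in your argument.
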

\begin{proof}
	
	\textbf{For the $\abs{r(S; \bmu')- r(S;\bmu)}$ term,}
	
	First, we define two functions $g,h$, where
	\begin{equation}
		g(z)=\int_{0}^z\frac{dy}{\sqrt{y(1-y)}}, h(z)= \int_{0}^z \frac{dy}{\sqrt{y} \wedge \sqrt{1-y}}.
	\end{equation}
	For $g(z)$, $g'(z)>0$ so the inverse function $g^{-1}$ is well defined.
	We also know that $h(z)$ has the following closed form,
	\begin{equation}
		h(z)=\begin{cases}
			2\sqrt{z}, &\text{if $z \le 1/2$}\\
			2\sqrt{2}-2\sqrt{1-z}, &\text{if $z \ge 1/2$}.\\
		\end{cases}
	\end{equation}
	Note that these two functions are closely related: $h'(z) \le g'(z) \le \sqrt{2} h'(z)$ with $g(0)=h(0)$.
	Therefore, $h(z) \le g(z) \le \sqrt{2}h(z)$ and $g(z_2)-g(z_1) \le \sqrt{2}(h(z_2)-h(z_1))$ for any $z_1 \le z_2 \in [0,1]$.
	Now we set up a parameterization $z(t)$ for $t \in [0,1]$ such that $z_i(0)=\mu_i, z_i(1)=\mu'_i$. Specifically, we choose the parameterization to be 
	\begin{equation}
		z_i(t) = g^{-1}([g(\mu'_i)-g(\mu_i)]t + g(\mu_i)]),
	\end{equation}
	Then its gradient is 
	\begin{equation}
		z'_i(t)= \frac{g(\mu_i')-g(\mu_i)}{g'(z_i(t))}=(g(\mu_i')-g(\mu_i))\sqrt{z_i(t)(1-z_i(t))}.
	\end{equation}
	Then we can use the gradient theorem to bound $\abs{r(S;\bmu')-r(S;\bmu)}$ as 
	
	\begin{align}
		|r(S;\bmu')-r(S;\bmu)| & =\abs{ \int_{\bx=\bmu}^{\bmu'} \nabla r(S;\bx)\cdot d\bx }= \abs{\int_{t=0}^{1}\sum_{i \in \tilde{S}} \frac{\partial r(S;z(t))}{\partial x_i} z_i'(t) dt}\\
		&\le \int_{0}^1\sqrt{\sum_{i \in \tilde{S}} (g(\mu'_i)-g(\mu_i))^2(p_i^{z(t),S})^{\lambda}} \sqrt{\sum_{i \in \tilde{S}} (\frac{\partial r(S;z(t))}{\partial x_i})^2 \frac{z_i(t)(1-z_i(t))}{(p_i^{z(t),S})^{\lambda}}}dt\\
		&\le \int_{0}^1  \gamma_g\sqrt{\sum_{i \in \tilde{S}} (g(\mu'_i)-g(\mu_i))^2(p_i^{z(t),S})^{\lambda}} dt \label{eq:gamma_g_gradient_variable_p}\\
		&\le  \gamma_g\sqrt{\sum_{i \in \tilde{S}} (g(\mu'_i)-g(\mu_i))^2(p_i^{\max,S})^{\lambda}},
	\end{align}
	Following the similar derivation for \cref{apdx_eq:vm_1} in next subsection, we have
	\begin{equation}\label{apdx_eq:tpvm_1}
		|r(S; \bmu')- r(S;\bmu)| \le 3\sqrt{2}\gamma_{g}\sqrt{\sum_{i \in \tilde{S}}\left(\frac{|\mu'_i-\mu_i|}{\sqrt{(1-\mu_i)\mu_i}}\right)^2(p_i^{\max,S})^{\lambda}}.
	\end{equation}

	\textbf{For the $\abs{r(S; \bmu'')- r(S;\bmu')}$ term,}
	
	We can use the gradient theorem to bound, let $\bx(t)$ with $x_i(t)=t(\mu''_i-\mu'_i)+\mu'_i$.
	
	\begin{align}
		\abs{r(S;\bmu'')-r(S;\bmu')} &=\abs{\int_{t=0}^{1}\sum_{i \in \tilde{S}} \frac{\partial r(S;\bx(t))}{\partial x_i} x_i'(t) dt}\notag\\
		&\le \abs{\int_{t=0}^{1}\sum_{i \in \tilde{S}} \frac{\partial r(S;\bx(t))}{\partial x_ip_i^{\bx(t), S}} p_i^{\bx(t), S}(\mu''_i-\mu'_i) dt}\notag\\
		&\le \int_{t=0}^{1}\sum_{i \in \tilde{S}} \abs{\frac{\partial r(S;\bx(t))}{\partial x_i p_i^{\bx(t), S}}} \abs{p_i^{\bx(t), S}(\mu''_i-\mu'_i)} dt\notag\\
		&\le \int_{t=0}^{1}\sum_{i \in \tilde{S}}\gamma_{\infty} \abs{p_i^{\bx(t), S}(\mu''_i-\mu'_i)} dt\notag\\
		&\le \int_{t=0}^{1}\sum_{i \in \tilde{S}}\gamma_{\infty} \abs{p_i^{\max, S}(\mu''_i-\mu'_i)}dt\notag\\
		&= \gamma_{\infty}\sum_{i \in \tilde{S}}p_i^{\max, S}|\eta_i|.\label{apdx_eq:tpvm_2}
	\end{align}
	Combining \cref{apdx_eq:tpvm_1} and \cref{apdx_eq:tpvm_2}, we conclude the lemma.
\end{proof}

The above lemma indicates that directly extending the Gini-smoothness condition may not be strong enough for the probabilistic triggering setting.
This motivates us to define the new {\TPVMm} condition not based on the differential form, but directly on the difference form $\abs{r(S;\bmu')-r(S;\bmu)}$.
This can be viewed as incorporating triggering probability properly into the result of Lemma 6 in \cite{merlis2019batch}.

\subsection{Gini-smoothness Condition Implies VM Condition}\label{apdx_sec:derive_vm}

In this section, we show in the following lemma that the original Gini-smoothness condition (Condition~\ref{cond:gini-smooth}) implies the VM condition (Condition~\ref{cond:VM}),
	with $(B_v,B_1)=(3\sqrt{2}\gamma_g,\gamma_{\infty})$.
The proof of this lemma is similar to \citep[Lemma 6]{merlis2019batch}, 
	but we need to extend it to the undirectional case, where $\bmu''$ is not necessarily larger than $\bmu$ in all dimensions.

\begin{restatable}{lemma}{apdx_lemma6_no_trigger}\label{lem:lemma6_no_trigger}
Let $r(S;\bmu)$ be a monotonic $(\gamma_g,\gamma_{\infty})$ gini-smooth function as given in Condition \ref{cond:gini-smooth}. 
For any $\bmu,\bmu', \bmu'' \in [0,1]^m$, with $\boldzeta=\bmu'-\bmu, \boldeta=\bmu''-\bmu'$, it holds that
\begin{equation}
    \abs{r(S; \bmu'')- r(S;\bmu)} \le 3\sqrt{2}\gamma_g\sqrt{\sum_{i \in S}\left(\frac{|\zeta_i|}{\sqrt{(1-\mu_i)\mu_i}}\right)^2} + \gamma_{\infty}\sum_{i \in S}\abs{\eta_i}.
\end{equation}
\end{restatable}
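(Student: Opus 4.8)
The plan is to insert an intermediate point and split the perturbation into its ``variance-modulated'' part $\boldzeta$ and its ``linear'' part $\boldeta$. Writing $\bmu'=\bmu+\boldzeta$ and $\bmu''=\bmu'+\boldeta$, the triangle inequality gives $\abs{r(S;\bmu'')-r(S;\bmu)}\le \abs{r(S;\bmu')-r(S;\bmu)}+\abs{r(S;\bmu'')-r(S;\bmu')}$, and I would bound the first summand by $3\sqrt{2}\gamma_g\sqrt{\sum_{i\in S}(\abs{\zeta_i}/\sqrt{(1-\mu_i)\mu_i})^2}$ and the second by $\gamma_\infty\sum_{i\in S}\abs{\eta_i}$. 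This mirrors the structure of Lemma 6 of \cite{merlis2019batch}, except that here neither $\bmu'$ nor $\bmu''$ need dominate $\bmu$ coordinatewise, so every step must be carried out for increments of arbitrary sign.

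For the $\boldzeta$ term I would reuse the change-of-variables behind the Gini smoothness. Introduce $g(z)=\int_0^z dy/\sqrt{y(1-y)}$ and $h(z)=\int_0^z dy/(\sqrt{y}\wedge\sqrt{1-y})$; $g$ is strictly increasing, hence invertible, $h$ has the closed form recorded in \cref{apdx_sec:direct_tpvm}, and the two satisfy $h\le g\le\sqrt{2}\,h$ together with $g(z_2)-g(z_1)\le\sqrt{2}(h(z_2)-h(z_1))$ for $z_1\le z_2$. Parameterize the segment from $\bmu$ to $\bmu'$ by $z_i(t)=g^{-1}\big((g(\mu_i')-g(\mu_i))t+g(\mu_i)\big)$, so that $z_i'(t)=(g(\mu_i')-g(\mu_i))\sqrt{z_i(t)(1-z_i(t))}$; note this is well defined whatever the sign of $\mu_i'-\mu_i$, since $g^{-1}$ only requires monotonicity of $g$. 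Applying the gradient theorem, then Cauchy--Schwarz over $i\in S$ inside the integral (factoring each summand as $(g(\mu_i')-g(\mu_i))$ times $\tfrac{\partial r}{\partial x_i}\sqrt{z_i(t)(1-z_i(t))}$), and finally part 2 of the Gini condition to bound the second factor by $\gamma_g$, I obtain $\abs{r(S;\bmu')-r(S;\bmu)}\le\gamma_g\sqrt{\sum_{i\in S}(g(\mu_i')-g(\mu_i))^2}$. It then remains to convert each $\abs{g(\mu_i')-g(\mu_i)}$ into the target variance-modulated quantity via the scalar inequality $\abs{g(b)-g(a)}\le 3\sqrt{2}\,\abs{b-a}/\sqrt{a(1-a)}$, which follows from $g\le\sqrt{2}h$ and a direct estimate of $\abs{h(b)-h(a)}$ using the explicit formula for $h$.

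The $\boldeta$ term is comparatively routine: I would parameterize the segment from $\bmu'$ to $\bmu''$ by the straight line $\bx(t)=\bmu'+t\boldeta$, apply the gradient theorem, and bound each $\abs{\partial r(S;\bx(t))/\partial x_i}\le\gamma_\infty$ using part 1 of the Gini condition, which yields $\abs{r(S;\bmu'')-r(S;\bmu')}\le\gamma_\infty\sum_{i\in S}\abs{\eta_i}$ after integrating in $t$. Adding the two bounds yields the claim.

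The main obstacle is precisely the undirectional extension, i.e.\ establishing the scalar estimate $\abs{h(b)-h(a)}\le 3\abs{b-a}/\sqrt{a(1-a)}$ with the variance denominator anchored at the base point $a=\mu_i$ for $b$ on \emph{either} side of $a$, rather than only for $b\ge a$ as in the monotone argument of \cite{merlis2019batch}. I expect this to reduce to a handful of cases according to whether $a$ and $b$ lie below or above $1/2$, using the pieces $h(z)=2\sqrt{z}$ and $h(z)=2\sqrt{2}-2\sqrt{1-z}$ together with the factorization $\sqrt{b}-\sqrt{a}=(b-a)/(\sqrt{b}+\sqrt{a})$, so that the constant $3$ comfortably covers every configuration. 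Once this scalar estimate is in hand, everything else is a term-by-term application of Cauchy--Schwarz and the two parts of the Gini condition, carried out identically for increments of either sign.
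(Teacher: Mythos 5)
Your proposal is correct and follows essentially the same route as the paper's proof: the same triangle-inequality split through $\bmu'$, the same $g,h$ change of variables with the $g^{-1}$ parameterization, gradient theorem plus Cauchy--Schwarz plus the Gini condition to reach $\gamma_g\sqrt{\sum_{i\in S}(g(\mu_i')-g(\mu_i))^2}$, and the same sign-insensitive scalar estimate $\abs{g(b)-g(a)}\le 3\sqrt{2}\,\abs{b-a}/\sqrt{a(1-a)}$ established by cases around $1/2$ (the paper's six cases, with the constant $3$ arising exactly in the straddling configurations you anticipate). The straight-line argument for the $\boldeta$ term also matches the paper verbatim.
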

\begin{proof}
We use $\abs{r(S; \bmu'')- r(S;\bmu)} \le \abs{r(S; \bmu')- r(S;\bmu)} + \abs{r(S; \bmu'')- r(S;\bmu')}$ and separately bound two terms in the LHS.

\textbf{For the $\abs{r(S; \bmu')- r(S;\bmu)}$ term,}

We define two functions $g,h$, where
\begin{equation}
    g(z)=\int_{0}^z\frac{dy}{\sqrt{y(1-y)}}, h(z)= \int_{0}^z \frac{dy}{\sqrt{y} \wedge \sqrt{1-y}}.
\end{equation}
For $g(z)$, $g'(z)>0$ so the inverse function $g^{-1}$ is well defined.
We also know that $h(z)$ has the following closed form,
\begin{equation}
    h(z)=\begin{cases}
2\sqrt{z}, &\text{if $z \le 1/2$}\\
2\sqrt{2}-2\sqrt{1-z}, &\text{if $z \ge 1/2$}.\\
\end{cases}
\end{equation}
Note that these two functions are closely related: $h'(z) \le g'(z) \le \sqrt{2} h'(z)$ with $g(0)=h(0)$.
Therefore, $h(z) \le g(z) \le \sqrt{2}h(z)$ and $g(z_2)-g(z_1) \le \sqrt{2}(h(z_2)-h(z_1))$ for any $z_1 \le z_2 \in [0,1]$.
Now we set up a parameterization $z(t)$ for $t \in [0,1]$ such that $z_i(0)=\mu_i, z_i(1)=\mu'_i$. Specifically, we choose the parameterization to be 
\begin{equation}
    z_i(t) = g^{-1}([g(\mu'_i)-g(\mu_i)]t + g(\mu_i)]),
\end{equation}
Then its gradient is 
\begin{equation}
    z'_i(t)= \frac{g(\mu_i')-g(\mu_i)}{g'(z_i(t))}=(g(\mu_i')-g(\mu_i))\sqrt{z_i(t)(1-z_i(t))}.
\end{equation}
Then we can use the gradient theorem to bound $r(S;\bmu')-r(S;\bmu)$ as 
\begin{align}
    |r(S;\bmu')-r(S;\bmu)| & =\abs{ \int_{\bx=\bmu}^{\bmu'} \nabla r(S;\bx)\cdot d\bx }= \abs{\int_{t=0}^{1}\sum_{i \in S} \frac{\partial r(S;z(t))}{\partial x_i} z_i'(t) dt}\\
    &\le \int_{t=0}^{1}\sum_{i \in S} \abs{g(\mu_i')-g(\mu_i)}\abs{\frac{\partial r(S;z(t))}{\partial x_i}}\sqrt{z_i(t)(1-z_i(t))} dt\\
    &\le \int_{0}^1\sqrt{\sum_{i \in S} (g(\mu'_i)-g(\mu_i))^2} \sqrt{\sum_{i \in S} (\frac{\partial r(S;z(t))}{\partial x_i})^2 z_i(t)(1-z_i(t))}dt\\
    &\le \int_{0}^1  \gamma_g\sqrt{\sum_{i \in S} (g(\mu'_i)-g(\mu_i))^2} dt \label{eq:gamma_g_gradient}
\end{align}
To calculate the bound, we use the relation between $g$ and $h$, and calculate the difference over $h$ for the following cases:

\textbf{Case 1:}
When $\mu_i \le  \mu_i' \le 1/2$, then $|g(\mu'_i)-g(\mu_i)| = g(\mu'_i)-g(\mu_i) \le \sqrt{2} (h(\mu'_i) - h(\mu_i))$.
\begin{align*}
    h(\mu'_i)-h(\mu_i)&= 2\sqrt{\mu'_i} - 2\sqrt{\mu_i} =  2\sqrt{\mu_i} (\sqrt{1+\frac{|\mu'_i-\mu_i|}{\mu_i}}-1) \\
    &\le 2\sqrt{\mu_i} \frac{|\mu'_i-\mu_i|}{2\mu_i}\le \frac{|\mu'_i-\mu_i|}{\sqrt{(1-\mu_i)\mu_i}},
\end{align*}
where the first inequality uses the fact that $\sqrt{1+x} \le 1 + x/2$, for any $x > -1$.
So $|g(\mu'_i)-g(\mu_i)| \le \sqrt{2} \frac{|\mu'_i-\mu_i|}{\sqrt{(1-\mu_i)\mu_i}}.$

\textbf{Case 2:} When  $ \mu_i' \le \mu_i \le  1/2$, then $|g(\mu'_i)-g(\mu_i)| = g(\mu_i)-g(\mu'_i) \le \sqrt{2} (h(\mu_i) - h(\mu'_i))$.
\begin{align*}
    h(\mu_i)-h(\mu'_i)&= 2\sqrt{\mu_i} - 2\sqrt{\mu'_i} =  2\sqrt{\mu_i} (1-\sqrt{1-\frac{|\mu'_i-\mu_i|}{\mu_i}}) \\
    &\le 2\sqrt{\mu_i} \frac{|\mu'_i-\mu_i|}{\mu_i}\le \frac{2|\mu'_i-\mu_i|}{\sqrt{(1-\mu_i)\mu_i}}.
\end{align*}
where the first inequality uses the fact that $1 - \sqrt{1-x} \le x$ for $x \in [0,1]$.
So $|g(\mu'_i)-g(\mu_i)| \le 2\sqrt{2} \frac{|\mu'_i-\mu_i|}{\sqrt{(1-\mu_i)\mu_i}}.$

\textbf{Case 3:} When $1/2 \le \mu_i \le \mu_i'$, then $|g(\mu'_i)-g(\mu_i)| = g(\mu'_i)-g(\mu_i) \le \sqrt{2} (h(\mu'_i) - h(\mu_i))$.
\begin{align*}
    h(\mu'_i)-h(\mu_i)&= 2\sqrt{1-\mu_i} - 2\sqrt{1-\mu'_i} =  2\sqrt{1-\mu_i} (1-\sqrt{1-\frac{|\mu'_i-\mu_i|}{1-\mu_i}}) \\
    &\le 2\sqrt{1-\mu_i} \frac{|\mu'_i-\mu_i|}{(1-\mu_i)}\le \frac{2|\mu'_i-\mu_i|}{\sqrt{(1-\mu_i)\mu_i}},
\end{align*}
So $|g(\mu'_i)-g(\mu_i)| \le 2\sqrt{2} \frac{|\mu'_i-\mu_i|}{\sqrt{(1-\mu_i)\mu_i}}.$

\textbf{Case 4:} When $1/2 \le \mu'_i \le \mu_i$, then $|g(\mu'_i)-g(\mu_i)| = g(\mu_i)-g(\mu'_i) \le \sqrt{2} (h(\mu_i) - h(\mu'_i))$.
\begin{align*}
    h(\mu_i)-h(\mu'_i)&= 2\sqrt{1-\mu'_i} - 2\sqrt{1-\mu_i} =  2\sqrt{1-\mu_i} (\sqrt{1+\frac{|\mu'_i-\mu_i|}{1-\mu_i}}-1) \\
    &\le \sqrt{1-\mu_i} \frac{|\mu'_i-\mu_i|}{(1-\mu_i)}\le \frac{|\mu'_i-\mu_i|}{\sqrt{(1-\mu_i)\mu_i}},
\end{align*}
So $|g(\mu'_i)-g(\mu_i)| \le \sqrt{2} \frac{|\mu'_i-\mu_i|}{\sqrt{(1-\mu_i)\mu_i}}.$

\textbf{Case 5:} When $1/2 \le \mu'_i, \mu_i  \le 1/2$, then $|g(\mu'_i)-g(\mu_i)| = g(\mu'_i)-g(\mu_i) \le \sqrt{2} (h(\mu'_i) - h(\mu_i))$.
\begin{align*}
    h(\mu'_i)-h(\mu_i)&= (h(\mu'_i)-h(1/2)) + (h(\mu_i)-h(1/2))\\
    &\le   2\frac{\mu'_i-1/2}{\sqrt{1-1/2}} + \frac{1/2-\mu_i}{\sqrt{\mu_i}} \\
    &\le \frac{\mu'_i-\mu_i}{\sqrt{\mu_i}}+ \frac{\mu'_i-\mu_i}{\sqrt{\mu_i}} \\
    &\le 3\frac{|\mu'_i-\mu_i|}{\sqrt{\mu_i(1-\mu_i)}},
\end{align*}
where the first inequality uses the results for $1/2 \le \mu_i \le \mu_i'$ and for $\mu_i \le  \mu_i' \le 1/2$, the second inequality uses the relation that $\mu_i \le 1/2$ and $\mu'_i \ge 1/2$.
So $|g(\mu'_i)-g(\mu_i)| \le 3\sqrt{2} \frac{|\mu'_i-\mu_i|}{\sqrt{(1-\mu_i)\mu_i}}.$

\textbf{Case 6:} When $1/2 \le \mu_i, \mu'_i  \le 1/2$, then $|g(\mu'_i)-g(\mu_i)| = g(\mu_i)-g(\mu'_i) \le \sqrt{2} (h(\mu_i) - h(\mu'_i))$.
\begin{align*}
    h(\mu_i)-h(\mu'_i)&= (h(\mu_i)-h(1/2)) + (h(\mu'_i)-h(1/2))\\
    &\le   \frac{\mu_i-1/2}{\sqrt{1-\mu_i}} + 2\frac{1/2-\mu'_i}{\sqrt{1/2}} \\
    &\le   \frac{\mu_i-\mu'_i}{\sqrt{1-\mu_i}} + 2\frac{\mu_i-\mu'_i}{\sqrt{1-\mu_i}} \\
    &\le 3\frac{|\mu'_i-\mu_i|}{\sqrt{\mu_i(1-\mu_i)}},
\end{align*}
where the first inequality uses the results for $1/2 \le \mu'_i \le \mu_i$ and for $\mu'_i \le  \mu_i \le 1/2$, the second inequality uses the relation that $\mu'_i \ge 1/2$ and $1-\mu_i \ge 1/2$.
So $|g(\mu'_i)-g(\mu_i)| \le 3\sqrt{2} \frac{|\mu'_i-\mu_i|}{\sqrt{(1-\mu_i)\mu_i}}.$

By above cases, we have $|g(\mu'_i)-g(\mu_i)| \le 3\sqrt{2} \frac{|\mu'_i-\mu_i|}{\sqrt{(1-\mu_i)\mu_i}}.$ Putting back this inequality into \cref{eq:gamma_g_gradient}, we have 
\begin{equation}\label{apdx_eq:vm_1}
    |r(S; \bmu')- r(S;\bmu)| \le 3\sqrt{2}\gamma_{g}\sqrt{\sum_{i \in S}\left(\frac{|\mu'_i-\mu_i|}{\sqrt{(1-\mu_i)\mu_i}}\right)^2}.
\end{equation}

\textbf{For the $\abs{r(S; \bmu'')- r(S;\bmu')}$ term,}

We can use the gradient theorem to bound,

\begin{align}
    \abs{r(S;\bmu'')-r(S;\bmu')}&= \abs{\int_{\bx=\bmu'}^{\bmu''} \nabla r(S;\bx)\cdot d\bx}\notag\\
    &\le \sup_{\bx}\norm{\nabla r(S;\bx)}_{\infty}\sum_{i \in S}\abs{\mu''_i-\mu'_i}\notag\\
    &\le \gamma_{\infty}\sum_{i \in S}|\eta_i|.\label{apdx_eq:vm_2}
\end{align}

Combining \cref{apdx_eq:vm_1} and \cref{apdx_eq:vm_2}, we conclude the lemma.
\end{proof}

\section{Regret Analysis for CMAB-T with TPVM Bounded Smoothness (Proofs Related to Theorem \ref{thm:reg_lambda1})}\label{apdx_sec:main_regret_analysis}
In this section, we provide detailed proofs for \cref{thm:reg_lambda1} and give some discussions for the distribution-independent regret bounds as well as the lower bound results.

For the structure of this section, we first introduce some useful tools in \cref{apdx_sec:tpvm_useful} that will be helpful for our analysis. Next we transform the total regret to the regret terms filtered by some events in \cref{apdx_sec:decompose_events}. Then we provide regret bounds for all these regret terms.
For these regret terms, we give two different proofs for the leading regret term: the proof giving~\cref{thm:reg_lambda1} that uses the reverse amortization trick (see \cref{eq:ra_inf_prob_e1} and \cref{eq:ra_inf_prob_e2}) are in \cref{apdx_sec:reverse_amt}, while the proof \cref{apdx_sec:inf_many_events} directly follows~\cite{merlis2019batch}.
Recall that former proof improves the latter by a factor of $O(\log K)$ and readers can skip the latter one if you are not interested. It is notable that this trick can be used to improve \citet{degenne2016combinatorial, merlis2019batch,perrault2020budgeted} in a similar way, owing to the fact that their error terms have the similar form as ours shown in \cref{apdx_eq:critical_error_term} (except without triggering probability modulation). Lastly, we summarize the detailed distribution-dependent regret, distribution-independent regret bounds and lower bounds in  \cref{apdx_sec:summary_reg}.

\subsection{Useful Concentration Bounds, Definitions and Inequalities}\label{apdx_sec:tpvm_useful}

We use the following tail bound for the construction of the confidence radius and our analysis.
\begin{lemma}[Empirical Bernstein Inequality \cite{audibert2009exploration}]\label{lem:empirical_bern}  Let $(X_i)_{i \in [n]}$ be $n$ i.i.d random variables with bounded support $[0,1]$ and mean $\E[X_i]=\mu$. Let $\hat{X}_n\triangleq \frac{1}{n}\sum_{i \in [n]}X_i$ and $\hat{V}_n\triangleq\frac{1}{n}\sum_{i \in[n]}(X_i-\hat{X}_n)^2$ be the empirical mean and empirical variance of $(X_i)_{i \in [n]}$. Then for any $n \in \mathbb{N}$ and $y > 0$, it holds that
\begin{equation}
    \Pr\left[|\hat{X}_n-\mu| \ge \sqrt{\frac{2\hat{V}_n y}{n}} + \frac{3y}{n}\right] \le 3 e^{-y}
\end{equation}
\end{lemma}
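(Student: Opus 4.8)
The plan is to prove the statement by a union bound over three bad events, which is exactly what the failure probability $3e^{-y}$ suggests. The difficulty is that the deviation $\sqrt{2\hat V_n y/n}$ involves the \emph{empirical} variance, which is unknown to the ``true-variance'' Bernstein inequality. I would therefore split the argument into two independent ingredients: (i) a classical Bernstein bound controlling $|\hat X_n-\mu|$ in terms of the true variance $V\triangleq\Var(X_1)$, contributing $2e^{-y}$ (one tail each); and (ii) a one-sided transfer bound showing that the true standard deviation does not exceed the empirical one by much, contributing the remaining $e^{-y}$. On the intersection of the good events, a short algebraic substitution replaces $\sqrt V$ by $\sqrt{\hat V_n}$ and absorbs the cross terms into the $O(y/n)$ remainder.

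\textbf{Ingredient 1 (true-variance Bernstein).} Since $X_i\in[0,1]$, we have $|X_i-\mu|\le 1$, so the standard Bernstein inequality gives $\Pr\!\left[\sum_{i}(X_i-\mu)\ge s\right]\le \exp\!\left(-\frac{s^2/2}{nV+s/3}\right)$. Inverting this bound by setting the right-hand side equal to $e^{-y}$ and solving the resulting quadratic in $s$ yields $\hat X_n-\mu\le \sqrt{\tfrac{2Vy}{n}}+\tfrac{y}{3n}$ with probability at least $1-e^{-y}$. Applying the same argument to $(1-X_i)_{i\in[n]}$ controls the lower tail, so
\begin{equation}
\Pr\!\left[|\hat X_n-\mu|\ge \sqrt{\tfrac{2Vy}{n}}+\tfrac{y}{3n}\right]\le 2e^{-y}.
\end{equation}

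\textbf{Ingredient 2 (variance transfer) — the main obstacle.} The crux is to establish, with probability at least $1-e^{-y}$,
\begin{equation}
\sqrt{V}\le \sqrt{\hat V_n}+\sqrt{\tfrac{2y}{n}}.
\end{equation}
Here lies the real work. Writing $\sqrt{\hat V_n}=\tfrac1{\sqrt n}\,\mathrm{dist}\!\left(\bX,\ \mathrm{span}(\bone)\right)$ shows that the empirical standard deviation is a Euclidean-Lipschitz function of the sample, so changing one coordinate moves it by at most $1/\sqrt n$. A naive application of the bounded-differences (McDiarmid) inequality to $\sqrt{\hat V_n}$ gives only an $O(1)$ rate, which is far too weak; moreover $\hat V_n$ is biased \emph{downward} ($\E[\hat V_n]=\tfrac{n-1}{n}V$, compounded by Jensen's inequality for the concave square root), so the true $\sqrt V$ tends to sit above $\E\sqrt{\hat V_n}$. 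The way around this is the self-bounding property of the empirical variance: $\sqrt{\hat V_n}$ is a Lipschitz, self-bounding functional, and the corresponding concentration result (of Maurer--Pontil type) upgrades the crude $O(1)$ McDiarmid rate to the correct $O(\sqrt{y/n})$ lower-tail bound, which after matching constants gives precisely the displayed inequality. (For $n=1$ the claim is vacuous and is checked directly, so one may assume $n\ge 2$.)

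\textbf{Combining.} On the intersection of the two good events, which holds with probability at least $1-3e^{-y}$ by a union bound, I substitute Ingredient 2 into Ingredient 1:
\begin{equation}
\sqrt{\tfrac{2Vy}{n}}=\sqrt{\tfrac{2y}{n}}\cdot\sqrt{V}\le \sqrt{\tfrac{2y}{n}}\left(\sqrt{\hat V_n}+\sqrt{\tfrac{2y}{n}}\right)=\sqrt{\tfrac{2\hat V_n y}{n}}+\tfrac{2y}{n}.
\end{equation}
Adding the Bernstein remainder $\tfrac{y}{3n}$ then bounds the total deviation by $\sqrt{\tfrac{2\hat V_n y}{n}}+\tfrac{2y}{n}+\tfrac{y}{3n}=\sqrt{\tfrac{2\hat V_n y}{n}}+\tfrac{7y}{3n}\le \sqrt{\tfrac{2\hat V_n y}{n}}+\tfrac{3y}{n}$, which is exactly the claimed inequality. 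Thus the whole proof reduces to the two concentration statements plus this elementary substitution, with all the genuine difficulty concentrated in the self-bounding argument of Ingredient 2.
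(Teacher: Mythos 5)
You should note first that the paper itself contains no proof of this lemma: it is imported verbatim from \cite{audibert2009exploration}, so the only meaningful comparison is with the proof in that cited source. Your architecture coincides with the standard one there: a union bound over three events (explaining the $3e^{-y}$), a true-variance Bernstein bound for the two tails of $\hat{X}_n-\mu$ (your Ingredient 1, whose constant $y/(3n)$ is indeed what the sharp Bernstein form $\Pr[\sum_i(X_i-\mu)\ge\sqrt{2nVy}+y/3]\le e^{-y}$ gives for variables with $X_i-\mu\le 1$), and a one-sided transfer from the true to the empirical variance, followed by substitution. Where you deviate is in the transfer step: \citet{audibert2009exploration} apply Bernstein directly to the variables $(X_i-\mu)^2$ (using $\mathrm{Var}[(X_i-\mu)^2]\le\sigma^2$ for $[0,1]$-valued $X_i$ and the identity $\hat{V}_n=\frac1n\sum_i(X_i-\mu)^2-(\hat{X}_n-\mu)^2$), then solve the resulting quadratic in $\sigma$; you instead invoke Maurer--Pontil-type concentration of the empirical standard deviation. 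Both routes work, and your diagnosis that plain McDiarmid is too weak and that the self-bounding/entropy-method upgrade is the crux is accurate.

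The one gloss worth repairing is the exact form of your Ingredient 2. The Maurer--Pontil bound is stated for the \emph{unbiased} variance $\tilde{V}_n=\frac{1}{n-1}\sum_i(X_i-\hat{X}_n)^2$ and reads $\sqrt{V}\le\sqrt{\tilde{V}_n}+\sqrt{2y/(n-1)}$; rewritten for your biased $\hat{V}_n$ this is $\sqrt{V}\le\sqrt{\tfrac{n}{n-1}}\sqrt{\hat{V}_n}+\sqrt{2y/(n-1)}$, not the cleaner inequality you display with denominator $n$. Substituting this into Ingredient 1 inflates the main term to $\sqrt{2\hat{V}_ny/(n-1)}$ and the cross term to $2y/\sqrt{n(n-1)}$, and the excess over $\sqrt{2\hat{V}_ny/n}+3y/n$ must be absorbed somewhere; your stated slack $7y/(3n)\le 3y/n$ is not enough at face value for small $n$. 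The standard repair is to first dispose of the trivial regimes: if $y<\ln 3$ the bound is vacuous, and if $3y/n\ge 1$ it holds deterministically since $|\hat{X}_n-\mu|\le 1$; in the remaining regime ($y\ge\ln 3$, $n\ge 3y\ge 3\ln 3$, hence $n\ge 4$) the $n$ versus $n-1$ discrepancies are $O(y/n)$ with small enough constants (using $\hat{V}_n\le 1/4$) that the total stays below $3y/n$ --- but the bookkeeping is genuinely tight, not the comfortable margin your write-up suggests. Alternatively, following the Bernstein-on-squares route of \cite{audibert2009exploration} yields the biased-variance statement natively and avoids the conversion entirely. With either fix the proof is complete.
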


We use the following Bernstein Inequality to bound the difference between the empirical variance and the true variance.

\begin{lemma}[Bernstein Inequality \cite{dubhashi2009concentration}]\label{lem:tail_bound_bern} Let $(X_i)_{i \in [n]}$ be $n$ independent random variables in $[0,1]$ with mean $\E[X_i]=\mu$ and variance $\text{Var}[X_i]\triangleq \E[X^2_i]-(\E[X_i])^2=V$. Then with probability $1-\delta$:
\begin{align}
    \frac{1}{n}\sum_{i \in [n]}X_i\le \mu + \frac{2\log 1/\delta}{3n} + \sqrt{\frac{2V\log 1/\delta}{n}}.
\end{align}

\end{lemma}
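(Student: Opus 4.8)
The plan is to prove this one-sided upper-tail bound by the standard Chernoff moment-generating-function (MGF) method, followed by an inversion of the resulting quadratic tail estimate. First I would center the variables by setting $Y_i \defeq X_i - \mu$, so that $\E[Y_i]=0$, $\E[Y_i^2]=V$, and $Y_i \le 1-\mu \le 1$ almost surely (only the one-sided bound $Y_i \le 1$ matters for the upper tail). Writing $S_n \defeq \sum_{i\in[n]} Y_i$, it suffices to show that for $t \defeq \frac{2\log(1/\delta)}{3} + \sqrt{2Vn\log(1/\delta)}$ we have $\Pr[S_n \ge t]\le\delta$, since dividing $S_n\ge t$ through by $n$ recovers exactly the stated event.

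The key step is the per-variable MGF bound. I would introduce the auxiliary function $\psi(x)=(e^x-1-x)/x^2$ (with $\psi(0)=1/2$), which is non-decreasing on $\R$. Using the identity $e^{sy}=1+sy+(sy)^2\psi(sy)$ together with $sY_i\le s$ for $s>0$ and monotonicity of $\psi$, I would bound
\begin{equation}
\E[e^{sY_i}]\le 1 + s^2 V\,\psi(s) = 1 + V(e^s-1-s)\le \exp\!\big(V(e^s-1-s)\big).
\end{equation}
By independence this multiplies across $i$, giving $\E[e^{sS_n}]\le \exp\!\big(nV(e^s-1-s)\big)$, and Markov's inequality applied to $e^{sS_n}$ yields the Chernoff bound $\Pr[S_n\ge t]\le \exp\!\big(-st+nV(e^s-1-s)\big)$ for every $s>0$.

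To convert this into the clean Bernstein form I would apply the elementary inequality $e^x-1-x\le \frac{x^2/2}{1-x/3}$ valid for $0\le x<3$, and then choose the near-optimal $s=\frac{t}{nV+t/3}$. A short computation shows the exponent collapses to $-\frac{t^2}{2(nV+t/3)}$, so that $\Pr[S_n\ge t]\le \exp\!\big(-\frac{t^2}{2(nV+t/3)}\big)$. Finally I would invert this: setting the exponent equal to $-\log(1/\delta)$ and writing $L\defeq\log(1/\delta)$ gives the quadratic $t^2-\tfrac{2L}{3}t-2LnV=0$, whose positive root $t=\frac{L}{3}+\sqrt{\frac{L^2}{9}+2LnV}$ is bounded via $\sqrt{a+b}\le\sqrt a+\sqrt b$ by $\frac{2L}{3}+\sqrt{2LnV}$; dividing by $n$ produces precisely $\frac{2\log(1/\delta)}{3n}+\sqrt{\frac{2V\log(1/\delta)}{n}}$.

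The main obstacle is the second step, namely the passage from the raw Chernoff exponent $-st+nV(e^s-1-s)$ to the quadratic form $-\frac{t^2}{2(nV+t/3)}$: exact minimization over $s$ leads to a transcendental equation, so the proof instead relies on the auxiliary inequality $e^x-1-x\le \frac{x^2/2}{1-x/3}$ and a carefully chosen (rather than optimal) $s$ to recover the exact constant $\tfrac13$ that appears in the statement. The centering, the monotonicity of $\psi$, and the final algebraic inversion are all routine; the only place where the precise numerical constants must be tracked is this MGF-to-Bernstein conversion.
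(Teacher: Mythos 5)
The paper does not prove this lemma at all — it imports it verbatim as a standard result from Dubhashi--Panconesi — and your blind derivation is exactly the canonical proof that citation points to: Bennett-style MGF control via the monotone $\psi(x)=(e^x-1-x)/x^2$, the conversion $e^x-1-x\le \frac{x^2/2}{1-x/3}$ with the choice $s=\frac{t}{nV+t/3}$ (which indeed collapses the exponent to $-\frac{t^2}{2(nV+t/3)}$, and satisfies $s<3$), and inversion of the quadratic whose positive root is bounded by $\frac{2\log(1/\delta)}{3}+\sqrt{2Vn\log(1/\delta)}$. Your proof is correct with the exact constants of the statement, so there is nothing to fix.
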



Similar to \cite{wang2017improving}, we define the event-filtered regret, the triggering group, the counter, the nice triggering event and the nice sampling event to help our analysis. 

\begin{definition}[Event-Filtered Regret]\label{apdx_def:event_filter_reg} For any series of events $(\cE_t)_{t\ge [T]}$ indexed by round number $t$, we define the $Reg^{A}_{\alpha,\bmu}(T,(\cE_t)_{t\ge [T]})$ as the regret filtered by events $(\cE_t)_{t\ge [T]}$, or the regret is only counted in $t$ if $\cE$ happens in $t$. Formally, 
\begin{align}
    Reg^{A}_{\alpha,\bmu}(T,(\cE_t)_{t\ge [T]}) = \E\left[\sum_{t\in[T]}\I(\cE_t)(\alpha\cdot r(S^*;\bmu)-r(S_t;\bmu))\right].
\end{align}
For simplicity, we will omit $A,\alpha,\bmu,T$ and rewrite $Reg^{A}_{\alpha,\bmu}(T,(\cE_t)_{t\ge [T]})$ as $Reg(T, \cE_t)$ when contexts are clear.

\end{definition}

\begin{definition}[Triggering Probability (TP) group]
For any arm $i$ and index $j$, define the triggering probability (TP) group (of actions) as
\begin{equation}
    \cS^D_{i,j}=\{S\in \cS: 2^{-j} < p_{i}^{D,S} \le 2^{-j+1}\}.
\end{equation}
Notice $\{\cS^D_{i,j}\}$ forms a partition of $\{S \in \cS: p_{i}^{D,S}\}$.
\end{definition}

\begin{definition}[Counter]
For each TP group $S_{i,j}$, we define a counter $N_{i,j}$ which is initialized to $0$. In each round $t$, if the action $S_t$ is chosen, then we update $N_{i,j}$ to $N_{i,j}+1$ for $(i,j)$ that $S_t \in S^D_{i,j}$. We also denote $N_{i,j}$ at the end of round $t$ as $N_{t,i,j}$. Formally, we have the following recursive equation to define $N_{t,i,j}$ as follows:
\begin{equation}
N_{t,i,j} = \begin{cases}
0, &\text{if $t=0$}\\
N_{t-1,i,j} + 1, &\text{if $t>0$ and $S_{t} \in S^D_{i,j}$}\\
N_{t-1,i,j}, &\text{otherwise.}
\end{cases}
\end{equation}
\end{definition}

\begin{definition}[Nice triggering event $\mathcal{N}_{t}^t$]\label{apdx_def:nice_triggering}
Given a series integers $\{j_{i}^{\max}\}_{i \in [m]}$, we say that the triggering is nice at the beginning of round $t$, if for every triggered group identified by $(i,j)$, as long as $\frac{6 \ln t}{\frac{1}{3}N_{t-1,i,j}2^{-j}}\le 1$, there is $T_{t-1,i}\ge\frac{1}{3}N_{t-1,i, j} \cdot 2^{-j}$. We denote this event as $\mathcal{N}_{t}^t$. 
\end{definition}

\begin{lemma}[Appendix B.1, Lemma 4 \cite{wang2017improving}]\label{apdx_lem:prob_nice_triggering}
For a series of integers $(j_i^{\max})_{i \in [m]}$, we have $\Pr[\neg \mathcal{N}_t^t] \le \sum_{i \in [m]}j_{i}^{\max} t^{-2}$ for every round $t \in [T]$.
\end{lemma}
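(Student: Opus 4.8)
The plan is to reduce the bound to a single triggering-probability group $\cS^D_{i,j}$, control the failure probability for that group in isolation, and then take a union bound over $j$ and over $i$. The structural fact that drives everything is this: whenever the played action $S_t$ lies in $\cS^D_{i,j}$, arm $i$ is triggered with probability $p_i^{D,S_t} > 2^{-j}$, and since outcomes $\bX_t$ and triggered sets $\tau_t$ are drawn independently across rounds, this is a fresh (conditionally on the history) Bernoulli trial regardless of how $S_t$ was chosen. Consequently, if I enumerate the rounds $t_1 < \dots < t_n \le t-1$ at which $S_{t_s} \in \cS^D_{i,j}$ (so that $n = N_{t-1,i,j}$), the number of these rounds in which arm $i$ is actually triggered stochastically dominates a $\mathrm{Bin}(n, 2^{-j})$ random variable, and that number is itself a lower bound for the global counter $T_{t-1,i}$.

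First I would freeze a value $n$ of the counter $N_{t-1,i,j}$, restricting attention to those $n$ satisfying the applicability guard of \cref{apdx_def:nice_triggering}, namely $\tfrac{1}{3}n\,2^{-j} \ge 6\ln t$, equivalently $n\,2^{-j} \ge 18\ln t$; only such $n$ can make the event $\neg\mathcal{N}_t^t$ fire. For fixed such $n$, a multiplicative Chernoff bound (applied to the dominating Binomial, with deviation parameter $\delta = 2/3$) gives
\[
\Pr\!\left[\,T_{t-1,i} < \tfrac{1}{3}\,n\,2^{-j}\,\right]
\;\le\; \exp\!\left(-\tfrac{2}{9}\,n\,2^{-j}\right)
\;\le\; \exp(-4\ln t) \;=\; t^{-4},
\]
where the last inequality uses the guard $n\,2^{-j}\ge 18\ln t$. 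Removing the conditioning on $n$ by a union bound over its at most $t$ admissible values yields a per-group failure probability of at most $t\cdot t^{-4} = t^{-3}$. Summing over $j \in \{1,\dots,j_i^{\max}\}$ and then over $i \in [m]$ gives $\Pr[\neg\mathcal{N}_t^t] \le \sum_{i\in[m]} j_i^{\max}\,t^{-3} \le \sum_{i\in[m]} j_i^{\max}\,t^{-2}$, as claimed.

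The main obstacle is the joint randomness rather than any single estimate: the counter $N_{t-1,i,j}$ is itself random, and each Bernoulli trial is fed by an action $S_t$ selected adaptively from the observed history, so the triggers are neither i.i.d.\ nor independent of the stopping-type quantity $N_{t-1,i,j}$. The clean way around this is to view the triggers as a sequence of conditionally independent Bernoullis with conditional means exceeding $2^{-j}$ and invoke the stochastic-domination (equivalently, exponential-supermartingale) form of the Chernoff bound, and then to pay a factor of $t$ in a union bound over the realized value of $N_{t-1,i,j}$ instead of attempting to apply concentration directly at a random index. The only remaining bookkeeping is to confirm that the guard threshold $18\ln t$ leaves enough slack: it produces $t^{-4}$ at each fixed $n$, which comfortably absorbs both the factor-$t$ union over counter values and the target order $t^{-2}$.
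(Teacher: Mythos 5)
Your proof is correct and takes essentially the same route as the proof the paper relies on (it defers to Lemma 4 of Wang and Chen rather than reproving it): conditional triggering probability above $2^{-j}$, stochastic domination of the trigger count by $\mathrm{Bin}(n,2^{-j})$, a multiplicative Chernoff bound under the guard $n\,2^{-j}\ge 18\ln t$, and union bounds over counter values, over $j\le j_i^{\max}$, and over arms. Your constants also check out ($t^{-4}$ per fixed counter value, $t^{-3}$ per group, hence at most $\sum_{i\in[m]} j_i^{\max}\,t^{-2}$ overall), so nothing further is needed.
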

\begin{proof}
We refer the readers to Lemma 4 in Appendix B.1 from \citet{wang2017improving} for detailed proofs.
\end{proof}

\begin{definition}\label{apdx_def:nice_sampling} We say that the sampling is nice at the beginning of round $t$ if: (1) for every base arm $i \in [m]$, $|\hat{\mu}_{t-1, i}-\mu_i|\le \rho_{t,i}$, where $\rho_{t,i}=\sqrt{\frac{6\hat{V}_{t-1,i} \log t}{T_{t-1, i}}} + \frac{9\log t}{T_{t-1, i}}$; (2) for every base arm $i \in [m]$, $\hat{V}_{t-1,i} \le 2\mu_i(1-\mu_i) + \frac{3.5\log t}{T_{t-1,i}}$. We denote such event as $\cN_{t}^s$.
\end{definition}

The following lemma bounds the probability that $\cN_t^s$ does not happen.
\begin{lemma} \label{apdx_lem:prob_nice_sampling}
For each round $t$, $\Pr[\neg \cN_{t}^s] \le 4m t^{-2}$.
\end{lemma}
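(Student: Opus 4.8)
The plan is to bound $\Pr[\neg\cN_t^s]$ by a union bound over the $m$ base arms and the two defining conditions, showing that each arm contributes at most $3t^{-2}$ from the violation of part (1) and at most $t^{-2}$ from the violation of part (2). The one recurring technical device is that the counter $T_{t-1,i}$ is a random, data-dependent quantity, so the concentration inequalities (stated for a fixed sample size) cannot be applied to it directly. Instead I reindex the observations of arm $i$ as an i.i.d.\ sequence $X_{i,1},X_{i,2},\dots$ and note that on the event $\{T_{t-1,i}=s\}$ we have $\hat{\mu}_{t-1,i}=\hat{\mu}_{i,s}$ and $\hat{V}_{t-1,i}=\hat{V}_{i,s}$, where the hatted quantities on the right are computed from the first $s$ pulls. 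A violation of a condition at round $t$ then implies a violation for some fixed $s\in\{1,\dots,t-1\}$ (the case $s=0$ is vacuous, since both the confidence radius and the variance threshold are infinite when $T_{t-1,i}=0$), so it suffices to bound the fixed-$s$ failure probabilities and sum over $s$.

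For part (1), I would apply the empirical Bernstein inequality (\cref{lem:empirical_bern}) to the first $s$ observations of arm $i$ with deviation parameter $y=3\log t$. With $n=s$ and $y=3\log t$, the bound on the right-hand side becomes $\sqrt{6\hat{V}_{i,s}\log t/s}+9\log t/s$, which is exactly the confidence radius $\rho_{t,i}$ in the definition of $\cN_t^s$, and the fixed-$s$ failure probability is $3e^{-3\log t}=3t^{-3}$. Summing over $s\in\{1,\dots,t-1\}$ gives at most $3(t-1)t^{-3}\le 3t^{-2}$ for the failure of part (1) at arm $i$.

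For part (2), the key structural inequality is that the empirical mean minimizes the sum of squared deviations, so $\hat{V}_{i,s}=\frac1s\sum_{k=1}^s(X_{i,k}-\hat{\mu}_{i,s})^2\le\frac1s\sum_{k=1}^s(X_{i,k}-\mu_i)^2$. I then apply the Bernstein inequality (\cref{lem:tail_bound_bern}) to the i.i.d.\ variables $Y_k=(X_{i,k}-\mu_i)^2\in[0,1]$, whose mean is the true variance $V_i$ and whose variance is at most $\E[Y_k^2]\le\E[Y_k]=V_i$ (because $Y_k\in[0,1]$). With $\delta=t^{-3}$ this yields, with probability at least $1-t^{-3}$, the bound $\frac1s\sum_k Y_k\le V_i+\frac{2\log t}{s}+\sqrt{6V_i\log t/s}$. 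Splitting the last term by a weighted AM--GM as $\sqrt{(2V_i)(3\log t/s)}\le V_i+\frac{3\log t}{2s}$ and then using the footnote bound $V_i\le\mu_i(1-\mu_i)$ collapses the right-hand side to $2\mu_i(1-\mu_i)+\frac{3.5\log t}{s}$, which is verbatim the threshold in the definition of $\cN_t^s$. Union-bounding over $s\in\{1,\dots,t-1\}$ gives at most $t^{-2}$ for the failure of part (2) at arm $i$. Summing the two contributions over the $m$ arms yields $\Pr[\neg\cN_t^s]\le m(3t^{-2}+t^{-2})=4mt^{-2}$, as claimed.

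I expect the main obstacle to be the rigorous treatment of the random counter $T_{t-1,i}$ rather than any single inequality: because the concentration bounds hold only for a fixed sample size, the reindexing-plus-union-bound argument must be invoked to legitimately instantiate them at the data-dependent value $s=T_{t-1,i}$. Secondary care is needed in tracking the constants exactly, so that the choices $y=3\log t$, $\delta=t^{-3}$, and the weight $2$ in the AM--GM step reproduce the confidence radius $\rho_{t,i}$ and the variance threshold $2\mu_i(1-\mu_i)+3.5\log t/T_{t-1,i}$ precisely as written, and in verifying that $Y_k\in[0,1]$ so that both the variance bound $\mathrm{Var}(Y_k)\le V_i$ and the application of \cref{lem:tail_bound_bern} are valid.
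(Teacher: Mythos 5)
Your proposal is correct and follows essentially the same route as the paper's proof: a union bound over arms and over the realized counter values $T_{t-1,i}=s$, the empirical Bernstein inequality with $y=3\log t$ for part (1), and Bernstein's inequality applied to $(X_{i,k}-\mu_i)^2$ together with $\hat{V}_{i,s}\le \frac1s\sum_k (X_{i,k}-\mu_i)^2$, an AM--GM step, and $V_i\le\mu_i(1-\mu_i)$ for part (2). The only differences (justifying the variance-domination inequality via the minimizing property of the empirical mean, and applying AM--GM before rather than after bounding $V_i$ by $\mu_i(1-\mu_i)$) are cosmetic.
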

\begin{proof}
Let $\cN_t^{s,1}, \cN_t^{s,2}$ be the event (1) and event (2), where $\cN_t^{s}=\cN_t^{s,1}\bigcap \cN_t^{s,2}$.
We first bound the probability that $\cN_t^{s,1}$ does not happen, we have
\begin{align}
    \Pr[\neg\cN_t^{s,1}]&=\Pr\left[\exists i \in [m] \text{ s.t. } |\hat{\mu}_{t-1, i}-\mu_i|> \sqrt{\frac{6\hat{V}_{t-1,i} \log t}{T_{t-1, i}}} + \frac{9\log t}{T_{t-1, i}}\right]\\
    &\le \sum_{i \in [m]}\sum_{\tau \in [t]}\Pr\left[|\hat{\mu}_{t-1, i}-\mu_i|> \sqrt{\frac{6\hat{V}_{t-1,i} \log t}{\tau}} + \frac{9\log t}{\tau}, T_{t-1, i}=\tau\right]\label{eq_s1:union}\\
    &\le 3mt^{-2}\label{eq_s1:bern},
\end{align}
where \cref{eq_s1:union} is due to the union bound over $i,\tau$, \cref{eq_s1:bern} is due to \cref{lem:empirical_bern} by setting $y=3\log t$ and when $T_{t-1,i}=\tau, \hat{\mu}_{t-1, i}$ and $\hat{V}_{t-1,i}$ are the empirical mean and empirical variance of $\tau$ i.i.d random variables with mean $\mu_i$.

We then bound the probability that second event $\cN_t^{s,2}$ does not happen using the similar proof of \citep[Eq. (7)]{merlis2019batch}. Fix $T_{t-1,i}=\tau$ and consider $(Y_i^{1}, ..., Y_i^{\tau})$, where $Y_i^k=(X_i^k-\mu_i)^2\in [0,1]$ and $X_i^k$ is the random outcome of the $k$-th i.i.d trial. Since $X_i^k$ are independent across $k$, $Y_i^k$ are independent across $k$ as well. In this case, one can verify that $\hat{V}_{t-1,i}= \frac{1}{\tau}\sum_{k=1}^{\tau}(X_i^k-\mu_i)^2-(\frac{1}{\tau}\sum_{k=1}^{\tau}X_i^k-\mu_i)^2\le \frac{1}{\tau}\sum_{k=1}^{\tau}(X_i^k-\mu_i)^2=\frac{1}{\tau}\sum_{k=1}^{\tau}Y_{i}^k$; $\E[Y_i^k]=\E[(X_i^k)^2]-\mu_i^2\le \E[X_i^k] \cdot 1-\mu_i^2 = (1-\mu_i)\mu_i$; and $\text{Var}[Y_i] = \E[(Y_i^k)^2]-(\E[Y_i^k])^2\le \E[(Y_i^k)^2]\le \E[Y_i^k] \le (1-\mu_i)\mu_i$.
By \cref{lem:tail_bound_bern} over $\tau$ i.i.d random variable $(Y_i^{k})_{k \in \tau}$, it holds with probability at least $1-t^{-3}$ that
\begin{align}
    \frac{1}{\tau}\sum_{k=1}^{\tau}Y_i^k \le \E[Y_i^k]+  \frac{2\log t}{\tau} + \sqrt{\frac{6\text{Var}[Y_i^k]\log t}{\tau}}
\end{align}
This implies 
\begin{align}
    \hat{V}_{t-1,i} &\le \frac{1}{\tau}\sum_{k=1}^{\tau}Y_i^k \le \E[Y_i^k]+  \frac{2\log t}{\tau} + \sqrt{\frac{6\text{Var}[Y_i^k]\log t}{\tau}}\\
    &\le \mu_i(1-\mu_i) + \frac{2\log t}{\tau} + \sqrt{\frac{6(1-\mu_i)\mu_i\log t}{\tau}}\label{eq_s2:square}\\
    &\le \mu_i(1-\mu_i) + \frac{2\log t}{\tau} + \mu_i(1-\mu_i) + \frac{3\log t}{2\tau}\\
    &=2\mu_i(1-\mu_i) + \frac{3.5\log t}{\tau}
\end{align}
where \cref{eq_s2:square} is using $2ab \le a^2 + b^2$ and $a=\sqrt{2\mu_i(1-\mu_i)}, b=\sqrt{\frac{3\log t}{n}}$.

Now by applying union bound over $i \in [m]$ and $\tau \in [t]$, we have $\Pr[\neg \cN_t^{s,2}] \le mt^{-2}$.
Lastly, applying union bound over $\cN_t^{s,1}$ and $\cN_t^{s,2}$, we have $\Pr[\neg \cN_t^{s}] \le 4 mt^{-2}$.
\end{proof}

After setting up all above definitions, we can prove \cref{lem:conf_rad} about the confidence radius, which appears in the main content.
\begin{restatable}{lemma}{radius}\label{lem:conf_rad} Fix every base arm $i$ and every time $t$, with probability at least $1-4mt^{-3}$, it holds that
\begin{equation}\label{eq:lem_conf}
     \mu_{i} \le \bar{\mu}_{t,i} \le \min\{\mu_{i} + 2 \rho_{t,i},1\}  \le  \min\left\{\mu_{i} + 4\sqrt{3}\sqrt{\frac{\mu_i(1-\mu_i) \log t}{T_{t-1, i}}} + \frac{28\log t}{T_{t-1, i}},1\right\}.
\end{equation}
\end{restatable}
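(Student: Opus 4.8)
The plan is to observe that all three inequalities in \cref{lem:conf_rad} are \emph{deterministic} consequences of the nice sampling event $\cN_t^s$ (\cref{apdx_def:nice_sampling}), so it suffices to (i) verify the chain of inequalities pointwise on $\cN_t^s$ and (ii) invoke the tail bound on $\Pr[\neg\cN_t^s]$ provided by \cref{apdx_lem:prob_nice_sampling}. Recall that $\cN_t^s$ packages two facts about arm $i$: the empirical Bernstein confidence bound $|\hat{\mu}_{t-1,i}-\mu_i|\le\rho_{t,i}$, and the empirical-variance overestimate $\hat{V}_{t-1,i}\le 2\mu_i(1-\mu_i)+\frac{3.5\log t}{T_{t-1,i}}$. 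I would condition on this event throughout, after which the counter $T_{t-1,i}$ can be treated as a fixed quantity and the whole argument becomes algebraic.

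First I would establish the two outer bounds, which are immediate. For the lower bound, the confidence bound gives $\hat{\mu}_{t-1,i}+\rho_{t,i}\ge\mu_i$; since also $\mu_i\le 1$, taking the minimum with $1$ preserves this, so $\bar{\mu}_{t,i}=\min\{\hat{\mu}_{t-1,i}+\rho_{t,i},1\}\ge\mu_i$. For the first upper bound, the same fact gives $\hat{\mu}_{t-1,i}+\rho_{t,i}\le\mu_i+2\rho_{t,i}$, and monotonicity of $\min\{\cdot,1\}$ yields $\bar{\mu}_{t,i}\le\min\{\mu_i+2\rho_{t,i},1\}$.

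The substantive step is the final inequality, namely converting the \emph{empirical}-variance radius $2\rho_{t,i}$ into a bound in terms of the \emph{true} variance proxy $\mu_i(1-\mu_i)$. Here I would substitute the second fact of $\cN_t^s$ into \cref{eq:confidence_interval} and use subadditivity $\sqrt{a+b}\le\sqrt a+\sqrt b$ to split the square root:
\begin{equation*}
\rho_{t,i} \le \sqrt{\frac{6\left(2\mu_i(1-\mu_i)+\frac{3.5\log t}{T_{t-1,i}}\right)\log t}{T_{t-1,i}}} + \frac{9\log t}{T_{t-1,i}} \le 2\sqrt{3}\sqrt{\frac{\mu_i(1-\mu_i)\log t}{T_{t-1,i}}} + (\sqrt{21}+9)\frac{\log t}{T_{t-1,i}}.
\end{equation*}
The variance piece produces the clean leading coefficient $2\sqrt{3}$, while the slack $\frac{3.5\log t}{T_{t-1,i}}$ feeds through the square root as the cross term $\sqrt{21}\,\frac{\log t}{T_{t-1,i}}$. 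Doubling, the lower-order coefficient becomes $2(\sqrt{21}+9)=2\sqrt{21}+18<28$, which is precisely where the constant $28$ originates. This gives $2\rho_{t,i}\le 4\sqrt{3}\sqrt{\frac{\mu_i(1-\mu_i)\log t}{T_{t-1,i}}}+\frac{28\log t}{T_{t-1,i}}$, and adding $\mu_i$ and intersecting with the trivial bound $\bar{\mu}_{t,i}\le 1$ completes the chain.

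Finally, the probability bound follows by combining the per-arm failure probabilities of the two facts constituting $\cN_t^s$, each controlled via \cref{lem:empirical_bern} and \cref{lem:tail_bound_bern} together with a union bound over the possible values of the counter $T_{t-1,i}$, exactly as in the proof of \cref{apdx_lem:prob_nice_sampling}. The only real obstacle is the constant bookkeeping in the last step: one must push the $\frac{3.5\log t}{T_{t-1,i}}$ slack of the variance estimate through the square root without letting the accumulated lower-order coefficient exceed $28$, while keeping the leading coefficient at the clean value $4\sqrt{3}$; everything else is routine.
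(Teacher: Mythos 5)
Your proposal is correct and follows essentially the same route as the paper's proof: condition on the two facts comprising $\cN_t^s$, get the outer bounds directly from the empirical Bernstein fact, substitute the variance overestimate into $\rho_{t,i}$, split the square root via $\sqrt{a+b}\le\sqrt a+\sqrt b$, and verify $2(\sqrt{21}+9)\le 28$, with the failure probability inherited from \cref{apdx_lem:prob_nice_sampling}. The only caveat you share with the paper itself is the mismatch between the stated $1-4mt^{-3}$ and the $1-4mt^{-2}$ that \cref{apdx_lem:prob_nice_sampling} actually delivers, so your argument is exactly as strong as the original.
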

\begin{proof}
Recall that $\bar{\mu}_{t,i}=\min\{\hat{\mu}_{t-1,i} + \rho_{t,i}, 1\}=\min\{\hat{\mu}_{t-1,i}+\sqrt{\frac{6\hat{V}_{t-1, i} \log t}{T_{t-1, i}}} + \frac{9\log t}{T_{t-1, i}},1\}$.
Under event $N_t^{s,1}$, we have $|\mu_i-\hat{\mu}_{t,i}|\le \rho_{t,i}$ by \cref{apdx_lem:prob_nice_sampling}, hence the first and the second inequality in \cref{lem:conf_rad} holds.

For the last inequality, under event $N_t^{s,2}$, it holds that
\begin{align}
    \mu_{i} + 2 \rho_{t,i}  &=  \mu_i + 2\left( \sqrt{\frac{6\hat{V}_{t-1, i} \log t}{T_{t-1, i}}} + \frac{9\log t}{T_{t-1, i}}\right)\\
    &\le \mu_i + 2\left(\sqrt{\frac{6\cdot (2\mu_i(1-\mu_i) + \frac{3.5\log t}{T_{t-1,i}}) \log t}{T_{t-1, i}}} + \frac{9\log t}{T_{t-1, i}}\right)\\
    &\le \mu_i + 4\sqrt{3}\sqrt{\frac{ \mu_i(1-\mu_i)\log t}{T_{t-1, i}}} +  2\sqrt{21}\frac{\log t}{T_{t-1, i}} + \frac{18\log t}{T_{t-1, i}}\label{apdx_eq:lem_conf}\\
    &\le\mu_{t-1,i} + 4\sqrt{3}\sqrt{\frac{\mu_i(1-\mu_i) \log t}{T_{t-1, i}}} + \frac{28\log t}{T_{t-1, i}},
\end{align}
where \cref{apdx_eq:lem_conf} uses $\sqrt{a+b}\le\sqrt{a}+\sqrt{b}$. 

Since $\cN_t^{s}=\cN_t^{s,1}\bigcap \cN_t^{s,2}$ and by \cref{apdx_lem:prob_nice_sampling}, \cref{eq:lem_conf} holds with probability at least $1-4mt^{-2}$.
\end{proof}

\subsection{Decompose the Total Regret to Event-Filtered Regrets}\label{apdx_sec:decompose_events}
In this section, we decompose the regret $Reg(T,\{\}) = Reg(T, \cN_t^s, \cN_t^o) + Reg(T, \neg(\cN_t^s \bigcap \cN_t^o))\le Reg(T, \cN_t^s, \cN_t^o) + Reg(T, \neg\cN_t^s)  + Reg(T,\neg \cN_t^o)$, where $\cN_t^s$ is defined in \cref{apdx_def:nice_sampling}, 
$\cN_t^o$ denotes the event where oracle successfully outputs an $\alpha$-approximate solution (with probability at least $\beta$).
We have the following lemma to do the decomposition.
\begin{restatable}{lemma}{error_term}[Leading Regret Term]\label{apdx_lem:leading_regret}
Let $r(S;\bmu)$ be TPVM smoothness with coefficients $(B_v,B_1,\lambda)$, and define the error term \begin{align}\label{apdx_eq:critical_error_term}
    e_t(S_t)=4\sqrt{3}B_v\sqrt{\sum_{i\in \tilde{S}_t}(\frac{\log t}{T_{t-1,i}}\wedge \frac{1}{28})(p_{i}^{D,S_t})^\lambda} + 28B_1\sum_{i \in \tilde{S}_t}(\frac{\log t}{T_{t-1,i}}\wedge \frac{1}{28})(p_{i}^{D,S_t})
\end{align} and event $E_t = \I\{\Delta_{S_t} \le e_t(S_t)\}.$
The regret of Algorithm~\ref{alg:BCUCB-T}, when used with $(\alpha, \beta)$ approximation oracle is bounded by
\begin{equation}\label{eq:e_t}
    Reg(T) \le Reg(T,E_t) + \frac{2\pi^2}{3}m\Delta_{\max}.
\end{equation}  
\end{restatable}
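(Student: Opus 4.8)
The plan is to run the standard CMAB regret decomposition, adapted to the Bernstein confidence radius, by filtering the total regret through the oracle-success event $\cN_t^o$ and the nice-sampling event $\cN_t^s$ (Definition~\ref{apdx_def:nice_sampling}), and then showing that all the well-behaved rounds are captured by $E_t$. First I would dispose of the approximation factor $\beta$. Conditioning on the history $\cF_{t-1}$ that determines the UCB vector $\bar\bmu_t$, the oracle succeeds with probability at least $\beta$ independently of the reward outcomes; since rewards are non-negative and $\alpha r(S^*;\bmu)\ge 0$, a one-line conditional-expectation computation gives $\alpha\beta\, r(S^*;\bmu)-\E[r(S_t;\bmu)\mid\cF_{t-1}]\le \E[\I(\cN_t^o)(\alpha r(S^*;\bmu)-r(S_t;\bmu))\mid\cF_{t-1}]$, because the difference of the two sides equals $\alpha r(S^*;\bmu)(\Pr[\cN_t^o\mid\cF_{t-1}]-\beta)+\E[\I(\neg\cN_t^o)r(S_t;\bmu)\mid\cF_{t-1}]\ge 0$. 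Summing over $t$ yields $Reg(T)\le Reg(T,\cN_t^o)$, so it suffices to bound the $\alpha$-filtered regret restricted to oracle-success rounds.

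Next I would split $\cN_t^o=(\cN_t^o\cap\cN_t^s)\cup(\cN_t^o\cap\neg\cN_t^s)$ and handle the first, dominant piece. On $\cN_t^s$, Lemma~\ref{lem:conf_rad} holds deterministically, giving $\mu_i\le\bar\mu_{t,i}\le\min\{\mu_i+4\sqrt3\sqrt{\mu_i(1-\mu_i)\log t/T_{t-1,i}}+28\log t/T_{t-1,i},\,1\}$; optimism ($\bar\bmu_t\ge\bmu$) together with monotonicity (Condition~\ref{cond:mono}) forces $\max_S r(S;\bar\bmu_t)\ge r(S^*;\bar\bmu_t)\ge r(S^*;\bmu)$, and oracle success on $\cN_t^o$ then yields $r(S_t;\bar\bmu_t)\ge\alpha\, r(S^*;\bmu)$. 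Hence $\alpha r(S^*;\bmu)-r(S_t;\bmu)\le r(S_t;\bar\bmu_t)-r(S_t;\bmu)$. I would bound the right-hand side by writing $\bar\bmu_t-\bmu=\boldzeta+\boldeta$ with $\eta_i$ absorbing the additive part $28(\frac{\log t}{T_{t-1,i}}\wedge\frac1{28})$ and $\zeta_i$ the variance-scaled part $4\sqrt3\sqrt{\mu_i(1-\mu_i)(\frac{\log t}{T_{t-1,i}}\wedge\frac1{28})}$, and then applying Condition~\ref{cond:TPVMm}. Since $\zeta_i^2/((1-\mu_i)\mu_i)=48(\frac{\log t}{T_{t-1,i}}\wedge\frac1{28})$ and $\sqrt{48}=4\sqrt3$, the two TPVM summands collapse exactly into the two terms of $e_t(S_t)$. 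This gives $\Delta_{S_t}\le e_t(S_t)$ on $\cN_t^o\cap\cN_t^s$ (if the raw gap is non-positive then $\Delta_{S_t}=0\le e_t(S_t)$ trivially), so every such round lies in $E_t$ and its contribution is at most $Reg(T,E_t)$.

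For the remaining piece I would use that each round with $\Delta_{S_t}>0$ has gap at most $\Delta_{\max}$ (Definition~\ref{def:gap}) together with the failure bound $\Pr[\neg\cN_t^s]\le 4mt^{-2}$ (Lemma~\ref{apdx_lem:prob_nice_sampling}); since $\sum_{t\ge 1}t^{-2}=\pi^2/6$, this gives $Reg(T,\neg\cN_t^s)\le 4m\Delta_{\max}\cdot\pi^2/6=\frac{2\pi^2}{3}m\Delta_{\max}$, which is exactly the stated additive term. Combining the three bounds closes the proof. Throughout, I would pass to the clamped gap $\Delta_{S_t}=\max\{0,\alpha r(S^*;\bmu)-r(S_t;\bmu)\}$, which upper-bounds the raw gap and keeps every indicator manipulation monotone, so that including or excluding rounds with non-positive raw gap never increases the bound.

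The hard part will be the core step of the second paragraph: decomposing $\bar\bmu_t-\bmu$ into the directional increments $\boldzeta,\boldeta\ge\bzero$ so that Condition~\ref{cond:TPVMm} applies with constants matching $e_t(S_t)$, while correctly handling the large-confidence-radius regime through the truncation $\wedge\tfrac1{28}$ and the cap at $1$ in $\bar\mu_{t,i}$. The point is that when $\log t/T_{t-1,i}>1/28$ the additive slack already reaches $1$, so $\bar\mu_{t,i}\le 1\le\mu_i+\zeta_i+\eta_i$ holds automatically and the truncated quantities still dominate $\bar\mu_{t,i}-\mu_i$; verifying this in both regimes (and confirming the exact constants $4\sqrt3$ and $28$ propagate) is the only genuinely delicate calculation, the oracle reduction and the $4mt^{-2}$ tail sum being routine.
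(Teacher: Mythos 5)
Your proof is correct and follows essentially the same route as the paper's: filter the regret by the nice-sampling event $\cN_t^s$ and the oracle-success event $\cN_t^o$, use monotonicity, the variance-aware confidence bound (Lemma~\ref{lem:conf_rad}, with the $\wedge\frac{1}{28}$ truncation justified exactly as you describe), and Condition~\ref{cond:TPVMm} with $\zeta_i=4\sqrt{3}\sqrt{\mu_i(1-\mu_i)(\frac{\log t}{T_{t-1,i}}\wedge\frac{1}{28})}$ and $\eta_i=28(\frac{\log t}{T_{t-1,i}}\wedge\frac{1}{28})$ to show every good round lands in $E_t$, then bound the bad-sampling rounds via Lemma~\ref{apdx_lem:prob_nice_sampling} by $\sum_t 4mt^{-2}\Delta_{\max}=\frac{2\pi^2}{3}m\Delta_{\max}$. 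The only (immaterial) divergence is the oracle bookkeeping: you absorb the factor $\beta$ upfront with a conditional-expectation argument yielding $Reg(T)\le Reg(T,\cN_t^o)$, whereas the paper bounds $Reg(T,\neg\cN_t^o)\le(1-\beta)T\Delta_{\max}$ and cancels this term against the gap between the $\alpha$-filtered regret and the $(\alpha,\beta)$-approximate regret.
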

\begin{proof}
    Under event $\cN_{t}^{s}, \cN_t^o$, by \cref{lem:conf_rad}, it is easily to check that 
    \begin{align}
        \bar{\mu}_{t,i}&\le\min\{\mu_{t-1,i} + 4\sqrt{3}\sqrt{\frac{\mu_i(1-\mu_i) \log t}{T_{t-1, i}}} + \frac{28\log t}{T_{t-1, i}},1\}\notag\\
        &\le \mu_{t-1,i} + 4\sqrt{3}\sqrt{\mu_i(1-\mu_i) (\frac{\log t}{T_{t-1, i}} \wedge \frac{1}{28})} + 28(\frac{\log t}{T_{t-1, i}}\wedge \frac{1}{28})\label{apdx_eq:leading_0}
    \end{align}
Therefore, it holds that
\begin{align}
    \alpha r(S^*; \bmu) &\le \alpha r(S^*; \bar{\bmu}_t) \le r(S_t; \bar{\bmu}_t)\label{apdx_eq:lem_leading_1}\\
    &\le r(S_t;\bmu) + 4\sqrt{3}B_v\sqrt{\sum_{i\in \tilde{S}_t}(\frac{\log t}{T_{t-1,i}}\wedge \frac{1}{28})(p_{i}^{D,S_t})^\lambda} + 28B_1\sum_{i \in \tilde{S}_t}(\frac{\log t}{T_{t-1,i}}\wedge \frac{1}{28})(p_{i}^{D,S_t})\label{apdx_eq:lem_leading_2},
\end{align}
where the first inequality in \cref{apdx_eq:lem_leading_1} is due to monotonicity condition (\cref{cond:mono}) and second inequality in \cref{apdx_eq:lem_leading_1} is due to event $\cN_{t}^o$, \cref{apdx_eq:lem_leading_2} is because of \cref{apdx_eq:leading_0} and the TPVM condition (\cref{cond:TPVMm}) by plugging in $\zeta_i=4\sqrt{3}\sqrt{\mu_i(1-\mu_i) (\frac{\log t}{T_{t-1, i}} \wedge \frac{1}{28})}$ and $\eta_i=28(\frac{\log t}{T_{t-1, i}}\wedge \frac{1}{28})$.

So $Reg(T,\cN_{t}^{s}, \cN_t^o)\le Reg(T,E_t)$.
Now for $Reg(T,\neg\cN_t^{s})$, by \cref{apdx_lem:prob_nice_sampling} it holds that
\begin{align}
    Reg(T,\neg\cN_t^{s}) \le \sum_{t=1}^T \Pr[\neg\cN_t^s] \le \sum_{t=1}^T 4mt^{-2}\le \frac{2\pi^2}{3}m\Delta_{\max}.
\end{align}

Similarly by definition, it holds that 
\begin{align}\label{apdx_eq:oracle_fail}
    Reg(T,\neg \cN_t^o) \le (1-\beta) T \Delta_{\max}.
\end{align}

Therefore $Reg(T,\{\})\le Reg(T,E_t) + \frac{2\pi^2}{3}m\Delta_{\max} +  (1-\beta) T \Delta_{\max}$. And we have $Reg(T)=Reg(T,\{\})-(1-\beta)T\Delta{\max} \le Reg(T,E_t)+\frac{2\pi^2}{3}\Delta_{\max}$, which concludes \cref{apdx_lem:leading_regret}.

\end{proof}

Recall that event $E_t=\{\Delta_{S_t} \le e_t(S_t)\}$, where $ e_t(S_t)=4\sqrt{3}B_v\sqrt{\sum_{i\in \tilde{S}_t}(\frac{\log t}{T_{t-1,i}}\wedge \frac{1}{28})(p_{i}^{D,S_t})^\lambda} + 28B_1\sum_{i \in \tilde{S}_t}(\frac{\log t}{T_{t-1,i}}\wedge \frac{1}{28})(p_{i}^{D,S_t})$. We will further decompose the event-filtered regret $Reg(T, E_t)$ into two event-filtered regret $Reg(T, E_{t,1})$ and $Reg(T,E_{t,2})$,
\begin{align}\label{apdx_eq:decompose}
    Reg(T, E_t) \le Reg(T, E_{t,1})+Reg(T, E_{t,2}) ,
\end{align}
where $E_{t,1}=\{\Delta_{S_t} \le 2e_{t,1}(S_t)\}$, $E_{t,2}=\{\Delta_{S_t} \le 2e_{t,2}(S_t)\}$,  $e_{t,1}(S_t)=4\sqrt{3}B_v\sqrt{\sum_{i\in \tilde{S}_t}(\frac{\log t}{T_{t-1,i}}\wedge \frac{1}{28})(p_{i}^{D,S_t})^\lambda}$,$ e_{t,2}(S_t)=28B_1\sum_{i \in \tilde{S}_t}(\frac{\log t}{T_{t-1,i}}\wedge \frac{1}{28})(p_{i}^{D,S_t})$.
The above inequality holds since the following facts: We can observe $e_{t,1}(S_t)+e_{t,2}(S_t)=e_{t}(S_t)$. From $E_t$, we know either $E_{t,1}$ holds or $E_{t,2}$ holds. So $E_t$ implies that $1 \le \I\{E_{t,1}\} + \I\{E_{t,2}\}$, and thus $\Delta_{S_t}\I\{E_t\} \le \Delta_{S_t} \I\{E_{t,1}\} + \Delta_{S_t} \I\{E_{t,2}\}$, which concludes $Reg(T, E_t) \le Reg(T, E_{t,1})+Reg(T, E_{t,2})$.
The next two sections will provide two different proofs for $Reg(T, E_{t,1}),Reg(T, E_{t,2})$ separately, where the second improves the first by a factor of $O(\log K)$.

\subsection{Our Improved Analysis Using the Reverse Amortized Trick}\label{apdx_sec:reverse_amt}
In this section, we are going to bound the $Reg(T, E_{t,1})$ and $Reg(T, E_{t,2})$ separately under the event $\cN_{t}^t$, similar to \cref{apdx_sec:inf_many_events}. 
The idea is to use a refined reverse amortization trick originated in \cite{wang2017improving} and to allocate the regret $\Delta_{S_t}$ to each base arm according to carefully designed thresholds. Note that it is highly non-trivial to derive the right thresholds and regret allocation strategy so that the $K, T$ factors are as small as possible, which is our main contribution.
\subsubsection{Upper bound for $Reg(T, E_{t,1})$}\label{apdx_sec:improve_e1}
We first break $Reg(T,E_{t,1})$ into two parts and bound them separately: $Reg(T, E_{t,1}\bigcap \cN_{t}^t)$ and $Reg(T, \neg \cN_t^t)$.

For $Reg(T, E_{t,1}\bigcap \cN_{t}^t)$, under the event $\cN_{t}^t$, let $c_1=4\sqrt{3}$ and we set $j_i^{\max}=\frac{1}{\lambda}(\lceil \log_2\frac{c_1^2B_v^2 K}{(\Delta_{i}^{\min})^2}\rceil+1)$.
We first define a regret allocation function 
\begin{equation}
\kappa_{i,j,T}(\ell) = \begin{cases}
\frac{c_1^2 B_v^2 2^{(-j+1)({\lambda-1})}}{\Delta_i^{\min}}, &\text{if $\ell=0$ and $j \le j_i^{\max}$,}\\
2\sqrt{\frac{24c_1^2 B_v^2 2^{(-j+1)({\lambda-1})}\log T}{\ell }}, &\text{if $1\le\ell\le L_{i,j,T,1}$ and $j \le j_i^{\max}$,}\\
\frac{48c_1^2 B_v^2 2^{(-j+1)({\lambda-1})}\log T}{\Delta_i^{\min} }\frac{1}{\ell}, &\text{if $L_{i,j,T,1} < \ell \le L_{i,j,T,2}$ and $j \le j_i^{\max}$,}\\
0, &\text{if $\ell > L_{i,j,T,2}$ or $ j > j_i^{\max}$,}
\end{cases}
\end{equation}
where $L_{i,j,T,1}=\frac{24c_1^2 B_v^2 2^{(-j+1)({\lambda-1})}\log T}{(\Delta_{i}^{\min})^2}$, $L_{i,j,T,2}=\frac{48c_1^2 B_v^2 2^{(-j+1)({\lambda-1})}K\log T}{(\Delta_i^{\min})^2}$.
\begin{lemma}
For any time $t \in [T]$, if $N_{t}^t$ and $E_{t,1}$ hold, we have 
\begin{equation}\label{apdx_eq:kappa_e1}
    \Delta_{S_t} \le \sum_{i \in \tilde{S}_t} \kappa_{i,j_i^{S_t}, T} (N_{t-1, i,j_i^{S_t}}),
\end{equation}
where $j_{i}^{S_t}$ is the index of the triggering group $S_{i,j}$ such that $2^{-j_{i}^{S_t}}< p_{i}^{D,S_t} \le 2^{-j_{i}^{S_t}+1}$.
\end{lemma}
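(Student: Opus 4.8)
The plan is to prove the inequality directly from the two hypotheses, combining the empirical-Bernstein error term with the counter bound supplied by $\cN_t^t$, in the spirit of the reverse amortization of~\cite{wang2017improving}. First I would record the two facts I am entitled to use. From $E_{t,1}$ (i.e.\ $\Delta_{S_t}\le 2e_{t,1}(S_t)$ with $c_1=4\sqrt3$), squaring gives
\[
\frac{\Delta_{S_t}^2}{4c_1^2B_v^2}\le\sum_{i\in\tilde{S}_t}\left(\frac{\log t}{T_{t-1,i}}\wedge\frac{1}{28}\right)(p_i^{D,S_t})^{\lambda},
\]
and I abbreviate the summand as $a_i$. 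From $\cN_t^t$, for every triggered arm $i$ with $j=j_i^{S_t}$ and $\ell=N_{t-1,i,j}\ge1$ satisfying the triggering condition $\tfrac{6\ln t}{\frac13\ell 2^{-j}}\le1$, one has $T_{t-1,i}\ge\frac13\ell 2^{-j}$; together with $p_i^{D,S_t}\le 2^{-j+1}$ and the identity $2^{j}\cdot 2^{(-j+1)\lambda}=2\cdot 2^{(-j+1)(\lambda-1)}$ this yields the single key estimate $a_i\le 6\log t\cdot 2^{(-j+1)(\lambda-1)}/\ell$. When the triggering condition fails (in particular when $\ell=0$) I instead use the cap $a_i\le\frac{1}{28}(p_i^{D,S_t})^{\lambda}\le\frac{1}{28}2^{(-j+1)\lambda}$.

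Next I would make a monotone reduction: if some arm already has $\kappa_{i,j,T}(N_{t-1,i,j})\ge\Delta_{S_t}$ the claim is immediate, since all $\kappa$-terms are nonnegative; so I may assume $\kappa_{i,j,T}(N_{t-1,i,j})<\Delta_{S_t}$ for all $i$. I then split $\tilde{S}_t$ into the arms with $j\le j_i^{\max}$ and $\ell\le L_{i,j,T,2}$ (where $\kappa>0$) and the remaining ``tail'' arms (where $\kappa=0$), and I establish the per-arm amortized bound $4c_1^2B_v^2\,a_i/\Delta_{S_t}\le\kappa_{i,j,T}(\ell)$ for the first group. The verification is regime-by-regime and is exactly where the thresholds earn their keep: writing $C=24c_1^2B_v^22^{(-j+1)(\lambda-1)}\log T$, in the middle regime the constraint $\kappa_{i,j,T}(\ell)=2\sqrt{C/\ell}<\Delta_{S_t}$ forces $\ell>4C/\Delta_{S_t}^2$, comfortably past the threshold $\ell\ge C/(4\Delta_{S_t}^2)$ that makes $\tfrac{C}{\Delta_{S_t}\ell}\le 2\sqrt{C/\ell}$ hold; the $\ell=0$ piece and the large-$\ell$ piece $L_{i,j,T,1}<\ell\le L_{i,j,T,2}$ are checked analogously. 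In each case I pass from the instance gap $\Delta_{S_t}$ to the definitional gap $\Delta_i^{\min}$ appearing in $\kappa$ using $\Delta_i^{\min}\le\Delta_{S_t}$ (valid because $i$ is triggerable by $S_t$ while $\Delta_{S_t}>0$) and $\log t\le\log T$.

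The last ingredient is to control the tail arms, whose $a_i$-mass is charged to no $\kappa$. For $j>j_i^{\max}$ the choice $j_i^{\max}=\frac1\lambda\left(\lceil\log_2\frac{c_1^2B_v^2K}{(\Delta_i^{\min})^2}\rceil+1\right)$ gives $2^{-j_i^{\max}\lambda}\le(\Delta_i^{\min})^2/(2c_1^2B_v^2K)$, so each such $a_i$ is of order $(\Delta_i^{\min})^2/(c_1^2B_v^2K)$; for $\ell>L_{i,j,T,2}$ the key estimate produces the same order, which is precisely the role of the factor $K$ inside $L_{i,j,T,2}$ and of the extra $+1$ in $j_i^{\max}$. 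Summing over at most $K$ tail arms and again using $\Delta_i^{\min}\le\Delta_{S_t}$ bounds $c_1^2B_v^2\sum_{\mathrm{tail}}a_i$ by a controlled fraction of $\Delta_{S_t}^2$, and feeding the per-arm bounds and this tail bound back into the squared $E_{t,1}$ inequality collapses to $\Delta_{S_t}\le\sum_{i\in\tilde{S}_t}\kappa_{i,j_i^{S_t},T}(N_{t-1,i,j_i^{S_t}})$. I expect the hard part to be the constant bookkeeping in this final assembly: the thresholds $L_{i,j,T,1},L_{i,j,T,2},j_i^{\max}$ and the numerical factors $24$ and $48$ must be chosen in lockstep so that the slack in the per-arm bounds exactly absorbs the tail mass and the amortized shares sum to at least $\Delta_{S_t}$ with \emph{no} residual constant — making these line up (rather than merely up to a constant factor) is the delicate step the authors flag as their main contribution.
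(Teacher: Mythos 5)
Your proposal is correct and runs on the same machinery as the paper's proof: the same thresholds $L_{i,j,T,1}$, $L_{i,j,T,2}$, $j_i^{\max}$, the same regime-by-regime case analysis, the same counter bound extracted from $\cN_t^t$ combined with $p_i^{D,S_t}\le 2^{-j_i^{S_t}+1}$, and the same trivial escape (your ``monotone reduction'' is exactly the paper's sub-case in which an arm with small counter already gives $\kappa_{i,j,T}(\ell)\ge 2\Delta_{S_t}$, so the claim holds ``no matter what''). The one organizational difference is where the amortization lives. The paper amortizes per arm: it doubles the $E_{t,1}$ inequality and subtracts $\Delta_{S_t}$, distributed as $-\Delta_{S_t}/K$ into each summand, so that every summand — including the tail arms with $j>j_i^{\max}$ or $\ell>L_{i,j,T,2}$, which become nonpositive — is directly bounded by $\kappa_{i,j_i^{S_t},T}(N_{t-1,i,j_i^{S_t}})$, and no global assembly is needed. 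You instead keep the squared inequality and assemble globally from per-arm bounds plus a tail-mass bound. This does close, but not quite as literally written: a per-arm bound $4c_1^2B_v^2 a_i/\Delta_{S_t}\le\kappa_i$ together with a tail bound of $c\,\Delta_{S_t}^2$ only yields $(1-c)\Delta_{S_t}\le\sum_i\kappa_i$. What rescues the argument — and what you must make explicit — is that under your reduction the per-arm bounds hold with slack factor at least $2$: factor $4$ in the middle regime (since $2\sqrt{C/\ell}<\Delta_{S_t}$ forces $\sqrt{C/\ell}<\Delta_{S_t}/2$, so $C/(\ell\Delta_{S_t})\le\tfrac14\kappa_i$), factor $2$ in the large-$\ell$ regime via $\Delta_i^{\min}\le\Delta_{S_t}$, and factor $7$ at $\ell=0$; meanwhile each tail arm contributes at most $\Delta_{S_t}^2/(2K)$ (the $\ell>L_{i,j,T,2}$ case; the $j>j_i^{\max}$ case gives $\Delta_{S_t}^2/(14K)$), hence the total tail mass is at most $\Delta_{S_t}^2/2$. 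Then $\Delta_{S_t}^2\le\tfrac12\Delta_{S_t}\sum_i\kappa_i+\tfrac12\Delta_{S_t}^2$ gives exactly the claimed inequality with no residual constant. This factor-of-two ledger is precisely what the paper's doubling step supplies automatically; in your version it must be tracked by hand, which is the only place your write-up is incomplete.
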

\begin{proof}
By event $E_{t,1}$, which is defined in \cref{apdx_eq:critical_error_term}, we apply the reverse amortization (\cref{eq:ra_inf_prob_e1})
\begin{align}
     \Delta_{S_t} &\le \sum_{i \in \tilde{S}_t} \frac{ 4c_1^2 B_v^2 (p_{i}^{D,S_t})^\lambda \min\{\frac{\log t}{T_{i,t-1}}, \frac{1}{28}\}}{\Delta_{S_t}} \label{eq:ra_def_e1}\\
    &\le -\Delta_{S_t} +  2\sum_{i \in \tilde{S}_t} \frac{ 4c_1^2 B_v^2 (p_{i}^{D,S_t})^\lambda \min\{\frac{\log t}{T_{i,t-1}}, \frac{1}{28}\}}{\Delta_{S_t}} \label{eq:ra_double}\\
     &\le \sum_{i \in \tilde{S}_t}\left( \frac{8c_1^2 B_v^2 (p_{i}^{D,S_t})^\lambda \min\{\frac{\log t}{T_{i,t-1}}, \frac{1}{28}\}}{\Delta_{S_t}} -\frac{\Delta_{S_t}}{|\tilde{S}_t|}\right)\label{eq:ra_def_reverse}\\
    &\le  \sum_{i \in \tilde{S}_t}\left(\frac{8c_1^2 B_v^2 (p_{i}^{D,S_t})^\lambda\min\{\frac{\log t}{\frac{1}{3}N_{t-1, i, j_i^{S_t}}2^{-j_i^{S_t}}}, \frac{1}{28}\}}{\Delta_{S_t}} -\frac{\Delta_{S_t}}{|\tilde{S}_t|}\right) \label{eq:ra_inf_def_e1}\\
    &\le  \sum_{i \in \tilde{S}_t}\underbrace{\left(\frac{8c_1^2 B_v^2 (2^{-j_i^{S_t}+1})^\lambda\min\{\frac{\log t}{\frac{1}{3}N_{t-1, i, j_i^{S_t}}2^{-j_i^{S_t}}}, \frac{1}{28}\}}{\Delta_{S_t}} -\frac{\Delta_{S_t}}{K}\right)}_{(\ref{eq:ra_inf_prob_e1}, i)} \label{eq:ra_inf_prob_e1},
\end{align}
where \cref{eq:ra_def_e1} is by the definition of $E_{t,1}$ which says $\Delta^2_{S_t} \le \sum_{i \in \tilde{S}_t} 4c_1^2 B_v^2 (p_{i}^{D,S_t})^\lambda \min\{\frac{\log t}{T_{i,t-1}}, \frac{1}{28}\}$ and by dividing both sides by $\Delta_{S_t}>0$,
\cref{eq:ra_double} is because we double the LHS and RHS of \cref{eq:ra_def_e1} at the same time and then put one into the RHS,
\cref{eq:ra_def_reverse} is by putting $-\Delta_{S_t}$ inside the summation,
\cref{eq:ra_inf_def_e1} is due to the same reason of \cref{eq:inf_def_e1} under event $\cN_{t}^t$,
\cref{eq:ra_inf_prob_e1} is due to $p_i^{D,S_t}\le 2^{-j_{i}^{S_t}+1}$ given by the definition of $j_i^{S_t}$ and $|\tilde{S}_t|\le K$.

Note that the \cref{eq:ra_double} is called the reverse amortization trick, since we allocate two times of the total regret and then minus the $\Delta_{S_t}$ term to amortize the regret when $\ell > L_{i,j,T,2}$ or $ j > j_i^{\max}$ in \cref{apdx_eq:kappa_e1}, which saves the analysis for arms that are sufficiently triggered. Now we bound (\ref{eq:ra_inf_prob_e1}, i) under different cases.

\textbf{When $j >j_i^{\max}$}, 

we have $(\ref{eq:ra_inf_prob_e1}, i)\le \frac{8c_1^2 B_v^2 (2^{-j_i^{S_t}+1})^\lambda}{\Delta_{S_t}}\cdot \frac{1}{28} -\frac{\Delta_{S_t}}{K} \le \frac{8c_1^2 B_v^2  }{\Delta_{S_t}} \frac{(\Delta_i^{\min})^2}{c_1^2 B_v^2 K}\cdot \frac{1}{28} -\frac{\Delta_{S_t}}{K}\le \frac{\Delta_{i}^{\min}}{K}\cdot \frac{8}{28}-\frac{\Delta_{S_t}}{K}\le 0 =\kappa_{i,j_i^{S_t}, T} (N_{t-1, i,j_i^{S_t}})$.

\textbf{When $N_{t-1, i,j_i^{S_t}} > L_{i,j_i^{S_t}, T, 2}$,}

we have $(\ref{eq:ra_inf_prob_e1}, i)\le  \frac{8c_1^2 B_v^2 (2^{-j_i^{S_t}+1})^\lambda\log t}{\frac{1}{3}N_{t-1, i, j_i^{S_t}}\cdot 2^{-j_i^{S_t}}\Delta_{S_t}} -\frac{\Delta_{S_t}}{K} \le \frac{48c_1^2 B_v^2 2^{(-j_i^{S_t}+1)({\lambda-1})}\log T}{\Delta_{S_t}} \frac{1}{N_{t-1, i, j_i^{S_t}}} -\frac{\Delta_{S_t}}{K} <\frac{(\Delta_i^{\min})^2}{K\Delta_{S_t}}-\frac{\Delta_{S_t}}{K}\le  0 = \kappa_{i,j_i^{S_t}, T} (N_{t-1, i,j_i^{S_t}})$.

\textbf{When $ L_{i,j_i^{S_t}, T, 1} < N_{t-1, i,j_i^{S_t}} \le L_{i,j_i^{S_t}, T, 2}$ and $j \le j_i^{\max}$,}

We have $(\ref{eq:ra_inf_prob_e1}, i)\le  \frac{8c_1^2 B_v^2 (2^{-j_i^{S_t}+1})^\lambda\log t}{\frac{1}{3}N_{t-1, i, j_i^{S_t}}2^{-j_i^{S_t}}{\Delta_{S_t}}} -\frac{\Delta_{S_t}}{K} \le \frac{48c_1^2 B_v^2 2^{(-j_i^{S_t}+1)({\lambda-1})}\log T}{\Delta_{S_t}} \frac{1}{N_{t-1, i, j_i^{S_t}}} -\frac{\Delta_{S_t}}{K} < \frac{48c_1^2 B_v^2 2^{(-j_i^{S_t}+1)({\lambda-1})}\log T}{\Delta_{S_t}} \frac{1}{N_{t-1, i, j_i^{S_t}}} \le \frac{48c_1^2 B_v^2 2^{(-j_i^{S_t}+1)({\lambda-1})}\log T}{\Delta_i^{\min}}  \frac{1}{N_{t-1, i, j_i^{S_t}}} =   \kappa_{i,j_i^{S_t}, T} (N_{t-1, i,j_i^{S_t}})$.

\textbf{When $N_{t-1, i,j_i^{S_t}} \le L_{i,j_i^{S_t}, T, 1}$ and $j \le j_i^{\max}$,}

We further consider two different cases $N_{t-1, i,j_i^{S_t}} \le \frac{24c_1^2 B_v^2 2^{(-j_i^{S_t}+1)({\lambda-1})}\log T}{(\Delta_{S_t})^2}$ or $\frac{24c_1^2 B_v^2 2^{(-j_i^{S_t}+1)({\lambda-1})}\log T}{(\Delta_{S_t})^2} < N_{t-1, i,j_i^{S_t}} \le L_{i,j_i^{S_t},T,1}= \frac{24c_1^2 B_v^2 2^{(-j_i^{S_t}+1)({\lambda-1})}\log T}{(\Delta_{i}^{\min})^2}$.

For the former case, if there exists $i\in \tilde{S}_t$ so that 
$N_{t-1, i,j_i^{S_t}} \le \frac{24c_1^2 B_v^2 2^{(-j_i^{S_t}+1)({\lambda-1})}\log T}{(\Delta_{S_t})^2}$, then we know  $\sum_{q \in \tilde{S}_t} \kappa_{i,j_q^{S_t}, T} (N_{t-1, q,j_q^{S_t}}) \ge \kappa_{i,j_i^{S_t}, T} (N_{t-1, i,j_i^{S_t}})=2\sqrt{\frac{24c_1^2 B_v^2 2^{(-j_i^{S_t}+1)({\lambda-1})}\log T}{N_{t-1, i,j_i^{S_t}}}}  \ge 2\Delta_{S_t} > \Delta_{S_t}$, which makes \cref{apdx_eq:kappa_e1} holds no matter what. This means we do not need to consider this case for good.


For the later case, when $\frac{24c_1^2 B_v^2 2^{(-j_i^{S_t}+1)({\lambda-1})}\log T}{(\Delta_{S_t})^2} < N_{t-1, i,j_i^{S_t}}$, we know that $(\ref{eq:ra_inf_prob_e1}, i)\le\frac{48c_1^2 B_v^2 2^{(-j_i^{S_t}+1)({\lambda-1})}\log T}{\Delta_{S_t}} \frac{1}{N_{t-1, i, j_i^{S_t}}} =2\sqrt{\frac{24c_1^2 B_v^2 2^{(-j_i^{S_t}+1)({\lambda-1})}\log T}{(\Delta_{S_t})^2} \frac{1}{N_{t-1, i, j_i^{S_t}}}}\sqrt{\frac{24c_1^2 B_v^2 2^{(-j_i^{S_t}+1)({\lambda-1})}\log T}{N_{t-1, i, j_i^{S_t}}}} \le 2\sqrt{\frac{24c_1^2 B_v^2 2^{(-j_i^{S_t}+1)({\lambda-1})}\log T}{N_{t-1, i, j_i^{S_t}}}}=\kappa_{i,j_i^{S_t}, T} (N_{t-1, i,j_i^{S_t}})$. 

\textbf{When  $\ell=0$ and $j \le j_i^{\max}$,}

We have $(\ref{eq:ra_inf_prob_e1}, i)\le \frac{8c_1^2 B_v^2 (2^{-j_i^{S_t}+1})^\lambda}{\Delta_{S_t}}\cdot \frac{1}{28} -\frac{\Delta_{S_t}}{K} \le \frac{c_1^2 B_v^2 (2^{-j_i^{S_t}+1})^\lambda}{\Delta_{S_t}}\le \frac{c_1^2 B_v^2 (2^{-j_i^{S_t}+1})^\lambda}{\Delta_i^{\min}}=\kappa_{i,j_i^{S_t}, T} (N_{t-1, i,j_i^{S_t}})$.

Combining all above cases, we have  $\Delta_{S_t} \le \sum_{i \in \tilde{S}_t} \kappa_{i,j_i^{S_t}, T} (N_{t-1, i,j_i^{S_t}})$.
\end{proof}

Since $N_{t,i, j_{i}^{S_t}}$ is increased if and only if $i \in \tilde{S_t}$ and consider all possible $N_{t,i,j_{i}^{S}}$ where $\kappa_{i, j_{i}^{S}, T}(S, N_{t-1,i,j^{S}})>0$, we have
\begin{align}
    &Reg(T, E_{t,1} \bigcap N_{t}^t) \notag\\
    &\le \sum_{t\in [T]}\sum_{i \in \tilde{S}_t} \kappa_{i, j_{i}^{S_t}, T}(N_{t-1,i,j^{S_t}})\label{apdx_eq:reverse_regret_e1_1}\\
    &\le \sum_{i\in [m]}\sum_{j=1}^{j_i^{\max}}\frac{c_1^2 B_v^2 (2^{-j+1})^\lambda}{\Delta_i^{\min}}+\sum_{i\in [m]}\sum_{j=1}^{j_i^{\max}} \sum_{\ell=1}^{L_{i,j,T,1}}2\sqrt{\frac{24c_1^2 B_v^2 2^{(-j+1)({\lambda-1})}\log T}{\ell }}\notag\\
    & +  \sum_{i\in [m]}\sum_{j=1}^{j_i^{\max}} \sum_{L_{i,j,T,1}+1}^{L_{i,j,T,2}}\frac{48c_1^2 B_v^2 2^{(-j+1)({\lambda-1})}\log T}{\Delta_{i}^{\min} }\frac{1}{\ell}\label{apdx_eq:replace_delta_s_e1}\\
    &\le \sum_{i\in [m]}\sum_{j=1}^{j_i^{\max}}\frac{c_1^2 B_v^2 (2^{-j+1})^\lambda}{\Delta_i^{\min}}+ \sum_{i\in [m]}\sum_{j=1}^{j_i^{\max}} \frac{96 c_1^2 B_v^2 2^{(-j+1)({\lambda-1})}\log T}{\Delta_{i}^{\min}} \notag\\
    &+ \sum_{i\in [m]}\sum_{j=1}^{j_i^{\max}} \frac{48c_1^2 B_v^2 2^{(-j+1)({\lambda-1})}\log T}{\Delta_{i}^{\min} } (1+ \log K)\\
    &=\sum_{i\in [m]}\sum_{j=1}^{j_i^{\max}}\frac{c_1^2 B_v^2 (2^{-j+1})^\lambda}{\Delta_i^{\min}}+\sum_{i\in [m]}\sum_{j=1}^{j_i^{\max}}\frac{48c_1^2 B_v^2 2^{(-j+1)({\lambda-1})}\log T}{\Delta_{i}^{\min} } (3+ \log K)
\end{align}
When $\lambda > 1$, we have $Reg(T, E_{t,1} \bigcap N_{t}^t) \le\sum_{i\in [m]}\sum_{j=1}^{\infty}\frac{c_1^2 B_v^2 2^{-j+1}}{\Delta_i^{\min}}+ \sum_{i\in [m]}\sum_{j=1}^{\infty}\frac{48c_1^2 B_v^2 2^{(-j+1)({\lambda-1})}\log T}{1-2^{(\lambda-1)}\Delta_{i}^{\min} } (3+ \log K) =\sum_{i\in [m]}\frac{2c_1^2 B_v^2 }{\Delta_i^{\min}}+  \sum_{i \in [m]} \frac{48c_1^2 B_v^2 \log T}{\Delta_{i}^{\min} } (3+ \log K) $.

When $\lambda = 1$, we have $Reg(T, E_{t,1} \bigcap N_{t}^t) \le \sum_{i\in [m]}\sum_{j=1}^{\infty}\frac{c_1^2 B_v^2 2^{-j+1}}{\Delta_i^{\min}} + \sum_{i\in [m]}j_i^{\max}\frac{48c_1^2 B_v^2 \log T}{\Delta_{i}^{\min} } (3+ \log K) = \sum_{i\in [m]}\frac{2c_1^2 B_v^2 }{\Delta_i^{\min}}+ \sum_{i \in [m]} \log\frac{c_1^2B_v^2 K}{(\Delta_{i}^{\min})^2}\frac{48c_1^2 B_v^2 \log T}{\Delta_{i}^{\min} } (3+ \log K)$.

Similar to \cref{apdx_eq:additional_Ntt}, with additional $Reg(T, \neg\cN_t^t)$, we have the following inequality holds:

When $\lambda > 1$, we have $Reg(T, E_{t,1}) \le  \sum_{i\in [m]}\frac{2c_1^2 B_v^2 }{\Delta_i^{\min}}+ \sum_{i \in [m]} \frac{48c_1^2 B_v^2 \log T}{\Delta_{i}^{\min} } (3+ \log K) + \frac{m\pi^2}{6}\log_2\left(\frac{c_1^2B_v^2 K}{\lambda(\Delta_{\min})^2}\right) \Delta_{\max}$.

When $\lambda = 1$, we have $Reg(T, E_{t,1}) \le    \sum_{i\in [m]}\frac{2c_1^2 B_v^2 }{\Delta_i^{\min}}+\sum_{i \in [m]} \log\frac{c_1^2B_v^2 K}{(\Delta_{i}^{\min})^2}\frac{48c_1^2 B_v^2 \log T}{\Delta_{i}^{\min} } (3+ \log K) + \frac{m\pi^2}{6}\log_2\left(\frac{c_1^2B_v^2 K}{(\Delta_{\min})^2}\right) \Delta_{\max}$.

\subsubsection{Upper bound for $Reg(T, E_{t,2})$}\label{apdx_sec:improve_e2}
As usual, we first break $Reg(T,E_{t,2})$ into two parts and bound them separately: $Reg(T, E_{t,2}\bigcap \cN_{t}^t)$ and $Reg(T, \neg \cN_t^t)$.

For $Reg(T, E_{t,2}\bigcap \cN_{t}^t)$, under the event $\cN_{t}^t$, let $c_2=28$ be a constant and $K= \max_{S \in \cS} |\tilde{S}|$.
We set $j_{i}^{\max}=\lceil \log_2\frac{4B_1 c_2 K}{\Delta_{i}^{\min}}\rceil+1$.
We first define a regret allocation function 
\begin{equation}
\kappa_{i,j,T}(\ell) = \begin{cases}
\Delta_{i}^{\max}, &\text{if $0\le\ell\le L_{i,j,T,1} $ and $j \le j_i^{\max}$}\\
\frac{24c_2 B_1 \log T}{\ell}, &\text{if $L_{i,j,T,1} < \ell \le L_{i,j,T,2}$ and $j \le j_i^{\max}$}\\
0, &\text{if $\ell > L_{i,j,T,2} + 1$ or $ j > j_i^{\max}$,}
\end{cases}
\end{equation}
where $L_{i,j,T,1}=\frac{24c_2 B_1\log T}{\Delta_{i}^{\max}}$, $L_{i,j,T,2}=\frac{24c_2 B_1 K\log T}{\Delta_i^{\min}}$.
\begin{lemma}
For any time $t \in [T]$, if $N_{t}^t$ and $E_{t,2}$ hold, we have 
\begin{equation}\label{apdx_eq:kappa_e2}
    \Delta_{S_t} \le \sum_{i \in \tilde{S}_t} \kappa_{i,j_i^{S_t}, T} (N_{t-1, i,j_i^{S_t}}),
\end{equation}
where $j_{i}^{S_t}$ is the index of the triggering group $S_{i,j}$ such that $2^{-j_{i}^{S_t}}< p_{i}^{D,S_t} \le 2^{-j_{i}^{S_t}+1}$.
\end{lemma}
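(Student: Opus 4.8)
The plan is to follow the same template as the just-completed bound for $Reg(T,E_{t,1})$ in \cref{apdx_sec:improve_e1}, adapting every step to the $B_1$ (1-norm) error term, which is simpler because it carries no square root. First I would dispose of the trivial case $\Delta_{S_t}=0$ (the claimed inequality holds since the right-hand side is nonnegative), so I assume $\Delta_{S_t}>0$; note this forces $\Delta_i^{\min}\le\Delta_{S_t}\le\Delta_i^{\max}$ for every $i\in\tilde{S}_t$. Starting from the event $E_{t,2}$, which reads $\Delta_{S_t}\le 2c_2 B_1\sum_{i\in\tilde{S}_t}(\frac{\log t}{T_{t-1,i}}\wedge\frac{1}{28})p_i^{D,S_t}$, I would apply the reverse-amortization trick: write $\Delta_{S_t}=2\Delta_{S_t}-\Delta_{S_t}$, bound $2\Delta_{S_t}$ by twice the right-hand side, and absorb $-\Delta_{S_t}$ into the sum using $|\tilde{S}_t|\le K$, obtaining $\Delta_{S_t}\le\sum_{i\in\tilde{S}_t}\big(4c_2 B_1(\frac{\log t}{T_{t-1,i}}\wedge\frac{1}{28})p_i^{D,S_t}-\frac{\Delta_{S_t}}{K}\big)$.

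Next I would invoke the nice-triggering event $\cN_t^t$ exactly as in \cref{eq:ra_inf_def_e1} to replace $\frac{\log t}{T_{t-1,i}}\wedge\frac{1}{28}$ by $\frac{\log t}{\frac{1}{3}N_{t-1,i,j_i^{S_t}}2^{-j_i^{S_t}}}\wedge\frac{1}{28}$, and bound $p_i^{D,S_t}\le 2^{-j_i^{S_t}+1}$. The key simplification is that the factor $2^{-j_i^{S_t}}$ in the denominator cancels against $2^{-j_i^{S_t}+1}$, so writing $N=N_{t-1,i,j_i^{S_t}}$ and $j=j_i^{S_t}$ the $i$-th summand is at most $\min\{\frac{24c_2 B_1\log t}{N},\,4B_1 2^{-j+1}\}-\frac{\Delta_{S_t}}{K}$ (using $c_2=28$ so that $4c_2 B_1\cdot\frac{1}{28}=4B_1$). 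I would call this quantity $(i)$ and show $(i)\le\kappa_{i,j,T}(N)$ region by region.

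The case analysis then matches the breakpoints in the definition of $\kappa$. For $j>j_i^{\max}$ I use the cap branch $4B_1 2^{-j+1}$: since $2^{j_i^{\max}}\ge 8B_1 c_2 K/\Delta_i^{\min}$, this is at most $\Delta_i^{\min}/(2c_2 K)$, which is dominated by $\Delta_{S_t}/K\ge\Delta_i^{\min}/K$, so $(i)\le 0=\kappa$. For $N>L_{i,j,T,2}$ with $j\le j_i^{\max}$ I use $\frac{24c_2 B_1\log t}{N}\le\frac{24c_2 B_1\log T}{L_{i,j,T,2}}=\Delta_i^{\min}/K\le\Delta_{S_t}/K$, again giving $(i)\le 0$. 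For $L_{i,j,T,1}<N\le L_{i,j,T,2}$ with $j\le j_i^{\max}$, dropping the negative term gives $(i)\le\frac{24c_2 B_1\log T}{N}=\kappa$. The one regime that cannot be handled by a termwise inequality is $0\le N\le L_{i,j,T,1}$ with $j\le j_i^{\max}$, where $\kappa=\Delta_i^{\max}$ but $(i)$ can exceed $\Delta_i^{\max}$ for small $N$; this is the main obstacle. I would resolve it exactly as in the $E_{t,1}$ proof, by a global dominance argument rather than a local bound: if some $i^*\in\tilde{S}_t$ falls into this regime, then $\kappa_{i^*,j_{i^*}^{S_t},T}(N_{t-1,i^*,j_{i^*}^{S_t}})=\Delta_{i^*}^{\max}\ge\Delta_{S_t}$, so the right-hand side of \cref{apdx_eq:kappa_e2} already dominates $\Delta_{S_t}$ and the claim holds regardless of the other summands. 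Otherwise no arm lies in that regime, every summand satisfies $(i)\le\kappa_i$ by the three cases above, and summing the reverse-amortization bound yields $\Delta_{S_t}\le\sum_{i\in\tilde{S}_t}(i)\le\sum_{i\in\tilde{S}_t}\kappa_i$, completing the proof.
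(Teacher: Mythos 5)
Your proof is correct and follows essentially the same route as the paper's: the same reverse-amortization step, the same use of $\cN_t^t$ and $p_i^{D,S_t}\le 2^{-j_i^{S_t}+1}$, the same threshold-by-threshold case analysis against $\kappa$, and the same global dominance argument (via $\Delta_{i^*}^{\max}\ge\Delta_{S_t}$) for the regime $N\le L_{i,j,T,1}$ where a termwise bound fails. Your explicit handling of $\Delta_{S_t}=0$ and the cleaner statement of the $\min$ bound are minor presentational improvements over the paper, not a different argument.
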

\begin{proof}
By event $E_{t,2}$, we have 
\begin{align}
     \Delta_{S_t} &\le \sum_{i \in \tilde{S}_t} 2c_2 B_1 p_{i}^{D,S_t} \min\{\frac{\log t}{T_{i,t-1}}, \frac{1}{28}\} \label{eq:ra_def_e2}\\
    &\le -\Delta_{S_t} + 2\sum_{i \in \tilde{S}_t} 2c_2 B_1 p_{i}^{D,S_t} \min\{\frac{\log t}{T_{i,t-1}}, \frac{1}{28}\} \label{eq:ra_double_e2}\\
     &\le \sum_{i \in \tilde{S}_t}\left( 4c_2 B_1 p_{i}^{D,S_t} \min\{\frac{\log t}{T_{i,t-1}}, \frac{1}{28}\} -\frac{\Delta_{S_t}}{|\tilde{S}_t|}\right)\label{eq:ra_def_reverse_e2}\\
    &\le  \sum_{i \in \tilde{S}_t}\left(4c_2 B_1 p_{i}^{D,S_t}\min\{\frac{\log t}{\frac{1}{3}N_{t-1, i, j_i^{S_t}}2^{-j_i^{S_t}}}, \frac{1}{28}\} -\frac{\Delta_{S_t}}{|\tilde{S}_t|}\right) \label{eq:ra_inf_def_e2}\\
    &\le  \sum_{i \in \tilde{S}_t}\underbrace{\left(4c_2 B_1 2^{-j_i^{S_t}+1}\min\{\frac{\log t}{\frac{1}{3}N_{t-1, i, j_i^{S_t}}2^{-j_i^{S_t}}}, \frac{1}{28}\} -\frac{\Delta_{S_t}}{K}\right)}_{(\ref{eq:ra_inf_prob_e2}, i)} \label{eq:ra_inf_prob_e2},
\end{align}
where \cref{eq:ra_def_e2} is by the definition of $E_{t,1}$ which says $\Delta_{S_t} \le \sum_{i \in \tilde{S}_t} 2c_2 B_1 p_{i}^{D,S_t} \min\{\frac{\log t}{T_{i,t-1}}, \frac{1}{28}\}$ and by dividing both sides by $\Delta_{S_t}>0$,
\cref{eq:ra_double_e2} is because we double the LHS and RHS of \cref{eq:ra_def_e2} at the same time and then put one into the RHS,
\cref{eq:ra_def_reverse_e2} is by putting $-\Delta_{S_t}$ inside the summation,
\cref{eq:ra_inf_def_e2} is due to the same reason of \cref{eq:inf_def_e1} under event $\cN_{t}^t$,
\cref{eq:ra_inf_prob_e2} is due to $p_i^{D,S_t}\le 2^{-j_{i}^{S_t}+1}$ given by the definition of $j_i^{S_t}$ and $|\tilde{S}|\le K$.

Similar to \cref{eq:ra_inf_prob_e2}, \cref{eq:ra_double_e2} is called the reverse amortization. Now we bound (\ref{eq:ra_inf_prob_e2}, i) under different cases.

\textbf{When $j >j_i^{\max}$}, 

we have $(\ref{eq:ra_inf_prob_e2}, i)\le 4c_2 B_1 2^{-j_i^{S_t}+1} -\frac{\Delta_{S_t}}{K} \le 4c_2 B_1 \frac{\Delta_i^{\min}}{c_2 B_1 K} -\frac{\Delta_{S_t}}{K}\le \frac{\Delta_{i}^{\min}}{K}\frac{4}{28}-\frac{\Delta_{S_t}}{K}\le 0 =\kappa_{i,j_i^{S_t}, T} (N_{t-1, i,j_i^{S_t}})$.

\textbf{When $N_{t-1, i,j_i^{S_t}} > L_{i,j_i^{S_t}, T, 2}$,}

we have $(\ref{eq:ra_inf_prob_e2}, i)\le 4c_2 B_1 2^{-j_i^{S_t}+1} \frac{\log t}{\frac{1}{3}N_{t-1, i, j_i^{S_t}}2^{-j_i^{S_t}}} -\frac{\Delta_{S_t}}{K} \le \frac{24c_2 B_1\log T}{N_{t-1, i, 
j_i^{S_t}}} -\frac{\Delta_{S_t}}{K} 
<\frac{\Delta_i^{\min}}{K}-\frac{\Delta_{S_t}}{K}\le  0 = \kappa_{i,j_i^{S_t}, T} (N_{t-1, i,j_i^{S_t}})$.

\textbf{When $N_{t-1, i,j_i^{S_t}} \le L_{i,j_i^{S_t}, T, 2}$ and $j \le j_i^{\max}$,}

We have $(\ref{eq:ra_inf_prob_e2}, i)\le 4c_2 B_1 2^{-j_i^{S_t}+1} \frac{\log t}{\frac{1}{3}N_{t-1, i, j_i^{S_t}}2^{-j_i^{S_t}}} -\frac{\Delta_{S_t}}{K} = \frac{24c_2 B_1\log T}{N_{t-1, i, j_i^{S_t}}} -\frac{\Delta_{S_t}}{K} <\frac{24c_2 B_1\log T}{N_{t-1, i, j_i^{S_t}}} = \kappa_{i,j_i^{S_t}, T} (N_{t-1, i,j_i^{S_t}})$.

\textbf{When $N_{t-1, i,j_i^{S_t}} \le L_{i,j_i^{S_t}, T, 1}$ and $j \le j_i^{\max}$,}

If there exists $i \in \tilde{S}_t$ so that $N_{t-1, i,j_i^{S_t}}\le  L_{i,j_i^{S_t}, T, 1}=$, then we know  $\sum_{q \in \tilde{S}_t} \kappa_{i,j_q^{S_t}, T} (N_{t-1, q,j_q^{S_t}}) \ge \kappa_{i,j_i^{S_t}, T} (N_{t-1, i,j_i^{S_t}})=\Delta_{i}^{\max} \ge \Delta_{S_t}$, which makes \cref{apdx_eq:kappa_e2} holds no matter what. This means we do not need to consider this case for good.

Combining all above cases, we have  $\Delta_{S_t} \le \sum_{i \in \tilde{S}_t} \kappa_{i,j_i^{S_t}, T} (N_{t-1, i,j_i^{S_t}})$.
\end{proof}

Since $N_{t,i, j_{i}^{S_t}}$ is increased if and only if $i \in \tilde{S_t}$ and consider all possible $i, j_i^{S}$ and $N_{t,i,j_{i}^{S}}$ where $\kappa_{i, j_{i}^{S}, T}( N_{t-1,i,j^{S}})>0$, we have
\begin{align*}
    &Reg(T, E_{t,2} \bigcap N_{t}^t) \\
    &\le \sum_{t\in [T]}\sum_{i \in \tilde{S}_t} \kappa_{i, j_{i}^{S_t}, T}(N_{t-1,i,j^{S_t}})\\
    &\le \sum_{i\in [m]}\sum_{j=0}^{j_i^{\max}} \sum_{\ell=1}^{L_{i,j,T,1}}\Delta_i^{\max} +  \sum_{i\in [m]}\sum_{j=1}^{j_i^{\max}} \sum_{L_{i,j,T,1}+1}^{L_{i,j,T,2}}\frac{24c_2 B_1 \log T}{\ell}\\
    &\le \sum_{i\in [m]}\sum_{j=1}^{j_i^{\max}} 24c_2B_1\log T + \sum_{i\in [m]}\sum_{j=1}^{j_i^{\max}} 24c_2 B_1 \log (\frac{K\Delta_i^{\max}}{\Delta_{i}^{\min}}
    ) \log T\\
    &=\sum_{i\in [m]}\sum_{j=1}^{j_i^{\max}} 24c_2 B_1 \left(1+\log (\frac{K\Delta_i^{\max}}{\Delta_{i}^{\min}})\right)\log T\\
    &\le \sum_{i\in [m]} 24 c_2 B_1 \left( \log_2\frac{B_1 c_2 K}{\Delta_{i}^{\min}}\right) \left(1+\log (\frac{K\Delta_i^{\max}}{\Delta_{i}^{\min}})\right)\log T
\end{align*}

Similar to \cref{apdx_eq:additional_Ntt}, with additional $Reg(T, \neg\cN_t^t)$, we have

We have $Reg(T, E_{t,2}) \le \sum_{i\in [m]} 24 c_2 B_1 \left( \log_2\frac{B_1 c_2 K}{\Delta_{i}^{\min}}\right) \left(1+\log (\frac{K\Delta_i^{\max}}{\Delta_{i}^{\min}})\right)\log T + \frac{m\pi^2}{6}\log_2\frac{4B_1 c_2 K}{\Delta_{i}^{\min}} \Delta_{\max}$.

\subsection{The Proof Following~\cite{merlis2019batch} Using Infinitely Many Events With an Additional Factor of $O(\log K)$}\label{apdx_sec:inf_many_events}
Now we can separately bound these two event-filtered regrets. 
Recall that $\tilde{S}_t=\{i \in [m]: p_{i}^{D, S_t} >0\}$ is the set of arms that could be triggered in round $t$. 
Let $K=\max_{t \in [T]}|\tilde{S}_t|$ be the maximum number of base arms that can be triggered in any rounds.
In round $t$, given base arm $i$ and action $S_t$, we denote $j_i^{S_t}$ to be the corresponding index of the triggering group $\cS_{i,j}$ so that $2^{-j_{i}^{S_t}}< p_{i}^{D,S_t} \le 2^{-j_{i}^{S_t}+1}$.
Our strategy is to find (perhaps infinitely many) events that must happen when $E_{t,1}$ (or $E_{t,2}$) happens. 
Then we can show that the number of times these events can happen are bounded or otherwise $E_{t,1}$ (or $E_{t,2}$) will not hold anymore.

\subsubsection{Upper bound for $Reg(T, E_{t,1})$}\label{sec:upper_bound_Et1_ifm}
To upper bound $Reg(T, E_{t,1})$, we bound it by $Reg(T, E_{t,1}) \le Reg(T, E_{t,1} \bigcap \cN_t^{t})+ Reg(T, \neg \cN_t^{t})$. In the following, we will consider to first bound $Reg(T, E_{t,1}\bigcap N_t^t)$ and then $Reg(T, \neg \cN_t^t)$.

Recall that $e_{t,1}(S_t)=4\sqrt{3}B_v\sqrt{\sum_{i\in \tilde{S}_t}(\frac{\log t}{T_{t-1,i}}\wedge \frac{1}{28})(p_{i}^{D,S_t})^\lambda}$.
Let $c_1=4\sqrt{3}$ be a constant and $O_{t}=\{i \in \tilde{S}_t: j_i^{S_t} \le j_{i}^{\max}\}$ be the set of base arms whose triggering probabilities are not too small, where the threshold $j_i^{\max}=\frac{1}{\lambda}(\lceil \log_2\frac{c_1^2B_v^2 K}{(\Delta_{i}^{\min})^2}\rceil+1)$.
Let $\alpha_{1} > \alpha_{2} > ... > \alpha_{k} > ... > \alpha_{\infty}$ and $1=\beta_{0} > \beta_{1} > ... > \beta_{k} > ... > \beta_{\infty}$ be two infinite sequences of positive numbers that are decreasing and converge to $0$, which will be used later to define specific set of base arms $(A_{t,k})_{k=1}^{\infty}$ and events $(\G_{t,k})_{k=1}^{\infty}$.

For positive integers $k$ and $t$, we define $A_{t,k}=\{i \in \tilde{S}_t: N_{t-1, i,j_i^{S_t}} \le \alpha_{k} \frac{g(K, \Delta_{S_t})f(t)}{\Delta_{S_t}^2}, j_i^{S_t} \le j_{i}^{\max}\}=\{i \in \tilde{S}_t \cap O_{S_{t}}: N_{t-1, i,j_i^{S_t}} \le \alpha_{k} \frac{g(K, \Delta_{S_t})f(t)}{\Delta_{S_t}^2}  \}$, which is the set of arms in $\tilde{S}_t$ that are counted less that a threshold and whose triggering probabilities are not too small, where $g(K, \Delta_{S_t})$ and $f(t)$ are going to be tuned for later use.
Moreover, we define the complementary set $\bar{A}_{t,k}=\{i \in \tilde{S}_t\cap O_{t}: N_{t-1, i,j_i^{S_t}} > \alpha_{k} \frac{g(K, \Delta_{S_t})f(t)}{\Delta_{S_t}^2}\}$.

Now we are ready to define the events $\G_{t,k}=\{|A_{t,k}|\ge \beta_{k} K; \forall h < k, |A_{t,h}| < \beta_{h} K\}$. 
Note that $\G_{t,k}$ is true when at least $\beta_{k} K$ arms triggered are in the set $A_{t,k}$ but less than $\beta_{h} K$ arms triggered are in the set $A_{t,h}$ for $h<k$. 
Let $\G_t = \bigcup_{k=1}^{\infty} \G_{t,k}$ and by definition its complementary $\overline{\G}_t=\{|A_{t,k}|<\beta_{k} K, \forall k \ge 1\}$. 
We first introduce a lemma saying that if there exists $k_0>0$ such that $\beta_{k_0}$ is smaller than $1/K$, we can safely use finite many events to conclude infinitely many events.
\begin{lemma}\label{lem:finite_A}
If there exists $k_{0}$ such that $\beta_{k_{0}} \le 1/K$, then $\G_t=\bigcup_{k=1}^{k_0} \G_{t,k}$ and $\overline{\G}_t=\{|A_{t,k}|<\beta_{k} K, \forall 1 \le k \le k_0\}$.
\end{lemma}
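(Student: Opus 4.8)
The plan is to exploit two elementary structural facts about the sets $A_{t,k}$ and then argue that the hypothesis $\beta_{k_0}\le 1/K$ collapses everything beyond index $k_0$. First I would record that $(A_{t,k})_{k\ge 1}$ is nested and decreasing: since the membership threshold $\alpha_k\,g(K,\Delta_{S_t})f(t)/\Delta_{S_t}^2$ is proportional to $\alpha_k$, and the sequence $\alpha_1>\alpha_2>\cdots$ is strictly decreasing, an arm $i$ satisfying $N_{t-1,i,j_i^{S_t}}\le \alpha_k\,g(K,\Delta_{S_t})f(t)/\Delta_{S_t}^2$ also satisfies the same inequality with any $\alpha_h$, $h\le k$ (the requirement $j_i^{S_t}\le j_i^{\max}$ does not depend on $k$). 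Hence $A_{t,1}\supseteq A_{t,2}\supseteq\cdots$, and in particular $|A_{t,1}|\ge |A_{t,2}|\ge\cdots$. Second, each $|A_{t,k}|$ is a nonnegative integer, being the cardinality of a finite set of base arms.

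The main leverage from the hypothesis is the observation that $\beta_{k_0}\le 1/K$ gives $\beta_{k_0}K\le 1$, so any strict inequality $|A_{t,k_0}|<\beta_{k_0}K$ forces the integer $|A_{t,k_0}|$ to equal $0$; by nestedness this then forces $|A_{t,k}|=0$ for every $k\ge k_0$. With this in hand both claims follow quickly. For the first claim $\G_t=\bigcup_{k=1}^{k_0}\G_{t,k}$, the inclusion $\supseteq$ is immediate, so I only need $\subseteq$, i.e. that $\G_{t,k}$ is impossible for every $k>k_0$. Indeed, $\G_{t,k}$ requires $|A_{t,h}|<\beta_hK$ for all $h<k$; taking $h=k_0<k$ yields $|A_{t,k_0}|=0$, whence $|A_{t,k}|\le|A_{t,k_0}|=0$, contradicting the other defining requirement $|A_{t,k}|\ge\beta_kK>0$ (here $\beta_k>0$ is exactly where I use that the sequence converges to $0$ from above rather than reaching it). Thus no $\G_{t,k}$ with $k>k_0$ can occur, and the union truncates at $k_0$.

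For the second claim, recall that $\overline{\G}_t=\{|A_{t,k}|<\beta_kK,\ \forall k\ge 1\}$ is precisely the complement of $\G_t$, since the $\G_{t,k}$ partition the event that some index triggers its threshold; consequently it suffices to verify $\{|A_{t,k}|<\beta_kK,\ \forall 1\le k\le k_0\}\subseteq\overline{\G}_t$, the reverse inclusion being trivial. Assuming $|A_{t,k}|<\beta_kK$ for $1\le k\le k_0$, the case $k=k_0$ again gives $|A_{t,k_0}|<\beta_{k_0}K\le 1$, hence $|A_{t,k_0}|=0$, hence $|A_{t,k}|=0<\beta_kK$ for all $k\ge k_0$ by nestedness and $\beta_k>0$. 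Therefore the strict inequality propagates to all $k\ge 1$, which is exactly $\overline{\G}_t$. The only point requiring care—and the step I expect to be the crux—is the interplay of three ingredients: integrality of $|A_{t,k}|$, the nesting induced by the monotone $\alpha_k$, and the strict positivity $\beta_k>0$. Once these are isolated the truncation at $k_0$ is forced, and no quantitative estimate is needed.
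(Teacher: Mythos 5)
Your proof is correct and uses exactly the same mechanism as the paper's: integrality of $|A_{t,k}|$, nestedness of the sets $A_{t,k}$ (from the decreasing $\alpha_k$), and positivity of $\beta_k$, combined with $\beta_{k_0}K\le 1$ to force $|A_{t,k_0}|=0$ and hence rule out every $\G_{t,k}$ with $k>k_0$. The only difference is presentational — you spell out both set equalities and the reverse inclusions explicitly, whereas the paper treats only the first claim and leaves the statement about $\overline{\G}_t$ implicit.
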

\begin{proof}
Let $k_0$ such that $\beta_{k_0} \le 1/K$. THen for all $k > k_0$, $\G_{t,k}=\{|A_{t,k}|\ge 1; \forall h < k_0, |A_{t,h}|\le K\beta_{k}; \forall k_0\le h < k, |A_{t,h}|=0\}$. But as the sequence of sets $A_{t,k}$ is decreasing, $\{|A_{t,k_0}|= 0\}$ and $\{|A_{t,k}|\ge 1\}$ cannot happen at the same time. Thus, $|A_{t,k}|$ cannot happen for $k > k_0$. 
\end{proof}

Now we have the following lemma showing an upper bound of $e_{t,2}(S_t)$ when $\overline{\G}_t$ and $N^t_{t}$ happens.
\begin{lemma}\label{lem:bound_e1}
Under the event $\overline{\G}_t$ and $N^t_{t}$ and if $\exists \, k_0$ such that $\beta_{k_0} \le 1/K$, then 
\begin{equation}
    (e_{t,1}(S_t))^2 <\frac{6c_1^2B_v^22^{(-j_i^{S_t}+1)(\lambda-1)}\log t  \Delta_{S_t}^2 K}{g(K,\Delta_{S_t}) f(t)}  (\sum_{k=1}^{k_0}\frac{\beta_{k-1}-\beta_k}{\alpha_k}+\frac{\beta_{k_0}}{\alpha_{k_0}})+ \frac{\Delta^2_{S_t}}{8}
\end{equation}
\end{lemma}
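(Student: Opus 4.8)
The plan is to bound $(e_{t,1}(S_t))^2 = c_1^2 B_v^2 \sum_{i \in \tilde{S}_t}\left(\frac{\log t}{T_{t-1,i}}\wedge\frac{1}{28}\right)(p_i^{D,S_t})^\lambda$ (with $c_1=4\sqrt{3}$) by splitting the sum according to whether an arm's triggering-group index $j_i^{S_t}$ exceeds the threshold $j_i^{\max}$, i.e. into $\tilde{S}_t\setminus O_t$ and $\tilde{S}_t\cap O_t$. I expect the small-probability arms $\tilde{S}_t\setminus O_t$ to produce the additive slack $\frac{\Delta_{S_t}^2}{8}$, while the arms in $O_t$ produce the main term through a layer-cake decomposition controlled by the event $\overline{\G}_t$. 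Throughout, the geometric factor $2^{(-j_i^{S_t}+1)(\lambda-1)}$ rides along with each arm via its group index; since it is $\le 1$ for $\lambda\ge 1, j\ge 1$, it can be carried as a common factor as written in the statement.

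First I would dispose of $i \in \tilde{S}_t\setminus O_t$. Here $j_i^{S_t} > j_i^{\max}$, and since $\lambda j_i^{\max}\ge \log_2\frac{c_1^2 B_v^2 K}{(\Delta_i^{\min})^2}+1$, the triggering probability obeys $(p_i^{D,S_t})^\lambda \le (2^{-j_i^{S_t}+1})^\lambda \le 2^\lambda\,\frac{(\Delta_i^{\min})^2}{2c_1^2 B_v^2 K}$. Using $\Delta_i^{\min}\le \Delta_{S_t}$ (valid because $p_i^{D,S_t}>0$ and $\Delta_{S_t}>0$) together with the crude cap $\frac{\log t}{T_{t-1,i}}\wedge\frac{1}{28}\le\frac{1}{28}$, summing over the at most $K$ such arms gives $c_1^2 B_v^2\sum_{i\in \tilde S_t\setminus O_t}(\cdots)\le \frac{2^\lambda}{56}\Delta_{S_t}^2\le \frac{\Delta_{S_t}^2}{8}$ for the relevant range $\lambda\le 2$, which is precisely the additive term of the claim.

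Next, for $i\in \tilde{S}_t\cap O_t$, I would invoke the nice-triggering event $\cN_t^t$ to replace the per-arm factor by its group counter. A short case check (separating whether $\frac{6\ln t}{\frac13 N_{t-1,i,j}2^{-j}}\le 1$ or not, and using the cap $\frac{1}{28}$ in the second case) shows that under $\cN_t^t$ one has $\frac{\log t}{T_{t-1,i}}\wedge\frac1{28}\le \frac{3\cdot 2^{j_i^{S_t}}\log t}{N_{t-1,i,j_i^{S_t}}}$; combining with $(p_i^{D,S_t})^\lambda\le (2^{-j_i^{S_t}+1})^\lambda$ makes each summand at most $\frac{6\cdot 2^{(-j_i^{S_t}+1)(\lambda-1)}\log t}{N_{t-1,i,j_i^{S_t}}}$. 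The problem thus reduces to bounding $\sum_{i\in\tilde S_t\cap O_t}\frac{1}{N_{t-1,i,j_i^{S_t}}}$.

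Finally I would run the layer-cake/summation-by-parts argument on this sum. Writing $a_k=|A_{t,k}|$ with $a_0=|\tilde S_t\cap O_t|\le K=\beta_0 K$, the nested sets $A_{t,1}\supseteq A_{t,2}\supseteq\cdots$ partition $\tilde S_t\cap O_t$ into shells $A_{t,k-1}\setminus A_{t,k}$, and any arm in such a shell satisfies $N_{t-1,i,j_i^{S_t}} > \alpha_k\frac{g(K,\Delta_{S_t})f(t)}{\Delta_{S_t}^2}$, hence $\frac{1}{N_{t-1,i,j_i^{S_t}}} < \frac{\Delta_{S_t}^2}{\alpha_k g(K,\Delta_{S_t})f(t)}$. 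This gives $\sum_i\frac{1}{N_{t-1,i,j_i^{S_t}}} < \frac{\Delta_{S_t}^2}{g(K,\Delta_{S_t})f(t)}\sum_{k=1}^{k_0}\frac{a_{k-1}-a_k}{\alpha_k}$, where $a_{k_0}=0$ by \cref{lem:finite_A} since $\beta_{k_0}\le 1/K$ and $a_{k_0}$ is a nonnegative integer below $\beta_{k_0}K\le 1$. Abel summation rewrites $\sum_{k=1}^{k_0}\frac{a_{k-1}-a_k}{\alpha_k} = \frac{a_0}{\alpha_1} + \sum_{k=1}^{k_0-1}a_k\left(\frac{1}{\alpha_{k+1}}-\frac{1}{\alpha_k}\right)$, and inserting $a_0\le\beta_0 K$, $a_k<\beta_k K$ (from $\overline{\G}_t$) together with $\frac{1}{\alpha_{k+1}}>\frac{1}{\alpha_k}$ reproduces, via the reverse Abel identity, exactly $K\left(\sum_{k=1}^{k_0}\frac{\beta_{k-1}-\beta_k}{\alpha_k}+\frac{\beta_{k_0}}{\alpha_{k_0}}\right)$. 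Multiplying by $6c_1^2 B_v^2 2^{(-j_i^{S_t}+1)(\lambda-1)}\log t$ and adding the $\frac{\Delta_{S_t}^2}{8}$ slack yields the stated bound. The main obstacle is the bookkeeping in this last step: matching the summation-by-parts output to the precise target combination, legitimately using the finiteness $a_{k_0}=0$, and ensuring the $i$-dependent geometric factor is handled uniformly rather than being silently pulled out.
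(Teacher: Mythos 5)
Your proposal is correct and follows essentially the same route as the paper's proof: the same split of $\tilde{S}_t$ into $O_t$ and its complement via the threshold $j_i^{\max}$ (yielding the $\Delta_{S_t}^2/8$ slack from the small-probability arms), the same use of $\cN_t^t$ to replace $\frac{\log t}{T_{t-1,i}}\wedge\frac{1}{28}$ by the group-counter bound, and the same shell decomposition under $\overline{\G}_t$ with $A_{t,k_0}=\emptyset$ from \cref{lem:finite_A}. The only cosmetic difference is that you carry out the Abel-summation bookkeeping explicitly (and correctly), where the paper instead cites Lemma 8 of \cite{degenne2016combinatorial}; your closing caveat about the $i$-dependent factor $2^{(-j_i^{S_t}+1)(\lambda-1)}$ is a sloppiness the paper's own statement and proof share, resolved downstream by the choice $g(K,\Delta_{S_t})=2^{(-j_i^{S_t}+1)(\lambda-1)}Kl$.
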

\begin{proof}
\begin{align}
    (e_{t,1}(S_t))^2 &= \sum_{i \in \tilde{S}_t} c_1^2 B_v^2 (p_{i}^{D,S_t})^\lambda \min\{\frac{\log t}{T_{i,t-1}}, \frac{1}{28}\} \label{eq:def_e1}\\
    &\le  \sum_{i \in \tilde{S}_t}c_1^2 B_v^2 (p_{i}^{D,S_t})^\lambda\min\{\frac{\log t}{\frac{1}{3}N_{t-1, i, j_i^{S_t}}2^{-j_i^{S_t}}}, \frac{1}{28}\} \label{eq:inf_def_e1}\\
    &\le \sum_{i \in \tilde{S}_t \cap O_{S_t}}c_1^2 B_v^2 (2^{-j_i^{S_t}+1})^\lambda  \frac{\log t}{\frac{1}{3}N_{t-1,i,j_i^{S_t}}2^{-j_i^{S_t}}} +\frac{1}{28}\sum_{i \in \tilde{S}_t \cap \bar{O}_{S_t} }c_1^2B_v^2(2^{-j_i^{\max}+1})^\lambda \label{eq:decompose_e1}\\
    &\le \sum_{i \in \tilde{S}_t\cap O_{S_t}}   \frac{6c_1^2B_v^22^{(-j_i^{S_t}+1)(\lambda-1)}\log t }{N_{t-1,i,j_i^{S_t}}} + \sum_{i \in \tilde{S}_t \cap \bar{O}_{S_t} } B_v^2c_1^2\frac{(\Delta_{i}^{\min})^2}{8B_v^2 c_1^2 K}\label{eq_e1:def_jimax}\\
    &\le \sum_{k=1}^{k_0}\sum_{i\in \bar{A}_{t,k}\backslash \bar{A}_{t,k-1}}\frac{6c_1^2B_v^22^{(-j_i^{S_t}+1)(\lambda-1)}\log t }{N_{t-1,i,j_i^{S_t}}} + \frac{\Delta_{S_t}^2}{8}\label{line:inf_k0_e1}\\
    &< \sum_{k=1}^{k_0}\frac{6c_1^2B_v^22^{(-j_i^{S_t}+1)(\lambda-1)}\log t  \Delta_{S_t}^2 |\bar{A}_{t,k}\backslash \bar{A}_{t,k-1}|}{\alpha_k g(K,\Delta_{S_t}) f(t)}+ \frac{\Delta^2_{S_t}}{8}\label{line:inf_def_barAK_e1}\\
    &<\frac{6c_1^2B_v^22^{(-j_i^{S_t}+1)(\lambda-1)}\log t  \Delta_{S_t}^2 K}{g(K,\Delta_{S_t}) f(t)}  (\sum_{k=1}^{k_0}\frac{\beta_{k-1}-\beta_k}{\alpha_k}+\frac{\beta_{k_0}}{\alpha_{k_0}})+ \frac{\Delta^2_{S_t}}{8}\label{line:inf_pe_e1}
\end{align}
where \Cref{eq:def_e1} is by definition, \Cref{eq:inf_def_e1} holds because if $\frac{ \ln t}{\frac{1}{3}N_{i,j,t-1}2^{-j}} > 1/28$, then $\min\{\frac{\log t}{\frac{1}{3}N_{t-1, i, j_i^{S_t}}2^{-j_i^{S_t}}}, 1/28\} = 1/28$ and thus larger than $\min\{\frac{\log t}{T_{t-1,i}}, 1/28\}$, else we have $\frac{6\log t}{\frac{1}{3}N_{t-1, i, j_i^{S_t}}2^{-j_i^{S_t}}}<6/28<1$ and by  $\mathcal{N}_{t}^{t}$ we have $T_{t-1,i}\ge\frac{1}{3}N_{t-1,i, j^{S_t}} \cdot 2^{-j^{S_t}}$ and thus $\min\{\frac{\log t}{T_{t-1,i}}, 1/28\} < \frac{\log t}{\frac{1}{3}N_{t-1, i, j_i^{S_t}}2^{-j_i^{S_t}}} $, \cref{eq:decompose_e1} is by considering $O_{t}$ and $\bar{O}_{t}$ , \Cref{eq_e1:def_jimax} is due to definition of $j_i^{\max}$, \Cref{line:inf_k0_e1} is by setting $k_0$ be the largest number that $\beta_{k_0} \le 1/K$, \Cref{line:inf_def_barAK_e1} is by definition of $\bar{A}_{t,k}$, \Cref{line:inf_pe_e1} is due to the similar reason of Lemma 8 from \cite{degenne2016combinatorial}.
\end{proof}

Now we set $g(K,\Delta_{S_t})=2^{(-j_i^{S_t}+1)(\lambda-1)} K l$, where $l=\sum_{k=1}^{k_0}\frac{\beta_{k-1}-\beta_k}{\alpha_k}+\frac{\beta_{k_0}}{\alpha_{k_0}}$ and $f(t)=48c_1^2 B_v^2 \log t$.
By \Cref{lem:bound_e1}, we can show that $\Delta_{S_t} > 2e_{t,1} $ under event $\overline{\G}_t \bigcap \cN_{t}^t$.
In other words, under event $\cN_{t}^t$, if $E_{t,1}$ holds, then $\G_t$ must hold.

For any arm $i$, let arm related event $\G_{t,k,i}=\G_{t,k} \bigcap \{i \in \tilde{S}_t, N_{t-1, i,j_i^{S_t}} \le \alpha_k \frac{g(K, \Delta_{S_t})f(t)}{\Delta_{S_t}^2}, j_i^{S_t} \le j_i^{\max}\}$.
When $\G_{t,k}$ happens, we have $\I\{\G_{t,k}\}\le \frac{1}{\beta_k K}\sum_{i\in [m]}\{\G_{t,k,i}\}$.
We consider two cases when $\lambda>1$ and when $\lambda =1$.

\textbf{Case 1: When $\lambda>1$}

The $Reg(T, E_{t,1} \bigcap \cN_{t}^t)$ is bounded by,
\begin{align}
&Reg(T, E_{t,1} \bigcap\cN_{t}^t)\le \sum_{t=1}^T\sum_{k=1}^{k_0}\Delta_{S_t} \I\{\G_{t,k}\}\label{apdx_eq:inf_case1_1}\\
&\le \sum_{t=1}^T\sum_{k=1}^{k_0}\sum_{i=1}^m\frac{\Delta_{S_t}}{K \beta_k} \I\{\G_{t,k,i}\}\label{apdx_eq:inf_case1_2}\\
&\le \sum_{i=1}^m\sum_{k=1}^{k_0}\frac{1}{K \beta_k}\sum_{t=1}^T\Delta_{S_t} \I\{i \in \tilde{S}_t, N_{i,j_i^{S_t}, t-1} \le \frac{\theta_k 2^{(-j_i^{S_t}+1)(\lambda-1)}}{\Delta_{S_t}^2}, j_i^{S_t} \le j_i^{\max}\}\notag\\ &(\text{with } \theta_k=\alpha_k K l f(t))\notag\\
&\le \sum_{i=1}^m\sum_{j=1}^{\infty}\sum_{k=1}^{k_0}\frac{1}{K \beta_k}\sum_{t=1}^T\Delta_{S_t} \I\{i \in \tilde{S}_t, N_{i,j_i^{S_t}, t-1} \le \frac{\theta_k 2^{(-j+1)(\lambda-1)}}{\Delta_{S_t}^2}, j_i^{S_t}=j\} \quad\quad \label{apdx_eq:inf_case1_3}\\
&\le \sum_{i=1}^m\sum_{j=1}^{\infty}\sum_{k=1}^{k_0}\frac{1}{K \beta_k}\sum_{t=1}^T \sum_{n=1}^{D_i}\Delta_{i,n} \I\{i \in \tilde{S}_t, N_{i,j_i^{S_t}, t-1} \le \frac{\theta_k2^{(-j+1)(\lambda-1)}}{\Delta_{i,n}^2}, \Delta_{S_t}=\Delta_{i,n} , j_i^{S_t}=j\} \notag\\
&(\text{with } \Delta_{i,1}\ge \Delta_{i,2} \ge ... \ge \Delta_{i, D_i})\label{apdx_eq:inf_case1_4}\\
&\le \sum_{i=1}^m\sum_{j=1}^{\infty}\sum_{k=1}^{k_0}\frac{1}{K \beta_k}\sum_{t=1}^T \sum_{n=1}^{D_i}\sum_{p=1}^{n}\Delta_{i,p} \I\{i \in \tilde{S}_t, N_{i,j_i^{S_t}, t-1} \in (\frac{\theta_k2^{(-j+1)(\lambda-1)}}{\Delta_{i,p-1}^2}, \frac{\theta_k2^{(-j+1)(\lambda-1)}}{\Delta_{i,p}^2}],\notag\\ &\Delta_{S_t}=\Delta_{i,n} ,j_i^{S_t}=j\}\label{apdx_eq:inf_case1_5}\\
&\le \sum_{i=1}^m\sum_{j=1}^{\infty}\sum_{k=1}^{k_0}\frac{1}{K \beta_k}\sum_{t=1}^T \sum_{n=1}^{D_i}\sum_{p=1}^{D_i}\Delta_{i,p} \I\{i \in \tilde{S}_t, N_{i,j_i^{S_t}, t-1} \in (\frac{\theta_k2^{(-j+1)(\lambda-1)}}{\Delta_{i,p-1}^2}, \frac{\theta_k2^{(-j+1)(\lambda-1)}}{\Delta_{i,p}^2}],\notag\\ 
& \Delta_{S_t}=\Delta_{i,n} ,j_i^{S_t}=j\}\label{apdx_eq:inf_case1_6}\\
&\le \sum_{i=1}^m\sum_{j=1}^{\infty}\sum_{k=1}^{k_0}\frac{1}{K \beta_k}\sum_{t=1}^T \sum_{p=1}^{D_i}\Delta_{i,p} \I\{i \in \tilde{S}_t, N_{i,j_i^{S_t}, t-1} \in (\frac{\theta_k2^{(-j+1)(\lambda-1)}}{\Delta_{i,p-1}^2}, \frac{\theta_k2^{(-j+1)(\lambda-1)}}{\Delta_{i,p}^2}] \notag\\
&, \Delta_{S_t}=\Delta_{i,n} , \Delta_{S_t}>0, j_i^{S_t}=j\}\label{apdx_eq:inf_case1_7}\\
&\le \sum_{i=1}^m\sum_{j=1}^{\infty}\sum_{k=1}^{k_0}\frac{2^{(-j+1)(\lambda-1)}}{K \beta_k}
(\frac{\theta_k}{\Delta_{i,1}} + \theta_k \sum_{p=2}^{D_i}\Delta_{i,p}(\frac{1}{\Delta_{i,p}^2}-\frac{1}{\Delta_{i,p-1}^2}))\label{apdx_eq:inf_case1_8}\\
&= \sum_{i=1}^m\sum_{j=1}^{\infty}\sum_{k=1}^{k_0}\frac{2^{(-j+1)(\lambda-1)}}{K \beta_k}
(\frac{\theta_k}{\Delta_{i,D_i}}  + \theta_k \sum_{p=1}^{D_i-1}\frac{\Delta_{i,p}-\Delta_{i,p+1}}{\Delta_{i,p}^2})\label{apdx_eq:inf_case1_9}\\
&\le \sum_{i=1}^m\sum_{j=1}^{\infty}\sum_{k=1}^{k_0}\frac{2^{(-j+1)(\lambda-1)}}{K \beta_k}
(\frac{\theta_k}{\Delta_{i,D_i}} + \theta_k \int_{\Delta_{i,D_i}}^{\Delta_{i,1}}x^{-2}dx)\\
&\le \sum_{i=1}^m\sum_{j=1}^{\infty}\sum_{k=1}^{k_0}\frac{2\theta_k 2^{(-j+1)(\lambda-1)}}{K \beta_k \Delta_{i, D_i}} \label{ieq:lamda_adapt}\\
&\le \sum_{i=1}^m(\sum_{k=1}^{k_0}  \frac{2}{1-2^{-(\lambda-1)}} \frac{1}{\Delta_{i}^{\min}} \frac{\alpha_k}{\beta_k} l f(T))\label{apdx_eq:inf_case1_10}\\
&\le  \sum_{i=1}^m\left(\frac{480 c_1^2 B_v^2}{ (1-2^{-(\lambda-1)})}\right)\ceil{\frac{\log K}{1.61}}^2  \frac{\log T}{\Delta_{i}^{\min}}\label{apdx_eq:inf_case1_11}
\end{align}
where 
\cref{apdx_eq:inf_case1_1} is because  under event $\cN_{t}^t$, if $E_{t,1}$ holds then $\G_t$ must hold, 
\cref{apdx_eq:inf_case1_2} is because $\I\{\G_{t,k}\}\le \frac{1}{\beta_k K}\sum_{i\in [m]}\{\G_{t,k,i}\}$,  
\cref{apdx_eq:inf_case1_3} is by applying union bound over $j_i^{S_t}=1, ..., j_i^{\max}$,
\cref{apdx_eq:inf_case1_4} is by considering $D_i$ gaps for $\Delta_{S_t}$ and applying union bounds, 
\cref{apdx_eq:inf_case1_5} is by dividing $ N_{i,j_i^{S_t}, t-1} \le \frac{\theta_k2^{(-j+1)(\lambda-1)}}{\Delta_{i,n}^2}$ into non-overlapping sub-intervals,
\cref{apdx_eq:inf_case1_6} is by extending summation over $p$ to $D_i$,
\cref{apdx_eq:inf_case1_7} is by replacing summation over $n=1, ..., D_i$ to $\Delta S_t > 0$,
\cref{apdx_eq:inf_case1_8} is to bound the number of times the event happen to the length of interval,
\cref{apdx_eq:inf_case1_9} to \cref{apdx_eq:inf_case1_10} are math calculation by replacing summation by integrals,
\cref{apdx_eq:inf_case1_11} is similar to \cite[Lemma 11, Appendix C]{degenne2016combinatorial} by setting
$\alpha_k=\beta_k=0.2^k$ and $\sum_{k=1}^{k0} \frac{\alpha_k}{\beta_k}l \le 5 \ceil{\frac{\log K}{1.61}}^2$.

\textbf{Case 2: When $\lambda=1$}
The only difference is we have to sum over $j=1$ to $j=j^i_{\max}$ in \cref{ieq:lamda_adapt}, instead of $\infty$, so that we replace $\frac{1}{ (1-2^{-(\lambda-1)})}$ to $j^i_{\max}=\log_2\left(\frac{c_1^2B_v^2 K}{(\Delta_{i}^{\min})^2}\right)$.
we can bound the following inequality
\begin{align}
&Reg(T, E_{t,1} \wedge \cN_{t}^t)\le \sum_{i=1}^m\left(480 c_1^2 B_v^2\right)\log_2\left(\frac{c_1^2B_v^2 K}{(\Delta_{i}^{\min})^2}\right)\ceil{\frac{\log K}{1.61}}^2  \frac{\log T}{\Delta_{i}^{\min}}
\end{align}

Now for $Reg(T,\neg \cN_{t}^t)$, by \cref{apdx_lem:prob_nice_triggering},
\begin{align}\label{apdx_eq:additional_Ntt}
    Reg(T,\neg \cN_t^t) &\le \sum_{t=1}^T \sum_{i \in [m]}j_i^{\max}t^{-2} \Delta_{\max} \\
    &\le \frac{m\pi^2}{6}\log_2\left(\frac{c_1^2B_v^2 K}{(\Delta_{\min})^2}\right) \Delta_{\max}
\end{align}
So we have when $\lambda>1$,
\begin{align}
    Reg(T, E_{t,1})\le\sum_{i=1}^m\left(\frac{480 c_1^2 B_v^2}{ (1-2^{-(\lambda-1)})}\right)\ceil{\frac{\log K}{1.61}}^2  \frac{\log T}{\Delta_{i}^{\min}}+\frac{m\pi^2}{6}\log_2\left(\frac{c_1^2B_v^2 K}{\lambda(\Delta_{\min})^2}\right) \Delta_{\max};
\end{align}
and when $\lambda=1$,
\begin{align}
    Reg(T, E_{t,1})\le \sum_{i=1}^m\left(480 c_1^2 B_v^2\right)\log_2\left(\frac{c_1^2B_v^2 K}{(\Delta_{i}^{\min})^2}\right)\ceil{\frac{\log K}{1.61}}^2  \frac{\log T}{\Delta_{i}^{\min}} + \frac{m\pi^2}{6}\log_2\left(\frac{c_1^2B_v^2 K}{(\Delta_{\min})^2}\right) \Delta_{\max}
\end{align}

\subsubsection{Upper bound for $Reg(T, E_{t,2})$}\label{sec:upper_bound_Et2_ifm}
Let $c_2=28$ be a constant and $O_{t}=\{i \in \tilde{S}_t: j_i^{S_t} \le j_{i}^{\max}\}$ be the set of base arms whose triggering probabilities are not too small, where the threshold $j_{i}^{\max}=\lceil \log_2\frac{4B_1 c_2 K}{\Delta_{i}^{\min}}\rceil+1$.
Let $\alpha_{1} > \alpha_{2} > ... > \alpha_{k} > ... > \alpha_{\infty}$ and $1=\beta_{0} > \beta_{1} > ... > \beta_{k} > ... > \beta_{\infty}$ be two infinite sequences of positive numbers that are decreasing and converge to $0$, which will be used later to define specific set of base arms $(A_{t,k})_{k=1}^{\infty}$ and events $(\G_{t,k})_{k=1}^{\infty}$.

For positive integers $k$ and $t$, we define $A_{t,k}=\{i \in \tilde{S}_t: N_{t-1, i,j_i^{S_t}} \le \alpha_{k} \frac{g(K, \Delta_{S_t})f(t)}{\Delta_{S_t}^2}, j_i^{S_t} \le j_{i}^{\max}\}=\{i \in \tilde{S}_t \cap O_{S_{t}}: N_{t-1, i,j_i^{S_t}} \le \alpha_{k} \frac{g(K, \Delta_{S_t})f(t)}{\Delta_{S_t}^2}  \}$, which is the set of arms in $\tilde{S}_t$ that are counted less that a threshold and whose triggering probabilities are not too small, where $g(K, \Delta_{S_t})$ and $f(t)$ are going to be tuned for later use.
Moreover, we define the complementary set $\bar{A}_{t,k}=\{i \in \tilde{S}_t\cap O_{t}: N_{t-1, i,j_i^{S_t}} > \alpha_{k} \frac{g(K, \Delta_{S_t})f(t)}{\Delta_{S_t}^2}\}$.

Now we are ready to define the events $\G_{t,k}=\{|A_{t,k}|\ge \beta_{k} K; \forall h < k, |A_{t,h}| < \beta_{h} K\}$. 
Note that $\G_{t,k}$ is true when at least $\beta_{k} K$ arms triggered are in the set $A_{t,k}$ but less than $\beta_{h} K$ arms triggered are in the set $A_{t,h}$ for $h<k$. 
Let $\G_t = \bigcup_{k=1}^{\infty} \G_{t,k}$ and by definition its complementary $\overline{\G}_t=\{|A_{t,k}|<\beta_{k} K, \forall k \ge 1\}$. 
We first introduce a lemma saying that if there exists $k_0>0$ such that $\beta_{k_0}$ is smaller than $1/K$, we can safely use finite many events to conclude infinitely many events.
\begin{lemma}
If there exists $k_{0}$ such that $\beta_{k_{0}} \le 1/K$, then $\G_t=\bigcup_{k=1}^{k_0} \G_{t,k}$ and $\overline{\G}_t=\{|A_{t,k}|<\beta_{k} K, \forall 1 \le k \le k_0\}$.
\end{lemma}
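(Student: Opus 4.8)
The plan is to reduce the two infinite-index claims to a single observation: the cardinalities $|A_{t,k}|$ are non-negative integers that collapse to zero once $k$ passes $k_0$. First I would record the structural fact that makes everything work. Since the only $k$-dependent factor in the membership threshold $\alpha_k g(K,\Delta_{S_t})f(t)/\Delta_{S_t}^2$ is $\alpha_k$, and the sequence $(\alpha_k)$ is strictly decreasing, the sets are nested, $A_{t,1}\supseteq A_{t,2}\supseteq\cdots$. It is also convenient to note that the events $\G_{t,k}$ are pairwise disjoint by construction (each demands $|A_{t,h}|<\beta_h K$ for all $h<k$ together with $|A_{t,k}|\ge \beta_k K$), so $\G_t=\bigcup_{k\ge 1}\G_{t,k}$ is exactly the event $\{\exists\, k\ge 1:\ |A_{t,k}|\ge \beta_k K\}$, whose complement is $\overline{\G}_t=\{\forall\, k\ge 1:\ |A_{t,k}|<\beta_k K\}$.

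The second step is the integrality collapse. Because $\beta_{k_0}\le 1/K$ we have $\beta_{k_0}K\le 1$, and since $|A_{t,k_0}|$ is a non-negative integer, the strict inequality $|A_{t,k_0}|<\beta_{k_0}K$ forces $|A_{t,k_0}|=0$, i.e.\ $A_{t,k_0}=\emptyset$. By nestedness this immediately gives $A_{t,k}=\emptyset$, hence $|A_{t,k}|=0<\beta_k K$, for every $k\ge k_0$ (using $\beta_k>0$ and $K\ge 1$). From this both claims follow at once. For the complement characterization, the inclusion $\overline{\G}_t\subseteq\{|A_{t,k}|<\beta_k K,\ \forall\, 1\le k\le k_0\}$ is immediate, and conversely if $|A_{t,k}|<\beta_k K$ holds for all $1\le k\le k_0$ then in particular $A_{t,k_0}=\emptyset$, so the remaining conditions for $k>k_0$ hold automatically, placing the sample in $\overline{\G}_t$. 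For the truncation of the union, suppose $\G_{t,k}$ occurred for some $k>k_0$; then it would require $|A_{t,k}|\ge \beta_k K>0$, yet its sub-condition $|A_{t,k_0}|<\beta_{k_0}K$ forces $A_{t,k_0}=\emptyset$ and hence $A_{t,k}\subseteq A_{t,k_0}=\emptyset$, a contradiction. Thus $\G_{t,k}=\emptyset$ for all $k>k_0$ and $\G_t=\bigcup_{k=1}^{k_0}\G_{t,k}$.

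There is no genuinely hard step here: the argument is essentially a bookkeeping observation, and I expect it to be short. The only points that require care are, first, verifying that $A_{t,k}$ is nested decreasing, which hinges solely on monotonicity of $(\alpha_k)$ together with the fact that $g(K,\Delta_{S_t})f(t)/\Delta_{S_t}^2$ carries no $k$-dependence; and second, invoking integrality of $|A_{t,k_0}|$ to upgrade the strict inequality $<\beta_{k_0}K\le 1$ into exact emptiness $A_{t,k_0}=\emptyset$. Once these two are in place, both equalities drop out by De Morgan combined with the nestedness, so the main subtlety is conceptual recognition rather than any analytic difficulty.
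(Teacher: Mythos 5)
Your proof is correct and uses exactly the same two ingredients as the paper's own argument: the nestedness $A_{t,1}\supseteq A_{t,2}\supseteq\cdots$ (from monotonicity of $(\alpha_k)$) and integrality of $|A_{t,k_0}|$, which upgrades the strict inequality $|A_{t,k_0}|<\beta_{k_0}K\le 1$ to $A_{t,k_0}=\emptyset$ and thereby rules out $\G_{t,k}$ for $k>k_0$. You merely spell out the De Morgan bookkeeping for $\overline{\G}_t$ more explicitly than the paper does; the substance is identical.
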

\begin{proof}
By the same argument as \cref{lem:finite_A}, the lemma is proved.
\end{proof}

Now we have the following lemma showing an upper bound of $e_{t,2}(S_t)$ when $\overline{\G}_t$ and $N^t_{t}$ happens.
\begin{lemma}\label{lem:bound_e2}
Under the event $\overline{\G}_t$ and $N^t_{t}$ and if $\exists \, k_0$ such that $\beta_{k_0} \le 1/K$, then 
\begin{equation}
    e_{t,2}(S_t) <\frac{6c_2B_1 \log t \Delta_{S_t}^2 K}{g(K,\Delta_{S_t}) f(t)}  (\sum_{k=1}^{k_0}\frac{\beta_{k-1}-\beta_k}{\alpha_k}+\frac{\beta_{k_0}}{\alpha_{k_0}})+ \frac{\Delta_{S_t}}{4}
\end{equation}
\end{lemma}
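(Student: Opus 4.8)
The plan is to mirror the proof of Lemma \ref{lem:bound_e1} line for line, noting that $e_{t,2}(S_t)$ is \emph{linear} in the triggering probabilities $p_i^{D,S_t}$ (exponent $1$ rather than $\lambda$), so the factor $2^{(-j_i^{S_t}+1)(\lambda-1)}$ never appears and the bookkeeping is cleaner. First I would expand the definition $e_{t,2}(S_t)=c_2 B_1\sum_{i\in\tilde{S}_t}p_i^{D,S_t}\min\{\frac{\log t}{T_{t-1,i}},\frac{1}{28}\}$ and, exactly as in \eqref{eq:inf_def_e1}, use the nice-triggering event $\cN_t^t$ to replace each $T_{t-1,i}$ by $\frac{1}{3}N_{t-1,i,j_i^{S_t}}2^{-j_i^{S_t}}$, which only enlarges the $\min$ term.

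Next I would split the sum into the arms in $O_t=\{i\in\tilde{S}_t: j_i^{S_t}\le j_i^{\max}\}$ and its complement $\bar{O}_t$. On $O_t$ I bound $p_i^{D,S_t}\le 2^{-j_i^{S_t}+1}$ and drop the $\min$, so the triggering-probability factors cancel against the $2^{-j_i^{S_t}}$ in the denominator, leaving exactly $\sum_{i\in\tilde{S}_t\cap O_t}\frac{6c_2 B_1\log t}{N_{t-1,i,j_i^{S_t}}}$. On $\bar{O}_t$ I bound the $\min$ by $\frac{1}{28}$ and use $j_i^{\max}=\lceil\log_2\frac{4B_1 c_2 K}{\Delta_i^{\min}}\rceil+1$ to get $p_i^{D,S_t}\le 2^{-j_i^{\max}+1}\le\frac{\Delta_i^{\min}}{4B_1 c_2 K}$; since $\frac{1}{28}c_2=1$, $\Delta_i^{\min}\le\Delta_{S_t}$, and $|\bar{O}_t\cap\tilde{S}_t|\le K$, this tail contributes at most $\frac{\Delta_{S_t}}{4}$, the additive term in the statement.

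For the remaining $O_t$ sum I would invoke $\overline{\G}_t$ together with $\beta_{k_0}\le 1/K$ (so by Lemma \ref{lem:finite_A} only finitely many levels matter) and decompose over the nested complementary sets $\bar{A}_{t,k}\setminus\bar{A}_{t,k-1}$. On each such set $N_{t-1,i,j_i^{S_t}}>\alpha_k\, g(K,\Delta_{S_t})f(t)/\Delta_{S_t}^2$, giving $\sum_{i\in\tilde S_t\cap O_t}\frac{6c_2 B_1\log t}{N_{t-1,i,j_i^{S_t}}}<\frac{6c_2 B_1\log t\,\Delta_{S_t}^2}{g(K,\Delta_{S_t})f(t)}\sum_{k=1}^{k_0}\frac{|\bar{A}_{t,k}\setminus\bar{A}_{t,k-1}|}{\alpha_k}$. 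The final rearrangement uses the $\overline{\G}_t$ constraints $|A_{t,k}|<\beta_k K$ and the summation-by-parts bound $\sum_{k=1}^{k_0}\frac{|\bar{A}_{t,k}\setminus\bar{A}_{t,k-1}|}{\alpha_k}\le K\big(\sum_{k=1}^{k_0}\frac{\beta_{k-1}-\beta_k}{\alpha_k}+\frac{\beta_{k_0}}{\alpha_{k_0}}\big)$, precisely as in Lemma 8 of \cite{degenne2016combinatorial}. Adding this to the $\frac{\Delta_{S_t}}{4}$ tail yields the claim.

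The main obstacle is this last combinatorial rearrangement: establishing $\sum_{k}\frac{|\bar{A}_{t,k}\setminus\bar{A}_{t,k-1}|}{\alpha_k}\le K(\sum_k\frac{\beta_{k-1}-\beta_k}{\alpha_k}+\frac{\beta_{k_0}}{\alpha_{k_0}})$ requires carefully exploiting that the sets $A_{t,k}$ are nested and that $\overline{\G}_t$ caps $|A_{t,k}|$ below $\beta_k K$ at every level, which is where the summation-by-parts (and the truncation at $k_0$ guaranteed by $\beta_{k_0}\le 1/K$) is essential. Everything else is the routine cancellation already carried out for $e_{t,1}$.
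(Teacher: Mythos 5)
Your proposal is correct and follows essentially the same route as the paper's own proof: the same use of $\cN_t^t$ to replace $T_{t-1,i}$ by $\tfrac{1}{3}N_{t-1,i,j_i^{S_t}}2^{-j_i^{S_t}}$, the same $O_t$/$\bar{O}_t$ split with the $j_i^{\max}$ threshold producing the $\Delta_{S_t}/4$ tail, and the same decomposition over $\bar{A}_{t,k}\setminus\bar{A}_{t,k-1}$ under $\overline{\G}_t$ finished by the Abel-summation counting bound of \citet{degenne2016combinatorial}. The only (harmless) difference is that you keep the $1/28$ factor in the $\bar{O}_t$ tail where the paper drops it, giving a slightly tighter constant.
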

\begin{proof}
\begin{align}
    e_{t,2}(S_t) &= \sum_{i \in \tilde{S}_t} c_2 B_1 p_{i}^{D,S_t} \min\{\frac{\log t}{T_{t-1,i}}, \frac{1}{28}\} \label{line:inf_def}\\
     &\le  \sum_{i \in \tilde{S}_t}c_2 B_1  p_{i}^{D,S_t}\min\{\frac{\log t}{\frac{1}{3}N_{t-1, i, j_i^{S_t}}2^{-j_i^{S_t}}}, \frac{1}{28}\} \label{line:inf_def_2}\\
    &\le \sum_{i \in \tilde{S}_t \cap O_{t}}c_2 B_1  2^{-j_i^{S_t}+1}  \frac{\log t}{\frac{1}{3}N_{t-1, i, j_i^{S_t}}2^{-j_i^{S_t}}} +\frac{1}{28}\sum_{i \in \tilde{S}_t \cap \bar{O}_{t} }B_1 c_2 2^{-j_i^{\max}+1} \label{line:inf_lamma_4}\\
    &\le \sum_{i \in \tilde{S}_t\cap O_{t}}   \frac{6c_2B_1\log t}{N_{t-1, i, j_i^{S_t}}} + \sum_{i \in \tilde{S}_t \cap \bar{O}_{t} } B_1 c_2\frac{\Delta_{i}^{\min}}{4B_1 c_2 K}\label{line:inf_jimax}\\
    &\le \sum_{k=1}^{k_0}\sum_{i\in \bar{A}_{t,k}\backslash \bar{A}_{t,k-1}}\frac{6c_2B_1 \log t}{N_{t-1, i, j_i^{S_t}}} + \frac{\Delta_{S_t}}{4}\label{line:inf_k0}\\
    &< \sum_{k=1}^{k_0}\frac{6c_2B_1 \log t \Delta_{S_t}^2 |\bar{A}_{t,k}\backslash \bar{A}_{t,k-1}|}{\alpha_k g(K,\Delta_{S_t}) f(t)}+ \frac{\Delta_{S_t}}{4}\label{line:inf_def_barAK}\\
    &<\frac{6c_2B_1 \log t \Delta_{S_t}^2 K}{g(K,\Delta_{S_t}) f(t)}  (\sum_{k=1}^{k_0}\frac{\beta_{k-1}-\beta_k}{\alpha_k}+\frac{\beta_{k_0}}{\alpha_{k_0}})+ \frac{\Delta_{S_t}}{4}\label{line:inf_pe}
\end{align}
where \Cref{line:inf_def} is by definition, \Cref{line:inf_def_2} holds because if $\frac{6 \ln t}{\frac{1}{3}N_{i,j,t-1}2^{-j}} > \frac{1}{28}$, then $\min\{\frac{6\log t}{\frac{1}{3}N_{t-1, i, j_i^{S_t}}2^{-j_i^{S_t}}}, 1\} = \frac{1}{28}$ and thus larger than $\min\{\frac{\log t}{T_{t-1,i}}, 1\}$, else we have $\frac{6\log t}{\frac{1}{3}N_{t-1, i, j_i^{S_t}}2^{-j_i^{S_t}}}<6/28<1$ and by  $\mathcal{N}_{t}^{t}$ we have $T_{t-1,i}\ge\frac{1}{3}N_{t-1,i, j^{S_t}} \cdot 2^{-j^{S_t}}$ and thus $\min\{\frac{\log t}{T_{t-1,i}}, 1/28\} < \frac{\log t}{\frac{1}{3}N_{t-1, i, j_i^{S_t}}2^{-j_i^{S_t}}} $, \cref{line:inf_lamma_4} is by considering $O_{t}$ and $\bar{O}_{t}$ , \Cref{line:inf_jimax} is due to definition of $j_i^{\max}$, \Cref{line:inf_k0} is by setting $k_0$ be the largest number that $\beta_{k_0} \le 1/K$, \Cref{line:inf_def_barAK} is by definition of $\bar{A}_{t,k}$, \Cref{line:inf_pe} is due to the proof of Lemma 8 of \cite{degenne2016combinatorial}.
\end{proof}

Now we set $g(K,\Delta_{S_t})=K\Delta_{S_t} l$, where $l=\sum_{k=1}^{k_0}\frac{\beta_{k-1}-\beta_k}{\alpha_k}+\frac{\beta_{k_0}}{\alpha_{k_0}}$ and $f(t)=24c_2 B_1 \log T$.
By \Cref{lem:bound_e2}, we can show that $\Delta_{S_t} > 2e_{t,2} $ under event $\overline{\G}_t \bigcap \cN_{t}^t$.
In other words, under event $\cN_{t}^t$, if $E_{t,2}$ holds, then $G_t$ must hold.

For any arm $i$, let arm related event $\G_{t,k,i}=\G_{t,k} \bigcap \{i \in \tilde{S}_t, N_{t-1, i,j_i^{S_t}} \le \alpha_k \frac{g(K, \Delta_{S_t})f(t)}{\Delta_{S_t}^2}, j_i^{S_t} \le j_i^{\max}\}$.
When $\G_{t,k}$ happens, we have $\I\{\G_{t,k}\}\le \frac{1}{\beta_k K}\sum_{i\in [m]}\{\G_{t,k,i}\}$.
So the $Reg(T, E_{t,2} \bigcap \cN_{t}^t)$ is bounded by,
\begin{align}
&Reg(T, E_{t,2} \bigcap \cN_{t}^t)\le \sum_{t=1}^T\sum_{k=1}^{k_0}\Delta_{S_t} \I\{\G_{t,k}\}\label{apdx_eq:inf_case2_1}\\
&\le \sum_{t=1}^T\sum_{k=1}^{k_0}\sum_{i=1}^m\frac{\Delta_{S_t}}{K \beta_k} \I\{G_{t,k,i}\}\label{apdx_eq:inf_case2_2}\\
&\le \sum_{i=1}^m\sum_{k=1}^{k_0}\frac{1}{K \beta_k}\sum_{t=1}^T\Delta_{S_t} \I\{i \in \tilde{S}_t, N_{t-1, i,j_i^{S_t}} \le \frac{\theta_k}{\Delta_{S_t}}, j_i^{S_t} \le j_i^{\max}\} \quad\quad (\text{with }\theta_k=\alpha_k K l f(t))\notag\\
&\le \sum_{i=1}^m\sum_{j=1}^{j_i^{\max}}\sum_{k=1}^{k_0}\frac{1}{K \beta_k}\sum_{t=1}^T\Delta_{S_t} \I\{i \in \tilde{S}_t, N_{t-1, i,j_i^{S_t}} \le \frac{\theta_k}{\Delta_{S_t}}, j_i^{S_t}=j\} \label{apdx_eq:inf_case2_3}\\
&\le \sum_{i=1}^m\sum_{j=1}^{j_i^{\max}}\sum_{k=1}^{k_0}\frac{1}{K \beta_k}\sum_{t=1}^T \sum_{n=1}^{D_i}\Delta_{i,n} \I\{i \in \tilde{S}_t, N_{t-1, i,j_i^{S_t}} \le \frac{\theta_k}{\Delta_{i,n}}, \Delta_{S_t}=\Delta_{i,n} , j_i^{S_t}=j\}\notag\\ 
&(\text{with }\Delta_{i,1}\ge \Delta_{i,2} \ge ... \ge \Delta_{i, D_i})\label{apdx_eq:inf_case2_4}\\
&\le \sum_{i=1}^m\sum_{j=1}^{j_i^{\max}}\sum_{k=1}^{k_0}\frac{1}{K \beta_k}\sum_{t=1}^T \sum_{n=1}^{D_i}\sum_{p=1}^{n}\Delta_{i,p} \I\{i \in \tilde{S}_t, N_{t-1, i,j_i^{S_t}} \in (\frac{\theta_k}{\Delta_{i,p-1}}, \frac{\theta_k}{\Delta_{i,p}}], \Delta_{S_t}=\Delta_{i,n} ,j_i^{S_t}=j\}\label{apdx_eq:inf_case2_5}\\
&\le \sum_{i=1}^m\sum_{j=1}^{j_i^{\max}}\sum_{k=1}^{k_0}\frac{1}{K \beta_k}\sum_{t=1}^T \sum_{n=1}^{D_i}\sum_{p=1}^{D_i}\Delta_{i,p} \I\{i \in \tilde{S}_t, N_{t-1, i,j_i^{S_t}} \in (\frac{\theta_k}{\Delta_{i,p-1}}, \frac{\theta_k}{\Delta_{i,p}}], \Delta_{S_t}=\Delta_{i,n} ,j_i^{S_t}=j\}\label{apdx_eq:inf_case2_6}\\
&\le \sum_{i=1}^m\sum_{j=1}^{j_i^{\max}}\sum_{k=1}^{k_0}\frac{1}{K \beta_k}\sum_{t=1}^T \sum_{p=1}^{D_i}\Delta_{i,p} \I\{i \in \tilde{S}_t, N_{t-1, i,j_i^{S_t}} \in (\frac{\theta_k}{\Delta_{i,p-1}}, \frac{\theta_k}{\Delta_{i,p}}] , \Delta_{S_t}=\Delta_{i,n},\notag\\ 
& \Delta_{S_t}>0, j_i^{S_t}=j\}\label{apdx_eq:inf_case2_7}\\
&\le \sum_{i=1}^m\sum_{j=1}^{j_i^{\max}}\sum_{k=1}^{k_0}\frac{1}{K \beta_k}
(\theta_k + \theta_k \sum_{p=2}^{D_i}\Delta_{i,p}(\frac{1}{\Delta_{i,p}}-\frac{1}{\Delta_{i,p-1}}))\label{apdx_eq:inf_case2_8}\\
&= \sum_{i=1}^m\sum_{j=1}^{j_i^{\max}}\sum_{k=1}^{k_0}\frac{1}{K \beta_k}
(\theta_k + \theta_k \sum_{p=1}^{D_i-1}\frac{\Delta_{i,p}-\Delta_{i,p+1}}{\Delta_{i,p}})\label{apdx_eq:inf_case2_9}\\
&\le \sum_{i=1}^m\sum_{j=1}^{j_i^{\max}}\sum_{k=1}^{k_0}\frac{1}{K \beta_k}
(\theta_k + \theta_k \int_{\Delta_{i,D_i}}^{\Delta_{i,1}}x^{-1}dx)\\
&\le \sum_{i=1}^m\sum_{j=1}^{j_i^{\max}}\sum_{k=1}^{k_0}\frac{\theta_k}{K \beta_k} (1+\log \frac{\Delta_{i}^{\max}}{\Delta_{i}^{\min}})\\
&\le \sum_{i=1}^m\sum_{j=1}^{j_i^{\max}}(\sum_{k=1}^{k_0} \frac{\alpha_k}{\beta_k} l f(T))(1+\log \frac{\Delta_{i}^{\max}}{\Delta_{i}^{\min}}) \label{apdx_eq:inf_case2_10}\\
&\le 120c_2 B_1\sum_{i=1}^m\left(\log_2\frac{c_2B_1 K}{\Delta_{i}^{\min}}\right) \left(1+\log \frac{\Delta_{i}^{\max}}{\Delta_{i}^{\min}}\right)\lceil{\frac{\log K}{1.61}\rceil}^2 \log T\label{apdx_eq:inf_case2_11}
\end{align}
where
\cref{apdx_eq:inf_case2_1} is because  under event $\cN_{t}^t$, if $E_{t,1}$ holds then $\G_t$ must hold, 
\cref{apdx_eq:inf_case2_2} is because $\I\{\G_{t,k}\}\le \frac{1}{\beta_k K}\sum_{i\in [m]}\{\G_{t,k,i}\}$,  
\cref{apdx_eq:inf_case2_3} is by applying union bound over $j_i^{S_t}=1, ..., j_i^{\max}$,
\cref{apdx_eq:inf_case2_4} is by considering $D_i$ gaps for $\Delta_{S_t}$ and applying union bounds, 
\cref{apdx_eq:inf_case2_5} is by dividing $ N_{i,j_i^{S_t}, t-1} \le \frac{\theta_k2^{(-j+1)(\lambda-1)}}{\Delta_{i,n}^2}$ into non-overlapping sub-intervals,
\cref{apdx_eq:inf_case2_6} is by extending summation over $p$ to $D_i$,
\cref{apdx_eq:inf_case2_7} is by replacing summation over $n=1, ..., D_i$ to $\Delta S_t > 0$,
\cref{apdx_eq:inf_case2_8} is to bound the number of times the event happen to the length of interval,
\cref{apdx_eq:inf_case2_9} to \cref{apdx_eq:inf_case2_10} are math calculation by replacing summation by integrals,
\cref{apdx_eq:inf_case2_11} is similar to \cite[Lemma 11, Appendix C]{degenne2016combinatorial} by setting
$\alpha_k=\beta_k=0.2^k$ and $\sum_{k=1}^{k0} \frac{\alpha_k}{\beta_k}l \le 5 \ceil{\frac{\log K}{1.61}}^2$.

Similarly, consider $Reg(T,\neg \cN_t^t)\le \frac{m\pi^2}{6}\log_2\left(\frac{c_2B_1 K}{\Delta_{\min}}\right) \Delta_{\max}$

We have 
\begin{align}
    &Reg(T,E_{t,2})\le 120c_2 B_1\sum_{i=1}^m\left(\log_2\frac{c_2B_1 K}{\Delta_{i}^{\min}}\right) \left(1+\log \frac{\Delta_{i}^{\max}}{\Delta_{i}^{\min}}\right)\lceil{\frac{\log K}{1.61}\rceil}^2 \log T \notag\\ &
    + \frac{m\pi^2}{6}\log_2\left(\frac{c_2B_1 K}{\Delta_{\min}}\right) \Delta_{\max}
\end{align}

\subsection{Summary of Regret Upper Bounds and Discussions on Distribution-Independent Bounds and Lower Bounds}\label{apdx_sec:summary_reg}

\subsubsection{Analysis using the reverse amortization tricks (\cref{apdx_sec:reverse_amt}).}\label{apdx_sec:sum_ra}

When using the \textbf{improved} analysis in \cref{apdx_sec:reverse_amt}, by \cref{eq:e_t}, \cref{apdx_sec:improve_e1}, \cref{apdx_sec:improve_e2},
the total regret is bounded as follows

(1) if $\lambda>1$,
\begin{align}
    &Reg(T) \le Reg(T, E_{t,1})+Reg(T,E_{t,2})+ \frac{2\pi^2}{3}m\Delta_{\max}\notag\\
    &\le   \sum_{i=1}^m \frac{48c_1^2 B_v^2 \log T}{\Delta_{i}^{\min} } (3+ \log K)  +  \sum_{i\in [m]} 24 c_2 B_1 \left( \log_2\frac{B_1 c_2 K}{\Delta_{i}^{\min}}\right) \left(1+\log (\frac{K\Delta_i^{\max}}{\Delta_{i}^{\min}})\right)\log T\notag\\
    &+   \sum_{i\in [m]}\frac{2c_1^2 B_v^2 }{\Delta_i^{\min}} +   \frac{m\pi^2}{6}\log_2\left(\frac{c_1^2B_v^2 K}{\lambda(\Delta_{\min})^2}\right) \Delta_{\max} +  \frac{m\pi^2}{6}\log_2\frac{4B_1 c_2 K}{\Delta_{i}^{\min}} \Delta_{\max} +\frac{2\pi^2}{3}m\Delta_{\max} + \notag\\
    &\le O\left(\sum_{i \in [m]}\frac{B_v^2 \log K \log T}{\Delta_i^{\min}} + \sum_{i \in [m]}B_1\log^2\left(\frac{B_1K}{\Delta_i^{\min}}\right)\log T\right)
\end{align}

(2) if $\lambda=1$,
\begin{align}
    &Reg(T) \le Reg(T, E_{t,1})+Reg(T,E_{t,2})+ \frac{2\pi^2}{3}m\Delta_{\max}\notag\\
    &\le  \sum_{i=1}^m \log\frac{c_1^2B_v^2 K}{(\Delta_{i}^{\min})^2}\frac{48c_1^2 B_v^2 \log T}{\Delta_{i}^{\min} } (3+ \log K) \notag\\
    &+\sum_{i\in [m]} 24 c_2 B_1 \left( \log_2\frac{B_1 c_2 K}{\Delta_{i}^{\min}}\right) \left(1+\log (\frac{K\Delta_i^{\max}}{\Delta_{i}^{\min}})\right)\log T \notag\\
    &+  \sum_{i\in [m]}\frac{2c_1^2 B_v^2 }{\Delta_i^{\min}} + \frac{m\pi^2}{6}\log_2\left(\frac{c_1^2B_v^2 K}{(\Delta_{\min})^2}\right) \Delta_{\max} +  \frac{m\pi^2}{6}\log_2\frac{4B_1 c_2 K}{\Delta_{i}^{\min}} \Delta_{\max}  +\frac{2\pi^2}{3}m\Delta_{\max}\notag\\
    &\le O\left(\sum_{i \in [m]}\frac{B_v^2 \log\left(\frac{B_vK}{\Delta_i^{\min}}\right) \log K \log T}{\Delta_i^{\min}} + \sum_{i \in [m]}B_1\log^2\left(\frac{B_1K}{\Delta_i^{\min}}\right)\log T\right)
\end{align}

\subsubsection{Regret Bound Using the Infinitely Many Events (\cref{apdx_sec:inf_many_events}).}

When using the analysis in \cref{apdx_sec:inf_many_events}, by \cref{eq:e_t}, \cref{sec:upper_bound_Et1_ifm}, \cref{sec:upper_bound_Et2_ifm},
the total regret is bounded as follows

(1) if $\lambda>1$,
\begin{align}
    &Reg(T) \le Reg(T, E_{t,1})+Reg(T,E_{t,2})+ \frac{2\pi^2}{3}m\Delta_{\max}\notag\\
    &\le\sum_{i=1}^m\left(\frac{480 c_1^2 B_v^2}{ (1-2^{-(\lambda-1)})}\right)\ceil{\frac{\log K}{1.61}}^2  \frac{\log T}{\Delta_{i}^{\min}}+120c_2 B_1\sum_{i=1}^m\left(\log_2\frac{c_2B_1 K}{\Delta_{i}^{\min}}\right) \left(1+\log \frac{\Delta_{i}^{\max}}{\Delta_{i}^{\min}}\right)\notag\\ &\lceil{\frac{\log K}{1.61}\rceil}^2 \log T+     \frac{m\pi^2}{6}\log_2\left(\frac{c_1^2B_v^2 K}{\lambda(\Delta_{\min})^2}\right) \Delta_{\max} + \frac{m\pi^2}{6}\log_2\left(\frac{c_2B_1 K}{\Delta_{\min}}\right) \Delta_{\max} +\frac{2\pi^2}{3}m\Delta_{\max}\notag\\
    &\le O\left(\sum_{i \in [m]}\frac{B_v^2 \log^2 K \log T}{\Delta_i^{\min}} + \sum_{i \in [m]}B_1\log^2\left(\frac{B_1K}{\Delta_i^{\min}}\right)\log^2 K\log T\right)
\end{align}

(2) if $\lambda=1$,
\begin{align}
    Reg(T) &\le Reg(T, E_{t,1})+Reg(T,E_{t,2})+ \frac{2\pi^2}{3}m\Delta_{\max}\notag\\
    &\le \sum_{i=1}^m\left(480 c_1^2 B_v^2\right)\log_2\left(\frac{c_1^2B_v^2 K}{(\Delta_{i}^{\min})^2}\right)\ceil{\frac{\log K}{1.61}}^2  \frac{\log T}{\Delta_{i}^{\min}} \notag\\
    &+120c_2 B_1\sum_{i=1}^m\left(\log_2\frac{c_2B_1 K}{\Delta_{i}^{\min}}\right) \left(1+\log \frac{\Delta_{i}^{\max}}{\Delta_{i}^{\min}}\right)\lceil{\frac{\log K}{1.61}\rceil}^2 \log T \notag\\
    &+ \frac{m\pi^2}{6}\log_2\left(\frac{c_1^2B_v^2 K}{(\Delta_{\min})^2}\right) \Delta_{\max} + \frac{m\pi^2}{6}\log_2\left(\frac{c_2B_1 K}{\Delta_{\min}}\right) \Delta_{\max}+\frac{2\pi^2}{3}m\Delta_{\max}\notag\\
    &\le O\left(\sum_{i \in [m]}\frac{B_v^2 \log\left(\frac{B_vK}{\Delta_i^{\min}}\right) \log^2 K \log T}{\Delta_i^{\min}} + \sum_{i \in [m]}B_1\log^2\left(\frac{B_1K}{\Delta_i^{\min}}\right)\log^2 K\log T\right)
\end{align}

\subsubsection{Discussion on the Distribution-Independent Bounds}\label{apdx_sec:ind_reg}
Similar to \citep[Appendix B.3]{wang2017improving}, for the distribution-independent regret bound, we fix a gap $\Delta$ to be decided later and we consider two events on $\Delta_{S_t}$: $\{\Delta_{S_t}\le \Delta\}$ and $\{\Delta_{S_t}> \Delta\}$.

For the former case, the regret is trivially $Reg(T, \{\Delta_{S_t}\le \Delta\})\le T\Delta$. For the later case, under $\{\Delta_{S_t}> \Delta\}$ it is also straight-forward to replace all $\Delta_i^{\min}$ with $\Delta$ in \cref{apdx_sec:sum_ra} and derive $Reg(T,\{\Delta_{S_t}> \Delta\})\le O\left(\frac{mB_v^2\log K \log T}{\Delta}+mB_1\log^2(\frac{B_1K}{\Delta})\log T \right)$ if $\lambda>1$ and $Reg(T,\{\Delta_{S_t}> \Delta\})\le O\left(\frac{mB_v^2\log (\frac{B_vK}{\Delta})\log K \log T}{\Delta}+mB_1\log^2(\frac{B_1K}{\Delta})\log T \right)$ if $\lambda=1$.

Therefore, for $\lambda>1$, by selecting $\Delta=\Theta\left(\sqrt{\frac{mB_v^2\log T \log K}{T}}+\frac{B_1m\log K\log T}{T}\right)$, we have
\begin{align}
    Reg(T) \le O\left( B_v\sqrt{m(\log K) T\log T} + B_1m \log^2(KT)\log T\right)
\end{align} 

For $\lambda=1$, by selecting $\Delta=\sqrt{\frac{mB_v^2\log^2 T \log K}{T}}+\frac{B_1m\log K\log T}{T}$, we have
\begin{align}
    Reg(T) \le O\left( B_v\sqrt{m(\log K) T} \log (KT) + B_1m \log^2(KT)\log T\right)
\end{align}

\subsection{Discussion on the Lower Bounds}
We consider the degenerate case from the lower bound result \cite{merlis2020tight}, our regret bound is tight (up to polylogaritmic factors in $K$). More specifically, \citet{merlis2020tight} consider the special non-triggering CMAB (where $\Delta_{\min}=\Delta$ and $\bmu$ are not exponentially close to $0$ or $1$), and they prove $\Omega(\frac{m\gamma_g^2\log T}{\Delta})$ and $\Omega(\frac{m\gamma_g^2\log T}{\log K\Delta})$ regret lower bounds for non-monotone and monotone reward functions, respectively. In our paper, this setting is the same as letting $p_i^{D,S}=1$ for $i\in S$ and $p_i^{D,S}=0$ otherwise (i.e., TPVM condition degenerates to VM condition). According to the Remark 4 in \cref{sec:TPVM}, we know $B_v=3\sqrt{2}\gamma_g$ and $\lambda=2$ so this gives an $O(\frac{m\gamma_g^2\log K\log T}{\Delta})$ bound, which is tight to the lower bound up to a $O(\log^2 K)$ factor.



\section{Regret Analysis for CMAB with Independent Arms (Proofs Related to Theorem \ref{thm:independent})}\label{apdx_sec:independent}
\subsection{Useful definitions and Inequalities}
We first give the formal definition, the properties and the tail bounds for sub-Gaussian and sub-Exponential random variables, which helps our analysis.
\begin{definition}[Sub-Gaussian Random Variable, \cite{vershynin2018high}]
A random variable with mean $\mu=\E[X]$ is sub-Gaussian with parameter $\sigma^2$ if 
\begin{equation}
    \E[e^{\lambda(X-\mu)}] \le e^{\frac{\lambda^2\sigma^2}{2}} \text{ for any } \lambda \in \R.
\end{equation}
In this case, we write $X \in \text{SG}(\sigma^2)$.
\end{definition}

\begin{definition}[Sub-Exponential Random Variable, \cite{vershynin2018high}]
A random variable with mean $\mu=\E[X]$ is sub-Exponential with parameter $(\nu^2,b)$ if 
\begin{equation}
    \E[e^{\lambda(X-\mu)}] \le e^{\frac{\lambda^2\nu^2}{2}} \text{ for any } |\lambda| < \frac{1}{b}.
\end{equation}
In this case, we write $X \in SE(\nu^2, b)$.
\end{definition}

\begin{lemma}[Tail bounds for sub-Exponential random variables, \cite{vershynin2018high}]\label{lem:exp_tail}
Let $Y \in \text{SE}(\nu^2, b)$ with mean $\mu=\E[Y]$. Then
\begin{equation}
    \Pr[|Y-\mu| \ge \tau] \le \begin{cases}
2e^{-\tau^2/(2\nu^2)}, &\text{if $0 < \tau \le \frac{\nu^2}{b}$}\\
2e^{-\tau/(2b)}, &\text{if $\tau > \frac{\nu^2}{b}$}.\\
\end{cases}
\end{equation}
\end{lemma}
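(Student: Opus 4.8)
The plan is to prove both tail estimates by the Chernoff (exponential Markov) method applied to the moment generating function bound that defines membership in $\text{SE}(\nu^2, b)$, followed by a case analysis comparing the unconstrained Chernoff optimizer against the admissible range of the dual variable $\lambda$. First I would treat the upper tail $\Pr[Y - \mu \ge \tau]$. For any $\lambda \in [0, 1/b)$, Markov's inequality applied to $e^{\lambda(Y-\mu)}$ together with the defining sub-exponential MGF bound gives
\[
\Pr[Y-\mu \ge \tau] \le e^{-\lambda\tau}\,\E[e^{\lambda(Y-\mu)}] \le e^{-\lambda\tau + \lambda^2\nu^2/2}.
\]
The exponent $\psi(\lambda) = -\lambda\tau + \lambda^2\nu^2/2$ is a convex parabola whose unconstrained minimizer is $\lambda^\star = \tau/\nu^2$, and the whole argument reduces to choosing the best admissible $\lambda$.

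The key step is the case split on whether $\lambda^\star$ lies in $[0,1/b)$. In the regime $0 < \tau \le \nu^2/b$ we have $\lambda^\star = \tau/\nu^2 \le 1/b$, so $\lambda^\star$ is admissible and substituting it yields $\psi(\lambda^\star) = -\tau^2/(2\nu^2)$, hence $\Pr[Y-\mu \ge \tau] \le e^{-\tau^2/(2\nu^2)}$. In the regime $\tau > \nu^2/b$ the minimizer falls outside the admissible interval; since $\psi$ is strictly decreasing on $[0,\lambda^\star]$ it is decreasing throughout $[0,1/b)$, so the tightest admissible bound is attained by letting $\lambda \uparrow 1/b$. By continuity this limiting value is $\psi(1/b) = -\tau/b + \nu^2/(2b^2)$, and the hypothesis $\nu^2/b < \tau$ forces $\nu^2/(2b^2) < \tau/(2b)$, giving $\psi(1/b) < -\tau/(2b)$ and therefore $\Pr[Y-\mu \ge \tau] \le e^{-\tau/(2b)}$.

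For the lower tail I would use that the $\text{SE}(\nu^2,b)$ definition controls $\E[e^{\lambda(Y-\mu)}]$ for all $|\lambda| < 1/b$, in particular for negative $\lambda$; equivalently $-(Y-\mu)$ is itself sub-exponential with the same parameters $(\nu^2,b)$. Running the identical Chernoff argument on $-(Y-\mu)$ produces the same two-regime bound for $\Pr[Y-\mu \le -\tau]$. A union bound over the events $\{Y-\mu \ge \tau\}$ and $\{Y-\mu \le -\tau\}$ then gives $\Pr[|Y-\mu| \ge \tau]$ with the stated factor of $2$ in each regime.

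The main obstacle is a matter of care rather than depth: the admissible set $\{|\lambda| < 1/b\}$ is open, so in the second regime one cannot literally evaluate at $\lambda = 1/b$ and must instead pass to the limit (or take the infimum over $\lambda < 1/b$), invoking continuity of $\psi$ to justify the limiting exponent. I would make this explicit so that the strict inequality in the sub-exponential definition leaves no gap. Everything else is elementary calculus on the quadratic $\psi$ together with the algebraic comparison $\nu^2/(2b^2) < \tau/(2b)$, which is active precisely in the regime $\tau > \nu^2/b$.
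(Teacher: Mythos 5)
Your proof is correct: the paper gives no in-text proof of this lemma (it is cited directly from Vershynin), and your Chernoff-method argument with the case split on whether the unconstrained optimizer $\lambda^\star = \tau/\nu^2$ lies in the admissible interval $[0,1/b)$ is exactly the standard textbook proof being cited, including the correct handling of the open interval via the limit $\lambda \uparrow 1/b$ and the comparison $\nu^2/(2b^2) < \tau/(2b)$ in the heavy-tail regime. The symmetry observation that $-(Y-\mu)$ is sub-exponential with the same parameters, followed by a union bound to get the factor of $2$, is likewise the standard and correct route.
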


\begin{lemma}\label{lem:subE}(Square of Sub-Gaussian Random Variable is Sub-Exponential \citep[Appendix B]{honorio2014tight}) For $X \in SG(\sigma^2)$ and let $Y=X^2$, then 
\begin{equation}
    \E[e^{\lambda(Y-E[Y])}]\le 16\lambda^2 \sigma^4, \text{ for any } |\lambda| \le \frac{1}{4\sigma^2}.
\end{equation}
Thus, $X^2 \in \text{SE}(\nu^2, b)$ with $\nu=4\sqrt{2}\sigma^2, b = 4\sigma^2$.
\end{lemma}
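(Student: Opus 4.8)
The plan is to prove the claim for a centered variable, i.e. with $\mu = \E[X] = 0$, which is the only case in which the stated parameters $(\nu,b)$ can depend on $\sigma$ alone, and which is exactly the regime in which the lemma is invoked in the proof of Theorem~\ref{thm:independent} (there $u_{t,i} = \hat{\mu}_{t-1,i} - \mu_i$ is a zero-mean error). With this normalization $\E[e^{\lambda X}] \le e^{\lambda^2\sigma^2/2}$ for all $\lambda \in \R$, and the goal is to control the moment generating function of $Y = X^2$ recentered at its mean $\mu_Y = \E[X^2]$.

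First I would convert the sub-Gaussian MGF bound into even-moment bounds. Matching Taylor coefficients in $\E[e^{\lambda X}] \le e^{\lambda^2\sigma^2/2}$ (equivalently, invoking the $L^p$ characterization of sub-Gaussianity from \cite{vershynin2018high}) yields $\E[X^{2k}] \le k!\,(c\sigma^2)^k$ for an absolute constant $c$ and every $k \ge 1$; in particular $\mu_Y = \E[X^2] \le \sigma^2$. This factorial-times-geometric growth of the even moments is the engine of the whole argument.

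Next I would expand the centered MGF as a power series and bound it termwise. Because $Y - \mu_Y$ is centered, $\E[e^{\lambda(Y-\mu_Y)}] = 1 + \sum_{k\ge 2}\frac{\lambda^k}{k!}\E[(Y-\mu_Y)^k]$, with no linear term; this is precisely why one works with central moments rather than bounding $\E[e^{\lambda X^2}]$ directly and subtracting $\lambda\mu_Y$, which would leave a spurious order-$\lambda$ remainder. I would then estimate $|\E[(Y-\mu_Y)^k]| \le \E|Y-\mu_Y|^k \le 2^{k-1}(\E[X^{2k}] + \mu_Y^k)$ and insert the moment bounds above to get $\E|Y-\mu_Y|^k \le k!\,(C\sigma^2)^k$ for a suitable $C$. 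Feeding this into the series gives $\E[e^{\lambda(Y-\mu_Y)}] \le 1 + \sum_{k\ge 2}(C\sigma^2|\lambda|)^k = 1 + \frac{(C\sigma^2\lambda)^2}{1 - C\sigma^2|\lambda|}$, and restricting to $|\lambda| \le \frac{1}{4\sigma^2}$ keeps the geometric ratio bounded away from $1$, so the tail sums to $O(\sigma^4\lambda^2)$. Applying $1+x \le e^x$ then yields $\E[e^{\lambda(Y-\mu_Y)}] \le e^{16\lambda^2\sigma^4}$ on $|\lambda| \le \frac{1}{4\sigma^2}$, which is exactly the statement $X^2 \in \mathrm{SE}(\nu^2,b)$ with $\nu = 4\sqrt{2}\sigma^2$ and $b = 4\sigma^2$.

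The hard part will be matching the constants exactly: the sharp values $\nu^2 = 32\sigma^4$, $b = 4\sigma^2$ and the radius $|\lambda| \le \frac{1}{4\sigma^2}$ require the moment constant $c$ and the geometric-series estimate to be tracked carefully rather than with the crude $2^{k-1}$ splitting above. An alternative that produces the one-sided ($\lambda \ge 0$) raw MGF very cleanly is the Gaussian decoupling identity $\E[e^{\lambda X^2}] = \E_{g}\E_X[e^{\sqrt{2\lambda}\,gX}] \le \E_g[e^{\lambda\sigma^2 g^2}] = (1-2\lambda\sigma^2)^{-1/2}$ for $g \sim N(0,1)$ and $0 \le \lambda < \frac{1}{2\sigma^2}$; however, combining it with the exact centering $-\lambda\mu_Y$ reintroduces an order-$\lambda$ term $\lambda(\sigma^2 - \mu_Y) \ge 0$ whenever $\mu_Y < \sigma^2$, so this shortcut still needs the moment expansion (or a separate treatment of the two signs of $\lambda$, exploiting $Y \ge 0$ for the light left tail) to close the gap. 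Once the constants are pinned down, the conclusion follows immediately from the definition of a sub-Exponential random variable.
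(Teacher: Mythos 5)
There is no in-paper proof to compare against: \cref{lem:subE} is imported, constants included, from \citep[Appendix~B]{honorio2014tight}. Your proposal can therefore only be measured against the standard argument behind that citation, and it is essentially that argument: reduce to centered $X$, convert sub-Gaussianity into moment growth $\E[X^{2k}]\le k!\,(c\sigma^2)^k$, expand the \emph{centered} MGF so that the order-$\lambda$ term vanishes identically, and sum a geometric series on the restricted range of $\lambda$. Your structural remarks are also correct and worth recording: with the paper's definition of $\mathrm{SG}(\sigma^2)$ (which constrains only $X-\E[X]$), parameters depending on $\sigma$ alone are impossible for non-centered $X$ (e.g.\ $X=\mu+Z$ with $Z$ Gaussian makes $X^2-\E[X^2]$ contain the term $2\mu Z$, whose sub-exponential parameters must grow with $|\mu|$), while the variables $u_{t,i}$ to which the lemma is applied in the proof of \cref{thm:independent} are indeed mean zero; the displayed inequality in the lemma is missing an exponential (it should read $\E[e^{\lambda(Y-\E[Y])}]\le e^{16\lambda^2\sigma^4}$, which is the form you target); and the Gaussian-decoupling shortcut does leave a spurious $\lambda(\sigma^2-\mu_Y)$ term, exactly as you say.

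Two steps in your sketch need repair, only one of which you flag yourself. (i) ``Matching Taylor coefficients'' in $\E[e^{\lambda X}]\le e^{\lambda^2\sigma^2/2}$ is not a valid inference --- an inequality between functions does not imply term-wise inequality of their Taylor coefficients; the route you mention parenthetically is the correct one, e.g.\ Chernoff plus tail integration gives $\E[X^{2k}]\le 2\,k!\,(2\sigma^2)^k$. (ii) More substantively, your splitting $\E|Y-\mu_Y|^k\le 2^{k-1}(\E[X^{2k}]+\mu_Y^k)\le 2^k\E[X^{2k}]\le 2\,k!\,(4\sigma^2)^k$ produces geometric ratio $4\sigma^2|\lambda|$, which equals $1$ exactly at the claimed radius $|\lambda|=1/(4\sigma^2)$; the series is \emph{not} ``bounded away from $1$'' there, and the bound diverges. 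The fix is to use non-negativity of $Y$ and $\mu_Y$: $|Y-\mu_Y|^k\le\max(Y,\mu_Y)^k\le Y^k+\mu_Y^k$, plus Jensen, gives $\E|Y-\mu_Y|^k\le 2\E[X^{2k}]\le 4\,k!\,(2\sigma^2)^k$, hence ratio $2\sigma^2|\lambda|\le 1/2$ and $\E[e^{\lambda(Y-\mu_Y)}]\le 1+32\lambda^2\sigma^4\le e^{32\lambda^2\sigma^4}$ on the stated range. Even after this repair you land at $\mathrm{SE}(64\sigma^4,\,4\sigma^2)$ rather than the stated $\mathrm{SE}(32\sigma^4,\,4\sigma^2)$; recovering the exact constants requires the sharper bookkeeping of the cited reference. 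The discrepancy is immaterial for the paper: in the proof of \cref{thm:independent} these parameters enter only through the symbolic constants $C_3=32$ and $C_4=4$, and doubling $C_3$ changes nothing but absolute constants in the regret bound.
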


\begin{lemma}[Composition of independent sub-Exponential random variables, \cite{vershynin2018high}]\label{lem:SE_comp}
Let $Y_1,...,Y_n$ be independent sub-Exponential random variables $Y_i \in \text{SE}(\nu_i^2, b_i)$ with $\E[Y_i]=\mu_i$. Then 
\begin{equation}
    \sum_{i=1}^n(Y_i-\mu_i) \in \text{SE}\left(\sum_{i=1}^n \nu_i^2, \max_{i} b_i\right)
\end{equation}

\end{lemma}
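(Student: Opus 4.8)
The plan is to work directly with the moment generating function (MGF) characterization of sub-Exponential random variables given in the definition immediately preceding the statement. Recall that $Y_i \in \text{SE}(\nu_i^2, b_i)$ means $\E[e^{\lambda(Y_i-\mu_i)}] \le e^{\lambda^2 \nu_i^2/2}$ for every $\lambda$ with $|\lambda| < 1/b_i$. To establish that $\sum_{i=1}^n (Y_i - \mu_i)$ lies in $\text{SE}(\sum_{i=1}^n \nu_i^2, \max_i b_i)$, it suffices to verify that its centered MGF is bounded by $e^{\lambda^2 (\sum_i \nu_i^2)/2}$ on the range $|\lambda| < 1/\max_i b_i$, which is exactly the defining property for those composite parameters.

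First I would use the independence of $Y_1, \ldots, Y_n$ to factorize the MGF of the centered sum:
\begin{equation}
    \E\left[e^{\lambda \sum_{i=1}^n (Y_i - \mu_i)}\right] = \prod_{i=1}^n \E\left[e^{\lambda(Y_i - \mu_i)}\right].
\end{equation}
Then comes the step that pins down the composite $b$-parameter. I would fix any $\lambda$ with $|\lambda| < 1/\max_i b_i$ and observe that since $b_i \le \max_j b_j$ for every $i$, we have $|\lambda| < 1/\max_j b_j \le 1/b_i$, so the individual sub-Exponential MGF bound applies to every factor simultaneously. Applying it termwise and multiplying gives
\begin{equation}
    \prod_{i=1}^n \E\left[e^{\lambda(Y_i - \mu_i)}\right] \le \prod_{i=1}^n e^{\lambda^2 \nu_i^2/2} = e^{\lambda^2 (\sum_{i=1}^n \nu_i^2)/2},
\end{equation}
which is precisely the defining MGF bound for $\text{SE}(\sum_{i=1}^n \nu_i^2, \max_i b_i)$ on the required range of $\lambda$, completing the argument.

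The only subtle point — and the step I expect to be the crux — is reconciling the admissible ranges of $\lambda$ across the $n$ factors. Each factor's bound is guaranteed only on its own interval $|\lambda| < 1/b_i$, and these intervals generally differ; the composite bound can therefore only be claimed on the intersection $|\lambda| < \min_i (1/b_i) = 1/\max_i b_i$. This is exactly why the $b$-parameter of the sum is $\max_i b_i$, and it is the detail one must state carefully to avoid silently using a single factor's bound outside its valid range. Everything else reduces to the factorization of the joint MGF under independence and a routine multiplication of exponential bounds, so no further estimates are needed; the $\nu^2$-parameters simply add because the quadratic exponents add.
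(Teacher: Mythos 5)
Your proof is correct: the MGF factorization under independence, the termwise application of each factor's bound, and the restriction of $\lambda$ to the intersection $|\lambda| < 1/\max_i b_i$ is exactly the standard argument, and you correctly identify the range-of-$\lambda$ reconciliation as the only subtle point. The paper itself does not prove this lemma---it cites it directly from \cite{vershynin2018high}---and your argument is precisely the proof given in that reference, so there is nothing to reconcile beyond noting the match.
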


\subsection{Proof of \cref{thm:independent}}
Recall that at time t, $T_{t-1,i}$ is the number of times base arm $i$ is observed and $\hat{\mu}_{t-1,i}$ is the empirical mean of arm $i$.
Let $\delta_{t,i}=\hat{\mu}_{t-1,i}-\mu_i$ and by Condition~\ref{cond:subg}, $\delta_{t,i}$ is a sub-Gaussian random variable $\text{SE}(\frac{C_1 (1-\mu_i)\mu_i}{T_{t-1,i}})$ with mean $0$.

Let $u_{t,i}=\frac{\delta_{t,i}}{\sqrt{(1-\mu_i)\mu_i}}$, then $u_{t,i}$ is also sub-Gaussian $\text{SE}( \frac{C_1}{T_{t-1, i}})$.
By Condition \ref{cond:VM}, we have that $|r(S;\hat{\bmu}_{t-1})-r(S;\bmu)|\le B_v \sqrt{\sum_{i \in S}\left(\frac{|\hat{\mu}_{t,i}-\mu_i|}{\sqrt{(1-\mu_i)\mu_i}}\right)^2}= B_v \sqrt{\sum_{i \in S}u_{t,i}^2}$.
Fix a super arm $S$, we will focus on random variable $Y_{t,S}\triangleq \sum_{i \in S} u_{t,i}^2$.

Since $u_{t,i}\in \text{SG}(\frac{C_1}{T_{t-1,i}})$ with $\E[u_{t,i}]=0$, and $u_{t,i}$ are independent across $i \in [m]$, we know
\begin{equation}
    Y_{t,S} \in \text{SE}(C_1^2 C_3\sum_{i \in S}\frac{1}{T_{t-1,i}^2}, C_1C_4\frac{1}{T^{\min}_{t-1,S}})
\end{equation} due to \cref{lem:subE} and \cref{lem:SE_comp}, where $T^{\min}_{t-1,S} = \min_{i \in S} T_{t-1,i}$, constants $C_3 = 32, C_4=4$.

For the mean of $Y_{t,S}$, we can also show that $\E[Y_{t,S}]=\sum_{i \in [S]}\E[u_{t,i}^2]= \sum_{i \in S}(\text{Var}[u_{t,i}]+(\E[u_{t,i}])^2) =  \sum_{i \in S}\text{Var}[u_{t,i}] \le \sum_{i \in S}  \frac{C_1}{T_{t-1,i}}$, where the last inequality is because the variance of any sub-Gaussian random variable $X \in \text{SE}(\sigma^2)$ is smaller than $\sigma^2$~\cite{vershynin2018high}.

For such a sub-Exponential random variable, we can give the confidence interval for $Y_{t,S}$ based on the tail bound \cref{lem:exp_tail}.
For any action $S \in \cS$, any time $t\in [T]$, it holds with probability $1-\delta$,
\begin{equation}\label{eq:SG_interval}
    Y_{t,S} \le \begin{cases}
\E[Y_{t,S}] + \sqrt{2C_1^2 C_3\log(\frac{2}{\delta})\sum_{i \in S}\frac{1}{T_{t-1,i}^2}}, &\text{if $\frac{C_1C_4}{T^{\min}_{t-1, S}}\sqrt{2\log(\frac{2}{\delta})} \le \sqrt{C_1^2C_3\sum_{i \in S}\frac{1}{T_{t-1,i}^2}}$}\\
\E[Y_{t,S}] + 2C_1C_4\log(\frac{2}{\delta})\frac{1}{T^{\min}_{t-1, S}}, &\text{otherwise}.\\
\end{cases}
\end{equation}
Equivalently, we can rewrite the above inequality by merging the above two segments as $Y_{t,S} \le C_1\sum_{i \in S}\frac{1}{T_{t-1,i}} + \max\{\sqrt{2C_1^2 C_3\log(\frac{2}{\delta})\sum_{i \in S}\frac{1}{T_{t-1,i}^2}}, 2C_1C_4\log(\frac{2}{\delta})\frac{1}{T^{\min}_{t-1, S}}\}$ and with probability at least $1-\delta$, it holds that
\begin{equation}\label{apdx_eq:se_conf_interv}
    |r(S;\hat{\bmu}_{t-1})-r(S;\bmu)| \le \rho_t(S),
\end{equation}

where $\rho_{t}(S)=B_v\sqrt{\sum_{i \in S} \frac{C_1}{T_{t-1},i}+ \max\left\{\sqrt{2C_1^2C_3\sum_{i \in S}\frac{\log(\frac{2}{\delta})}{T_{t-1,i}^2}}, \frac{2C_1C_4\log(\frac{2}{\delta})}{T^{\min}_{t-1, S}}\right\}}$. If $S$ is selected as the action in any round $t$, then
{
\begin{align}
\Delta_S &= \alpha r(S^*; \boldsymbol{\mu}) - r(S;\boldsymbol{\mu})\\
&\le \alpha ( r(S^*;\hat{\boldsymbol{\mu}}_{t-1}) + \rho_t(S^*)) - r(S; \boldsymbol{\mu}) & \mbox{by \cref{apdx_eq:se_conf_interv} over $S^*$} \\
&\le \bar{r}_t(S) - r(S; \boldsymbol{\mu}) & \mbox{$S$ is produced by $\bar{O}$ in line \ref{line:SE_oracle} of \cref{alg:SECUCB}}\\
&= r(S;\hat{\boldsymbol{\mu}}_{t-1}) + \rho_t(S) - r(S; \boldsymbol{\mu})  \\
&\le 2\rho_t(S) & \mbox{by \cref{apdx_eq:se_conf_interv} over $S$},
\end{align}}

In other words, $S$ can only be selected when $\rho_t(S) > \Delta_S/2$.

Now we consider two different cases based on the $\sum_{i \in S}\frac{1}{T_{t-1,i}}$.

\textbf{Case 1:} When $\sum_{i \in S}\frac{1}{T_{t-1,i}} < \frac{C_4 \Delta_S^2}{4C_1(C_3+C_4)(B_v)^2}$,

We first show by contraction that the confidence interval lies in the second part of \cref{eq:SG_interval}, i.e. $\rho_t(S)=B_v\sqrt{C_1\sum_{i \in S} \frac{1}{T_{t-1},i}+ 8C_1\log(\frac{2}{\delta})\frac{1}{T^{\min}_{t-1, S}}}$. Based on \cref{lem:exp_tail}, if the confidence interval lies in the first part, then 
\begin{align}
    \tau &\le \frac{\nu^2}{b} = \frac{C_1^2 C_3\sum_{i \in S}\frac{1}{T_{t-1,i}^2}}{C_1C_4\frac{1}{T^{\min}_{t-1,S}}}=\sum_{i \in S}\frac{C_1C_3}{C_4}\frac{T^{\min}_{t-1,S}}{T^2_{t-1, i}}\\
    &\le \sum_{i \in S}\frac{C_1C_3}{C_4}\frac{1}{T_{t-1, i}}\\
    &\le \frac{C_1C_3}{C_4}\frac{C_4 \Delta_S^2}{4C_1(C_3+C_4)(B_v)^2}=\frac{C_3 \Delta_S^2}{4(C_3+C_4)(B_v)^2}.
\end{align}
This indicates that 
\begin{align}
    \rho_t(S)&= B_v \sqrt{\E[Y_{t,S}]+\tau} \\
    &\le B_v \sqrt{\E[Y_{t,S}]+\frac{C_3 \Delta_S^2}{4(C_3+C_4)(B_v)^2}}\\
    &\le  B_v\sqrt{C_1 \sum_{i \in S}\frac{1}{T_{t-1,i}} + \frac{C_3 \Delta_S^2}{4(C_3+C_4)(B_v)^2}}\\
    &\le  B_v\sqrt{\frac{C_4 \Delta_S^2}{4(C_3+C_4)B_v^2} + \frac{C_3 \Delta_S^2}{4(C_3+C_4)(B_v)^2}}\\
    \label{apdx_eq:contradiction}&= \Delta_S/2,
\end{align}
which contradicts the requirement  $\rho_t(S) > \Delta_S/2$.

Therefore, the confidence interval lies in the second part of \cref{eq:SG_interval} with $\tau = \frac{2C_1C_4\log(\frac{2}{\delta})}{T^{\min}_{t-1, S}}$.
By the same analysis, we need $\tau \ge \frac{C_3 \Delta_S^2}{4(C_3+C_4)(B_v)^2}$ (or otherwise we suffer from the same contradiction as shown in \cref{apdx_eq:contradiction}), hence we require $\Delta_S \le \sqrt{\frac{8(C_3+C_4)(B_v)^2C_1C_4 \log (\frac{2}{\delta})}{C_3 T^{\min}_{t-1, S}}}$.

Equivalently, we require $\Delta_S \le 2\sqrt{\frac{8(C_3+C_4)(B_v)^2C_1C_4 \log (\frac{2}{\delta})}{C_3 T^{\min}_{t-1, S}}} - \Delta_S$ whenever $S$ is selected, due to the use of the reverse amortization trick as in \cref{eq:ra_def_reverse}.

Now we can define the regret allocation, We first define a regret allocation function 
\begin{equation}
\kappa_{i,\delta}(S, \ell) = \begin{cases}
2\sqrt{\frac{8(C_3+C_4)B_v^2C_1C_4 \log (\frac{2}{\delta})}{C_3 \ell}}, &\text{if $1\le\ell\le L_{i,\delta}$ and $i = \argmin_{j \in S}{T_{t-1,j}}$}\\
0, &\text{otherwise,}
\end{cases}
\end{equation}
where $L_{i,\delta}=\frac{32(C_3+C_4)B_v^2C_1C_4 \log (\frac{2}{\delta})}{C_3 (\Delta_i^{\min})^2}$. Also note that if there are multiple arms that achieves the minimum, select the one with minimum index as the min-arm.

It can be easily shown that $\Delta_{S_t} \le \sum_{i \in S_t} \kappa_{i,\delta}(S_t, T_{t-1,i})$ as follows.

Let $j=\argmin_{i \in S_t}{T_{t-1,i}}$.
If $T_{t-1,j} > L_{j,\delta}$, then $\Delta_{S_t} \le 2\sqrt{\frac{8(C_3+C_4)B_v^2C_1C_4 \log (\frac{2}{\delta})}{C_3 T^{\min}_{t-1, S}}}-\Delta_{S_t} \le 2\sqrt{\frac{8(C_3+C_4)B_v^2C_1C_4 \log (\frac{2}{\delta})}{C_3 T_{t-1, j}}} -\Delta_{S_t}\le \Delta_{j}^{\min}-\Delta_{S_t}\le0=\sum_{i \in S_t}\kappa_{i,\delta}(S_t, T_{t-1,i})$.
If $T_{t-1,j} \le L_{j,\delta}$, then $\Delta_{S_t} \le 2\sqrt{\frac{8(C_3+C_4)B_v^2C_1C_4 \log (\frac{2}{\delta})}{C_3 T^{\min}_{t-1, S}}}-\Delta_{S_t}  \le 2\sqrt{\frac{8(C_3+C_4)B_v^2C_1C_4 \log (\frac{2}{\delta})}{C_3 T^{\min}_{t-1, S}}}= 2\sqrt{\frac{8(C_3+C_4)B_v^2C_1C_4 \log (\frac{2}{\delta})}{C_3 T_{t-1,j}}} + \sum_{i \in  S_t \backslash \{j\}} 0 =\sum_{i \in S_t}\kappa_{i,\delta}(S_t, T_{t-1,i}) $

Hence the regret for case 1 is upper bounded by 
\begin{align}
    \sum_{t=1}^T\sum_{i \in S_t}\Delta_{S_t}&\le \sum_{t=1}^T\sum_{i \in S_t}\kappa_{i,\delta}(S_t,T_{t-1,i})\\
    &\le\sum_{i \in [m]}\sum_{\ell=1}^{L_{i,\delta}}2\sqrt{\frac{8(C_3+C_4)B_v^2C_1C_4 \log (\frac{2}{\delta})}{C_3 \ell}}\\
    &\le \sum_{i \in [m]}2\sqrt{\frac{8L_{i,\delta}(C_3+C_4)B_v^2C_1C_4 \log (\frac{2}{\delta})}{C_3 }}\int_{x=0}^{L_{i,\delta}}\sqrt{\frac{1}{x}}\\
    &\le \sum_{i \in [m]} \frac{64(C_3+C_4)B_v^2C_1C_4 \log (\frac{2}{\delta})}{C_3\Delta_i^{ \min}}\label{apdx_eq:ind_case_1_regret}
\end{align}

\textbf{Case 2:} When $\sum_{i \in S}\frac{1}{T_{t-1,i}} \ge \frac{C_4 \Delta_S^2}{4C_1(C_3+C_4)B_v^2}$,

This case implies that $\frac{K}{T^{\min}_{t-1,S}}\ge\sum_{i \in S}\frac{1}{T_{t-1,i}} \ge \frac{C_4 \Delta_S^2}{4C_1(C_3+C_4)B_v^2}$.
Rewriting the inequality, we have $\Delta_S\le \sqrt{\frac{4KC_1(C_3+C_4)B_v^2}{C_4 T^{\min}_{t-1,S}}}$ and following the similar regret allocation and argument from case 1 (where we require $\Delta_S \le \sqrt{\frac{8(C_3+C_4)(B_v)^2C_1C_4 \log (\frac{2}{\delta})}{C_3 T^{\min}_{t-1, S}}}$), we have the case 2 contributes at most $\frac{32mKC_1(C_3+C_4)B_v^2}{C_4 \Delta_{\min}}$, which is irrelevant of the time horizon $T$.

Now for the first $m$ rounds so that each arm is observed at least once (i.e., counter $T_{t-1,i} \ge 1$ from $t \ge m+1$ rounds) and consider the bad events when there exists $t \in [T], S \in \cS$ such that \cref{eq:SG_interval} does not hold, by setting $\delta=(|\cS|T)^{-1}$ and using the union bound, the additional regret is upper bounded by
\begin{align}
    m\Delta_{\max} + \sum_{t\in[T]}\sum_{S \in \cS} \frac{1}{|\cS|T}\Delta_{\max} \le (m+1)\Delta_{\max}. 
\end{align}

Therefore the total regret is upper bounded by (using the similar proof following \cref{apdx_eq:oracle_fail} for the failure of oracle),
\begin{align}
    &Reg(T)\le \sum_{i \in [m]} \frac{64(C_3+C_4)B_v^2C_1C_4 \log (2|\cS|T)}{C_3\Delta_i^{ \min}}+\frac{32mKC_1(C_3+C_4)B_v^2}{C_4 \Delta_{\min}}+(m+1)\Delta_{\max}\\
    &\le \sum_{i \in [m]} \frac{64(C_3+C_4)B_v^2C_1C_4 \log (2T)}{C_3\Delta_i^{ \min}}\notag\\
    &+\sum_{i \in [m]} \frac{64(C_3+C_4)B_v^2C_1C_4 \log (|\cS|)}{C_3\Delta_i^{ \min}}+\frac{32mKC_1(C_3+C_4)B_v^2}{C_4 \Delta_{\min}}+(m+1)\Delta_{\max}\\
    &\le O\left(\sum_{i \in [m]}\frac{B_v^2\log T}{\Delta_i^{\min}}\right)\label{apdx_eq:reg_indpendent_final}
\end{align}

For the distribution-independent regret, similar to \cref{apdx_sec:ind_reg}, $Reg(T)\le \frac{m B_v^2\log T}{\Delta} + T\Delta$, when $T \rightarrow \infty$. By setting $\Delta=\Theta\left(\sqrt{
\frac{mB_v^2\log T}{T}}\right)$, we have
\begin{align}
    Reg(T)\le O\left(B_v\sqrt{mT\log T}\right).
\end{align}

\subsection{Computational Efficient Oracle for SESCB}\label{apdx_sec:eff_escb}
{Recall that $\rho'_t(S)=B_v\sqrt{\sum_{i \in S} \frac{C_1}{T_{t-1,i}}+ 8C_1\sqrt{\sum_{i \in S}\frac{\log(2|\mathcal{S}|T)}{T_{t-1,i}^2}}+ \frac{8C_1\log(2|\mathcal{S}|T)}{T^{\min}_{t-1, S}} }$ and ${r}'_t(S)=r(S;\hat{\boldsymbol{\mu}}_{t-1})+\rho'_t(S)$. For the submodularity of $r'_t(S)$, it suffices to show $\rho'_t(S)$ is monotone submodular when $r(S;\bmu)$ is  monotone submodular.  We know that $g(f(S))$ is submodular if $f(S)$ is submodular and $g$ is a non-decreasing concave function, so it suffices to show three terms within the (non-decreasing concave) square root  in $\rho'_t(S)$ are submodular. The first term is a modular function, the second term is the square root of a modular function, and the third term can be rewritten as $\max_{i \in S}\frac{8C_1\log(2|\mathcal{S}|T)}{T_{t-1, i}}$, which is also submodular.

For the regret bound when using $\rho'_t(S)$ instead of $\rho_t(S)$, it can be seen that $\rho_t(S) \le \rho'_t(S) \le \sqrt{2}\rho_t(S)$ for all $S \in \cS$, since $\max\{a,b\} \le a + b \le 2 \{a,b\}$ for any $a,b \in \R$. So we can equivalently use $B'_v=\sqrt{2}B_v$ to replace $B_v$ and repeat the same proof in \cref{thm:independent} with an additional factor of $2$ in \cref{apdx_eq:reg_indpendent_final}. }

\section{Proof of TPVM Smoothness Conditions for Various Applications (Related to Theorem \ref{lem:app_tab})}\label{apdx_sec:proof_app}
For convenience, we show our table again in this section.
        \begin{table*}[h]
	\caption{Summary of the coefficients, regret  bounds and improvements for various applications.}	\label{tab:app_restate}	
		\resizebox{1.00\columnwidth}{!}{
	\centering
	\begin{threeparttable}
	\begin{tabular}{|ccccc|}
		\hline
		\textbf{Application}&\textbf{Condition}& \textbf{$(B_v, B_1, \lambda)$} & \textbf{Regret}& \textbf{Improvement}\\
	\hline

		Disjunctive Cascading Bandits~\citep{kveton2015combinatorial} & {\TPVMm} & $(1,1,2)$ & $O(\sum_{i \in [m]}\frac{\log K\log T}{\Delta_i^{\min}})$ & $O(K/\log K)$ \\
			\hline
		Conjunctive Cascading Bandits~\citep{kveton2015combinatorial} & TPVM & $(1,1,1)$ & $O( \sum_{i \in [m]}\log \frac{K}{\Delta_{\min}}\frac{ \log K\log T}{\Delta_{i,\min}})$ & $O(K/(\log K\log \frac{K}{\Delta_{\min}}))$\\
			\hline
		Multi-layered Network Exploration~\citep{liu2021multi} & TPVM & $(\sqrt{1.25|V|},1,2)$ $^\dagger$ & $O(\sum_{i \in \cA}\frac{|V|\log (n|V|)\log T}{\Delta_i^{\min}})$ & $O(n/\log (n|V|))$\\
			\hline
	    Influence Maximization on DAG \cite{wang2017improving} & {\TPVMm} &$(\sqrt{L}|V|,|V|,1)$ $^{\dagger}$  & $O( \sum_{i \in [m]}\log \frac{|E|}{\Delta_{\min}}\frac{L|V|^2\log |E| \log T}{\Delta_i^{\min}})$ & $O(|E|/(L\log |E|\log \frac{|E|}{\Delta_{\min}}))$\\
	    \hline
	    Probabilistic Maximum Coverage \cite{merlis2019batch}$^*$ & VM & $(3\sqrt{2|V|},1,-)$ & $O(\sum_{i \in [m]}\frac{|V|\log T}{\Delta_i^{\min}})$ & $O(\log^2 k)$.\\
	    \hline
	\end{tabular}
	  \begin{tablenotes}[para, online,flushleft]
	\footnotesize
	\item[]\hspace*{-\fontdimen2\font}$^*$ This row is for the application in \cref{sec:independent} and the rest of rows are for \cref{sec:TPVM}; 
	\item[]\hspace*{-\fontdimen2\font}$^{\dagger}$ $|V|, |E|, n, k, L$ denotes the number of target nodes, the number of edges that can be triggered by the set of seed nodes, the number of layers, the number of seed nodes and the length of the longest directed path, respectively.
	\end{tablenotes}
			\end{threeparttable}
	}
\end{table*}

\subsection{Combinatorial cascading bandits}
Combinatorial cascading bandits has two categories: conjunctive cascading bandits and disjunctive cascading bandits~\cite{kveton2015combinatorial}.

\textbf{Disjunctive form}
For the disjunctive form, we want to select an ordered list $S$ of $K$ items from total $L$ items, so as to maximize the probability that at least one of the outcomes of the selected items are $1$.
Each item is associated with a Bernoulli random variable with mean $\mu_i$, indicating whether the user will be satisfied with the item if he scans the item.
This setting models the movie recommendation system where the user sequentially scans a list of recommended items and the system is rewarded when the user is satisfied with any recommended item.
After the user is satisfied with any item or scans all $K$ items but is not satisfied with any of them, the user leaves the system.
Due to this stopping rule, the agent can only observe the outcome of items until (including) the first item whose outcome is $1$. If there are no satisfactory items, the outcomes must be all $0$. In other words, the triggered set is the prefix set of items until the stopping condition holds.
For this application, we have the following lemma. 

\begin{lemma}\label{lem:app_cas_dis}
Disjunctive conjunctive cascading bandit problem satisfies {\TPVMm} bounded smoothness condition with coefficient $(B_v,B_1,\lambda)=(1,1,2)$.
\end{lemma}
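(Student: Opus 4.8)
The plan is to verify Condition~\ref{cond:TPVMm} directly from the closed form of the disjunctive reward. Writing the ordered list as $S=(1,\dots,K)$, the expected reward is $r(S;\bmu)=1-\prod_{i=1}^{K}(1-\mu_i)$, and since an item is scanned only after every earlier item fails, the triggering probability of the $i$-th item is exactly $p_i^{D,S}=\prod_{j<i}(1-\mu_j)$. Because Condition~\ref{cond:TPVMm} is directional I may take $\bmu'=\bmu+\boldzeta+\boldeta\ge\bmu$, so monotonicity lets me drop the absolute value and write $r(S;\bmu')-r(S;\bmu)=\prod_{i=1}^{K}(1-\mu_i)-\prod_{i=1}^{K}(1-\mu_i')\ge 0$.

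First I would telescope this product difference by converting the coordinates from $\mu$ to $\mu'$ one index at a time, from left to right. Setting $P_k=\prod_{j\le k}(1-\mu_j)\prod_{j>k}(1-\mu_j')$, so that $P_0=\prod_j(1-\mu_j')$ and $P_K=\prod_j(1-\mu_j)$, I obtain
\begin{equation}
r(S;\bmu')-r(S;\bmu)=\sum_{i=1}^{K}(P_i-P_{i-1})=\sum_{i=1}^{K}(\mu_i'-\mu_i)\,\Big(\prod_{j<i}(1-\mu_j)\Big)\Big(\prod_{j>i}(1-\mu_j')\Big).
\end{equation}
The reason for choosing this particular hybrid telescoping is that the left product is precisely $p_i^{D,S}$, while $\mu_i'-\mu_i=\zeta_i+\eta_i$ splits into the two contributions the condition asks for.

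Next I would dispatch the $\boldeta$ part immediately: since $\prod_{j>i}(1-\mu_j')\le 1$ and every factor is nonnegative, its contribution is at most $\sum_i p_i^{D,S}\eta_i$, giving $B_1=1$. For the $\boldzeta$ part I would bound $\prod_{j>i}(1-\mu_j')\le\prod_{j>i}(1-\mu_j)\le\sqrt{\prod_{j>i}(1-\mu_j)}$ (using $\mu_j'\ge\mu_j$ and $x\le\sqrt{x}$ on $[0,1]$), then multiply and divide by the variance factor $\sqrt{(1-\mu_i)\mu_i}$ and apply Cauchy--Schwarz with $a_i=\zeta_i\,p_i^{D,S}/\sqrt{(1-\mu_i)\mu_i}$ and $b_i=\sqrt{(1-\mu_i)\mu_i\prod_{j>i}(1-\mu_j)}$. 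This reproduces the square-root term of \cref{eq:gini} with $\lambda=2$, multiplied by the constant
\begin{equation}
B_v=\sqrt{\sum_{i=1}^{K}(1-\mu_i)\mu_i\prod_{j>i}(1-\mu_j)}.
\end{equation}

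The crux is then to show $B_v\le 1$. Writing $R_i=\prod_{j=i}^{K}(1-\mu_j)$ with $R_{K+1}=1$, each summand satisfies $(1-\mu_i)\mu_i R_{i+1}\le\mu_i R_{i+1}=R_{i+1}-R_i$, so the sum telescopes to $R_{K+1}-R_1=1-\prod_{j=1}^{K}(1-\mu_j)\le 1$, whence $B_v\le 1$ and the coefficients are $(1,1,2)$ as claimed. I expect the main obstacle to be precisely this interplay of two telescopings: choosing the decomposition so that $p_i^{D,S}=\prod_{j<i}(1-\mu_j)$ emerges cleanly in the coefficient (rather than a mixed product), and then recognizing that the leftover product $\prod_{j>i}(1-\mu_j)$, once paired with the variance factor under Cauchy--Schwarz, assembles into a \emph{second} telescoping sum that is bounded by $1$. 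Everything else --- the split into $\boldzeta$ and $\boldeta$ and the elementary inequality $x\le\sqrt{x}$ --- is routine.
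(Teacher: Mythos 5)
Your proof is correct and follows essentially the same route as the paper's: the same hybrid telescoping that makes $p_i^{D,S}=\prod_{j<i}(1-\mu_j)$ appear as a clean coefficient, the same insertion of the variance factor $\sqrt{(1-\mu_i)\mu_i}$ before Cauchy--Schwarz, and the same second telescoping showing $B_v=\sqrt{\sum_{i}(1-\mu_i)\mu_i\prod_{j>i}(1-\mu_j)}\le 1$. The only cosmetic difference is that the paper discards the factor $(1-\mu_i)\le 1$ just before applying Cauchy--Schwarz rather than inside the final telescoping bound, which is immaterial.
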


\begin{proof}
Without loss of generality, let the action be $\{1,...,K\}$, then the reward function is $r(S;\bmu)=1-\prod_{j=1}^K(1-\mu_j)$ and the triggering probability is $p_{i}^{D,S}=\prod_{j=1}^{i-1}(1-\mu_j)$.
Let $\bar{\bmu}=(\bar{\mu}_1, ..., \bar{\mu}_K)$ and $\bmu=(\mu_1, ..., \mu_K)$, where $\bar{\bmu}=\bmu+\boldzeta + \boldeta$ with $\bar{\bmu}, \bmu \in (0,1)^K, \boldzeta, \boldeta \in [0,1]^K$.
\begin{align}
    &\abs{r(S;\bar{\bmu})-r(S;\bmu)}\notag\\
    &=\prod_{i=1}^K(1-\mu_i)-\prod_{i=1}^K(1-\bar{\mu}_i)\label{apdx_eq:cas_dis_mono}\\
    &=\sum_{i \in [K]} (\bar{\mu}_i-\mu_i)((1-\mu_1)...(1-\mu_{i-1}) (1-\bar{\mu}_{i+1})...(1-\bar{\mu}_{K}))\label{apdx_eq:cas_dis_tlscp}\\
    &=\sum_{i \in [K]} (\zeta_i+\eta_i)((1-\mu_1)...(1-\mu_{i-1}) (1-\bar{\mu}_{i+1})...(1-\bar{\mu}_{K}))\label{apdx_eq:cas_dis_def_zeta}\\
    &\le \sum_{i \in [K]} (\zeta_i)((1-\mu_1)...(1-\mu_{i-1})(1-\mu_{i+1})...(1-\mu_{K})) +  \sum_{i \in [K]} (\eta_i)((1-\mu_1)...(1-\mu_{i-1}))\label{apdx_eq:cas_dis_bigger}\\
    &\le \sum_{i \in [K]} \frac{\zeta_i p_i^{D,S}}{\sqrt{(1-\mu_i)\mu_i}} \sqrt{(1-\mu_{i+1})...(1-\mu_{K}) \mu_i} + \sum_{i \in [K]} \eta_i p_i^{D,S}\label{apdx_eq:cas_dis_two_sides}\\
    &\le\sqrt{ \sum_{i \in [K]} \frac{\zeta_i^2}{(1-\mu_i)\mu_i}(p_i^{D,S})^2 }\sqrt{ \sum_{i \in [K]} (1-\mu_{i+1})...(1-\mu_{K}) \mu_i } + \sum_{i \in [K]} \eta_i p_i^{D,S}\label{apdx_eq:cas_dis_cauchy}\\
    &\le \sqrt{ \sum_{i \in [K]} \frac{\zeta_i^2}{(1-\mu_i)\mu_i}(p_i^{D,S})^2 }\sqrt{ 1-(1-\mu_1)...(1-\mu_K) } + \sum_{i \in [K]} \eta_i p_i^{D,S}\label{apdx_eq:cas_dis_ineq}\\
    &\le\sqrt{ \sum_{i \in [K]} \frac{\zeta_i^2}{(1-\mu_i)\mu_i}(p_i^{D,S})^2 } + \sum_{i \in [K]} \eta_i p_i^{D,S},
\end{align}
where \cref{apdx_eq:cas_dis_mono} uses $\mu_i \le \bar{\mu}_i, i \in [m]$, \cref{apdx_eq:cas_dis_tlscp} is by telescoping the reward difference, \cref{apdx_eq:cas_dis_def_zeta} is by definition of $\boldzeta,\boldeta$, \cref{apdx_eq:cas_dis_bigger} is due to $\bar{\mu}_i \ge {\mu}_i, i \in [m]$ , \cref{apdx_eq:cas_dis_two_sides} is due to the definition of $p_i^{D,S}$ and we multiply the first term by $\sqrt{\mu_i}$ but divide it by $\sqrt{(1-\mu_i)\mu_i}$, \cref{apdx_eq:cas_dis_cauchy} is due to the Cauchy–Schwarz inequality on the first term, \label{apdx_eq:cas_dis_ineq} is by math calculation.
Hence, $(B_g, B_1, \lambda)=(1,1,2)$.

\end{proof}

\textbf{Conjunctive form.}
For the conjunctive form, the learning agent wants to select $K$ paths from total $L$ paths (i.e., base arms) so as to maximize the probability that the outcomes of the selected paths are all $1$.
Each item is associated with a Bernoulli random variable with mean $\mu_i$, indicating whether the path will be live if the package will transmit via this path.
Such a setting models the network routing problem~\cite{kveton2015combinatorial}, where the items are routing paths and the package is delivered when all paths are alive. 
The learning agent will observe the outcome of the first few paths till the first one that is down, since the transmission will stop if any of the path is down. In other words, the triggered set is the prefix set of paths until the stopping condition holds.
We have the following lemma.
\begin{lemma}\label{lem:app_cas_con}
Conjunctive cascading bandit problem satisfies TPVM bounded smoothness condition with coefficient $(B_v,B_1,\lambda)=(1,1,1)$.
\end{lemma}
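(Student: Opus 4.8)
The plan is to mirror the disjunctive argument of \cref{lem:app_cas_dis}, but with the telescoping oriented so that the triggering probability is recovered from the \emph{original} mean vector. Without loss of generality take the action to be $S=\{1,\dots,K\}$, so the conjunctive reward is $r(S;\bmu)=\prod_{j=1}^K\mu_j$ and the triggering probability of arm $i$ is $p_i^{D,S}=\prod_{j=1}^{i-1}\mu_j$, since path $i$ is observed exactly when all preceding paths are live. Writing $\bmu'=\bmu+\boldzeta+\boldeta$, the first step is to apply the telescoping identity
\[
\prod_{j=1}^K\mu'_j-\prod_{j=1}^K\mu_j=\sum_{i=1}^K(\mu'_i-\mu_i)\Big(\prod_{j<i}\mu_j\Big)\Big(\prod_{j>i}\mu'_j\Big),
\]
which places the original means in the prefix (so that $\prod_{j<i}\mu_j=p_i^{D,S}$ appears exactly) and the perturbed means in the suffix. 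Substituting $\mu'_i-\mu_i=\zeta_i+\eta_i$ and taking absolute values splits the bound into a $\boldzeta$-part and an $\boldeta$-part.

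Second, I would bound each suffix product crudely by $\prod_{j>i}\mu'_j\le 1$ (valid since $\bmu'\in(0,1)^m$). This is the step that distinguishes the conjunctive case from the disjunctive one: because a positive perturbation \emph{increases} the conjunctive product, the suffix factor cannot be folded into a telescoping-to-one quantity as in \cref{lem:app_cas_dis}, so one power of $p_i^{D,S}$ is lost and the exponent degrades to $\lambda=1$ rather than $\lambda=2$. After this step the $\boldeta$-part is immediately $\sum_i p_i^{D,S}|\eta_i|$, giving $B_1=1$, while the $\boldzeta$-part is at most $\sum_i |\zeta_i|\, p_i^{D,S}$.

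Third, to reach the variance-modulated form I would write $|\zeta_i|\, p_i^{D,S}=\frac{|\zeta_i|\sqrt{p_i^{D,S}}}{\sqrt{(1-\mu_i)\mu_i}}\cdot\sqrt{p_i^{D,S}(1-\mu_i)\mu_i}$ and apply the Cauchy--Schwarz inequality:
\[
\sum_i|\zeta_i|\,p_i^{D,S}\le\sqrt{\sum_i\frac{\zeta_i^2}{(1-\mu_i)\mu_i}\,p_i^{D,S}}\cdot\sqrt{\sum_i p_i^{D,S}(1-\mu_i)\mu_i}.
\]
The crux is controlling the second factor, for which I would use the probabilistic observation that $p_i^{D,S}(1-\mu_i)=\prod_{j<i}\mu_j\,(1-\mu_i)$ is precisely the probability that the first down path is path $i$; these events are mutually exclusive over $i$, so $\sum_i p_i^{D,S}(1-\mu_i)=1-\prod_j\mu_j\le 1$, and dropping the extra factor $\mu_i\le1$ gives $\sum_i p_i^{D,S}(1-\mu_i)\mu_i\le1$. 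Hence the second factor is at most $1$ and we obtain $B_v=1$ with $\lambda=1$.

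I expect the main obstacle to be this last factor: recognizing the ``first-failure'' disjointness that yields $\sum_i p_i^{D,S}(1-\mu_i)\le 1$, together with choosing the correct orientation of the telescoping so that $p_i^{D,S}$ (a prefix product of the \emph{original} means) is recovered exactly---the opposite orientation would leave a prefix product of the perturbed means that cannot be controlled by $p_i^{D,S}$. Finally, I would note that the argument never invokes $\boldzeta,\boldeta\ge\bzero$, since absolute values are taken at the outset; it therefore establishes the (undirectional) TPVM condition, consistent with \cref{tab:app}, rather than only the weaker \TPVMm\ version.
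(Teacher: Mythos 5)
Your proposal is correct and follows essentially the same route as the paper's proof: the same telescoping with original means in the prefix (so that $p_i^{D,S}=\prod_{j<i}\mu_j$ appears exactly) and perturbed means in the suffix bounded by $1$, the same multiply-and-divide by $\sqrt{(1-\mu_i)\mu_i}$ followed by Cauchy--Schwarz, and the same key identity $\sum_i p_i^{D,S}(1-\mu_i)=1-\prod_j\mu_j\le 1$. The only cosmetic difference is that your factorization of $|\zeta_i|\,p_i^{D,S}$ is an exact identity, whereas the paper's version retains a harmless extra factor of $1/\sqrt{\mu_i}$ before applying Cauchy--Schwarz; both yield $(B_v,B_1,\lambda)=(1,1,1)$ for the undirectional TPVM condition.
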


\begin{proof}
Without loss of generality, suppose the selected base arms are $\{1,...,K\}$, then the reward function is $r(S;\bmu)=\prod_{j=1}^K\mu_j$ and the triggering probability is $p_{i}^{\mu,S}=\prod_{j=1}^{i-1}\mu_j$.
Let $\bar{\bmu}=(\bar{\mu}_1, ..., \bar{\mu}_K)$ and $\bmu=(\mu_1, ..., \mu_K)$, where $\bar{\bmu}=\bmu+\boldzeta + \boldeta$ with $\bar{\bmu}, \bmu \in (0,1)^K, \boldzeta, \boldeta \in [-1,1]^K$.

\begin{align}
    &\abs{r(S;\bar{\bmu})-r(S;\bmu)}\notag\\
    &=\abs{\prod_{i=1}^K\bar{\mu}_i-\prod_{i=1}^K\mu_i}\notag\\
    &\le\abs{\sum_{i \in [K]} (\bar{\mu}_i-\mu_i)(\mu_1...\mu_{i-1} \bar{\mu}_{i+1}...\bar{\mu}_{K})}\notag\\
    &=\sum_{i \in [K]} (\abs{\zeta_i}+\abs{\eta_i})(\mu_1...\mu_{i-1} \bar{\mu}_{i+1}...\bar{\mu}_{K})\notag\\
    &\le \sum_{i \in [K]} \abs{\zeta_i}(\mu_1...\mu_{i-1}) +  \sum_{i \in [K]} \abs{\eta_i}(\mu_1...\mu_{i-1})\label{apdx_eq:cas_con_mono}\\
    &\le \sum_{i \in [K]} \frac{\abs{\zeta_i}}{\sqrt{(1-\mu_i)\mu_i}}\sqrt{p_i^{D,S}} \sqrt{(\mu_1...\mu_{i-1}) (1-\mu_i)} + \sum_{i \in [K]} \abs{\eta_i}p_i^{D,S}\label{apdx_eq:cas_con_bigger}\\
    &\le\sqrt{ \sum_{i \in [K]} \frac{\zeta_i^2}{(1-\mu_i)\mu_i}p_i^{D,S} }\sqrt{ \sum_{i \in [K]} (\mu_1...\mu_{i-1}) (1-\mu_i) } + \sum_{i \in [K]} \abs{\eta_i}p_i^{D,S}\label{apdx_eq:cas_con_cauchy}\\
    &\le \sqrt{ \sum_{i \in [K]} \frac{\zeta_i^2}{(1-\mu_i)\mu_i}p_i^{D,S} }\sqrt{ 1-\mu_1...\mu_K } + \sum_{i \in [K]}\abs{\eta_i}p_i^{D,S}\label{apdx_eq:cas_con_ieq}\\
    &\le\sqrt{ \sum_{i \in [K]} \frac{\zeta_i^2}{(1-\mu_i)\mu_i}p_i^{D,S} } + \sum_{i \in [K]} \abs{\eta_i} p_i^{D,S}\label{apdx_eq:cas_con_ieq_2}
\end{align}
where \cref{apdx_eq:cas_con_mono} uses $ \bar{\mu}_i \in [0,1], i \in [m]$, \cref{apdx_eq:cas_con_bigger} is due to the definition of $p_i^{D,S}$ and multiply the first term by $\sqrt{1-\mu_i}$ but divided it by $\sqrt{(1-\mu_i)\mu_i}$, \cref{apdx_eq:cas_con_cauchy} is due to the Cauchy–Schwarz inequality on the first term, \cref{apdx_eq:cas_con_ieq} is by math calculation and \cref{apdx_eq:cas_con_ieq_2} is due to $\mu_i \in [0,1], i \in [m]$.
Hence, $(B_g,B_1,\lambda)=(1,1,1)$.
\end{proof}





\subsection{Multi-layered Network Exploration Problem (MuLaNE)~\cite{liu2021multi}}
We consider the MuLaNE problem with random node weights.
After we apply the bipartite coverage graph, the corresponding graph is a tri-partite graph $(n,V,R)$ (i.e., a 3-layered graph where the first layer and the second layer forms a bipartite graph, and the second and the third layer forms another bipartite graph), where the left nodes represent $n$ random walkers; Middle nodes are $|V|$ possible targets $V$ to be explored; Right nodes $R$ are $V$ nodes, each of which has only one edge connecting the middle edge. The MuLaNE task is to allocate $B$ budgets into $n$ layers to explore target nodes $V$ and the base arms are $\cA=\{(i,u,b): i \in [n], u \in V, b \in [B]\}$.

With budget allocation $k_1, ..., k_L$, the (effective) base arms consists of two parts: 

(1) $\{(i,j): i\in[n], j \in V\}$, each of which is associated with visiting probability $x_{i,j}\in[0,1]$ indicating whether node $j$ will be visited by explorer $i$ given $k_i$ budgets. All these base arms corresponds to budget $k_i, i\in [n]$ are triggered.

(2) $y_j\in[0,1]$ for $j \in V$ represents the random node weight. The triggering probability $p_{j}^{D, S}=1-\prod_{i \in [n]}{(1-x_{i,j})}$.

We have the following lemma.
\begin{lemma}\label{lem:app_mulane}
MuLaNE problem satisfies TPVM bounded smoothness condition with coefficient $(B_v,B_1,\lambda)=(\sqrt{1.25|V|},1,2)$, where $|V|$ is the total number of vertices to be explored.
\end{lemma}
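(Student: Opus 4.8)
The plan is to first make the reward function explicit. For a budget allocation $S$, the expected reward is $r(S;\bmu)=\sum_{j\in V} y_j\bigl(1-\prod_{i\in[n]}(1-x_{i,j})\bigr)=\sum_{j\in V} y_j P_j$, where $x_{i,j}$ and $y_j$ denote the means of the visiting-probability arms $(i,j)$ and the node-weight arms $j$, and $P_j \defeq 1-\prod_{i}(1-x_{i,j})$ is exactly the triggering probability $p_j^{D,S}$ of arm $j$; the visiting arms are deterministically observed, so $p_{(i,j)}^{D,S}=1$. I would then verify Condition~\ref{cond:TPVMm} by bounding $\abs{r(S;\bmu')-r(S;\bmu)}$ for $\bmu'=\bmu+\boldzeta+\boldeta$, treating the $|V|$ targets separately and summing at the end (this outer sum is what produces the $|V|$ factor in $B_v$).

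Second, I would separate the two sources of change through the product splitting $y'_jP'_j-y_jP_j=(y'_j-y_j)P_j+y'_j(P'_j-P_j)$, chosen so that the triggering probability $P_j$ (evaluated at $\bmu$, not $\bmu'$) multiplies the weight change. For the weight part $(\zeta_j+\eta_j)P_j$, the $\eta_j$ piece contributes $\sum_j\eta_jP_j=\sum_j\eta_j p_j^{D,S}$ directly to the $B_1$ term, while the $\zeta_j$ piece is written as $\frac{\zeta_jP_j}{\sqrt{(1-y_j)y_j}}\cdot\sqrt{(1-y_j)y_j}$, matching the variance modulation $\frac{(p_j^{D,S})^2\zeta_j^2}{(1-y_j)y_j}$ with residual weight $\sqrt{(1-y_j)y_j}$. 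For the visiting part I would bound $\abs{y'_j}\le 1$ and telescope $P'_j-P_j=\prod_i(1-x_{i,j})-\prod_i(1-x'_{i,j})=\sum_i(x'_{i,j}-x_{i,j})\prod_{l<i}(1-x_{l,j})\prod_{l>i}(1-x'_{l,j})$, exactly as in the disjunctive cascading proof (\cref{lem:app_cas_dis}); using $\bmu'\ge\bmu$ to replace $\prod_{l>i}(1-x'_{l,j})$ by $\prod_{l>i}(1-x_{l,j})$, the $\eta_{i,j}$ pieces again feed the $B_1$ term (since $p_{(i,j)}^{D,S}=1$) and each $\zeta_{i,j}$ piece becomes $\frac{\zeta_{i,j}}{\sqrt{(1-x_{i,j})x_{i,j}}}\cdot w_{i,j}$ with $w_{i,j}=\sqrt{(1-x_{i,j})x_{i,j}}\prod_{l<i}(1-x_{l,j})\prod_{l>i}(1-x_{l,j})$.

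Third, I would apply a single Cauchy--Schwarz over all arms (both $(i,j)$ and $j$), so that the first factor is precisely $\sqrt{\sum_{\text{arms}}(p^{D,S})^2\zeta^2/((1-\mu)\mu)}$ and the second factor is $\sqrt{\sum_{i,j}w_{i,j}^2+\sum_j(1-y_j)y_j}$. The crux is bounding $\sum_i w_{i,j}^2$ by a constant independent of $n$: using $\prod_{l<i}(1-x_{l,j})^2\le 1$, $(1-x_{i,j})\le 1$ and $\prod_{l>i}(1-x_{l,j})^2\le\prod_{l>i}(1-x_{l,j})$ gives $w_{i,j}^2\le x_{i,j}\prod_{l>i}(1-x_{l,j})$, whence the disjunctive telescoping identity $\sum_i x_{i,j}\prod_{l>i}(1-x_{l,j})=1-\prod_l(1-x_{l,j})\le 1$ yields $\sum_i w_{i,j}^2\le 1$ per node. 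Since $\sum_j(1-y_j)y_j\le |V|/4$, the second factor is at most $\sqrt{|V|+|V|/4}=\sqrt{1.25|V|}$, giving $(B_v,B_1,\lambda)=(\sqrt{1.25|V|},1,2)$ as claimed.

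I expect the main obstacle to be exactly this last step: certifying that the residual telescoping weights $\sum_i w_{i,j}^2$ stay bounded by a constant even though all $n$ visiting arms are deterministically triggered ($p_{(i,j)}^{D,S}=1$), so that no prefix triggering probability can be absorbed into the modulation factor the way it is in the cascading proof. The realization that salvages the batch-size independence is that dropping the squared prefix product and invoking the \emph{ordered} disjunctive identity still collapses the $n$-fold sum to $\le 1$; I would want to double-check the monotonicity replacement of $x'$ by $x$ and the bookkeeping of which endpoint's product appears where, since a careless choice of the product splitting would reintroduce a factor of $n$ (or $K$) and destroy the improvement.
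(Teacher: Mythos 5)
Your proposal reproduces the architecture of the paper's proof almost step for step: the per-target split $\bar y_j\bar p_j-y_jp_j=\bar y_j(\bar p_j-p_j)+(\bar y_j-y_j)p_j$, the telescoping of the product over the $n$ visiting arms, the multiply-and-divide by $\sqrt{(1-\mu)\mu}$, a single Cauchy--Schwarz over both arm types, and the collapse of the $n$-fold residual sum to at most $1$ per target via the ``at least one success'' identity, ending with $\sqrt{\sum_{j\in V}\left(\bar y_j^2+(1-y_j)y_j\right)}\le\sqrt{1.25|V|}$. The only structural difference is cosmetic in the directional setting: you convert the suffix product to $\bx$-values and use the suffix identity $\sum_i x_{i,j}\prod_{l>i}(1-x_{l,j})=1-\prod_l(1-x_{l,j})$, whereas the paper discards the suffix and uses the prefix identity.

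That difference, however, hides a genuine gap. The lemma (see \cref{tab:app}) claims the \emph{undirectional} TPVM condition --- per Remark~2, Condition~\ref{cond:TPVMm} with $\boldzeta,\boldeta\in[-1,1]^m$ and $|\eta_i|$ on the right --- and the paper's proof correspondingly takes $\boldzeta,\boldeta\in[-1,1]^{n|V|+|V|}$ and carries absolute values throughout. Your step ``use $\bmu'\ge\bmu$ to replace $\prod_{l>i}(1-x'_{l,j})$ by $\prod_{l>i}(1-x_{l,j})$'' invokes directionality, so as written you only establish the weaker {\TPVMm} condition. This monotone replacement is exactly the move that is unavoidable in the disjunctive cascading proof (\cref{lem:app_cas_dis}), where the prefix product is consumed as the triggering-probability modulation and the suffix must survive as the residual --- which is why that application is only claimed as {\TPVMm}. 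For MuLaNE it is both impermissible under the claimed condition and unnecessary: since the visiting arms are deterministically triggered ($p_{(i,j)}^{D,S}=1$), the modulation does not consume the prefix, so you may bound the suffix $\prod_{l>i}(1-x'_{l,j})\le 1$ outright, keep the prefix as the residual weight (giving $w_{i,j}^2\le x_{i,j}\prod_{l<i}(1-x_{l,j})$), and invoke the prefix identity $\sum_i x_{i,j}\prod_{l<i}(1-x_{l,j})=1-\prod_l(1-x_{l,j})\le 1$. With that one-line change --- which is what the paper does --- your argument goes through for signed $\boldzeta,\boldeta$ and yields the claimed $(B_v,B_1,\lambda)=(\sqrt{1.25|V|},1,2)$.
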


\begin{proof}
Let effective base arms $\bmu=(\bx,\by) \in (0,1)^{(n|V|+|V|)}, \bar{\bmu}=(\bar{\bx}, \bar{\by})\in (0,1)^{(n|V|+|V|)}$, where $\bar{\bx}=\boldzeta_{x}+\boldeta_{x}+\bx, \bar{\by}=\boldzeta_{y}+\boldeta_{y}+\by$, for $\boldzeta, \boldeta \in [-1,1]^{(n|V|+|V|)}$.
For the target node $j \in V$, the per-target reward function $r_j(S;\bx,\by)=y_j(1-\prod_{i \in [n]}(1-x_{i,j}))$.
Denote $\bar{p}_j^{D,S}=1-\prod_{i \in [n]}{(1-\bar{x}_{i,j})}$.

\begin{align}
    &\abs{r(S;\bar{\bmu})-r(S;\bmu)}\notag\\
    &=\abs{\sum_{j \in V} r_j(S;\bar{\bx},\bar{\by}) - r_j(S;\bx,\by)}\notag\\
    &=\abs{ \sum_{j \in V} \bar{y}_j \bar{p}_j^{D,S} - \bar{y}_j p_{j}^{D, S} + \bar{y}_j p_{j}^{D, S} - y_j p_{j}^{D, S}}\notag\\
    &=\abs{\sum_{j \in V} \bar{y}_j \left(\prod_{i \in [n]}(1-x_{i,j})-\prod_{i \in [n]}(1-\bar{x}_{i,j}))\right) + \sum_{j \in V} (\bar{y}_j-y_j)p_{j}^{D, S}}\label{apdx_eq:mulane_tlscp}\\
    &\le\abs{\sum_{j \in V} \sum_{i \in [n]}(\bar{x}_{i,j}-x_{i,j})\left((1-x_{1,j})...(1-x_{i-1,j})(1-\bar{x}_{i+1,j})...(1-\bar{x}_{L,j})\right)\bar{y}_j}\notag\\
    &+\abs{\sum_{j \in V} (\bar{y}_j-y_j)p_{j}^{D, S}}\notag\\
    &\le\sum_{j \in V}\sum_{i \in [n]}(\abs{\zeta_{x,i,j}}+\abs{\eta_{x,i,j}})\left((1-x_{1,j})...(1-x_{i-1,j})\right)\bar{y}_j+\sum_{j \in V}(\abs{\zeta_{y,j}}+\abs{\eta_{y,j}})p_{j}^{D, S}\label{apdx_eq:mulane_bigger}\\
    &\le\sum_{j \in V}\sum_{i \in [n]}(\frac{\abs{\zeta_{x,i,j}}}{\sqrt{(1-x_{i,j})x_{i,j}}})\sqrt{\left((1-x_{1,j})...(1-x_{i-1,j})\right)x_{i,j}}\bar{y}_j\notag\\
    &+\sum_{j \in V}\left(\frac{\abs{\zeta_{y,j}}p_{j}^{D, S}}{\sqrt{(1-y_j)y_j}}\right)\sqrt{(1-y_j)y_j}+\left(\sum_{j \in V}\sum_{i \in [n]}\abs{\eta_{x,i,j}} + \sum_{j \in V}\abs{\eta_{y,j}}p_{j}^{D, S}\right)\label{apdx_eq:mulane_multiply}\\
    &\le \sqrt{ \sum_{j \in V}\sum_{i \in [n]}(\frac{\zeta_{x,i,j}^2}{(1-x_{i,j})x_{i,j}})+\sum_{j \in V}\frac{\zeta_{y,j}^2(p_{j}^{D, S})^2}{(1-y_j)y_j}}\notag\\
    &\cdot\sqrt{\sum_{j \in V}\sum_{i \in [n]}\left((1-x_{1,j})...(1-x_{i-1,j})\right)x_{i,j}\bar{y}^2_j + \sum_{j \in V}(1-y_j)y_j}\notag\\
    &+ \left(\sum_{j \in V}\sum_{i \in [n]}\abs{\eta_{x,i,j}} +\sum_{j \in V} \abs{\eta_{y,j}}p_{j}^{D, S}\right)\label{apdx_eq:mulane_cauchy}\\
    &= \sqrt{ \sum_{j \in V}\sum_{i \in [n]}(\frac{\zeta_{x,i,j}^2}{(1-x_{i,j})x_{i,j}})+\sum_{j \in V}\frac{\zeta_{y,j}^2(p_{j}^{D, S})^2}{(1-y_j)y_j}} \sqrt{\sum_{j \in V}\left(1-\prod_{i \in [n]}(1-x_{i,j})\right)\bar{y}^2_j + \sum_{j \in V}(1-y_j)y_j}\notag\\
    &+ \left(\sum_{j \in V}\sum_{i \in [n]}\abs{\eta_{x,i,j}} + \sum_{j \in V}\abs{\eta_{y,j}}p_{j}^{D, S}\right)\label{apdx_eq:mulane_fold}\\
    &\le \sqrt{\sum_{j \in V} \sum_{i \in [n]}(\frac{\zeta_{x,i,j}^2}{(1-x_{i,j})x_{i,j}})+\sum_{j \in V}\frac{\zeta_{y,j}^2(p_{j}^{D, S})^2}{(1-y_j)y_j}} \sqrt{\sum_{j \in V}\bar{y}^2_j + (1-y_j)y_j}+  \notag\\
    &\left(\sum_{j \in V}\sum_{i \in [n]}\abs{\eta_{x,i,j}} + \sum_{j \in V}\abs{\eta_{y,j}}p_{j}^{D, S}\right)\label{apdx_eq:mulane_bounded}\\
    &\le \sqrt{ \sum_{j \in V}\sum_{i \in [n]}(\frac{\zeta_{x,i,j}^2}{(1-x_{i,j})x_{i,j}})+\sum_{j \in V}\frac{\zeta_{y,j}^2(p_{j}^{D, S})^2}{(1-y_j)y_j}} \cdot \sqrt{1.25|V|} \notag\\
    &+ \left(\sum_{j \in V}\sum_{i \in [n]}\abs{\eta_{x,i,j}}+\sum_{j \in V}\abs{\eta_{y,j}}p_{j}^{D, S}\right),\label{apdx_eq:mulane_bounded_2}
\end{align}
where \cref{apdx_eq:mulane_tlscp} is by telescoping the difference of $\left(\prod_{i \in [n]}(1-x_{i,j})-\prod_{i \in [n]}(1-\bar{x}_{i,j}))\right)$, \cref{apdx_eq:mulane_bigger} is due to $\bar{x}_{i,j} \in [0,1]$ for any $i,j$, \cref{apdx_eq:mulane_multiply} is because we multiply the first term by $\sqrt{x_{i,j}}$ but divide it by $\sqrt{x_{i,j}(1-x_{i,j})}$  and we multiply the second term by $\sqrt{y_j}$ but divide it by $\sqrt{y_j(1-y_j)}$, \cref{apdx_eq:mulane_cauchy} is by applying the Cauchy-Schwarz inequality to the first term and the second term simultaneously, \cref{apdx_eq:mulane_fold} is due to the math calculation on , \cref{apdx_eq:mulane_bounded} is due to the support of $x_{i,j}$ are bounded with $[0,1]$ for any $i,j$ and \cref{apdx_eq:mulane_bounded_2} is due to the support of $y_{j}, \bar{y}_{j}$ are bounded with $[0,1]$ for any $j$.
Hence $(B_g, B_1, \lambda)=(\sqrt{1.25|V|}, 1, 2)$.
\end{proof}

\subsection{Online Influence Maximization Bandit~\cite{wang2017improving}}
In this section, we first introduce the general problem setting of online influence maximization (OIM). Then we start with a easier problem instance, i.e., OIM on tri-partite graph (TPG) to get some intuition, where reward functions have closed form solutions. Finally, we give the results for OIM on the directed acyclic graph (DAG), which are much more involved since the reward function have no closed form solutions.

\subsubsection{Setting of the Online Influence Maximization}
Following the setting of \citep[Section 2.1]{wang2017improving}, we consider a weighted directed graph $G(V,E,p)$, where $V$ is the set of vertices, $E$ is the set of directed edges, and each edge $(u,v)\in E$ is associated with a probability $p(u,v)\in [0,1]$. When the agent selects a set of seed nodes $S \subseteq V$, the influence propagates as follows: At time $0$, the seed nodes $S$ are activated; At time $t > 1$, a node $u$ activated at time $t-1$ will have one chance to activate its inactive out-neighbor $v$ with independent probability $p(u,v)$. The influence spread of $S$ is denoted as $\sigma(S)$ and is defined as the expected number of activated nodes after the propagation process ends. The problem of Influence Maximization is to find seed nodes $S$ with $|S|\le k$ so that the influence spread $\sigma(S)$ is maximized. 

For the problem of online influence maximization (OIM), we consider $T$ rounds repeated influence maximization tasks and the edge probabilities $p(u,v)$ are assumed to be unknown initially. For each round $t\in [T]$, the agent selects $k$ seed nodes as $S_t$, the influence propagation of $S_t$ is observed and the reward is the number of nodes activated in round $t$. The agent's goal is to accumulate as much reward as possible in $T$ rounds. The OIM fits into CMAB-T framework: the edges $E$ are the set of base arms $[m]$, the (unknown) outcome distribution $D$ is the joint of $m$ independent Bernoulli random variables for the edge set $E$, the action $S$ are any seed node sets with size $k$ at most $k$. For the arm triggering, the triggered set $\tau_t$ is the set of edges $(u,v)$ whose source node $u$ is reachable from $S_t$. Let $X_t$ be the outcomes of the edges $E$ according to probability $p(u,v)$ and the live-edge graph $G_t^{\text{live}}(V,E^{\text{live}})$ be a induced graph with edges that are alive, i.e., $e \in E^{\text{live}}$ iff $X_{t,e}=1$ for $e \in E$. The triggering probability distribution $D_{\text{trig}}(S_t,X_t)$ degenerates to a deterministic triggered set, i.e., $\tau_t$ is deterministically decided given $S_t$ and $X_t$. The reward $R(S_t, X_t, \tau_t)$ equals to the number activated nodes at the end of $t$, i.e., the nodes that are reachable from $S_t$ in the live-edge graph $G_t^{\text{live}}$. The offline oracle is a $(1-1/e-\varepsilon, 1/|V|)$-approximation algorithm given by the greedy algorithm from~\cite{kempe2003maximizing}.

\subsubsection{Online Influence Maximization Bandit on Tri-partite Graph (TPG)}
We consider a OIM scenario where the underlying graph is a graph with three layers: the seed node layer, the intermediate layer and the target node layer.
Specifically, the underlying graph is denoted as $(L,M,R)$, where the first layer consists of $L$ candidates for the seed node selection, the second layer consists of $M$ intermediate nodes and the third layer are $R$ target nodes.
Such a setting is of significant interest since the edges connecting the target nodes can only be triggered and cannot be observed by seed-node selection, which requires the notion of triggering arms.
Another favorable feature is that the reward function of this setting can be explicitly expressed, whereas for general IM, even calculating the explicit form of the reward function is NP-hard~\cite{chen2009efficient}.

In this application, the base arms consists of two parts:

(1) $\{(i,j): i\in[L], j \in [M]\}$, each of which is associated with probability $x_{i,j}\in[0,1]$ indicating the probability whether edge $(i,j)$ is alive when $i$ is selected as seed nodes. Without loss of generality, we assume the seed nodes are $S=\{1,...,K\}$.

(2)$\{(j,k): j\in[M], k \in [R]\}$, each of which is associated with probability $y_{j,k} \in [0,1]$ indicating whether edge $(j,k)$ is live when $j$ is triggered for the first time. The triggering probability $p_{j}^{D, S}=1-\prod_{i \in [K]}{(1-x_{i,j})}$.

Let $\bmu=(\bx,\by)\in (0,1)^{(LM+MR)}, \bar{\bmu}=(\bar{\bx}, \bar{\by})\in  (0,1)^{(LM+MR)}$, where $\bar{\bx}=\boldzeta_{x}+\boldeta_{x}+\bx, \bar{\by}=\boldzeta_{y}+\boldeta_{y}+\by$, for $\boldzeta, \boldeta \in [0,1]^{(LM+MR)}$.
Fix any target node $k$, the reward function $r_k(S;\bx,\by)=1-\prod_{j \in [M]}\left(1-y_{j,k}(1-\prod_{i \in [K]}(1-x_{i,j}))\right)$.

\begin{lemma}\label{lem:app_tbg}
OIM-TPG problem satisfies {\TPVMm} bounded smoothness condition with coefficient $(B_v,B_1,\lambda)=(\sqrt{2}R,R,1)$, where $R$ is the total number of target nodes.
\end{lemma}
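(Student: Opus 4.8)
The plan is to mimic the telescoping-plus-Cauchy--Schwarz strategy already used for the disjunctive cascading bandit (\cref{lem:app_cas_dis}) and MuLaNE (\cref{lem:app_mulane}), but to push it through the \emph{two} nested product layers of the tri-partite reward. Because the move is directional ($\boldzeta,\boldeta\ge\bzero$, so $\bar\bmu\ge\bmu$) and $r=\sum_{k\in[R]}r_k$ is monotone (\cref{cond:mono}), I would first drop the absolute value and split the total difference across target nodes, $r(S;\bar\bmu)-r(S;\bmu)=\sum_k\prts{r_k(S;\bar\bmu)-r_k(S;\bmu)}\ge 0$. Writing $q_{j,k}=y_{j,k}p_j$ with $p_j=1-\prod_i(1-x_{i,j})$, each per-target reward is the disjunctive form $r_k=1-\prod_j(1-q_{j,k})$, so I telescope it over the intermediate nodes $j$ exactly as in \cref{lem:app_cas_dis}, getting $r_k(\bar\bmu)-r_k(\bmu)=\sum_j W_{j,k}\prts{\bar y_{j,k}\bar p_j-y_{j,k}p_j}$ with $W_{j,k}=\prod_{j'<j}(1-q_{j',k})\prod_{j'>j}(1-\bar q_{j',k})\in[0,1]$.

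The decisive bookkeeping step is the increment split $\bar y_{j,k}\bar p_j-y_{j,k}p_j=p_j(\bar y_{j,k}-y_{j,k})+\bar y_{j,k}(\bar p_j-p_j)$, chosen so that the $y$-arm $(j,k)$ carries \emph{exactly} its triggering probability $p_j=p^{D,S}_{(j,k)}$; the factor $\bar p_j-p_j$ is then itself telescoped over the seed nodes $i$ as $\sum_i(\bar x_{i,j}-x_{i,j})V_{i,j}$ with $V_{i,j}=\prod_{i'<i}(1-x_{i',j})\prod_{i'>i}(1-\bar x_{i',j})$, where the $x$-arm $(i,j)$ has triggering probability $1$. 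Substituting $\bar x_{i,j}-x_{i,j}=\zeta_{x,i,j}+\eta_{x,i,j}$ and $\bar y_{j,k}-y_{j,k}=\zeta_{y,j,k}+\eta_{y,j,k}$ splits each difference into a $\boldzeta$-part $Z_k$ and an $\boldeta$-part. The $\boldeta$-part is immediate: bounding $W_{j,k},V_{i,j},\bar y_{j,k}\le 1$ and $\sum_k 1=R$ gives $\sum_k(\boldeta\text{-part})\le R\prts{\sum_{i,j}\eta_{x,i,j}+\sum_{j,k}p_j\,\eta_{y,j,k}}$, which is precisely the $B_1$-term with $B_1=R$.

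For the $\boldzeta$-part I would invoke Cauchy--Schwarz \emph{twice}. First, for each fixed $k$, I multiply and divide by $\sqrt{(1-\mu_i)\mu_i}$ as in \cref{lem:app_cas_dis,lem:app_mulane}, and on the $y$-arm split the triggering factor $p_j=\sqrt{p_j}\cdot\sqrt{p_j}$ so that Cauchy--Schwarz deposits the weight $p_j$ (power $\lambda=1$) onto the variance term; applying it to the $x$- and $y$-arms simultaneously yields $Z_k\le\sqrt{G_k}\,\sqrt{\sum_{i,j}\tfrac{\zeta_{x,i,j}^2}{(1-x_{i,j})x_{i,j}}+\sum_j p_j\tfrac{\zeta_{y,j,k}^2}{(1-y_{j,k})y_{j,k}}}$. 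The weight factor $G_k$ collects $p_j(1-y_{j,k})y_{j,k}W_{j,k}^2$ and $(1-x_{i,j})x_{i,j}\bar y_{j,k}^2W_{j,k}^2V_{i,j}^2$; using $W_{j,k}^2\le W_{j,k}$, $V_{i,j}^2\le V_{i,j}$ and the disjunctive telescoping identity $\sum_i x_{i,j}\prod_{i'<i}(1-x_{i',j})=1-\prod_i(1-x_{i,j})\le 1$ (together with its mirror image summed over $j$), each of the two pieces of $G_k$ should collapse to at most $1$, so $G_k\le 2$. Second, I sum over targets and apply Cauchy--Schwarz over the $R$ indices, $\sum_k\sqrt{A_k}\le\sqrt{R}\sqrt{\sum_k A_k}$; since the $x$-arm variance term is replicated once per $k$ while the $y$-term is already triggering-weighted, $\sum_k A_k\le R\sum_{\text{arms}}p^{D,S}\tfrac{\zeta^2}{(1-\mu)\mu}$, and the two roots compose to $\sqrt{2}\cdot\sqrt{R}\cdot\sqrt{R}=\sqrt{2}R$, giving $B_v=\sqrt{2}R$ and matching \cref{cond:TPVMm}.

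The hard part will be the constant-chasing in this double Cauchy--Schwarz: one must route the telescoping so that the unbounded parameters $L$ and $M$ cancel, since a single global Cauchy--Schwarz leaves a stray $\sqrt{M}$ factor. This forces the per-$k$-then-over-$k$ order, the $W^2\le W$ and $V^2\le V$ reductions, and the structural observation that $B_v^2=2R^2$ is produced by the $x$-arm term being counted across all $R$ targets rather than by any single summation. Verifying $G_k\le 2$ rigorously --- in particular checking that dropping the $j'<j$ (respectively $j'>j$) factors of $W_{j,k}$ still yields a clean telescoping bound of $1$ on each side --- is the main technical obstacle; the remaining substitutions and monotone bounds are routine and already exercised in the cascading and MuLaNE arguments.
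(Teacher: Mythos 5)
Your proposal is correct and takes essentially the same route as the paper's proof: telescope the per-target disjunctive reward over both layers, split each increment into its $\boldzeta$ and $\boldeta$ parts, multiply and divide by $\sqrt{(1-\mu)\mu}$ while writing $p_j=\sqrt{p_j}\cdot\sqrt{p_j}$ so that Cauchy--Schwarz deposits a single factor of $p_j$ on the variance term (hence $\lambda=1$), collapse the Cauchy--Schwarz weight sums to a constant via the forward/backward telescoping identities (your $G_k\le 2$ is exactly the paper's bound on its second square-root factor), and pay the factor $R$ from the target sum, giving $(B_v,B_1,\lambda)=(\sqrt{2}R,R,1)$. The only deviations are bookkeeping --- you telescope once and then split each increment and apply Cauchy--Schwarz per target before summing over $k$, whereas the paper adds and subtracts the mixed term $\prod_j(1-\bar{y}_{j,k}p_j)$ first and uses one global Cauchy--Schwarz (which, contrary to your remark, leaves no stray $\sqrt{M}$ precisely because of the same telescoping collapse) --- and neither difference changes the substance or the constants.
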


\begin{proof}

For notational brevity, we denote $p_j\triangleq p_j^{D,S}=1-\prod_{i \in [L]}{(1-x_{i,j})}$, $\bar{p}_j=1-\prod_{i \in [K]}(1-\bar{x}_{i,j}), g_j=1-\bar{y}_{j,k} p_j, \bar{g}_j=1-\bar{y}_{j,k}\bar{p}_j$.

The difference of $r_k(S; \bar{\bmu}),r_k(S; \bmu)$ can be written as,
\begin{align}
    \abs{r(S; \bar{\bmu})-r(S; \bmu)}&= \sum_{k \in [R]}r_k(S; \bar{\bmu})-r_k(S; \bmu)\label{apdx_eq:tpg_mono}\\
    &= \sum_{k \in [R]}\underbrace{\left(1-\prod_{j\in [M]}(1-\bar{y}_{j,k}\bar{p}_j) \right) - \left(1-\prod_{j\in [M]}(1-\bar{y}_{j,k} p_j) \right)}_{\text{term (a)}} \notag\\
    &+ \sum_{k \in [R]}\underbrace{\left(1-\prod_{j\in [M]}(1-\bar{y}_{j,k} p_j) \right) - \left(1-\prod_{j\in [M]}(1-y_{j,k} p_j) \right)}_{\text{term} (b)}\label{apdx_eq:tpg_add_minus},
\end{align}
where \cref{apdx_eq:tpg_add_minus} is by adding and subtracting the same $\sum_{k \in [R]}\left(1-\prod_{j\in [M]}(1-\bar{y}_{j,k} p_j) \right)$.

For term $(a)$, it holds that
\begin{align}
    &\text{term} (a) = \sum_{j \in [M]}(g_j-\bar{g}_j)(g_1...g_{j-1} \bar{g}_{j+1}...\bar{g}_M)\label{apdx_eq:tpg_a_tlscp}\\
    &= \sum_{j \in [M]}\bar{y}_{j,k}(\bar{p}_j-p_j)(g_1...g_{j-1} \bar{g}_{j+1}...\bar{g}_M)\notag\\
    &= \sum_{j \in [M]}\bar{y}_{j,k}(g_1...g_{j-1} \bar{g}_{j+1}...\bar{g}_M)\left(\left(1-\prod_{i \in [K]}(1-\bar{x}_{i,j})\right)- \left( 1-\prod_{i \in [K]}(1-x_{i,j}) \right)\right)\notag\\
    &= \sum_{j \in [M]}\bar{y}_{j,k}(g_1...g_{j-1} \bar{g}_{j+1}...\bar{g}_M)\sum_{i \in [K]}(\bar{x}_{i,j}-x_{i,j})\left((1-x_{1,j})...(1-x_{i-1,j}) (1-x_{i+1,j})...(1-x_{K,j})\right)\label{apdx_eq:tpg_a_tlscp_p}\\
    &=\sum_{i \in [K]}\sum_{j \in [M]}(\zeta_{x,i,j}+\eta_{x,i,j})\bar{y}_{j,k}(g_1...g_{j-1} \bar{g}_{j+1}...\bar{g}_M)\left((1-x_{1,j})...(1-x_{i-1,j}) (1-x_{i+1,j})...(1-x_{K,j})\right)\notag\\
    &\le \underbrace{\sum_{i \in [K]}\sum_{j \in [M]}\left(\frac{\zeta_{x,i,j}}{\sqrt{(1-x_{i,j})x_{i,j}}}\right)\sqrt{x_{i,j}(1-x_{1,j})...(1-x_{i-1,j})\bar{y}_{j,k} (g_1...g_{j-1})}}_{\text{term} (c)} \notag\\
    &+ \underbrace{\sum_{i \in [K]}\sum_{j \in [M]}\eta_{x,i,j}}_{\text{term} (d)}\label{apdx_eq:tpg_a_multi_div},
\end{align}
where \cref{apdx_eq:tpg_a_tlscp} is by telescoping the term (b), \cref{apdx_eq:tpg_a_tlscp_p} is by telescoping $\bar{p}_j-p_j$, \cref{apdx_eq:tpg_a_multi_div} is because $\bar{g}_j\in [0,1]$ for all $j$ and multiplying the first term by $\sqrt{x_{i,j}}$ but dividing it by $\sqrt{(1-x_{i,j})x_{i,j}}$.

For term $(b)$, it holds that 
\begin{align}
    \text{term} (b) &= \left(1-\prod_{j \in [M]}(1-\bar{y}_{j,k} p_j) \right) -\left(1-\prod_{j \in [M]}(1-y_{j,k} p_j) \right)\notag\\
    &=\sum_{j \in [M]}p_j (\bar{y}_{j,k} - y_{j,k}) (1-y_{1,k} p_1)...(1-y_{j,k} p_j) (1-\bar{y}_{j+1,k}p_{j+1})... (1-\bar{y}_{M,k} p_M)\label{apdx_eq:tpg_b_tlscp_p}\\
    &\le \underbrace{\sum_{j \in [M]}\left(\frac{\zeta_{y,j,k}\sqrt{p_j}}{\sqrt{(1-y_{j,k})y_{j,k}}}\right)\sqrt{y_{j,k}p_j(1-y_{1,k} p_1)...(1-y_{j-1,k}p_{j-1})}}_{\text{term} (e)} + \underbrace{\sum_{j\in [M]}p_{j}\eta_{y,j,k}}_{\text{term} (f)}\label{apdx_eq:tpg_b_multi_div},
\end{align}
where \cref{apdx_eq:tpg_b_tlscp_p} is by telescoping on the term (b), \cref{apdx_eq:tpg_b_multi_div} is by multiplying the first term by $\sqrt{y_{i,j}}$ but dividing it by $\sqrt{(1-y_{i,j})y_{i,j}}$.

Next, we apply Cauchy–Schwarz inequality to term (c) and term (e) simultaneously,
\begin{align}
    &\sum_{k \in [R]}(\text{term} (c) + \text{term} (e))\notag\\ 
    &\le \sqrt{\left(\sum_{i \in [K], j \in [M], k \in [R]}\frac{\zeta^2_{x,i,j}}{(1-x_{i,j})x_{i,j}}\right) + \left(\sum_{j \in [M]}\sum_{k \in [R]}\frac{\zeta^2_{y,j,k}p_j}{(1-y_{j,k})y_{j,k}}\right)}\notag\\
    &\cdot \Bigg[\left(\sum_{i \in [K]}\sum_{j\in [M]}\sum_{k \in [R]}x_{i,j}(1-x_{1,j})...(1-x_{i-1,j})\bar{y}_{j,k} (g_1...g_{j-1})\right)\notag\\
    &+\left(\sum_{j \in [M]}\sum_{k \in [R]}y_{j,k}p_j(1-y_{1,k} p_1)...(1-y_{j-1,k}p_{j-1})\right)\Bigg]^{1/2}\notag\\
    &\le \sqrt{\left(\sum_{i \in [K], j \in [M], k \in [R]}\frac{\zeta^2_{x,i,j}}{(1-x_{i,j})x_{i,j}}\right) + \left(\sum_{j \in [M]}\sum_{k \in [R]}\frac{\zeta^2_{y,j,k}p_j}{(1-y_{j,k})y_{j,k}}\right)}\notag\\
    &\cdot \sqrt{\left(\sum_{j\in [M]}\sum_{k \in [R]}(1-\prod_{i \in [K]}(1-x_{i,j}))\bar{y}_{j,k} (g_1...g_{j-1})\right)+\sum_{k \in [R]}\left(1-\prod_{j\in [M]}(1-y_{j,k}p_{j})\right)}\label{apdx_eq:tpg_ce_math_x}\\
    &= \sqrt{\left(\sum_{i \in [K], j \in [M], k \in [R]}\frac{\zeta^2_{x,i,j}}{(1-x_{i,j})x_{i,j}}\right) + \left(\sum_{j \in [M]}\sum_{k \in [R]}\frac{\zeta^2_{y,j,k}p_j}{(1-y_{j,k})y_{j,k}}\right)}\notag\\
    &\cdot \sqrt{\sum_{k \in [R]}\left(\sum_{j\in [M]}(1-g_j) (g_1...g_{j-1})\right)+\sum_{k \in [R]}\left(1-\prod_{j\in [M]}(1-y_{j,k}p_{j})\right)}\label{apdx_eq:tpg_ce_def_g}\\
        &= \sqrt{\left(\sum_{i \in [K], j \in [M], k \in [R]}\frac{\zeta^2_{x,i,j}}{(1-x_{i,j})x_{i,j}}\right) + \left(\sum_{j \in [M]}\sum_{k \in [R]}\frac{\zeta^2_{y,j,k}p_j}{(1-y_{j,k})y_{j,k}}\right)}\notag\\
        &\cdot \sqrt{\sum_{k \in [R]}\left(1-\prod_{j\in [M]}(1-\bar{y}_{j,k}p_{j})\right)+\sum_{k \in [R]}\left(1-\prod_{j\in [M]}(1-y_{j,k}p_{j})\right)}\label{apdx_eq:tpg_ce_math_g}\\
             &\le \sqrt{R}\sqrt{\left(\sum_{i \in [K]}\sum_{j \in [M]}\frac{\zeta^2_{x,i,j}}{(1-x_{i,j})x_{i,j}}\right) + \left(\sum_{j \in [M]}\sum_{k \in [R]}\frac{\zeta^2_{y,j,k}p_j}{(1-y_{j,k})y_{j,k}}\right)}\notag\\
             &\cdot \sqrt{\sum_{k \in [R]}\left(1-\prod_{j\in [M]}(1-\bar{y}_{j,k}p_{j})\right)+\sum_{k \in [R]}\left(1-\prod_{j\in [M]}(1-y_{j,k}p_{j})\right)}\label{apdx_eq:tpg_ce_i_in_K}\\
        &\le \sqrt{\left(\sum_{i \in [K], j \in [M], k \in [R]}\frac{\zeta^2_{x,i,j}}{(1-x_{i,j})x_{i,j}}\right) + \left(\sum_{j \in [M]}\sum_{k \in [R]}\frac{\zeta^2_{y,j,k}p_j}{(1-y_{j,k})y_{j,k}}\right)}\cdot\sqrt{2}R\label{apdx_eq:tpg_ce_bounded},
\end{align}
where \cref{apdx_eq:tpg_ce_math_x} is due to the math calculation on the summation over $i \in [K]$, \cref{apdx_eq:tpg_ce_def_g} is due to the definition of $g_j$, \cref{apdx_eq:tpg_ce_math_g} is due to the math calculation on the summation over $j \in [M]$, \cref{apdx_eq:tpg_ce_i_in_K} is because we take one summation over $k \in [R]$ out for the first square root term and \cref{apdx_eq:tpg_ce_bounded} is because $y_{j,k},\bar{y}_{j,k}, p_j$ are all bounded with support $[0,1]$.

Combined with $\sum_{k \in [R]}(\text{term}(d)+ \text{term}(f))$, we can derive that 
\begin{align}
    \abs{r(S; \bar{\bmu})-r(S; \bmu)} &= \sqrt{\left(\sum_{i \in [K]}\sum_{j \in [M]}\frac{\zeta^2_{x,i,j}}{(1-x_{i,j})x_{i,j}}\right) + \left(\sum_{j \in [M]}\sum_{k \in [R]}\frac{\zeta^2_{y,j,k}p_j}{(1-y_{j,k})y_{j,k}}\right)}\cdot\sqrt{2}R \notag\\
    &+ \sum_{i \in [K]}\sum_{j \in [M]}\sum_{k \in [R]}\eta_{x,i,j} + \sum_{j\in [M]}\sum_{k \in [R]}p_{j}\eta_{y,j,k}\notag\\
    &\le\sqrt{\left(\sum_{i \in [K]}\sum_{j \in [M]}\frac{\zeta^2_{x,i,j}}{(1-x_{i,j})x_{i,j}}\right) + \left(\sum_{j \in [M]}\sum_{k \in [R]}\frac{\zeta^2_{y,j,k}p_j}{(1-y_{j,k})y_{j,k}}\right)}\cdot\sqrt{2}R \notag\\
    &+ R\left( \sum_{i \in [K]}\sum_{j \in [M]}\eta_{x,i,j} + \sum_{j\in [M]}\sum_{k \in [R]}p_{j}\eta_{y,j,k}\right)\label{apdx_eq:tpg_all_i_in_K},
\end{align}
where \cref{apdx_eq:tpg_all_i_in_K} is because the summation over $k \in [R]$ in the second term.

Hence we have $(B_g, B_1, \lambda)=(\sqrt{2}R, R, 1)$.
\end{proof}





\subsubsection{Influence Maximization on Directed Acyclic Graphs (DAG)}
In this section, we introduce how to derive $(B_g, B_1, \lambda)$ for DAGs. Note that TBG (or even multi-layered bipartite graphs) are special cases of DAG. However, the main challenge is that for general DAGs, there are no closed-form solutions for the reward function since it is NP-Hard to compute their influence spread \cite{chen2009efficient}. To deal with this challenge, we will use the live-edge graph and edge coupling technique as \cite{li2020online} and prove the following lemma.

\begin{lemma}\label{lem:app_dag}
OIM-DAG problem satisfies {\TPVMm} bounded smoothness condition with coefficient $(B_v,B_1,\lambda)=(\sqrt{L}|V|,|V|,1)$, where $L$ is the length of the longest directed path.
\end{lemma}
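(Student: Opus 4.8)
The plan is to pass to the live-edge graph representation of the independent cascade model, under which the influence spread becomes a sum of reachability probabilities that are \emph{multilinear} in the edge means, and then to telescope the reward difference edge-by-edge in \emph{reverse topological order} so that the triggering probability appearing in the bound is exactly $p_i^{D,S}$ evaluated at the base vector $\bmu$. Concretely, first I would write $r(S;\bmu)=\sum_{w\in V} r_w(S;\bmu)$, where $r_w(S;\bmu)$ is the probability that target $w$ is reachable from the seed set $S$ when each edge $e$ is independently live with probability $\mu_e$. Since $r_w$ is multilinear in the edge means, $\partial r_w/\partial \mu_e$ does not depend on $\mu_e$ and equals the \emph{pivotal probability} $d_{e,w}=\Pr[e\text{ is pivotal for }w]$, i.e.\ the probability that $w$ is reachable from $S$ when $e$ is live but not when $e$ is dead.

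Next I would fix a topological order of the DAG and telescope $r_w(S;\bmu')-r_w(S;\bmu)$ by switching each edge $e$ from $\mu_e$ to $\mu'_e$ one at a time, processing the edges closest to the targets first (reverse topological order). When edge $e$ is switched, every edge upstream of $e$ (on the $S$-side) still carries its base mean $\mu_e$, so the reachability of the tail of $e$ from $S$ is evaluated under $\bmu$ and equals exactly $p_e^{D,S}$, while every downstream edge already carries $\mu'_e$. Because the graph is acyclic, the event that the tail of $e$ is reachable from $S$ depends only on upstream edges whereas the event that $w$ is reachable from the head of $e$ depends only on downstream edges, so these two events are independent; dropping the ``necessity'' clause in the definition of pivotality then yields the factorized bound $d_{e,w}\le p_e^{D,S}\,q_{e,w}$, where $q_{e,w}\le 1$ is the head-to-$w$ reachability probability. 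This reverse-topological coupling, in the spirit of~\cite{li2020online}, is what pins the modulation to $p_e^{D,S}$ rather than to a reachability probability under the inflated vector $\bmu'$.

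I would then combine two estimates. Writing $\bmu'=\bmu+\boldzeta+\boldeta$ and splitting $\mu'_e-\mu_e=\zeta_e+\eta_e$ inside the telescoped sum, the $\boldeta$-part is bounded directly by $\sum_e \eta_e d_{e,w}\le \sum_e \eta_e p_e^{D,S}$. For the $\boldzeta$-part I would multiply and divide by $\sqrt{(1-\mu_e)\mu_e}$ and apply Cauchy--Schwarz:
\begin{equation*}
  \sum_e \zeta_e d_{e,w}\le \sqrt{\sum_e \frac{\zeta_e^2\, p_e^{D,S}}{(1-\mu_e)\mu_e}}\;\sqrt{\sum_e \frac{d_{e,w}^2(1-\mu_e)\mu_e}{p_e^{D,S}}},
\end{equation*}
and control the second factor via $d_{e,w}\le p_e^{D,S}q_{e,w}$ together with $q_{e,w},(1-\mu_e)\mu_e\le 1$, giving $\sum_e d_{e,w}^2(1-\mu_e)\mu_e/p_e^{D,S}\le \sum_e d_{e,w}$. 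The crucial counting step is $\sum_e d_{e,w}\le L$: in any realization the pivotal edges for $w$ lie on \emph{every} $S$-to-$w$ path, hence on a shortest such path, which has at most $L$ edges, so the expected number of pivotal edges is at most $L\cdot\Pr[w\text{ reachable}]\le L$. Since the first square-root factor is independent of $w$, summing the per-target bounds over the $|V|$ targets simply multiplies it by $|V|$ and contributes a factor $\sqrt{L}$ from each target, yielding $|r(S;\bmu')-r(S;\bmu)|\le \sqrt{L}\,|V|\sqrt{\sum_e p_e^{D,S}\zeta_e^2/((1-\mu_e)\mu_e)}+|V|\sum_e p_e^{D,S}\eta_e$, i.e.\ $(B_v,B_1,\lambda)=(\sqrt{L}|V|,|V|,1)$ as in Condition~\ref{cond:TPVMm}.

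The main obstacle is making this coupling fully rigorous in the absence of a closed form for $r_w$: one must justify the multilinear gradient identity, verify the upstream/downstream independence and the factorization $d_{e,w}\le p_e^{D,S}q_{e,w}$ at \emph{every} intermediate vector produced by the reverse-topological telescoping (not merely at $\bmu$), and establish the pivotal-counting bound $\sum_e d_{e,w}\le L$ with the correct conditioning on the remaining edges. Nailing down these reachability and coupling facts — rather than the routine Cauchy--Schwarz algebra — is where the real work lies.
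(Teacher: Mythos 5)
Your architecture is essentially the paper's (live-edge coupling, telescoping so that all edges upstream of the edge being switched still carry $\bmu$, splitting $\boldzeta/\boldeta$, Cauchy--Schwarz with variance weights, then a per-target bound of $L$ on the second factor), and your ordering and factorization observations are sound — in particular $d_{e,w}\le p_e^{D,S}q_{e,w}$ via the disjointness of upstream and downstream edge sets in a DAG. However, the step you yourself flag as crucial, $\sum_e d_{e,w}\le L$, is false, and your justification shows why: pivotal edges all lie on a common live $S$-to-$w$ path only in realizations where $w$ \emph{is} reachable. When $w$ is not reachable, a pivotal edge is a \emph{dead} edge whose insertion would connect $S$ to $w$, and such edges need not share any path. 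Concretely, take $k$ parallel length-two paths $s\to a_i\to w$, all edge means equal to $p$. Edge $(s,a_i)$ is pivotal iff $(a_i,w)$ is live and no other path is fully live (and symmetrically for $(a_i,w)$), so $\sum_e d_{e,w}=2kp(1-p^2)^{k-1}$; at $p=1/\sqrt{k}$ this is at least $2\sqrt{k}/e$, which exceeds $L=2$ already for $k\ge 8$ and grows like $\sqrt{k}$. The same example refutes your claim that the expected number of pivotal edges is at most $L\cdot\Pr[w\text{ reachable}]$: pivotality and reachability of $w$ can occur on disjoint realizations.

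The repair is exactly the device used in the paper's proof, and it is not cosmetic. Keep the factor $\mu_e$ attached to the pivotality probability inside Cauchy--Schwarz: since the pivotality event does not involve the state of $e$ itself, $\mu_e d_{e,w}=\Pr\bigl[\{e\text{ pivotal for }w\}\cap\{e\text{ live}\}\bigr]$, and together with $d_{e,w}\le p_e^{D,S}$ this gives $\mu_e(1-\mu_e)d_{e,w}^2/p_e^{D,S}\le \Pr\bigl[\{e\text{ pivotal for }w\}\cap\{e\text{ live}\}\bigr]$. The event $\{e\text{ pivotal}\}\cap\{e\text{ live}\}$ says that $w$ is reachable and \emph{every} live $S$--$w$ path passes through $e$; hence for two edges that can never lie on a common directed path, these events are pairwise disjoint. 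Partitioning $E$ into $L$ groups by the longest-path label of the tail (labels strictly decrease along any directed path, so two edges whose tails share a label never co-occur on a path) yields at most $1$ per group and thus $\sum_e \Pr\bigl[\{e\text{ pivotal}\}\cap\{e\text{ live}\}\bigr]\le L$ under any product measure, including every intermediate vector of your telescoping. In your parallel-path example this works because the live-edge requirement kills the dead-edge terms. Without inserting the live-edge event and this layer decomposition, the $\sqrt{L}$ factor cannot be obtained along your route; with them, the rest of your argument goes through and gives $(B_v,B_1,\lambda)=(\sqrt{L}|V|,|V|,1)$ as claimed.
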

\begin{proof}
Denote the DAG graph as $G(V,E)$ where $V$ are nodes and $E$ are the edges. With a little abuse of the notation, we will denote $u(e), v(e)$ as the starting node and ending node of the edge $e$.
Inspired by previous applications with closed-form solutions, we will partition edges $E$ (i.e., base arms) into $L$ groups, where $L$ is the length of the longest directed path of $G$. More specifically, we apply the topological sort on $G$ and label each node with $l(v)\in \{0,1,...,L\}$, where $l(v)$ is the length of the longest path that starts from node $v$. For simplicity, we say node $v$ in layer $l(v)$. This sorting and labelling procedure can be done in $O(V+E)$ time complexity. Given this labelling $l(v)$ for $v\in V$, we partition edges $E$ into $L$ disjoint edge sets so that $E = \bigcup_{s \in [L]}E_s$, where $E_s$ contains edges that point from node in layer $s$ to node in lower layers, i.e., $E_s=\{(u,v): l(u)=s, l(v)< s\}$. One critical property we will use later is that for any two edges $e, e' \in E_s$ in the same layer $s$, there are no directed path $p$ from any node $v \in V$ so that both $e, e'$ are in the path $p$, or otherwise $l(u(e))\neq l(u(e'))$ which contradicts the assumption that starting nodes $u(e)$ and $u(e')$ belong to the same layer $s$.

Let $\bmu= (\bmu_1, ..., \bmu_L)$ be the true mean vector for the partitions $(E_1, ..., E_L)$ mentioned above, where each partition contains $n_{s}=|E_{s}|$ base arms (with $\bmu_{s} \in (0,1)^{n_{s}}$), for $s \in [L]$. Similarly, we denote $\bar{\bmu}=(\bar{\bmu}_1, ..., \bar{\bmu}_L), \boldzeta= (\boldzeta_1, ..., \boldzeta_L), \boldeta= (\boldeta_1, ..., \boldeta_L)$ such that $\bar{\bmu}=\bmu + \boldzeta + \boldeta$ and $\bar{\bmu}_s \in (0,1)^{n_s}, \boldzeta, \boldeta \in [0,1]^{n_s}$.
Now for the reward, we focus on each node $t$ as target, so that $r(S;\bmu)\triangleq \sum_{t \in V} r_t(S;\bmu)$ and $r(S;\bar{\bmu})\triangleq \sum_{t \in V} r_t(S;\bar{\bmu})$. Now fix any target node $t \in V$, we can telescope the reward by gradually changing one partition from $\bar{\bmu}_s$ to $\bmu_s$ as follows:
\begin{align}
    &\abs{r_t(S;\bar{\bmu})-r_t(S;\bmu)}\notag\\
    &=r_t(S;\bar{\bmu}_1, ..., \bar{\bmu}_L)-r_t(S;\bmu_1, \bar{\bmu}_2, ..., \bar{\bmu}_L)\notag\\
        &=r_t(S;\bmu_1, \bar{\bmu}_2, ..., \bar{\bmu}_L)-r_t(S;\bmu_1, \bmu_2, \bar{\bmu}_3, ..., \bar{\bmu}_L)\notag\\
    &+ ... \notag\\
    &+\underbrace{r_t(S;\bmu_1,... \bmu_{s-1}, \bar{\bmu}_{s}, ..., \bar{\bmu}_{L})-r_t(S;\bmu_1,..., \bmu_{s}, \bar{\bmu}_{s+1}, ..., \bar{\bmu}_{L})}_{\text{s-th partition}}\notag\\
    &+ ... \notag\\
    &+r_t(S;\bmu_1, ..., \bmu_{L-1}, \bar{\bmu}_L)-r_t(S;\bmu_1, ..., \bmu_L)
\end{align}

For the s-th partition, since the first term and the second term differs only in $\bmu_s$ and $\bar{\bmu}_s$, we denote the first $s-1$ partitions as $\cup_{\ell=1}^{s-1}\bmu_{\ell}$ and the last $L-s$ partitions as $\cup_{\ell=s+1}^L \bar{\bmu}_{\ell}$. We can further telescope the reward difference by gradually changing one parameter from $\bar{\mu}_{s,i}$ to $\mu_{s,i}$ as follows:

\begin{align}
    &\text{s-th partition}\notag\\     
    &=r_t(S;\cup_{\ell=1}^{s-1}\bmu_{\ell}, \bar{\mu}_{s,1}, ..., \bar{\mu}_{s,n_s}, \cup_{\ell=s+1}^L\bar{\bmu}_{\ell})-r_t(S;\cup_{\ell=1}^{s-1}\bmu_{\ell}, \mu_{s,1}, \bar{\mu}_{s,2}, ..., \bar{\mu}_{s,n_s}, \cup_{\ell=s+1}^L\bar{\bmu}_{\ell})\notag\\
        &=r_t(S;\cup_{\ell=1}^{s-1}\bmu_{\ell}, \mu_{s,1}, \bar{\mu}_{s,2}, ..., \bar{\mu}_{s,n_s}, \cup_{\ell=s+1}^L\bar{\bmu}_{\ell})-r_t(S;\cup_{\ell=1}^{s-1}\bmu_{\ell}, \mu_{s,1}, \mu_{s,2}, \bar{\mu}_{s,3}, ..., \bar{\mu}_{s,n_s}, \cup_{\ell=s+1}^L\bar{\bmu}_{\ell})\notag\\
    &+ ... \notag\\
    &+\underbrace{r_t(S;\cup_{\ell=1}^{s-1}\bmu_{\ell}, \cup_{\ell=1}^{i-1}\mu_{s,\ell}, \cup_{\ell=i}^{n_s}\bar{\mu}_{s,\ell}, \cup_{\ell=s+1}^L\bar{\bmu}_{\ell})-r_t(S;\cup_{\ell=1}^{s-1}\bmu_{\ell}, \cup_{\ell=1}^{i}\mu_{s,\ell}, \cup_{\ell=i+1}^{n_s}\bar{\mu}_{s,\ell}, \cup_{\ell=s+1}^L\bar{\bmu}_{\ell}))}_{\text{i-th change in s-th partition}}\notag\\
    &+ ... \notag\\
    &+r_t(S;\cup_{\ell=1}^{s-1}\bmu_{\ell}, \mu_{s,1}, ..., \mu_{s,n_s-1}, \bar{\mu}_{s,n_s}, \cup_{\ell=s+1}^L\bar{\bmu}_{\ell})-r_t(S;\cup_{\ell=1}^{s-1}\bmu_{\ell}, \mu_{s,1}, ..., \mu_{s,n_s}, \cup_{\ell=s+1}^L\bar{\bmu}_{\ell})
\end{align}

For the i-th change in s-th partition, we use the live edge graph~\cite{chen2013information} technique (which links the probability of a node is activated to the probability this node is reachable in the random live-edge graph where each edge $i$ is independently selected as live-edge with probability $\mu_i$), in order to transform the reward difference into the probability of some events happens. Note that by using ``bypass of edge $e$", we mean the path that connects seed nodes $S$ and target node $t$ but does not contains edge $e$ in this path.
Let $e_{s,i}=(u(e_{s,i}), v(e_{s,i}))$ denote the $i$-th edge with staring node $u(e_{s,i})$ and ending node $v(e_{s,i})$ in partition $E_s$, for $i\in [n_s]$. We use $\Pr[A| \bmu]$ to denote the probability of all live-edge graphs under parameter $\bmu$ so that event $A$ happens. Consider the live-edge graphs without $e_{s,i}$,

\begin{align}
    &\text{i-th change in s-th partition}\notag\\
    &=(\bar{\mu}_{s,i}-\mu_{s,i}) \Pr[S\rightarrow u(e_{s,i}), v(e_{s,i})\rightarrow t, \text{ no bypass of } e_{s,i} \mid \cup_{\ell=1}^{s-1}\bmu_{\ell},\notag\\
    &\mu_{s,1},... \mu_{s,i-1}, \bar{\mu}_{s,i+1}, ..., \bar{\mu}_{s,n_s}, \cup_{\ell=s+1}^L\bar{\bmu}_{\ell}]\label{apdx_eq:oim_dag_live_edge}\\
    &\le (\bar{\mu}_{s,i}-\mu_{s,i}) \Pr[S\rightarrow u(e_{s,i}), v(e_{s,i})\rightarrow t, \text{ no bypass of } e_{s,i} \mid \cup_{\ell=1}^{s-1}\bmu_{\ell}, \bmu_{s} \backslash \mu_{s,i}, \cup_{\ell=s+1}^L\bar{\bmu}_{\ell}]\label{apdx_eq:oim_dag_live_edge_change}\\
    &\le \zeta_{s,i} \Pr[S\rightarrow u(e_{s,i}), v(e_{s,i})\rightarrow t, \text{ no bypass of } e_{s,i} \mid \cup_{\ell=1}^{s-1}\bmu_{\ell}, \bmu_{s} \backslash \mu_{s,i}, \cup_{\ell=s+1}^L\bar{\bmu}_{\ell}] \notag\\
    &+\eta_{s,i}\Pr[S\rightarrow u(e_{s,i}) \mid \cup_{\ell=1}^{s-1}\bmu_{\ell}]\label{apdx_eq:oim_dag_second_obs}\\
    &=\sqrt{\frac{\zeta^2_{s,i}}{\mu_{s,i}}} \sqrt{\mu_{s,i}(\Pr[S\rightarrow u(e_{s,i}), v(e_{s,i})\rightarrow t, \text{ no bypass of } e_{s,i} \mid \cup_{\ell=1}^{s-1}\bmu_{\ell}, \bmu_{s}\backslash \mu_{s,i}, \cup_{\ell=s+1}^L\bar{\bmu}_{\ell}])^2}\notag\\
    &+ \eta_{s,i}\Pr[S\rightarrow u(e_{s,i}) \mid \cup_{\ell=1}^{s-1}\bmu_{\ell}]\label{apdx_eq:oim_dag_multiply_divide}\\
    &\le\sqrt{ \frac{\zeta^2_{s,i}\Pr[S\rightarrow u(e_{s,i})\mid \cup_{\ell=1}^{s-1}\bmu_{\ell}]}{(1-\mu_{s,i})\mu_{s,i}}} \notag\\
    &\cdot\sqrt{\Pr[S\rightarrow u(e_{s,i}), e_{s,i} \text{ is live},  v(e_{s,i})\rightarrow t, \text{ no bypass of } e_{s,i} \mid \cup_{\ell=1}^{s}\bmu_{\ell}, \cup_{\ell=s+1}^L\bar{\bmu}_{\ell}]}\notag\\
    &+ \eta_{s,i}\Pr[S\rightarrow u(e_{s,i}) \mid \cup_{\ell=1}^{s-1}\bmu_{\ell}]\label{apdx_eq:oim_dag_divide},
\end{align}
where \cref{apdx_eq:oim_dag_live_edge} is due to the coupling technique~\cite{li2020online} and the contribution of live-edge graph $G'$ to the reward difference is $\bar{\mu}_{s,i} - \mu_{s,i}$ if $u(e_{s,i})$ is reachable from $S$, the target node $t$ is reachable from $v(e_{s,i})$ but $t$ is not reachable from any other paths in $G'$ that bypasses edge $e_{s,i}$ or otherwise the contribution of $G'$ is $0$; \cref{apdx_eq:oim_dag_live_edge_change} is due to $e_{s,i}$ and $e_{s,j}$ for $j \in [n_s]\backslash\{i\}$ are in the same s-th partition, so that the parameter change from $\bar{\mu}_{s,j}$ to $\mu_{s,j}$ for $j \in [n_s]\backslash\{i\}$ neither affects the probability of $u(e_{s,i})$ is reachable from $S$ nor affects the probability of the target node $t$ is reachable from $v(e_{s,i})$, but only increases the probability of there is no bypass of $e_{s,i}$ (since there is less possibility of a path connects $S$ and $t$ by reducing $\bar{\mu}_{s,j}$ to $\mu_{s,j}$); \cref{apdx_eq:oim_dag_second_obs} is because for the second term, we only require $u(e_{s,i})$ is reachable from $S$; \cref{apdx_eq:oim_dag_multiply_divide} is because we multiply and divide the first term by $\sqrt{\mu_{s,i}}$ at the same time; \cref{apdx_eq:oim_dag_divide} is because we divide the first term by $\sqrt{1-\mu_{s,i}}$ which is within $(0,1)$.

Now we can summation over all $(s,i)_{s \in [L], i \in [n_s]}$, we have
\begin{align}
    &\abs{r_t(S;\bar{\bmu})-r_t(S;\bmu)}\notag\\
    &\le\sum_{s,i}\sqrt{ \frac{\zeta^2_{s,i}\Pr[S\rightarrow u(e_{s,i})\mid \cup_{\ell=1}^{s-1}\bmu_{\ell}]}{(1-\mu_{s,i})\mu_{s,i}}} \notag\\
    &\cdot\sqrt{\Pr[S\rightarrow u(e_{s,i}),e_{s,i} \text{ is live},  v(e_{s,i})\rightarrow t, \text{ no bypass of } e_{s,i} \mid \cup_{\ell=1}^{s}\bmu_{\ell}, \cup_{\ell=s+1}^L\bar{\bmu}_{\ell}]}\notag\\
    &+ \sum_{s \in [L]}\sum_{i \in [n_s]}\eta_{s,i}\Pr[S\rightarrow u(e_{s,i}) \mid \cup_{\ell=1}^{s-1}\bmu_{\ell}]\notag\\
    &\le \sqrt{\sum_{i \in [m]}\frac{\zeta_{i}^2 p_i^{D,S}}{\mu_i(1-\mu_i)}\left(\sum_{s \in [L]}\sum_{i \in [n_s]}\Pr[S\rightarrow u(e_{s,i}),e_{s,i} \text{ is live},  v(e_{s,i})\rightarrow t, \text{ no bypass of } e_{s,i} \mid \cup_{\ell=1}^{s}\bmu_{\ell}, \cup_{\ell=s+1}^L\bar{\bmu}_{\ell}]\right)}\notag\\
    &+ \sum_{i \in [m]}\eta_{i}p_{i}^{D,S}\label{apdx_eq:oim_dag_cauchy}\\
    &\le \sqrt{\sum_{i \in [m]}\frac{\zeta_{i}^2 p_i^{D,S}}{\mu_i(1-\mu_i)}}\sqrt{L}+ \sum_{i \in [m]}\eta_{i}p_{i}^{D,S},\label{apdx_eq:oim_dag_no_overlap}
\end{align}
where \cref{apdx_eq:oim_dag_cauchy} uses the Cauchy-Schwatz Inequality; \cref{apdx_eq:oim_dag_no_overlap} uses the critical property that for $e_{s,i}$ and $e_{s,j}$ with $i\neq j$, there does not exist any path that can contain both $e_{s,i}$ and $e_{s,j}$ as mentioned earlier, which means under the same parameter $(\cup_{\ell=1}^{s}\bmu_{\ell}, \cup_{\ell=s+1}^L\bar{\bmu}_{\ell})$, the live edge graphs that satisfies $\{S\rightarrow u(e_{s,i}),e_{s,i} \text{ is live},  v(e_{s,i})\rightarrow t, \text{ no bypass of } e_{s,i}\}$ and the live edge graphs that satisfies $\{S\rightarrow u(e_{s,j}),e_{s,j} \text{ is live},  v(e_{s,j})\rightarrow t, \text{ no bypass of } e_{s,j}\}$ are disjoint (or otherwise contradicts the fact that there is no bypass of $e_{s,i}$ or $e_{s,j}$), so it holds that $\sum_{i \in [n_s]}\Pr[S\rightarrow u(e_{s,i}),e_{s,i} \text{ is live},  v(e_{s,i})\rightarrow t, \text{ no bypass of } e_{s,i} \mid \cup_{\ell=1}^{s}\bmu_{\ell}, \cup_{\ell=s+1}^L\bar{\bmu}_{\ell}]\le 1$.

Considering all the target nodes $t \in [V]$, we have $(B_v, B_1, \lambda)=(|V|\sqrt{L}, |V|, 1)$.
\end{proof}

\subsection{Probabilistic Maximum Coverage Bandit~\cite{chen2016combinatorial,merlis2019batch}}
In this section, we consider the probabilistic maximum coverage (PMC) problem. PMC is modeled by a weighted bipartite graph $G=(L,V,E)$, where $L$ are the source nodes, $V$ are the target nodes and each edge $(u,v) \in E$ is associated with a probability $p(u,v)$. The task of PMC is to select a set $S \subseteq L$ of size $k$ so as to maximize the expected number of nodes activated in $V$, where a node $v \in V$ can be activated by a node $u \in S$ with an independent probability $p(u,v)$. PMC can naturally model the advertisement placement application, where $L$ are candidate web-pages, $V$ are the set of users, and $p(u,v)$ is the probability that a user $v$ will click on web-page $u$. 

PMC fits into the non-triggering CMAB framework: each edge $(u,v) \in E$ corresponds to a base arm, the action is the set of edges that are incident to the set $S \subseteq L$, the unknown mean vectors $\bmu \in (0,1)^E$ with $\mu_{u,v}=p(u,v)$ and we assume they are independent across all base arms. In this context, the reward function $r(S;\bmu)=\sum_{v\in V}(1-\prod_{u \in S}(1-\mu_{u,v}))$.

\begin{lemma}\label{lem:app_pmc}
PMC problem satisfies VM bounded smoothness condition (Condition~\ref{cond:VM}) with coefficient $(B_v,B_1)=(3\sqrt{2|V|},1)$.
\end{lemma}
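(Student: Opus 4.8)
The plan is to route through the differential Gini-smoothness condition rather than attacking the difference form directly. Concretely, I would first show that the PMC reward is monotonic $(\gamma_g,\gamma_\infty)$-Gini-smooth in the sense of \cref{cond:gini-smooth} with $(\gamma_g,\gamma_\infty)=(\sqrt{|V|},1)$, and then invoke \cref{lem:lemma6_no_trigger}, which turns Gini-smoothness into the VM condition (\cref{cond:VM}) with coefficient $(B_v,B_1)=(3\sqrt2\,\gamma_g,\gamma_\infty)=(3\sqrt{2|V|},1)$, exactly as claimed. Since the reward decomposes over targets as $r(S;\bmu)=\sum_{v\in V}P_v$ with $P_v=1-\prod_{u\in S}(1-\mu_{u,v})$, I would begin by computing, for each base arm $i=(u,v)$ with $u\in S$,
\[
\frac{\partial r(S;\bmu)}{\partial \mu_{u,v}}=\prod_{u'\in S\setminus\{u\}}(1-\mu_{u',v})\in[0,1],
\]
while $\partial r/\partial\mu_{u,v}=0$ whenever $u\notin S$. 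This yields monotonicity and the gradient bound $\gamma_\infty=1$ (matching $B_1=1$) immediately.

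The substantive step is the second (variance) part of \cref{cond:gini-smooth}: bounding $\sum_{(u,v):u\in S}\mu_{u,v}(1-\mu_{u,v})\big(\partial r/\partial\mu_{u,v}\big)^2$. Because this sum splits over targets, it suffices to prove that the per-target quantity $G_v\defeq\sum_{u\in S}\mu_{u,v}(1-\mu_{u,v})\prod_{u'\neq u}(1-\mu_{u',v})^2$ satisfies $G_v\le 1$; summing over $V$ then gives $\sqrt{\sum_v G_v}\le\sqrt{|V|}=\gamma_g$. The key is a telescoping identity: fixing any order $u_1,\dots,u_k$ of $S$ and writing $b_j=(1-\mu_{u_j,v})^2$, one has
\[
1-\prod_{j}b_j=\sum_{j}\Big(\prod_{l<j}b_l\Big)(1-b_j),\qquad 1-b_j=\mu_{u_j,v}(2-\mu_{u_j,v}).
\]
Comparing term by term, each summand of $G_v$ is dominated by the corresponding telescoping summand, since $\prod_{l\neq j}(1-\mu_{u_l,v})^2\le\prod_{l<j}(1-\mu_{u_l,v})^2$ (the extra factors lie in $[0,1]$) and $\mu(1-\mu)\le\mu(2-\mu)$. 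Hence $G_v\le 1-\prod_{u\in S}(1-\mu_{u,v})^2\le 1$, as desired.

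Putting the two parts together establishes $(\gamma_g,\gamma_\infty)=(\sqrt{|V|},1)$, and \cref{lem:lemma6_no_trigger} then delivers \cref{cond:VM} with $(B_v,B_1)=(3\sqrt{2|V|},1)$. The main obstacle is precisely the per-target bound $G_v\le 1$: the naive estimate $\prod_{u'\neq u}(1-\mu_{u',v})^2\le\prod_{u'\neq u}(1-\mu_{u',v})$ only gives $G_v\le(1-P_v)\sum_{u\in S}\mu_{u,v}$, which scales with the seed-set size $k$ and would inflate $\gamma_g$ to $\sqrt{k|V|}$; the telescoping comparison is what cancels this spurious $\sqrt{k}$ factor. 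I would additionally sanity-check the boundary cases $\mu_{u,v}\to 0,1$ (where the $\mu_{u,v}(1-\mu_{u,v})$ variance weight forces each term to vanish) and confirm the monotone-increasing hypothesis required by both \cref{cond:gini-smooth} and \cref{lem:lemma6_no_trigger} holds, which follows from the nonnegativity of the gradient computed above.
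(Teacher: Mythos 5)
Your proposal is correct and follows the same overall route as the paper: establish monotone Gini-smoothness (Condition~\ref{cond:gini-smooth}) of the PMC reward with $\gamma_\infty=1$, then invoke Lemma~\ref{lem:lemma6_no_trigger} to convert $(\gamma_g,\gamma_\infty)$ into $(B_v,B_1)=(3\sqrt{2}\gamma_g,\gamma_\infty)$. The only substantive difference is how the per-target variance sum $G_v=\sum_{u\in S}\mu_{u,v}(1-\mu_{u,v})\prod_{u'\in S,\,u'\neq u}(1-\mu_{u',v})^2$ is bounded. You use a telescoping comparison against $1-\prod_{u\in S}(1-\mu_{u,v})^2$ to get $G_v\le 1$, hence $\gamma_g=\sqrt{|V|}$ and $B_v=3\sqrt{2|V|}$, exactly matching the stated coefficient. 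The paper instead rewrites each summand as $\bigl(\mu_{u,v}\prod_{u'\neq u}(1-\mu_{u',v})\bigr)\bigl(\prod_{u'\in S}(1-\mu_{u',v})\bigr)$ and interprets the factors probabilistically: summing over $u$, the first factor aggregates to $P_1$, the probability that exactly one ``coin'' is up, while the second is $P_2$, the probability that all coins are down; since these events are disjoint, $P_1P_2\le P_1(1-P_1)\le 1/4$, giving $G_v\le 1/4$, $\gamma_g=\sqrt{|V|}/2$, and the slightly sharper $B_v=3\sqrt{|V|/2}$ (the paper's proof in fact beats its own lemma statement by a factor of $2$). Both arguments correctly eliminate the spurious $\sqrt{k}$ factor that the naive bound $\prod_{u'\neq u}(1-\mu_{u',v})^2\le\prod_{u'\neq u}(1-\mu_{u',v})$ would introduce --- your diagnosis of that obstacle is exactly the crux --- so your proof is valid; it simply pays a constant factor of $2$ in $\gamma_g$ relative to the paper's probabilistic argument, while still delivering the lemma as stated.
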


\begin{proof}
We prove PMC satisfies VM condition by the definition of Gini-smoothness condition (Condition~\ref{cond:gini-smooth}) and \cref{lem:lemma6_no_trigger}.
First, we know $\frac{\partial r(S;\bmu)}{\partial \mu_{u,v}}=\prod_{i \in S, i \neq u} (1-\mu_{i,j}) \le 1$, thus $\gamma_{\infty}=1$. Also $\sqrt{\sum_{u \in S, v \in V}\mu_{u,v}(1-\mu_{u,v} \frac{\partial (r(S;\bmu)}{\partial \mu_{u,v}})^2}=\sqrt{\sum_{u \in S, v \in V} \mu_{u,v}(1-\mu_{u,v}) \prod_{i \in S, i \neq u} (1-\mu_{i,v})^2}= \sqrt{\sum_{v \in V} \sum_{u \in S}\left( \mu_{u,v} \prod_{i \in S,  i \neq u}(1-\mu_{i,v})\right)\left(\prod_{i \in S}(1-\mu_{i,v})\right)}\le \sqrt{\sum_{v\in V} 1/4}=\sqrt{|V|/4}$, where the second last inequality uses the fact consider $S$ coins and the $i$-th coin is up with prob. $\mu_{i,v}$, the first term $\left( \mu_{u,v} \prod_{i \in S,  i \neq u}(1-\mu_{i,v})\right)$ corresponds to probability $P_1$ that only one coin is up and the second term $\left(\prod_{i \in S}(1-\mu_{i,v})\right)$ corresponds to the probability $P_2$ that all coins are down, thus $P_1*P_2 \le P_1 (1-P_1)\le 1/4$. Hence $\gamma_g=\sqrt{V/4}$. By \cref{lem:lemma6_no_trigger}, we have $(B_v,B_1)=(3\sqrt{2}\gamma_g, \gamma_{\infty})=(3\sqrt{|V|/2},1)$. 
\end{proof}

\section{Experiments}\label{apdx_sec:exp}
In this section, we conduct experiments to validate our proposed algorithms. All experiments were performed on a desktop with i7-9700K CPU and 32 GB RAM.
\begin{figure}[ht]
    \centering
    \includegraphics[width=0.45\textwidth]{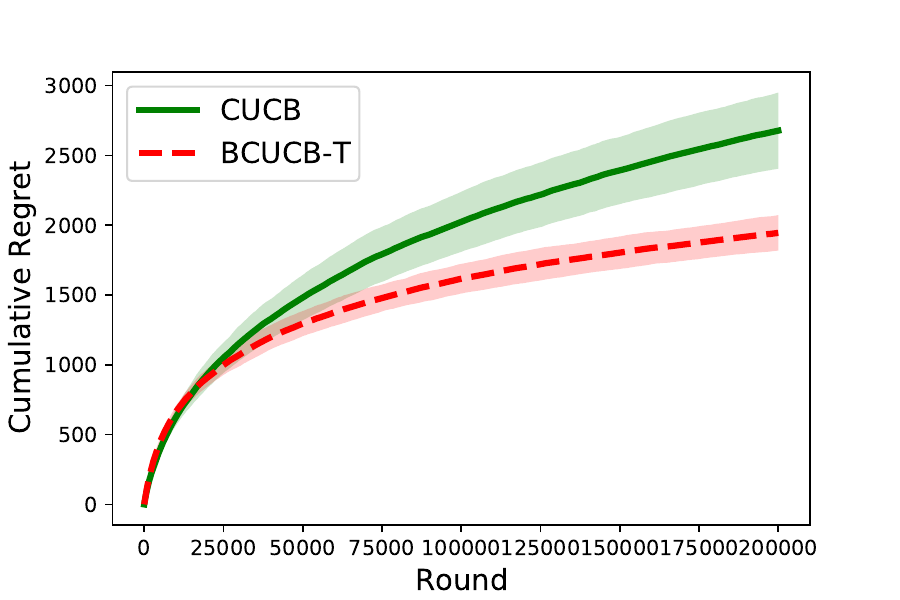}
    \caption{Cumulative regrets of CUCB and BCUCB-T for disjunctive cascading bandit.}
    \label{fig:cascading}
\end{figure}
\subsection{Cascading Bandits}
We consider cascading bandits as the application for CMAB with probabilistically triggered arms. More specifically, we choose the disjunctive cascading bandit problem to compare the performance of CUCB and BCUCB-T, where the reward function is $r(S;\bm{\mu}) = 1 - \prod_{i\in S} (1-\mu_i)$. Similar to the experimental setup in~\cite{kveton2015cascading}, we set batch-size $K=10$ and generate 30 base arms with means randomly sampled from the uniform distribution $U(0,0.1)$. Figure~\ref{fig:cascading} shows the cumulative regrets of CUCB and BCUCB-T for $200,000$ rounds. We repeat each experiment 20 times and show the average regret with shaded standard deviation. The average running time of CUCB and BCUCB-T for $200,000$ rounds are $13$s and $19$s, respectively.
As shown in the figure, BCUCB-T achieves around $20\%$ less regret than CUCB, owing to its variance-aware confidence radius control.

\begin{figure}[ht]
    \centering
    \includegraphics[width=0.45\textwidth]{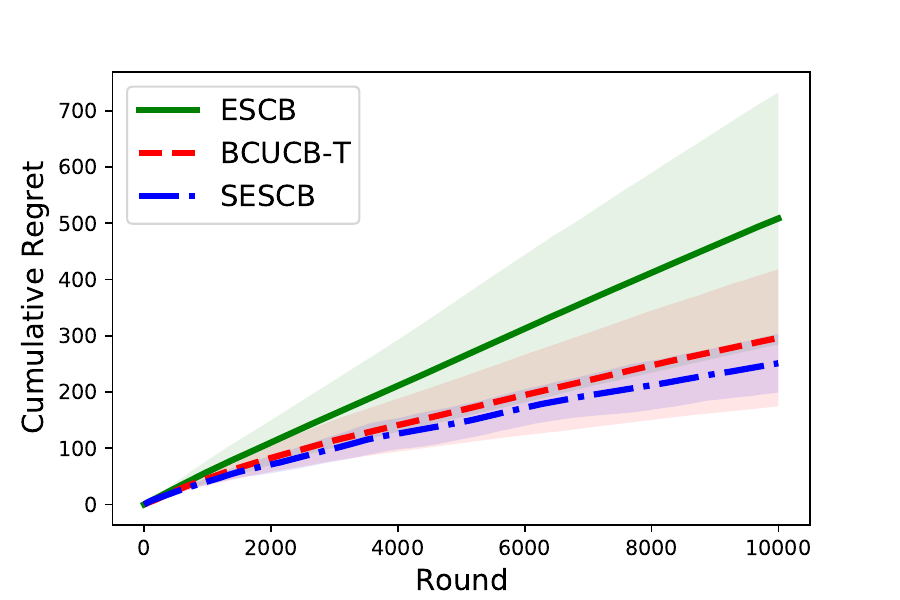}
    \caption{Cumulative regrets of ESCB, BCUCB-T and SESCB for the PMC problem.}
    \label{fig:pmc}
\end{figure}

\subsection{Probabilistic Maximum Coverage}
We consider Probabilistic Maximum Coverage (PMC) as the application for non-triggering CMAB. We generate a complete bipartite graph with 10 source nodes on the left and 20 target nodes on the right. 
The goal is to select 5 seed nodes from source nodes to influence as many as target nodes. The edge probabilities are randomly sampled from the uniform distribution $U(0.05,0.06)$. We run ESCB~\cite{combes2015combinatorial}, BCUCB-T and SESCB on this graph. 
For SESCB, we set sub-Gaussian parameter $C_1 = 3$ (according to Remark 7) and VM smoothness coefficient $B_v = 3\sqrt{2\cdot20}/2$. Since the number of base arms to be learned in PMC is large, we shrink the confidence intervals of all algorithms by $\alpha_{\rho}=0.01$ to speed up the learning, e.g., for SESCB, $\bar{r}_t(S) = r(S;\hat{\bm{\mu}}_{t-1}) + \alpha_{\rho}\cdot\rho_t(S)$. We repeat each experiment 10 times and show the average regret. The average running time of ESCB, BCUCB-T and SESCB for $10,000$ rounds are $80$s, $7$s and $115$s, respectively.
Figure~\ref{fig:pmc} shows the cumulative regrets with shaded standard deviations. 
SESCB achieves around $15\%$ less regret than BCUCB-T, since it utilizes the independence of base arms while BCUCB-T; it also outperforms ESCB as ESCB is mainly designed for the linear reward case.


\end{document}